\crefname{lemma}{Lemma}{Lemmas}
\crefname{fact}{Fact}{Facts}
\crefname{theorem}{Theorem}{Theorems}
\crefname{corollary}{Corollary}{Corollaries}
\crefname{claim}{Claim}{Claims}
\crefname{example}{Example}{Examples}
\crefname{problem}{Problem}{Problems}
\crefname{definition}{Definition}{Definitions}
\crefname{assumption}{Assumption}{Assumptions}
\crefname{subsection}{Subsection}{Subsections}
\crefname{section}{Section}{Sections}
\crefname{algorithm}{Algorithm}{Algorithms}
\crefname{algocf}{alg.}{algs.}
\Crefname{algocf}{Algorithm}{Algorithms}
\crefname{proposition}{Proposition}{Propositions}
\crefname{exemple}{Exemple}{Examples}
\crefname{remark}{Remark}{Remarks}
\newtheorem{theorem}{Theorem}
\newtheorem{definition}{Definition}
\newtheorem{lemma}{Lemma}
\newtheorem{assumption}{Assumption}
\newtheorem{proposition}{Proposition}
\newtheorem{remark}{Remark}
\newcommand{\gs}{\vspace{-0.5em}}
\DeclareMathOperator*{\argmin}{arg\,min}
\newcommand{\ffrac}[2]{\ensuremath{\frac{\displaystyle #1}{\displaystyle #2}}}
\newcommand{\SGD}{\texttt{SGD}}
\newcommand{\Artemis}{\texttt{Artemis}}
\newcommand{\Dore}{\texttt{Dore}}
\newcommand{\Diana}{\texttt{Diana}}
\newcommand{\Qsgd}{\texttt{Qsgd}}
\newcommand{\DoubleSqueeze}{\texttt{Double-Squeeze}}
\newcommand{\MCM}{\texttt{MCM}}
\newcommand{\RMCM}{\texttt{Rand-MCM}}
\newcommand{\RMCMG}{\texttt{Rand-MCM-G}}
\newcommand{\TOne}{\alpha\up}
\newcommand{\TTwo}{\alpha\up}
\newcommand{\ghost}{\texttt{Ghost}}
\newcommand{\qqquad}{\qquad\qquad}
\newcommand{\GG}{\mathcal{G}}
\newcommand{\E}{\mathbb{E}}
\newcommand{\N}{\mathbb{N}}
\newcommand{\R}{\mathbb{R}}
\newcommand{\WW}{\R^d}
\newcommand{\lnrm}{\left \|} 
\newcommand{\rnrm}{\right \|} 
\newcommand{\FullExpec}[1]{\E \left[#1\right]} 
\newcommand{\Expec}[2]{\E \left[#1~\middle|~#2\right]} 
\newcommand{\PdtScl}[2]{\left\langle~#1,~#2~\right\rangle}  
\newcommand{\SqrdNrm}[1]{ \lnrm #1\rnrm^2}  
\newcommand{\bigpar}[1]{\left( #1 \right)} 
\newcommand{\omgC}{\omega}
\newcommand{\g}{\textsl{g}} 
\newcommand{\gw}{\g} 
\newcommand{\wkm}{w_{k-1}}
\newcommand{\wkmhat}{\widehat{w}_{k-1}}
\newcommand{\wkmhati}{\widehat{w}_{k-1}^i}
\newcommand{\gwkhat}{\g_{k}(\widehat{w}_{k-1})}
\newcommand{\gwkihat}{\g_{k}^i(\widehat{w}_{k-1})}
\newcommand{\gwkHAT}{\widetilde{\g}_k}
\newcommand{\gwkiHAT}{\widehat{\g}_{k}^i(\wkmhat)}
\newcommand{\sigmstar}{\sigma_{*}}
\newcommand{\gwkihatRdmizd}{\g_{k}^i(\widehat{w}_{k-1}^i)}
\newcommand{\gwkiHATRdmizd}{\widehat{\g}_{k}^i(\widehat{w}_{k-1}^i)}
\newcommand{\iN}{{i=1}^N}
\newcommand{\gwktilde}{\widetilde{g}_k}
\newcommand{\RandOrNot}{\mathbf{\textcolor{green!50!black}{C}}}
\newcommand{\wkmHSqrd}{\ffrac{1}{N} \sum_\iN \SqrdNrm{\wkm - H_{k-2}^i}}
\newcommand{\C}{\mathcal{C}}
\newcommand{\up}{_\mathrm{up}}
\newcommand{\dwn}{_\mathrm{dwn}}
\newcommand{\dwnMemTerm}{\Upsilon}
\newcommand{\upMemTerm}{\Xi}
\newcommand{\heterog}{\mathrm{Heterog}}
\newcommand{\cst}{C}
\newcommand{\sizefig}{0.42}
\definecolor{brickred}{rgb}{0.8, 0.25, 0.33}
\definecolor{darkbrown}{rgb}{0.4, 0.26, 0.13}
\begin{document}
\maketitle
\addtocontents{toc}{\protect\setcounter{tocdepth}{0}}
\begin{abstract}
    We develop a new approach to tackle communication constraints in a distributed learning problem with a central server. 
    We propose and analyze a new algorithm that performs bidirectional compression and achieves the same convergence rate as algorithms using only uplink (from the local workers to the central server) compression. 
    To obtain this improvement, we design \MCM, an algorithm such that the downlink compression \emph{only impacts local models}, while the global model is preserved. 
    As a result, and contrary to previous works, the gradients on local servers are computed on \emph{perturbed models}. Consequently, convergence proofs are more challenging and require a precise control of this perturbation. To ensure it, \MCM~additionally combines model compression with a memory mechanism.
    This analysis opens new doors, e.g. incorporating  worker dependent randomized-models and partial participation.
\end{abstract}
\gs\gs
\section{Introduction}\gs\gs
\label{sec:intro}
Large scale distributed machine learning is widely used in many modern applications \cite{abadi_tensorflow_2016,caldas_leaf_2019,seide_cntk_2016}. The training is distributed over a potentially large number $N$ of workers that communicate either with a central server \citep[see][on federated learning]{konecny_federated_2016,mcmahan_communication-efficient_2017}, or using peer-to-peer communication \cite{colin_gossip_2016,vanhaesebrouck_decentralized_2017,tang_d2_2018}.

In this work, we consider a setting using a central server that aggregates updates from remote nodes.  Formally, we have a number of features $d\in \mathbb{N^*}$, and a convex cost  function $F: \mathbb{R}^d \rightarrow \mathbb{R}$. We want to solve the following distributed convex optimization problem using stochastic gradient algorithms \cite{robbins_stochastic_1951,bottou_large-scale_2010}: $\min_{w \in \mathbb{R}^d} F(w) \text{ with } F(w) = \frac{1}{N} \sum_{i=1}^N  F_i(w)$,
where $(F_i)_{i=1}^N$ is a \emph{local} risk function (empirical risk  or expected risk in a streaming framework).  This applies to both instances of \textit{distributed} and \textit{federated} learning.

An important issue of those frameworks is the high communication cost between the workers and the central server \citep[Sec.~3.5]{kairouz_advances_2019}. This cost is a concern from several points of view. First, exchanging information can be the bottleneck in terms of speed. Second, the data consumption and the bandwidth usage of training large distributed models can be problematic; and furthermore, the energetic and environmental impact of those exchanges is a growing concern.
Over the last few years, new algorithms were introduced, compressing  messages in the  \textit{upload communications} (i.e., from remote devices to the central server) in order to reduce the size of those exchanges \cite{seide_1-bit_2014,alistarh_qsgd_2017,wu_error_2018,agarwal_cpsgd_2018,wangni_gradient_2018,stich_sparsified_2018,stich_error-feedback_2020,mishchenko_distributed_2019,li_acceleration_2020}. More recently, a new trend has emerged to also compress the \textit{downlink communication}: this is \textit{bidirectional compression}. 

The necessity for bidirectional compression can depend on the situation. For example, a single uplink compression could be sufficient in \textit{asymmetric} regimes in which broadcasting a message to $N$ workers (``one to $N$'') is faster than aggregating the information coming from each node (``$N$ to one''). However, in other regimes, e.g. with few machines, where the bottleneck is the transfer time of a heavy model (up to several GB in modern Deep Learning architectures)  the downlink communication cannot be disregarded, as the upload and download speed are of the same order \cite{philippenko_artemis_2020}. Furthermore, in a situation in which participants have to systematically download an update (e.g., on their smartphones) to participate in the training, participants would prefer to receive a small size update (compressed) rather than a heavier one.  To encompass all situations, we consider algorithms for which the information exchanged is compressed in both directions.

To perform downlink communication, existing bidirectional algorithms  \cite{tang_doublesqueeze_2019,zheng_communication-efficient_2019,sattler_robust_2019,liu_double_2020,philippenko_artemis_2020,horvath_better_2020,xu_training_2020,gorbunov_linearly_2020} first aggregate all the information they have received, compress them and then carry out the broadcast. Both the main ``global'' model and the ``local'' ones perform the \textit{same} update with this compressed information. Consequently, the model hold on the central server and the one used on the local workers (to query the gradient oracle) are identical.
However, this means that the model on the central server has been artificially \textit{degraded}: instead of using all the information it has received, it is updated with the compressed information. 

Here, we focus on \textit{preserving} (instead of \textit{degrading}) the central model: the update made on its side does not depend on the downlink compression. This implies that the local models are \textit{different} from the central model. 
The local gradients are thus measured on a \textit{``perturbed model''} (or \textit{``perturbed iterate''}): such an approach requires a more involved analysis and the algorithm must be carefully designed to control the deviation between the local and global models~\citep{mania_perturbed_2016}.
For example, algorithms directly compressing the model or the update would simply not converge.

We propose \MCM~- \textit{Model Compression with Memory} - a~new algorithm that 1) preserves the central model, and 2) uses a memory scheme to reduce the variance of the local model. We prove that the convergence  of this method is similar to the one of algorithms using only unidirectional compression.

\vspace{-0.9em}
\paragraph{Potential Impact.} Proposing an analysis that handles perturbed iterates is the key to unlock three major challenges of distributed learning run with bidirectionally compressed gradients. 
First, we show that it is possible to improve the convergence rate by sending \textit{different \textbf{randomized} models} to the different workers, this is \RMCM. 
Secondly, this analysis also paves the way to deal with partially participating machines: the adaptation of \RMCM~to this framework is straightforward; while adapting existing algorithms~\cite{sattler_robust_2019} to partial participation is not practical. 
Thirdly, this framework is also promising in terms of business applications, e.g., in the situation of learning with privacy guarantees and \textit{with a trusted central server}. We  detail those three possible extensions in  \Cref{subsec:communication_tradeoffs}.

\gs
\paragraph{Broader impact.} 
This work is aligned with a global effort to make the usage of large scale Federated Learning sustainable by minimizing its environmental impact.
Though the impact of such algorithms is expected to be positive, at least on environmental concerns, cautiousness is still required, as a rebound effect may be observed \cite{grubb_communication_1990}: having energetically cheaper and faster algorithms may result in an increase of such applications, annihilating the gain made by algorithmic progress.

\gs
\paragraph{Contributions.} We make the following contributions:
\gs
\begin{enumerate}[topsep=0pt,itemsep=1pt,leftmargin=*,noitemsep]
    \item We propose a new algorithm \MCM, combining a memory process to the ``preserved'' update. To convey the key steps of the proof, we also introduce an auxiliary hypothetical algorithm, \ghost.
    \item For those algorithms, we carefully control the variance of the local models w.r.t.~the global one.  We provide a \textit{contraction equation} involving the control on the local model's variance and show that \MCM~achieves the same rate of convergence as single compression in strongly-convex, convex and non-convex regimes. We give a comparisons of \MCM's rates with existing algorithms in \Cref{tab:summary_rate}. 
    \item We propose a variant, \RMCM~incorporating diversity into  models shared with the local workers and show that it improves convergence for quadratic functions. 
\end{enumerate}

This is the first algorithm for double compression to focus on a \textbf{preserved central model}. We underline, both theoretically and in practice, that we get the same asymptotic convergence rate for simple and double compression - which is a major improvement. Our approach is one of the first to allow for worker dependent model, and to naturally adapt to worker dependent compression levels.

The rest of the paper is organized as follows: in \Cref{sec:pb_statment} we present the problem statement and introduce  \MCM~and \RMCM. Theoretical results on these algorithms are  successively presented in \Cref{sec:theory,,sec:theory_randomization}. Finally, we present experiments supporting the theory in \Cref{sec:experiments}.

\begin{table*}[!htp]
    \centering
         \caption{Features of the main existing algorithms performing compression. $e_k^i$ (resp. $E_k$) denotes the use of error-feedback at uplink (resp. downlink). $h_k^i$ (resp. $H_k$) denotes the use of a memory at uplink (resp. downlink).  Note that \texttt{Dist-EF-SGD} is identical to \DoubleSqueeze~but has been developed simultaneously and independently.\gs}
    \resizebox{\linewidth}{!}{\begin{tabular}{lcccccccc}
    \toprule
         & Compr. & $e_k^i$ & $h_k^i$ & $E_k$ & $H_k$ & Rand. & {update point} \\
          \midrule
         \Qsgd~\cite{alistarh_qsgd_2017} & \textcolor{red}{one-way} &  & &  &  & &  \\
         \texttt{ECQ-sgd}~\cite{wu_error_2018} & \textcolor{red}{one-way} & \cmark & &  &  &  & \\
         \Diana~\cite{mishchenko_distributed_2019} & \textcolor{red}{one-way} &  & \cmark  &  &  &  & \\
         \Dore~\cite{liu_double_2020} & two-way & & \cmark & \cmark &  &  & degraded \\
         \DoubleSqueeze~\cite{tang_doublesqueeze_2019},  \texttt{Dist-EF-SGD}~\cite{zheng_communication-efficient_2019} & two-way &  \cmark & & \cmark &  &  & degraded \\
         \Artemis~\cite{philippenko_artemis_2020} & two-way &  & \cmark &  &  &  & degraded \\
         \midrule
         \MCM~& two-way &  & \cmark &  & \cmark &  & \textcolor{green!50!black}{non-degraded} \\
         \RMCM & two-way &  & \cmark &  & \cmark & \cmark & \textcolor{green!50!black}{non-degraded} \\
          \bottomrule
    \end{tabular}}
    \label{tab:algo_summary}
    \vspace{-0.40cm}
\end{table*}

\gs\gs
\section{Problem statement}\gs
\label{sec:pb_statment}
We consider the minimization problem described in \cref{sec:intro}. In the convex case, we assume there exists an optimal parameter  $w_*$, and denote $F_*=F(w_*)$. We use $\lnrm \cdot \rnrm$ to denote the  Euclidean norm. 
To solve this problem, we rely on a stochastic gradient descent (SGD) algorithm. A stochastic gradient $\g_{k+1}^i$  is provided at iteration $k$ in $\N$ to the  device $i$ in $\llbracket 1, N \rrbracket$. This gradient oracle can be computed on a mini-batch of size $b$. This function is then evaluated at point $w_k$. In the classical centralized framework (without compression), for a  learning rate $\gamma$,  SGD corresponds to:
\gs
\begin{align}
   \label{eq:sgd_statement}
    w_{k+1} = w_k - \gamma \frac{1}{N}\sum_{i=1}^N \g_{k+1}^i(w_k) \,. 
\end{align}\gs
We now describe the framework used for compression.

\gs
\subsection{Bidirectional compression framework}\gs
\label{sec:bidirectional_framework}

Bidirectional compression consists in compressing communications in both directions between the central server and remote devices. 
We use two  different compression operators, respectively $\C\up$ and $\C\dwn$ to compress the message in each direction. 
 Roughly speaking, the update in \cref{eq:sgd_statement} becomes:
\begin{align*}
    w_{k+1} = w_k - \gamma \C\dwn\bigg( \frac{1}{N}\sum_{i=1}^N \C\up(\g_{k+1}^i(w_k)) \bigg)\,.
\end{align*}
However, this approach has a major drawback. The central server receives and aggregates information $ \frac{1}{N}\sum_{i=1}^N \C\up(\g_{k+1}^i(w_k))$. But in order to be able to broadcast it back, it compresses it, \textit{before} applying the update. We refer to this strategy as the ``degraded update'' approach. Its major advantage is simplicity, and it was used in all previous papers performing double compression.
Yet, it appears to be  a waste of valuable information. In this paper, we update the global model $w_{k+1}$ independently of the downlink compression:
\begin{align}
   \label{eq:model_compression_eq_statement}
 \left\{ \begin{array}{l}
       w_{k+1} = w_k - \gamma \frac{1}{N}\sum_{i=1}^N \C\up \left( \g_{k+1}^i( \hat w_k) \right) \,.  \\
       \hat w_{k+1}  =  C\dwn (w_{k+1})
  \end{array}  \right.
\end{align}
However, bluntly compressing $w_{k+1}$ in \cref{eq:model_compression_eq_statement} hinders convergence, thus the second part of the update needs to be refined by adding a memory mechanism. \textbf{We now describe both communication stages of the real \MCM, which is entirely defined by the following uplink and downlink equations.} 

\gs\gs
\begin{align}
\begin{array}{ll}
\textbf{\text{Downlink}}     &  \textbf{\text{Uplink}}    \\
    \begin{aligned}\label{eq:downlink}
\left\{
    \begin{array}{ll}
        \Omega_{k+1} = w_{k+1} - H_{k} \,, \\
    \widehat{w}_{k+1} = H_{k} + \C\dwn(\Omega_{k+1}) \\
    H_{k+1} = H_k + \alpha\dwn \C\dwn({\Omega}_{k+1}) .\\
    \end{array}
\right. 
\end{aligned}  & \begin{aligned}
\left\{
       \begin{array}{l}
    \forall i \in \llbracket1, N \rrbracket, \Delta_{k}^i = \g_{k+1}^i(\widehat{w}_{k}) - h_k^i  \\
    w_{k+1} = w_k - \ffrac{\gamma}{N} \sum_\iN \C\up(\Delta_{k}^i) + h_k^i\\
      h_{k+1}^{i} = h_k^i + \alpha\up \C\up({\Delta}_k^i).
    \end{array}
\right. 
\end{aligned}
\end{array}
\end{align}

\textbf{Downlink Communication.} We introduce a \textit{downlink memory term} $(H_{k})_{k}$, which is available on both workers and central server. The difference $\Omega_{k+1}$ between the model and this memory is compressed and exchanged, then the local model is reconstructed from this information. The memory is then updated as defined on left part of \cref{eq:downlink}, with a learning rate $\alpha\dwn$.

Introducing this memory mechanism is crucial to control the variance of the local model $\widehat{w}_{k+1}$. To the best of our knowledge \MCM~is the first algorithm that uses such a memory mechanism for downlink compression. This mechanism  was introduced by \citet{mishchenko_distributed_2019} for the uplink compression but with the other purpose of mitigating the impact of heterogeneity, while we use it here to avoid divergence of the local model’s variance.

\textbf{Uplink Communication.} The motivation to introduce an uplink memory term $h_k^i$ for each device $i \in \llbracket1, N \rrbracket$ is different, and better understood. Indeed, for the uplink direction, this mechanism is only necessary (and then crucial) to handle heterogeneous workers \citep[i.e., with different data distributions, see e.g.][]{philippenko_artemis_2020}.
Here, the difference $\Delta_k^i$ between the stochastic gradient $\g_{k+1}^i$ at the local model $\widehat{w}_{k}$ (as defined in \cref{eq:downlink}) and the memory term is compressed and exchanged. The memory is then updated as defined on right part of \cref{eq:downlink} with a rate $\alpha\dwn$.

\begin{remark}[Rate $\alpha\dwn$] It is necessary to use $\alpha\dwn<1$. Otherwise, the compression noise tends to propagate and is amplified, because of the multiplicative nature of the compression. In \Cref{fig:mcm_various_option} we compare \MCM, with 3~other strategies: compressing only the update, compressing $w_k-\wkmhat$, (i.e., $\alpha\dwn=1$), and compressing the model (i.e., $H_k=0$), showing that only \MCM~converges.
\end{remark}

\begin{remark}[Memory vs Error Feedback]\label{rem:EF} Error feedback is another technique, introduced by~\citet{seide_1-bit_2014}. 
In the context of double compression, it has been shown to improve convergence for a restrictive class of \emph{contracting} compression operators (which are generally biased) by \citet{zheng_communication-efficient_2019,tang_doublesqueeze_2019}. 
However, we note several differences to our approach. (1) For unbiased operators - as considered in \Dore, it did not lead to any theoretical improvement \citep[Remark 2 in Sec. 4.1.,][]{liu_double_2020}. (2) Moreover, only a fraction (namely $(1+\omgC\dwn)^{-1}$) of the ``error'' $w_{k+1} - \hat{w}_{k+1}$ can be preserved in the EF term (see line 18 in algo 1 in Liu et al.). It is thus impossible to recover the  central preserved model as a function of the degraded model and the EF term. (3) \cite{zheng_communication-efficient_2019} consider a biased operator 
and the same compression level for uplink and downlink compression. They also rely on stronger assumptions on the gradient (uniformly bounded) 
and only tackle the homogeneous case.
\end{remark}  

In \Cref{tab:algo_summary} we summarize the main algorithms for compression in distributed training.
As downlink communication can be more efficient than uplink, we consider distinct operators $\C\dwn$, $\C\up$ and allow the corresponding compressions levels to be distinct: those quantities are defined in \Cref{asu:expec_quantization_operator}.
\begin{assumption}
\label{asu:expec_quantization_operator}
There exists constants $\omgC\up\,,\omgC\dwn \in \R^*_+$, such that the compression operators $\C{\up}$ and $\C{\dwn}$ satisfy the two following properties for all $w$ in $\R^d$: $\E [\C_{\mathrm{up}\slash\mathrm{dwn}}(w)] = w$, and $\E [ \| \C_{\mathrm{up}\slash\mathrm{dwn}}(w) - w\|^2] \leq \omgC_{\mathrm{up}\slash\mathrm{dwn}} \|w\|^2$.
The higher is $\omgC$, the more aggressive the compression is.\gs
\end{assumption}
We only consider unbiased operators, that encompass sparsification, quantization and sketching. References and a discussion on those operators, and possible extensions of our results to biased operators are provided in \Cref{app:subsec:compression_oper}.

\begin{remark}[Related work on Perturbed iterate analysis]
The theory of perturbed iterate analysis was introduced by \citet{mania_perturbed_2016} to deal with asynchronous SGD. More recently, it was used by \citet{stich_error-feedback_2020,gorbunov_linearly_2020} to analyze the convergence of algorithms with uplink compressions, error feedback and asynchrony. Using gradients at randomly perturbed points can also be seen as a form of randomized smoothing~\citep{scaman_optimal_2018}, a point we discuss in \Cref{app:subsec:smoothing}.
\end{remark}

\gs
\subsection{The randomization mechanism, \RMCM}\gs
\label{sec:randomization_def}

In this subsection, we describe the key feature introduced in \RMCM: \textit{randomization}. It consists in performing an independent compression for each device instead of performing a single one for all of them. As a consequence, each worker holds a different model centered around the global one. This introduces some supplementary randomness that stabilizes the algorithm. Formally, we will consider $N$ mutually independent compression operators $\C_{\mathrm{dwn},i}$ instead of a single one $\C\dwn$, and the central server will send to the device $i$ at iteration $k+1$ the compression of the difference between its model and the local memory on worker~$i$:  $\C_{\mathrm{dwn},i} (w_{k+1} - H_k^i)$. The tradeoffs associated with this modification are discussed in \Cref{sec:theory_randomization}.

The pseudocode of \RMCM~is given in \Cref{algo} in \Cref{app:sec:complementary}. It incorporates all  components described above: 1) the bidirectional compression, 2) the model update using the non-degraded point, 3) the two memories, 4) the up and down  compression operators, 5) the randomization mechanism.

\gs\gs
\section{Assumptions and Theoretical analysis}\gs
\label{sec:theory}

We  make standard assumptions on $F:  \mathbb{R}^d \rightarrow \mathbb{R}$. We first  assume that the loss function $F$ is smooth.
\begin{assumption}[Smoothness]
\label{asu:smooth}
$F$ is twice continuously differentiable, and  is $L$-smooth, that is for all vectors $w_1, w_2$ in $\WW$: $\|\nabla F(w_1) - \nabla F(w_2) \| \leq L \| w_1 - w_2 \|$.

\gs
\end{assumption} 
Results  in \Cref{sec:theory} are provided in a convex, strongly-convex and non-convex setting. 
\begin{assumption}[Strong convexity]
\label{asu:cvx_or_strongcvx}
$F$ is $\mu$-strongly convex (or convex if $\mu=0$), that is for all vectors $w_1, w_2$ in $\WW$:
$F(w_2) \geq F(w_1) + (w_2 -w_1)^T \nabla F(w_1) + \frac{\mu}{2} \| w_2 - w_1 \|^2_2\,.$
\gs
\end{assumption}

Next, we present the assumption on the stochastic gradients.

\begin{assumption}[Noise over stochastic gradients computation]
\label{asu:noise_sto_grad}
The noise over stochastic gradients for a mini-batch of size $b$, is uniformly bounded: there exists a constant $\sigma \in \mathbb{R}_+$, such that for all $k$ in $\N$, for all $i$ in $\llbracket 1, N \rrbracket\,$ and for all $w$ in $\R^d$ we have: $E[\|\g_k^i(w) - \nabla F(w)\|^2] \leq \sigma^2/b$.
\gs
\end{assumption}

We here provide guarantees of convergence for \MCM. 
\MCM~incorporates an uplink memory term, designed to handle heterogeneous workers. To highlight our main contributions, that concerns the downlink compression, we  present the results in the homogeneous setting, that is with $F_i=F_j$ and $\alpha\up=0$. Similar results (almost identical, up to constant numerical factors) in to the heterogeneous setting are described in \Cref{app:sec:adaptation_to_heterogeneous_case}.
Experiments are also performed on heterogeneous workers. We provide here  convergence results in the strongly-convex, then convex case. 

\gs
\textbf{Notations and settings.} For $k$ in $\N$, we denote $\dwnMemTerm_k =  \SqrdNrm{w_k - H_{k-1}}$, and define $V_k = \E[ \SqrdNrm{w_k - w_*}] + 32\gamma L \omgC\dwn^2 \E [\Upsilon_k]$, which serves as Lyapunov function. $V_k$ is composed of two terms: the first one controls the quadratic distance to the optimal model, and the second controls the variance of the local models $\hat w_k$. For both theorems, we choose $\alpha\dwn = (8 \omgC\dwn)^{-1}$.
We denote $\Phi(\gamma) := (1 + \omgC\up) \bigpar{1 + 64 \gamma L \omgC\dwn^2}$.

\textbf{Limit learning rate:} There exists a maximal learning rate to ensure convergence. More specifically, we define $\gamma_{\max}:=\min(\gamma_{\max}^\mathrm{up}, \gamma_{\max}^\mathrm{dwn}, \gamma_{\max}^\Upsilon)$, where 
$\gamma_{\max}^\mathrm{up}:= (2 L \bigpar{ 1 + {\omgC\up/N}})^{-1}$ corresponds to the classical constraint on the learning rate in the unidirectional regime \cite[see][]{mishchenko_distributed_2019,philippenko_artemis_2020},  $\gamma_{\max}^\mathrm{dwn}:= (8 L \omgC\dwn)^{-1}$ 
is a similar constraint coming from the downlink compression, and $\gamma_{\max}^\Upsilon:= \big(8 \sqrt 2 L\omgC\dwn\sqrt{ 8{\omgC\dwn} + {\omgC\up/N}} \big)^{-1}$ is a combined constraint that arises when controlling the variance term $\Upsilon$.\footnote{The dependency in $\omgC^{3/2}$ is similar to the one obtained by \citet{horvath_stochastic_2019} in unidirectional compression in the non-convex case (Theorem 4).} 
Overall, this constraints are weaker than in the ``degraded'' framework \cite{liu_double_2020,philippenko_artemis_2020}, in which $\gamma_{\max}^{\text{Dore}} \le  \big(8 L( 1 + {\omgC\dwn})(1 + {\omgC\up/N})\big)^{-1}$.  
Especially, in the regime in which $\omgC_{\mathrm{up}, \mathrm{dwn}}\to \infty$ and  $\omgC\dwn \simeq \omgC\up \simeq :\omega$, the maximal learning rate for \MCM~is $(L\omgC^{3/2})^{-1}$, while it is $(L\omgC^{2})^{-1}$ in \cite{liu_double_2020,philippenko_artemis_2020}. Our $\gamma_{\max}$ is thus larger by a factor $\sqrt{\omgC}$, see \Cref{tab:summary_rate}. We define $\widetilde{L}$ such that $\gamma_{\max} = (2 \widetilde{L})^{-1}$.

\begin{theorem}[Convergence of \MCM~in the homogeneous and strongly-convex case]
\label{thm:cvgce_mcm_strongly_convex}
Under \Cref{asu:cvx_or_strongcvx,asu:expec_quantization_operator,asu:smooth,asu:noise_sto_grad} with $\mu>0$, for $k$ in $\N$,
for any sequence $(\gamma_k)_{k\geq 0}\le \gamma_{\max}$ we have:
\begin{align}\label{eq:Lyapunov-str-convex}
    V_k &\leq (1 - \gamma_{k} \mu) V_{k-1}  - \gamma_{k} \FullExpec{F(\wkmhat) - F(w_*)} +  \ffrac{\gamma_{k}^2 \sigma^2 \Phi(\gamma_k)}{Nb}\,,
\end{align}
Consequently, (1) if $\sigma^2 =0 $ (noiseless case), for $\gamma_k\equiv \gamma_{\max}$ we recover a linear convergence rate: $\E [\SqrdNrm{{w}_k - w_*}]\le (1-\gamma_{\max} \mu)^k V_0$; \ \  (2) if $\sigma^2 >0 $, taking for all $K$ in $\N$, $\gamma_K =2/(\mu (K+1) + \widetilde{L})$, for the weighted Polyak-Ruppert average $\bar{w}_K = \sum_{k=1}^K \lambda_k \wkm / \sum_{k=1}^K \lambda_k$, with $\lambda_k : =(\gamma_{k-1})^{-1}$,
\begin{align}
\label{eq:strongly_convex_polyak_ruppert}
    \hspace{-0.1cm}\FullExpec{F(\bar{w}_K) - F(w_*)} \leq \ffrac{ \mu + 2 \widetilde L}{4\mu K^2} \SqrdNrm{w_0 - w_*} + \ffrac{4\sigma^2(1 + \omgC\up)}{\mu K Nb} \bigpar{1 + \ffrac{64 L \omgC\dwn^2}{ \mu K}\ln(\mu K + \widetilde{L}) }.
\end{align}
\end{theorem}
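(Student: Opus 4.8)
The plan is to establish the one-step contraction inequality \eqref{eq:Lyapunov-str-convex} first, and then derive the two corollaries (linear rate in the noiseless case, and the Polyak–Ruppert rate) by standard recursion arguments. For the one-step inequality, I would start from the global update $w_{k+1} = w_k - \frac{\gamma_k}{N}\sum_i \C\up(\Delta_k^i) + h_k^i$ and expand $\E[\SqrdNrm{w_{k+1}-w_*}]$ conditionally on the past. Using unbiasedness of $\C\up$ (\Cref{asu:expec_quantization_operator}) and the homogeneous assumption $\alpha\up=0$, $F_i=F$, the cross term yields $-2\gamma_k\PdtScl{\nabla F(\wkmhat)}{\wkmhat-w_*}$ plus a term measuring the discrepancy $w_k - \wkmhat$; by $\mu$-strong convexity and $L$-smoothness this produces the $-\gamma_k\mu\SqrdNrm{w_k-w_*}$ contraction and the negative functional term $-\gamma_k(F(\wkmhat)-F(w_*))$, at the cost of a term controlled by $\SqrdNrm{w_k - \wkmhat}$, i.e. essentially by the local-model variance. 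The squared-norm term $\E[\SqrdNrm{\frac{1}{N}\sum_i \C\up(\Delta_k^i)+h_k^i}]$ is split via the variance bound on $\C\up$ into a ``true gradient'' part (bounded using smoothness and the co-coercivity-type inequality, contributing to the negative $F$-term) and the stochastic-noise part $\sigma^2/(Nb)$ scaled by $(1+\omgC\up)$.

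Next I would handle the Lyapunov correction term $\Upsilon_k = \SqrdNrm{w_k - H_{k-1}}$. From the downlink recursion $H_{k} = H_{k-1} + \alpha\dwn\C\dwn(\Omega_k)$ with $\Omega_k = w_k - H_{k-1}$, I would expand $\E[\SqrdNrm{w_{k+1}-H_k}]$, writing $w_{k+1}-H_k = (w_{k+1}-w_k) + (w_k - H_{k-1}) - \alpha\dwn\C\dwn(\Omega_k)$. Using unbiasedness and the variance bound $\E\SqrdNrm{\C\dwn(\Omega_k)-\Omega_k}\le \omgC\dwn\SqrdNrm{\Omega_k}$, together with a Young-type inequality with a carefully tuned weight (here $\alpha\dwn = (8\omgC\dwn)^{-1}$ is the choice that makes the $\Upsilon_k$ coefficient contract), I expect to get a recursion of the form $\E[\Upsilon_{k+1}] \le (1 - c\,\alpha\dwn)\E[\Upsilon_k] + (\text{const})\,\E\SqrdNrm{w_{k+1}-w_k} + \dots$, and $\E\SqrdNrm{w_{k+1}-w_k}$ is itself bounded by $\gamma_k^2$ times gradient and noise terms. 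Combining the two recursions with the weight $32\gamma L\omgC\dwn^2$ in $V_k$ is where the constants must line up: the positive $\Upsilon$-contributions coming from $\SqrdNrm{w_k-\wkmhat}$ in the first step must be absorbed by the contraction in the second, and the residual cross terms must be dominated using $\gamma_k \le \gamma_{\max}$, in particular the combined constraint $\gamma_{\max}^\Upsilon$. I would also need the elementary identity relating $\SqrdNrm{w_k - \wkmhat}$ to $\SqrdNrm{\C\dwn(\Omega_k)-\Omega_k}$ and hence to $\Upsilon_k$, since $\wkmhat = H_{k-1}+\C\dwn(\Omega_k)$ gives $w_k - \wkmhat = \Omega_k - \C\dwn(\Omega_k)$, whose conditional expectation squared is at most $\omgC\dwn\Upsilon_k$.

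For the corollaries: in the noiseless case $\sigma^2=0$, \eqref{eq:Lyapunov-str-convex} immediately gives $V_k \le (1-\gamma_{\max}\mu)V_{k-1}$ after dropping the non-positive functional term, hence $V_k \le (1-\gamma_{\max}\mu)^k V_0$, and $\E\SqrdNrm{w_k-w_*}\le V_k$ since $\Upsilon_k\ge 0$. For $\sigma^2>0$, I would rearrange \eqref{eq:Lyapunov-str-convex} as $\gamma_k\FullExpec{F(\wkmhat)-F(w_*)} \le (1-\gamma_k\mu)V_{k-1} - V_k + \gamma_k^2\sigma^2\Phi(\gamma_k)/(Nb)$, divide by $\gamma_k^2$ (so $\lambda_k = \gamma_{k-1}^{-1}$ appears naturally), telescope the sum $\sum_{k=1}^K \lambda_k \FullExpec{F(\wkm)-F(w_*)}$, and apply Jensen's inequality to pull the average inside $F$. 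With $\gamma_K = 2/(\mu(K+1)+\widetilde L)$ the telescoping coefficients $(1-\gamma_k\mu)/\gamma_k^2$ versus $1/\gamma_{k-1}^2$ collapse to the $\mu K^2$-type denominator on the initial-condition term, and $\sum_k \gamma_k \Phi(\gamma_k)$ contributes the $O(1/(\mu K))$ noise term; the logarithmic factor $\ln(\mu K+\widetilde L)$ arises from $\sum_{k=1}^K 1/k$ in the part of $\Phi(\gamma_k)$ proportional to $\gamma_k L\omgC\dwn^2$. The main obstacle I anticipate is the bookkeeping in the combined recursion — i.e. verifying that with $\alpha\dwn=(8\omgC\dwn)^{-1}$ and the coefficient $32\gamma L\omgC\dwn^2$ in $V_k$, all the $\Upsilon$-cross terms and the $\SqrdNrm{w_{k+1}-w_k}$ terms are absorbed for every $\gamma_k\le\gamma_{\max}$, since this is precisely where the three-way minimum defining $\gamma_{\max}$ (and the unusual $\omgC^{3/2}$ scaling) must be invoked — rather than any single conceptually hard step.
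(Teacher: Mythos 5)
Your proposal follows essentially the same route as the paper's proof: the perturbed-iterate expansion with the inner product split at $\widehat{w}_{k-1}$, strong convexity producing both the $(1-\gamma_k\mu)$ contraction and the $-\frac{\gamma_k}{2L}\E\|\nabla F(\widehat{w}_{k-1})\|^2$ term that absorbs the $\Upsilon_k$ recursion (the paper's Theorem~3) via the Lyapunov weight $32\gamma L\omega_{\mathrm{dwn}}^2 = \gamma L\cdot 4\omega_{\mathrm{dwn}}/\alpha_{\mathrm{dwn}}$, and a weighted telescoping sum for the Polyak--Ruppert bound. The only cosmetic differences are that the paper multiplies the one-step inequality by $\lambda_k/\gamma_k=(\gamma_{k-1}\gamma_k)^{-1}$ rather than $\gamma_k^{-2}$, exploiting $(1-\gamma_k\mu)/\gamma_k=\gamma_{k-2}^{-1}$ to make the sum exactly telescopic, and it invokes Jensen's inequality ($\E[F(\widehat{w}_{k-1})]\ge \E[F(w_{k-1})]$) to pass from the perturbed to the true iterate before averaging.
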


\gs
\textbf{Limit Variance (\Cref{eq:Lyapunov-str-convex}).}    
For a constant $\gamma$, the variance term (i.e., term proportional to $\sigma^2$) in \Cref{eq:Lyapunov-str-convex} is upper bounded by $ \frac{ \gamma^2 \sigma^2}{Nb} (1+ \omgC\up) (1 + 64 \textcolor{Green}{\gamma L} \omgC\dwn^2)$. The impact of the downlink compression is attenuated by a factor $\gamma$. As $\gamma$ decreases,  this makes the limit variance similar to the one of \Diana, i.e., without downlink compression~\citep[Eq. 16 in Th. 2]{mishchenko_distributed_2019} and much lower than the variance for previous algorithms using double compression for which the variance scales quadratically with the compression constants as $\gamma^2 \sigma^2 (1+\omega\up)(1+\omega\dwn) /N$: (1) for \Dore, see Corollary~1 in \citet{liu_double_2020} (who indicate $(1-\rho)^{-1}\geq (1+\omgC\up/N) (1+\omgC\dwn)$), (2) for \Artemis~see Table 2 and Th. 3 point 2 in \cite{philippenko_artemis_2020}, (3) for \cite{gorbunov_linearly_2020}, see Theorem I.1. (with $\gamma D_1'\varpropto \gamma^2 \sigma^2 (1+\omega\up)(1+\omega\dwn) /N$).

Bound \ref{eq:strongly_convex_polyak_ruppert} has a quadratic dependence on $\omgC\dwn$, but the corresponding term is divided by an extra factor $K$, the number of iterations. For example in experiments, for \textit{w8a} using quantization with $s=2^0$, we have $\omgC\dwn \simeq 17$, and after only $50$ epoch with a batch size $b=12$, we have $K \simeq 2500$. Hence, the term $\omgC^2 / K$ is vanishing through iterations and we asymptotically recover a rate of convergence equivalent to algorithms using unidirectional compression.

\textbf{Convergence and complexity:} With a decaying sequence of steps, we obtain a convergence rate scaling as $O(K^{-1})$ in \Cref{eq:strongly_convex_polyak_ruppert}, without dependency on the $\omgC\dwn$ in the dominating term, which only appears in faster decaying terms scaling as $K^{-2}$. The iteration complexity (i.e., number of iterations to achieve $\epsilon$ expected error) is thus at first order $O_{\epsilon\to 0}( \frac{\sigma^2(1 + \omgC\up)}{\mu \epsilon N b})$. Again, this matches the complexity of Diana \cite[see Theorem 1 and Corollary 1]{horvath_stochastic_2019} and is smaller by a factor $1+\omgC\dwn$ than the one of \Artemis, \Dore, \texttt{DIANAsr-DQ} (see Corollary I.1. in \cite{gorbunov_linearly_2020}). Next, we  give a convergence result in the convex case. 
\begin{theorem}[Convergence of \MCM, convex case]
\label{thm:cvgce_mcm_convex}
Under \Cref{asu:cvx_or_strongcvx,asu:expec_quantization_operator,asu:smooth,asu:noise_sto_grad} with $\mu=0$. For all $k>0$, for any $\gamma \le \gamma_{\max}$,   we have, for $\bar{w}_k = \frac{1}{k}\sum_{i=0}^{k-1} w_i$,
\begin{align}\label{eq:Lyapunov-convex}
   \hspace{-1em}\gamma \FullExpec{F(w_{k-1}) - F(w_*)} &\leq V_{k-1} - V_k   + \ffrac{\gamma^2\sigma^2 \Phi(\gamma)}{Nb} \Longrightarrow    \E[    F(\bar{w}_k) - F_*] \leq \frac{V_{0}}{\gamma k}+ \ffrac{\gamma\sigma^2 \Phi(\gamma)}{Nb}\,.
\end{align}
Consequently, for $K$ in $\N$ large enough, a step-size $\gamma= \sqrt{\frac{ \SqrdNrm{w_{0} - w_*}Nb  }{(1 + \omgC\up)  \sigma^2 K }}$, we have:
\begin{align} \label{eq:convergence-MCM}
\E[    F(\bar{w}_K) - F_*] \leq 2 \sqrt{\frac{ \SqrdNrm{w_{0} - w_*} (1 + \omgC\up) \sigma^2 }{Nb K }}  + O(K^{-1}).
\end{align}
Moreover if $\sigma^2 =0 $ (noiseless case), we recover a faster convergence: $\E[    F(\bar{w}_K) - F_*] =O(K^{-1})$. 
\end{theorem}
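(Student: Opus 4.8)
The plan is to first establish the one-step ``contraction'' inequality on the left of the implication in \Cref{eq:Lyapunov-convex}, and then deduce the right-hand bound and the rate \Cref{eq:convergence-MCM} by a routine telescoping-and-Jensen argument followed by an optimization over $\gamma$. To prove the one-step inequality I would start from the global update $w_k = \wkm - \gamma\hat g_k$, with $\hat g_k := \frac1N\sum_\iN\bigl(\C\up(\Delta_{k-1}^i) + h_{k-1}^i\bigr)$ (in the homogeneous setting $\alpha\up = 0$, so $h_{k-1}^i \equiv 0$ and $\Delta_{k-1}^i = \g_k^i(\wkmhat)$), expand $\SqrdNrm{w_k - w_*}$, and take the conditional expectation given the past. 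Unbiasedness of $\C\up$ (\Cref{asu:expec_quantization_operator}) turns the inner-product term into $\gamma\PdtScl{\nabla F(\wkmhat)}{\wkm - w_*}$, which I split as $\gamma\PdtScl{\nabla F(\wkmhat)}{\wkmhat - w_*} + \gamma\PdtScl{\nabla F(\wkmhat)}{\wkm - \wkmhat}$. The first summand is $\ge \gamma(F(\wkmhat) - F_*)$ by convexity (\Cref{asu:cvx_or_strongcvx} with $\mu = 0$); for the perturbation summand I apply Young's inequality together with the identity $\wkm - \wkmhat = \Omega_{k-1} - \C\dwn(\Omega_{k-1})$, so that $\E\SqrdNrm{\wkm - \wkmhat} \le \omgC\dwn\,\E[\Upsilon_{k-1}]$, and the co-coercivity bound $\SqrdNrm{\nabla F(\wkmhat)} \le 2L\bigl(F(\wkmhat) - F_*\bigr)$ valid for a convex $L$-smooth $F$ with minimiser $w_*$. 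The term $\gamma^2\E\SqrdNrm{\hat g_k}$ is decomposed into the stochastic-gradient noise (bounded by $\sigma^2/(Nb)$, the $1/N$ coming from independence across workers, \Cref{asu:noise_sto_grad}), the uplink compression variance (producing the factor $1 + \omgC\up$), and one more $\SqrdNrm{\nabla F(\wkmhat)}$ contribution again absorbed by co-coercivity. Finally, to match the statement's $F(\wkm)$ in place of $F(\wkmhat)$, one further use of $L$-smoothness, $F(\wkm) \le F(\wkmhat) + \PdtScl{\nabla F(\wkmhat)}{\wkm - \wkmhat} + \tfrac L2\SqrdNrm{\wkm - \wkmhat}$, trades the gap at $\wkmhat$ for the gap at $\wkm$ at the price of an additional $O(L\omgC\dwn\,\Upsilon_{k-1})$ term.

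The second ingredient is a contraction for $\Upsilon_k = \SqrdNrm{w_k - H_{k-1}}$ itself. Using $H_{k-1} = H_{k-2} + \alpha\dwn\C\dwn(\Omega_{k-1})$ and $w_k = \wkm - \gamma\hat g_k$, I expand $\Upsilon_k$ in terms of $\Upsilon_{k-1}$ and $\SqrdNrm{w_k - \wkm} = \gamma^2\SqrdNrm{\hat g_k}$, take expectations, and use unbiasedness and the variance bound of $\C\dwn$ with the choice $\alpha\dwn = (8\omgC\dwn)^{-1}$ to obtain $\E[\Upsilon_k] \le (1-c)\,\E[\Upsilon_{k-1}] + c'\gamma^2\,\E\SqrdNrm{\hat g_k}$ for explicit $c \asymp \omgC\dwn^{-1}$, $c' \asymp \omgC\dwn$, the $\SqrdNrm{\hat g_k}$ term being bounded as above. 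Forming the Lyapunov function $V_k = \E\SqrdNrm{w_k - w_*} + 32\gamma L\omgC\dwn^2\,\E[\Upsilon_k]$, the weight $32\gamma L\omgC\dwn^2$ is chosen precisely so that the contraction gained on $\Upsilon_k$ dominates all the $\Upsilon_{k-1}$ terms generated above, while the $\gamma^2\SqrdNrm{\hat g_k}$ contributions collapse into $\gamma^2\sigma^2\Phi(\gamma)/(Nb)$ with $\Phi(\gamma) = (1+\omgC\up)(1 + 64\gamma L\omgC\dwn^2)$; the constraint $\gamma \le \gamma_{\max}$ is exactly what keeps the remaining coefficients nonnegative ($\gamma_{\max}^\mathrm{up}$ for the $\SqrdNrm{\nabla F(\wkmhat)}$ terms, $\gamma_{\max}^\mathrm{dwn}$ for the downlink Young step, and $\gamma_{\max}^\Upsilon$ for the cross-terms in the $\Upsilon$-recursion, which is where the $\omgC\dwn^{3/2}$ scaling comes from). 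This yields the left-hand inequality of \Cref{eq:Lyapunov-convex}; equivalently, it is the convex ($\mu = 0$) analogue of the contraction behind \Cref{thm:cvgce_mcm_strongly_convex}.

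Summing the one-step inequality over $k = 1,\dots,K$ telescopes the $V_k$, giving $\gamma\sum_{k=1}^K\E[F(\wkm) - F_*] \le V_0 + K\gamma^2\sigma^2\Phi(\gamma)/(Nb)$, and Jensen's inequality applied to the convex $F$ at $\bar{w}_K = \frac1K\sum_{k=0}^{K-1}w_k$ gives the right-hand side of the implication. When $\sigma^2 = 0$ the noise term vanishes and $\E[F(\bar{w}_K) - F_*] \le V_0/(\gamma K) = O(K^{-1})$. When $\sigma^2 > 0$, the bound reads $V_0/(\gamma K) + \gamma\sigma^2(1+\omgC\up)/(Nb) + 64\gamma^2 L\omgC\dwn^2\sigma^2(1+\omgC\up)/(Nb)$; balancing the first two terms forces $\gamma \asymp \sqrt{\SqrdNrm{w_0 - w_*}\,Nb\,/\,((1+\omgC\up)\sigma^2 K)} = O(K^{-1/2})$, which is admissible (i.e.\ $\le \gamma_{\max}$) exactly ``for $K$ large enough'', so that the third term and the $32\gamma L\omgC\dwn^2\Upsilon_0$ part of $V_0$ are both $O(K^{-1})$, leaving \Cref{eq:convergence-MCM}. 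I expect the main obstacle to be everything up to and including the assembly of $V_k$: choosing $\alpha\dwn$, the Lyapunov weight, and the Young parameters mutually consistently so that the perturbation is absorbed by the memory contraction and only $\gamma^2\sigma^2\Phi(\gamma)/(Nb)$ survives — and, as a finer point, passing cleanly from $F(\wkmhat)$ to $F(\wkm)$; the telescoping and the optimization of $\gamma$ are then routine.
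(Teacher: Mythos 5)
Your proposal reconstructs the paper's argument almost exactly: the perturbed-iterate expansion with the inner product split at $\wkmhat$, the bound $\E\SqrdNrm{\wkm-\wkmhat}\le \omgC\dwn\E[\Upsilon_{k-1}]$, the recursion on $\Upsilon_k$ from \Cref{thm:contraction_mcm}, the Lyapunov function $V_k=\E\SqrdNrm{w_k-w_*}+32\gamma L\omgC\dwn^2\E[\Upsilon_k]$ with exactly the right weight, and the telescoping/Jensen/step-size-tuning finish. The two places where you deviate are both spots where the paper exploits the identity $\Expec{\wkmhat}{\wkm}=\wkm$ more fully than you do, and both cost you constants. First, for the cross term the paper does not use Young's inequality on $\PdtScl{\nabla F(\wkmhat)}{\wkm-\wkmhat}$ directly: it first subtracts $\nabla F(\wkm)$ (whose contribution vanishes in conditional expectation) and then applies Cauchy--Schwarz plus smoothness to $\PdtScl{\nabla F(\wkmhat)-\nabla F(\wkm)}{\wkmhat-\wkm}$, yielding $2\gamma L\,\E\SqrdNrm{\wkmhat-\wkm}$ with \emph{no} residual gradient-norm term; your Young step produces an extra $\tfrac{\gamma}{2\beta}\SqrdNrm{\nabla F(\wkmhat)}$ which, after your cocoercivity conversion $\SqrdNrm{\nabla F(\wkmhat)}\le 2L(F(\wkmhat)-F_*)$, can consume the entire contraction $-\gamma(F(\wkmhat)-F_*)$ if $\beta$ is set to the natural value $2L$ — you must take $\beta$ strictly larger and accept a degraded coefficient on $\SqrdNrm{\wkmhat-\wkm}$. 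Relatedly, the paper retains a $-\tfrac{\gamma}{2L}\E\SqrdNrm{\nabla F(\wkmhat)}$ credit via the inequality $\PdtScl{\nabla F(w)}{w-w_*}\ge\tfrac12(F(w)-F_*)+\tfrac{1}{2L}\SqrdNrm{\nabla F(w)}$ and cancels the gradient-norm terms from the $\Upsilon$-recursion against it exactly, rather than folding them into the function gap. Second, to pass from $F(\wkmhat)$ to $F(\wkm)$ the paper uses Jensen ($\E[F(\wkmhat)]\ge\E[F(\wkm)]$ since $F$ is convex and $\Expec{\wkmhat}{\wkm}=\wkm$), which is free, whereas your smoothness step adds another $O(L\omgC\dwn\Upsilon_{k-1})$ term. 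Your route still delivers the $O(K^{-1/2})$ rate, but it will not reproduce the stated $V_0/(\gamma k)$, the factor $2$ in \Cref{eq:convergence-MCM}, or $\Phi(\gamma)=(1+\omgC\up)(1+64\gamma L\omgC\dwn^2)$ verbatim; adopting the centering trick at those two junctures recovers the paper's constants with no extra work.
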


\gs
\textbf{Limit Variance (Eq. \eqref{eq:Lyapunov-convex}).} The variance term  is identical to the strongly-convex case.

\textbf{Convergence and complexity (\Cref{eq:convergence-MCM}).} The downlink compression constant only appears in the second-order term, scaling as $1/K$. In other words, the convergence rate is equivalent to the convergence rate of \Diana, in the non-strongly-convex. As $K$ increases, this complexity scales as  $\frac{(1+\omgC\up)}{n\epsilon^2}$ independently of the downlink compression. Again, for previous algorithms with double compression the complexity is at least $O\left(\frac{(1+\omgC\up)(1+\omgC\dwn)}{n\epsilon^2}\right)$ (see Corollary I.2 in \cite{gorbunov_linearly_2020}).

\textbf{Control of the variance of the local model.}

\begin{wrapfigure}[11]{R}{0.48\textwidth}
\flushright
    \gs\gs\gs\gs\gs
    \begin{center}
    \includegraphics[width=0.23\textwidth]{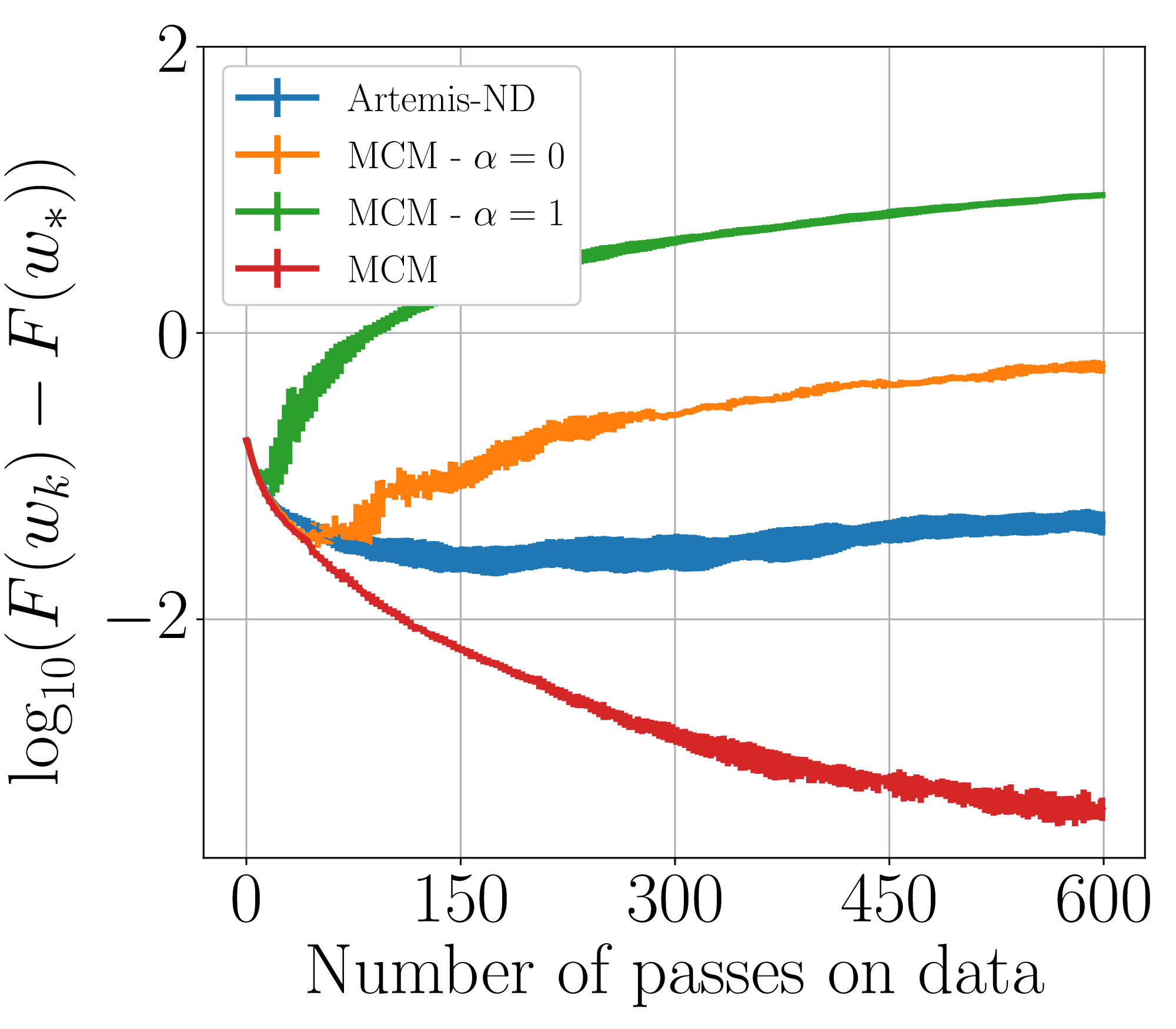}
    \includegraphics[width=0.23\textwidth]{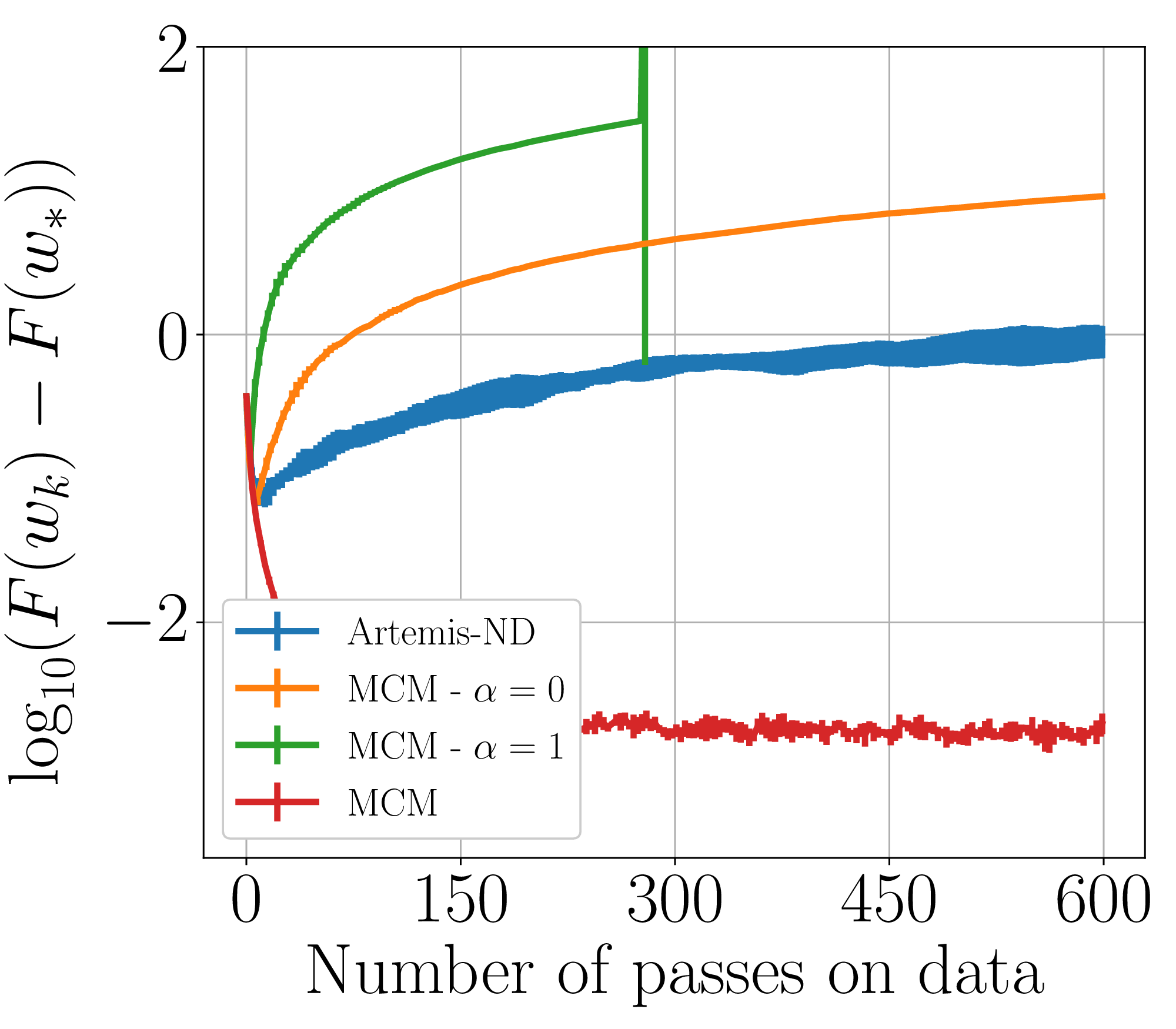}
    \end{center}
    \caption{Comparing \MCM~on two datasets with three other algorithms using a non-degraded update, $\gamma = 1/L$. \texttt{Artemis-ND} stands for \Artemis~with a non-degraded update.\vspace{-0.9em}}
    \label{fig:mcm_various_option}
\end{wrapfigure}

We here  present the backbone Lemma of \MCM's proof. It allows to control the variance of the local model $\E[\SqrdNrm{\hat w_k - w_k}|w_k]$ (which is upper-bounded by $ \omgC\dwn \E[\SqrdNrm{\dwnMemTerm_k}|w_k] $) and to build the Lyapunov function defined in \Cref{thm:cvgce_mcm_strongly_convex,thm:cvgce_mcm_convex}. 

This result  highlights the impact of the downlink  memory term. Without memory, i.e., with $\alpha\dwn =0$, the variance of the local model $\|\hat w_k- w_k\|^2 $ increases with the number of iterations. On the other hand, if $\alpha\dwn$ is too large (close to 1), this variance diverges.  This behavior is illustrated on two real datasets on \Cref{fig:mcm_various_option}. This phenomenon is similar to the divergence observed in frameworks involving error feedback, when the compression operator is not contractive. 

\begin{theorem}
\label{thm:contraction_mcm}
Consider the \MCM~update as in \cref{eq:model_compression_eq_statement}. Under \Cref{asu:expec_quantization_operator,asu:smooth,asu:noise_sto_grad} with $\mu=0$,  if $\gamma \leq ({8\omgC\dwn L})^{-1}$ and $\alpha \leq (4 \omgC\dwn){-1}$, then for all $k$ in $\N$:
\begin{align*}
    \FullExpec{\dwnMemTerm_{k}} &\leq \bigpar{1 - \ffrac{\alpha\dwn}{2}} \FullExpec{\dwnMemTerm_{k-1}} + 2 \gamma^2\bigpar{\frac{1}{\alpha\dwn} + \ffrac{\omgC\up}{N}} \FullExpec{\SqrdNrm{\nabla F(\wkmhat)}} + \ffrac{2\gamma^2 \sigma^2(1+\omgC\up)}{Nb} \,.
\end{align*}
\end{theorem}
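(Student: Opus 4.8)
The plan is to control the single scalar $\dwnMemTerm_k = \SqrdNrm{\Omega_k}$ with $\Omega_k := w_k - H_{k-1}$ through a one-step recursion. Subtracting the downlink memory update $H_{k-1} = H_{k-2} + \alpha\dwn\C\dwn(\Omega_{k-1})$ from the model update $w_k = w_{k-1} - \gamma\widehat{\g}_k$ — where, in the homogeneous case with $\alpha\up=0$, $\widehat{\g}_k := \tfrac1N\sum_{i=1}^N\C\up(\g_k^i(\wkmhat))$ is the uplink estimator, \emph{built at the perturbed point} $\wkmhat = H_{k-2} + \C\dwn(\Omega_{k-1})$ — gives
\begin{align*}
\Omega_k \;=\; \bigpar{\Omega_{k-1} - \alpha\dwn\C\dwn(\Omega_{k-1})}\; -\; \gamma\,\widehat{\g}_k\,.
\end{align*}
I would then work with two nested conditionings: $\mathcal{F}_{k-1}$, under which $w_{k-1},H_{k-2}$ (hence $\Omega_{k-1}$) are fixed but $\C\dwn(\Omega_{k-1})$ is not drawn yet; and $\mathcal{G}_{k-1}\supset\mathcal{F}_{k-1}$, under which $\C\dwn(\Omega_{k-1})$ (hence $\wkmhat$ and $H_{k-1}$) is revealed but the step-$k$ uplink randomness is not.

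\textbf{Uplink noise.} Conditionally on $\mathcal{G}_{k-1}$, $\widehat{\g}_k$ is unbiased for $\nabla F(\wkmhat)$, so the bias–variance split is clean: $\Expec{\dwnMemTerm_k}{\mathcal{G}_{k-1}} = \SqrdNrm{(\Omega_{k-1}-\alpha\dwn\C\dwn(\Omega_{k-1})) - \gamma\nabla F(\wkmhat)} + \gamma^2\Expec{\SqrdNrm{\widehat{\g}_k-\nabla F(\wkmhat)}}{\mathcal{G}_{k-1}}$. Independence across the $N$ workers together with \Cref{asu:expec_quantization_operator} (applied to $\C\up$) and \Cref{asu:noise_sto_grad} bound the last term by $\tfrac{\omgC\up}{N}\SqrdNrm{\nabla F(\wkmhat)} + \tfrac{(1+\omgC\up)\sigma^2}{Nb}$; this is where the $\omgC\up/N$ and $\sigma^2(1+\omgC\up)/(Nb)$ terms of the claim originate.

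\textbf{Decoupling the perturbed point (the main obstacle).} The difficulty is that $\wkmhat$, the evaluation point of $\widehat{\g}_k$, is itself a random function of the downlink compression $\C\dwn(\Omega_{k-1})$ that still has to be averaged; a naive integration couples the contraction term with the gradient term. I would resolve this by applying Young's inequality $\SqrdNrm{a-b}\le(1+\beta)\SqrdNrm{a}+(1+\beta^{-1})\SqrdNrm{b}$ with $\beta\asymp\alpha\dwn$ to the bias term \emph{before} integrating over $\C\dwn(\Omega_{k-1})$, splitting it into a contraction piece $\SqrdNrm{\Omega_{k-1}-\alpha\dwn\C\dwn(\Omega_{k-1})}$ and a gradient piece $\gamma^2\SqrdNrm{\nabla F(\wkmhat)}$ that is left as is. This is exactly why $\nabla F(\wkmhat)$, rather than $\nabla F(w_{k-1})$, appears on the right-hand side: the alternative of bounding $\SqrdNrm{\nabla F(\wkmhat)}\lesssim\SqrdNrm{\nabla F(w_{k-1})}+L^2\SqrdNrm{\wkmhat-w_{k-1}}$ after integration would both move the gradient point and create a spurious $\gamma^2L^2\omgC\dwn\,\dwnMemTerm_{k-1}$ term — which is what the step-size bound $\gamma\le(8\omgC\dwn L)^{-1}$ would then have to absorb.

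\textbf{Downlink noise and bookkeeping.} Taking $\Expec{\cdot}{\mathcal{F}_{k-1}}$ and using that $\C\dwn$ is unbiased with relative variance $\le\omgC\dwn$ (\Cref{asu:expec_quantization_operator}), the contraction piece becomes $\bigpar{(1-\alpha\dwn)^2+\alpha\dwn^2\omgC\dwn}\,\dwnMemTerm_{k-1}$ (the cross term vanishes by unbiasedness). With $\alpha\dwn\le(4\omgC\dwn)^{-1}\le 1$ this is at most $(1-\tfrac32\alpha\dwn)\dwnMemTerm_{k-1}$, and with $\beta=\alpha\dwn$ the prefactor $(1+\beta)$ still leaves it $\le(1-\tfrac12\alpha\dwn)\dwnMemTerm_{k-1}$. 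Collecting the two gradient contributions gives a coefficient $\gamma^2\bigpar{1+\alpha\dwn^{-1}+\omgC\up/N}\le 2\gamma^2\bigpar{\alpha\dwn^{-1}+\omgC\up/N}$ (using $\alpha\dwn^{-1}\ge 1$), while the noise term is $\gamma^2(1+\omgC\up)\sigma^2/(Nb)\le 2\gamma^2(1+\omgC\up)\sigma^2/(Nb)$. Taking total expectations — and using the step-size restriction only to absorb remaining lower-order terms and guarantee the iterates have finite second moments — yields the stated inequality.
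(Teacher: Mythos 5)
Your proof is correct, and it takes a genuinely different --- and somewhat leaner --- route than the paper's. The paper performs a single bias--variance decomposition of $\Omega_k=w_k-H_{k-1}$ conditionally on $\wkm$ alone: the bias $\SqrdNrm{(1-\alpha\dwn)\Omega_{k-1}-\gamma\Expec{\nabla F(\wkmhat)}{\wkm}}$ is handled, as in your argument, by Young's inequality with parameter $\alpha\dwn$, while the variance splits into three pieces --- the uplink/stochastic noise (your $\omgC\up/N$ and $\sigma^2$ contributions), the downlink noise on the memory increment $\alpha\dwn^2\omgC\dwn\dwnMemTerm_{k-1}$, and an additional piece $\Expec{\SqrdNrm{\nabla F(\wkmhat)-\Expec{\nabla F(\wkmhat)}{\wkm}}}{\wkm}\le L^2\omgC\dwn\dwnMemTerm_{k-1}$ obtained via smoothness, whose absorption into the $(1-\alpha\dwn/2)$ contraction is exactly what consumes the hypothesis $\gamma\le(8\omgC\dwn L)^{-1}$. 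Your two-stage conditioning sidesteps that third piece: splitting bias and variance only with respect to the uplink randomness (given $\C\dwn(\Omega_{k-1})$) and applying Young's inequality \emph{before} integrating out the downlink compression leaves the gradient term as $\gamma^2(1+\alpha\dwn^{-1})\SqrdNrm{\nabla F(\wkmhat)}$ with $\wkmhat$ still random --- precisely the form the theorem keeps on its right-hand side --- so neither smoothness nor the step-size condition is actually needed for this lemma, and your noise constant comes out a factor $2$ sharper before you relax it to match the statement. Two minor remarks: the estimate $(1-\alpha\dwn)^2+\alpha\dwn^2\omgC\dwn\le 1-\frac{3}{2}\alpha\dwn$ implicitly uses $\alpha\dwn\le 1/4$, i.e.\ $\omgC\dwn\ge 1$ (the paper makes the same concession); and your closing appeal to the step-size restriction to absorb remaining lower-order terms is vacuous in your own derivation, since none remain.
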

This bound provides a recursive control on $\Upsilon_k$. Beyond the $(1-\alpha\dwn)$ contraction, the bound comprises the squared-norm of the gradient at the previous perturbed iterate, and a noise term.

\textbf{Summary of rates.} In \Cref{tab:summary_rate}, we summarize the rates and complexities, and maximal learning rate for \Diana, \Artemis, \Dore~and \MCM. For simplicity, we ignore absolute constants, and provide asymptotic values for large $\omgC\up$, $\omgC\dwn$, and complexities for $\epsilon\to 0$.
\begin{table*}[!htp]
    \centering
    \caption{Summary of rates on the initial condition, limit variance, asympt. complexities and $\gamma_{\max}$. \gs}
    \label{tab:summary_rate}
\resizebox{\linewidth}{!}{\begin{tabular}{lp{3.3cm}lll}
\toprule
Problem &  &Diana &Artemis, Dore & MCM, Rand-MCM   \\
\midrule
& $L \gamma_{\max} \varpropto$   &$ 1/(1+\omgC\up)$& $ 1/(1+\omgC\up)(1+\omgC\dwn)$ & $ 1/ (1+\omgC\dwn) \sqrt{1+\omgC\up} \wedge 1/ (1+\omgC\up)$ \\
  & Lim. var.  $\varpropto\gamma^2\sigma^2 / n \times $  &$(1+\omgC\up)$&$ (1+\omgC\up)(1+\omgC\dwn)$ &$ (1+\omgC\up)(1+\textcolor{green!50!black}{\gamma L}\omgC\dwn^2)$ \\
\midrule
Str.-convex & Rate on init. cond. (SC)&$(1-\gamma \mu)^k$&$(1-\gamma \mu)^k$  &  $(1-\gamma \mu)^k$ \\
& Complexity & $(1 + \omgC\up) / \mu \epsilon N $ & $(1+ \omgC\dwn)(1 + \omgC\up) / \mu \epsilon N $ & $(1 + \omgC\up) / \mu \epsilon N $\\
\midrule
Convex  & Complexity & $(\omgC\up + 1) / \epsilon^2$ & $(1 + \omgC\up ) (1+ \omgC\dwn )  / \epsilon^2$ & $(\omgC\up + 1) / \epsilon^2$ \\
\bottomrule
\end{tabular}}\gs
\end{table*}

\textbf{Proof in the heterogeneous case.}
To extend \Cref{thm:cvgce_mcm_convex,thm:cvgce_mcm_strongly_convex,thm:contraction_mcm} in the heterogeneous setting for a convex objective (\Cref{app:sec:adaptation_to_heterogeneous_case}), we assume  that there exists a constant $B$ in $\mathbb{R_+}$, s.t.: 
$\frac{1}{N} \sum_{i=0}^N \| \nabla F_i(w_*)\|^2 = B^2\,.$
We further define $\upMemTerm_k = \frac{1}{N^2} \sum_\iN \SqrdNrm{h_{k}^i - \nabla F_i(w_*)}$,  where for all $i$ in $\llbracket 1, N \rrbracket$.
This term is recursively controled \cite{mishchenko_distributed_2019,philippenko_artemis_2020} and combined  into the Lyapunov function.  

\textbf{Proofs.} To convey the best understanding of the theorems and the spirit of the proof, we introduce a \ghost~algorithm (impossible to implement) in \Cref{app:sec:ghost_def}. A sketch of the proof describes the main steps in the case of \ghost, those steps are similar for \MCM. Fundamentally, our proof relies on a tight analysis, related to perturbed iterate analysis~\cite{mania_perturbed_2016}.  Proofs of \Cref{thm:cvgce_mcm_strongly_convex,thm:cvgce_mcm_convex,thm:contraction_mcm} are given in \Cref{app:sec:proofs_mcm}.
Th.~\ref{app:thm:mcm_non_convex} in \Cref{app:subsec:nonconvexMCM} ensures convergence for a non-convex $F$. Note that the proof for non-convex follows a different approach than the one in \Cref{thm:cvgce_mcm_strongly_convex,thm:cvgce_mcm_convex}.

As mentioned in the introduction, our analysis of perturbed iterate in the context of double compression opens new directions: in particular, it opens the door to handling a different model for each worker. In the next section, we detail those possibilities, and provide theoretical guarantees for \RMCM, the variant of \MCM~in which instead of sending the same model to all workers, the compression noises are mutually \textit{independent}.

\begin{remark}[Communication budget]
How to split a given communication budget between uplink and downlink to optimize the convergence is an open question which is intrinsically related to the situation. Indeed it depends on many factors like the selected operators of compression, the upload/downlink speed or the number of participating workers at each iteration. 
However, our approach provides some insights on this question. Because \textit{asymptotically} the impact of double compression is marginal, for a fixed budget, \Cref{thm:cvgce_mcm_convex} suggests to strongly compress on the downlink direction (which leads to a large $\omega_{dwn}$), but to perform a weaker compression in the uplink direction.
\end{remark}

\gs\gs
\section{Extension to \RMCM}\gs
\label{sec:theory_randomization}

\subsection{Communication and convergence trade-offs}\gs
\label{sec:communication_trade_offs}

\label{subsec:communication_tradeoffs}
In \RMCM, we leverage the fact that the compressions used for each worker need not to be identical. On the contrary, it is possible to consider \textit{independent} compressions. By doing so, we reduce the impact of the downlink compression.

The relevance of such a modification depends on the framework: while the convergence rate will be improved, the computational time can be slightly increased. Indeed, $N$ compressions need to be computed instead of one: however, this computational time is typically not a bottleneck w.r.t. the communication time. 
A more important aspect is the communication cost. While the size of each message will remain identical, a different message needs to be sent to each worker. That is, we go from a ``one to $N$'' configuration to $N$ ``one to one'' communications. While this is a drawback, it is not an issue  when the bandwidth/transfer time are the bottlenecks, as \RMCM~will result in a better convergence with almost no cost. Furthermore, we argue that handling worker dependent models is essential for several major applications. \RMCM~can directly be adapted to those frameworks.

\textbf{1. Worker dependent compression.} A first simple situation is the case in which workers are allowed to choose the size (or equivalently the compression level) of their updates.

\textbf{2. Partial participation (PP).}
Similarly, having $N$ different messages to send to each worker may be unavoidable in the case of \textit{partial participation} of the workers.
This is a key feature in Federated Learning frameworks \cite{mcmahan_communication-efficient_2017}. 
In the classical distributed framework (without downlink constraints) it is easy to deal  with it, as each available worker just queries the global model to compute its gradient on it~\citep[see for example][]{horvath_better_2020}.  
On the other hand, for bidirectional compression, to ensure that all the local models match the central model, the adaptation to partial participation relies on  a \textit{synchronization step}. During this step, each  worker that has not participated in the last $S$ steps receives the last $S$ corresponding messages as long as it costs less to send this sequence than a full uncompressed model. This is described in the description of the adaptation to partial participation in \citep{philippenko_artemis_2020}, in the remark preceding Eq. (20) in \citep{sattler_robust_2019} and by \citet[][v2 on arxiv for the distributed case]{tang_doublesqueeze_2019}, who use a buffer.
On the contrary, \RMCM~naturally handles a different model, memory and update per worker. The adaptation to partial participation is thus straightforward. Though theoretical results are out of the scope of this paper, we provide experiments on PP in \Cref{app:subsec:expe_partial_part,fig:one_mem}.

One drawback is the necessity to store the $N$ memories $(H_k^i)_{i\in [N]}$ instead of one, which results in an additional memory cost. To circumvent this issue we propose two independent solutions. 1) Keep and use a single memory $\bar H_k = N^{-1}\sum_{{i=1}}^NH_k^i$ (as suggested in \citep{philippenko_artemis_2020}).  It is then necessary to periodically reset the local memories $H_k^i$ on all workers to the averaged value $\bar H_k$ (rarely enough not to impact the communication budget). This  is illustrated in \cref{fig:one_mem}. 2) Use \RMCM~with an arbitrary number of groups $G \ll N$ of workers. In each group $\mathcal G_g$, $g\in [G]$, all workers share the same  memory $(H_k^g)$ and receive the same update $\C_{\mathrm{dwn},g} (w_{k+1} - H_k^g)$. We call this algorithm \RMCMG.

\begin{remark}[Protecting the global model from honest-but-curious clients]
\label{remrk:privacy}
Another business advantage of \MCM~and \RMCM~is 
that providing \textit{degraded} models to the participants can be used to guarantee privacy, or to ensure the workers participate in good faith, and not only to obtain the model. 
This issue of detecting ill-intentioned clients (free-riders) that want to obtain the model without actually contributing  has been studied by \citet{fraboni_free-rider_2021}.
\end{remark}

\gs\gs
\subsection{Theoretical results}\gs
In this Section, we provide two main theoretical results for \RMCM. First \Cref{thm:cvgce_rmcm}  ensures that the theoretical guarantees are at least as good for \RMCM~as for \MCM. Then, in \Cref{thm:rmcm_quad_guarantee}, we provide convergence result for both \MCM~and \RMCM~in the case of quadratic functions.

\begin{theorem}
\label{thm:cvgce_rmcm}
\Cref{thm:cvgce_mcm_strongly_convex,thm:cvgce_mcm_convex,thm:contraction_mcm} are valid for \RMCM~and \RMCMG.
\end{theorem}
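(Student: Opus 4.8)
The plan is to re-run the proofs of \Cref{thm:cvgce_mcm_strongly_convex,thm:cvgce_mcm_convex,thm:contraction_mcm} essentially verbatim, exploiting the fact that these proofs access the downlink operator $\C\dwn$ \emph{only} through \Cref{asu:expec_quantization_operator} (conditional unbiasedness and a relative variance bound) and through the linear memory recursion $H_{k+1} = H_k + \alpha\dwn \C\dwn(w_{k+1}-H_k)$. In \RMCM, worker $i$ carries its own memory $H_k^i$ and its own perturbed model $\widehat w_k^i = H_{k-1}^i + \C_{\mathrm{dwn},i}(w_k - H_{k-1}^i)$, where each $\C_{\mathrm{dwn},i}$ still satisfies \Cref{asu:expec_quantization_operator} with the same constant $\omgC\dwn$, and the family $(\C_{\mathrm{dwn},i})_{i=1}^N$ is mutually independent and independent of the uplink compressions and of the gradient sampling. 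I would therefore only modify the bookkeeping: redefine $\dwnMemTerm_k := \frac1N\sum_{i=1}^N \SqrdNrm{w_k - H_{k-1}^i}$ (which collapses to the \MCM~definition when all operators coincide) and leave $V_k$ and the filtrations otherwise unchanged, so that the statements to be proved are literally the same inequalities as in \Cref{thm:cvgce_mcm_strongly_convex,thm:cvgce_mcm_convex,thm:contraction_mcm}.

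The three places where the downlink compression enters the \MCM~analysis then transfer directly. First, for the local-model variance: since $\widehat w_k^i - w_k = \C_{\mathrm{dwn},i}(w_k - H_{k-1}^i) - (w_k - H_{k-1}^i)$, \Cref{asu:expec_quantization_operator} gives $\E[\SqrdNrm{\widehat w_k^i - w_k}\mid w_k, H_{k-1}^i] \le \omgC\dwn \SqrdNrm{w_k - H_{k-1}^i}$, and averaging over $i$ yields $\frac1N\sum_i \E[\SqrdNrm{\widehat w_k^i - w_k}\mid\cdot] \le \omgC\dwn \dwnMemTerm_k$, exactly the bound used for \MCM. Second, the memory contraction of \Cref{thm:contraction_mcm}: the recursion for $\SqrdNrm{w_{k+1}-H_k^i}$ is the per-worker copy of the \MCM~recursion, so expanding it, using $\E[\C_{\mathrm{dwn},i}(\Omega)\mid\cdot]=\Omega$ together with the variance bound, and then averaging over $i$ reproduces the stated inequality on $\dwnMemTerm_k$, with the term $\E\SqrdNrm{\nabla F(\wkmhat)}$ replaced by $\frac1N\sum_i \E\SqrdNrm{\nabla F(\widehat w_{k-1}^i)}$ — which is absorbed by the same smoothness split $\SqrdNrm{\nabla F(\widehat w^i)} \le 2\SqrdNrm{\nabla F(w)} + 2L^2\SqrdNrm{\widehat w^i - w}$ followed by the first point. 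Third, the one-step descent: in the expansion of $\E\SqrdNrm{w_{k+1}-w_*}$, once the uplink compression and the gradient noise are integrated out, the perturbed models appear only through $\frac1N\sum_i \nabla F(\widehat w_k^i) = \nabla F(w_k) + \frac1N\sum_i\bigl(\nabla F(\widehat w_k^i)-\nabla F(w_k)\bigr)$, and the deviation is bounded by Jensen and $L$-smoothness, $\SqrdNrm{\frac1N\sum_i(\nabla F(\widehat w_k^i)-\nabla F(w_k))} \le \frac{L^2}{N}\sum_i\SqrdNrm{\widehat w_k^i - w_k} \le L^2\omgC\dwn\dwnMemTerm_k$ in conditional expectation — exactly the quantity already controlled in the \MCM~proof. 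All remaining terms are untouched by randomization, so the same Lyapunov recursions, and hence the same convergence conclusions, hold.

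For \RMCMG~the argument is word-for-word the same with ``worker'' replaced by ``group'': inside a group the update is a \MCM~update, distinct groups use independent operators, and all the averages above become group-size-weighted averages over groups; the collapsed quantities coincide and the bounds are identical. In both variants the mutual independence of the downlink compressions is never \emph{needed} for the stated bounds — it only annihilates cross terms $\E[\langle \C_{\mathrm{dwn},i}(\Omega_i)-\Omega_i,\ \C_{\mathrm{dwn},j}(\Omega_j)-\Omega_j\rangle]=0$ for $i\ne j$, which can only sharpen the constants (this is precisely what is exploited separately in \Cref{thm:rmcm_quad_guarantee} for quadratics) — so one may use the lossy, \MCM-style bounding above without any loss relative to \Cref{thm:cvgce_mcm_strongly_convex,thm:cvgce_mcm_convex,thm:contraction_mcm}.

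The main obstacle is organizational rather than technical: one must (i) verify that the filtration used in the \MCM~proofs still supports taking the conditional expectations in the required order — gradient sampling, then uplink compression, then downlink compression — once $N$ extra mutually independent downlink compressions are added at the last step, and (ii) check that in each of the three proofs the only occurrences of $\widehat w_k$ are the three spots listed above, so that substituting the family $(\widehat w_k^i)_i$ and invoking convexity of $\SqrdNrm{\cdot}$ (Jensen) suffices and no term is silently dropped. No genuinely new estimate is required.
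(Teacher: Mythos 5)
Your proposal is correct and follows essentially the same route as the paper, whose own argument for \Cref{thm:cvgce_rmcm} consists precisely of the notational substitutions you describe: redefine $\dwnMemTerm_k$ as the average $\frac1N\sum_{i=1}^N\SqrdNrm{w_k-H_{k-1}^i}$, replace $\wkmhat$ and $\nabla F(\wkmhat)$ by their per-worker (or per-group) counterparts, and observe that every occurrence of the downlink operator is controlled only via \Cref{asu:expec_quantization_operator} so the three proofs carry over verbatim. The only cosmetic difference is that the paper keeps the gradient terms as $\frac1N\sum_i\SqrdNrm{\nabla F(\wkmhati)}$ throughout (so they cancel directly against the convexity gain) rather than re-expanding them via the smoothness split you mention, which is unnecessary here.
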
 \gs
The improvement in \RMCM~comes from the fact that we are ultimately averaging the gradients at several random points, reducing the variance coming from this aspect. The goal is obviously to reduce the impact of $\omgC\dwn$. Keeping in mind  that the dominating term in the rate is independent of $\omgC\dwn$, \textit{we can thus only expect to reduce the second-order term}. Next, the uplink compression noise increases with the variance of the randomized model, which will not be directly reduced by \RMCM. As a consequence, we only expect the improvement to be visible in the part of the second-order term that does not depend on $\omgC\up$ (that is, the effect would be the most significant if $\omgC\up$ is small or 0).

This intuition is corroborated by the following result, in which we show that the convergence is improved when adding the randomization process for a quadratic function. Extending the proof beyond quadratic functions is possible, though it requires an assumption on third or higher order derivatives of $F$ (e.g., using self-concordance \citep{bach_self-concordant_2010}) to  control of $\Expec{||\nabla F(\wkmhat) - \E[\nabla F(\wkmhat)]||^2}{\wkm}$. 

\begin{theorem}[Convergence in the quadratic case]\label{thm:rmcm_quad_guarantee}
Under \Cref{asu:cvx_or_strongcvx,asu:expec_quantization_operator,asu:smooth,asu:noise_sto_grad} with $\mu=0$, if the function is quadratic, after running $K>0$ iterations, for any $\gamma \le \gamma_{\max}$, and we have
\begin{align*}
   \E[    F(\bar{w}_K) - F_*] \leq \frac{V_{0}}{\gamma K}+ \frac{\gamma\sigma^2 \Phi^{\mathrm{Rd}}(\gamma)}{Nb} \,,
\end{align*}
with $\Phi^{\mathrm{Rd}}(\gamma)=(1 + \omgC\up) \bigpar{1 + \frac{4 \gamma^2 L^2 \omgC\dwn}{K} (\frac{1}{\RandOrNot} + \frac{\omgC\up }{N})}$ and $ \RandOrNot = N$ for \RMCM,  $ \RandOrNot = G$  \RMCMG, and $ \RandOrNot = 1$  for \MCM. 
\end{theorem}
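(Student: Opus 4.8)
The plan is to revisit the proof of \Cref{thm:cvgce_mcm_convex}, tracking where the downlink compression noise enters, and to show that for a quadratic $F$ the only place it appears is through the variance of the perturbed gradient $\nabla F(\wkmhat)$ around its conditional mean, which is exactly the term that randomization shrinks. First, I would recall that for a quadratic $F$ one has $\nabla F(w) = H(w - w_*)$ for the (constant) Hessian $H$, so $\nabla F(\wkmhat) = \nabla F(w_k) + H(\wkmhat - w_k)$ and, crucially, the map $w \mapsto \nabla F(w)$ is \emph{affine}. This linearity means that when we take the conditional expectation of the one-step update given $w_k$ (and the memory $H_{k-1}$), the term coming from $\widehat w_k - w_k$ has zero conditional mean (since $\E[\C\dwn(\Omega_k)\mid\cdot]=\Omega_k$), so it contributes only through its \emph{variance}: $\E[\|\nabla F(\wkmhat)-\nabla F(w_k)\|^2\mid w_k] = \E[\|H(\wkmhat-w_k)\|^2\mid w_k] \le L^2\,\E[\|\wkmhat-w_k\|^2\mid w_k] \le L^2\omgC\dwn\,\E[\dwnMemTerm_k\mid w_k]$.

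Next, I would redo the descent computation of \Cref{eq:Lyapunov-convex}, but instead of bounding the cross term $\langle \nabla F(w_k), \nabla F(\wkmhat)-\nabla F(w_k)\rangle$ crudely (which is what forces the $64\gamma L\omgC\dwn^2$ factor in $\Phi$ for general convex $F$), use linearity to cancel its conditional expectation and keep only the second moment of the perturbation. The recursive control of $\dwnMemTerm_k$ from \Cref{thm:contraction_mcm} then feeds back a factor proportional to $\gamma^2(\tfrac1{\alpha\dwn}+\tfrac{\omgC\up}{N})\|\nabla F(\wkmhat)\|^2$ into the Lyapunov inequality; telescoping over $K$ iterations, the $\dwnMemTerm$ contribution is amortized by $1/K$ (exactly as in the remark following \Cref{thm:cvgce_mcm_convex}, where $\omgC^2/K$ is vanishing), which produces the $1/K$ prefactor in $\Phi^{\mathrm{Rd}}(\gamma)$. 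For \RMCM, the key additional step is that the $N$ local models $\widehat w_k^i = H_k^i + \C_{\mathrm{dwn},i}(w_{k+1}-H_k^i)$ use \emph{mutually independent} compressions, so when we average $\tfrac1N\sum_i \nabla F(\widehat w_{k-1}^i)$ the perturbation variance is divided by $N$; tracking this through the contraction equation replaces the leading $\tfrac1{\alpha\dwn}$-type constant by $\tfrac1{\RandOrNot}+\tfrac{\omgC\up}{N}$ with $\RandOrNot = N$ (and $\RandOrNot = G$ for \RMCMG, since independence is only across the $G$ groups, and $\RandOrNot=1$ recovers \MCM). The uplink noise term $\tfrac{\omgC\up}{N}$ is untouched by randomization — this matches the heuristic in the paragraph before the theorem that the gain is only visible in the part of the second-order term independent of $\omgC\up$.

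Concretely, the steps in order: (1) write the one-step identity for $w_{k+1}-w_*$ and expand $\E[\|w_{k+1}-w_*\|^2\mid \mathcal F_k]$, isolating the terms involving $\widehat w_{k-1}-w_{k-1}$ (or $\widehat w_{k-1}^i - w_{k-1}$ for \RMCM); (2) use affineness of $\nabla F$ to kill the conditional mean of those terms and bound their second moment by $L^2\omgC\dwn \dwnMemTerm_k$ (resp.\ $\tfrac{L^2\omgC\dwn}{N}\dwnMemTerm_k$); (3) invoke \Cref{thm:contraction_mcm} (valid for \RMCM~by \Cref{thm:cvgce_rmcm}) to get the recursion on $\dwnMemTerm_k$ with the $\RandOrNot$-dependent constant; (4) assemble the modified Lyapunov function $V_k$, obtaining $\gamma\,\E[F(w_{k-1})-F_*]\le V_{k-1}-V_k + \tfrac{\gamma^2\sigma^2}{Nb}(1+\omgC\up)(1+\tfrac{4\gamma^2L^2\omgC\dwn}{K}(\tfrac1{\RandOrNot}+\tfrac{\omgC\up}{N}))$ after summing the $\dwnMemTerm$ recursion over the horizon; (5) telescope, divide by $\gamma K$, and apply convexity to $\bar w_K$. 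The main obstacle I anticipate is step (4): the contraction bound on $\dwnMemTerm_k$ carries a $\|\nabla F(\wkmhat)\|^2$ term, and for the telescoping to close one must absorb $\sum_k \gamma^2(\tfrac1{\alpha\dwn}+\tfrac{\omgC\up}{N})L^2\omgC\dwn\,\E\|\nabla F(\wkmhat)\|^2$ into the negative $-\gamma\E[F(\wkmhat)-F_*]$ terms available in the base Lyapunov inequality — this is exactly where the constraint $\gamma\le\gamma_{\max}^\Upsilon$ (with its $\omgC\dwn\sqrt{8\omgC\dwn+\omgC\up/N}$ scaling) comes from, and one must verify the quadratic structure doesn't weaken it; bookkeeping the factor-$1/K$ amortization so that it lands cleanly as the $\tfrac{4\gamma^2L^2\omgC\dwn}{K}$ coefficient rather than leaking into the dominant term requires care in how the $\dwnMemTerm$ recursion is summed (using $\dwnMemTerm_0$ finite and $\sum_{k}(1-\alpha\dwn/2)^{k}$ geometric).
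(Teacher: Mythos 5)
Your plan matches the paper's proof: it exploits $\E[\nabla F(\wkmhat)\mid\wkm]=\nabla F(\wkm)$ for quadratics to apply convexity directly at the non-perturbed iterate, bounds the perturbation only through its second moment $\E\|\tfrac1N\sum_i\nabla F(\widehat w_{k-1}^i)-\nabla F(\wkm)\|^2\le \tfrac{L^2\omgC\dwn}{\RandOrNot}\E[\dwnMemTerm_{k-1}]$ using independence of the downlink compressions, and then telescopes the $\dwnMemTerm$ recursion into the Lyapunov inequality. The one refinement you gloss over is that the paper cannot reuse \Cref{thm:contraction_mcm} verbatim as you suggest in step (3): it re-derives the contraction for the quadratic case so that the recursion on $\dwnMemTerm_k$ is expressed in terms of $\|\nabla F(w_{t-1})\|^2$ at the \emph{non-perturbed} iterates, which is necessary because the only negative gradient-norm term available for absorption is now $-\tfrac{\gamma}{2L}\|\nabla F(\wkm)\|^2$ rather than one at $\wkmhat$.
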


This result is derived in \Cref{app:sec:proofs_quad}. We can make the following comments:
(1) The convergence rate for quadratic functions is slightly better than for smooth functions. More specifically, the right hand term in $\Phi$ is multiplied by an additional  $ \gamma \left( \frac{1}{\RandOrNot} + \frac{\omgC\up}{N}\right)$ (w.r.t.~\Cref{thm:cvgce_mcm_convex}), which is decaying at the same rate as $\gamma$. 
  Besides, the proof for \RMCM~is substantially modified, as  $\E[\nabla F(\wkmhat)] $ is an unbiased estimator of  $\nabla F (\wkm)$.
   (2) Moreover, the randomization in \RMCM~(resp. \RMCMG) further reduces by a factor $N$ (resp. $G$)  this  term. Depending on the relative sizes of $\omgC\up$ and $N$, this can lead to a significant improvement up to a factor of $N$. In practice the impact of \RMCM~is noticeable, as illustrated in the following experiments. 

\gs
\section{Experiments}\gs
\label{sec:experiments}

In this section, we  illustrate the validity of the theoretical results given in the previous section on both synthetic and real datasets,  on (1) least-squares linear regression (LSR), (2) logistic regression (LR), and (3) non-convex deep learning. We compare \MCM~with classical algorithms used in distributed settings: \Diana, \Artemis, \Dore~and of course the simplest setting - \SGD, which is the baseline.

In these experiments, we provide results on the log of the excess loss $F(w_k) - F_*$, averaged on $5$ runs (resp. $2$) in convex settings (resp. deep learning),  with errors bars displayed on each figure (but not in the ``zoom square''), corresponding to the standard deviation of $\log_{10}(F(w_k)-F_*)$. 
On \Cref{fig:real_dataset}, the X-axis is respectively the number of iterations and the number of bits exchanged. 

Each experiment has been run with $N=20$ workers using stochastic scalar quantization~\citep{alistarh_qsgd_2017}, w.r.t. $2$-norm. To maximize compression, we always quantize on a single level ($s=2^0$), unless for PP ($s=2^1$) and neural network (the value of $s$ depends on the dataset). 

We used $9$ different datasets.
\begin{itemize}[topsep=0pt,itemsep=1pt,leftmargin=*,noitemsep]
    \item One toy dataset devoted to linear regression in an homogeneous~setting. This toy dataset allows to illustrate \MCM~properties in a simple~framework, and in particular to ilustrate that when $\sigma^2=0$, we recover a linear convergence\footnote{Even stronger, we show in experiments that we recover a linear rate if we have $\sigmstar = 0$ (the noise over stochastic gradient computation at the optimum point $w_\star$).}, see \Cref{fig:LSR_full}. 
    \item Five datasets commonly used in convex optimization (a9a, quantum, phishing, superconduct and w8a); see \Cref{app:tab:settings_convex} for more details. Experiments were conducted with heterogeneous workers obtained by clustering (using  \textit{TSNE} \citep{maaten_visualizing_2008}) the input points.
    \item Four dataset in a non-convex settings (CIFAR10, Fashion-MNIST, FE-MNIST, MNIST); see \Cref{app:tab:settings_nonconvex} for more details.
\end{itemize}

All experiments are performed without any tuning of the algorithms, (e.g., with the same learning rate for all algorithms and without reducing it after a certain number of epochs). Indeed, our goal is to show that our method achieves a performance close to the unidirectional-compression framework (\Diana), while performing an important downlink compression. More details about experiments can be found in \Cref{app:sec:experiments}.

On \Cref{fig:real_dataset}, we display the excess loss for quantum  and a9a w.r.t. the number of iteration and number of communicated bits. The plots of phising, superconduct and w8a are not provided but can be found on our \href{https://github.com/philipco/mcm-bidirectional-compression/notebook}{github repository}. We only report their excess loss after $450$ iterations in \Cref{tab:exp_convex}.

\gs\gs
\begin{table}[!htp]
\caption{\MCM - convex experiments, $b$ is the batch size}.
\label{tab:exp_convex}
\centering
\begin{tabular}{lrrrrr}
Excess loss after $450$ epochs & \SGD & \Diana & \MCM & \Dore & Ref\\
\hline 
a9a ($b=50$) & $-3.5$ & $-2.7$ & $-2.7$ & $-1.8$ & \cite{chang_libsvm_2011}\\
quantum ($b=400$) & $-3.4$ & $-3.2$ & $-3.2$ & $-2.6$ & \cite{caruana_kdd-cup_2004}\\
phishing ($b=50$) & $-3.7$ & $-3.5$ & $-3.4$ & $-2.7$ & \cite{chang_libsvm_2011}\\
superconduct ($b=50$) & $-1.6$ & $-1.6$ & $-1.55$ & $-1.45$ & \cite{hamidieh_data-driven_2018}\\
w8a ($b=12$) & $-3.5$ & $-3.0$ & $-2.5$ & $-1.75$ & \cite{chang_libsvm_2011}\\
\hline 
Compression & no & uni-dir & bi-dir & bi-dir & \\
\hline
\end{tabular}
\end{table}

\begin{figure}
    \centering
    \begin{subfigure}{0.24\linewidth}
        \centering
        \includegraphics[width=1\textwidth]{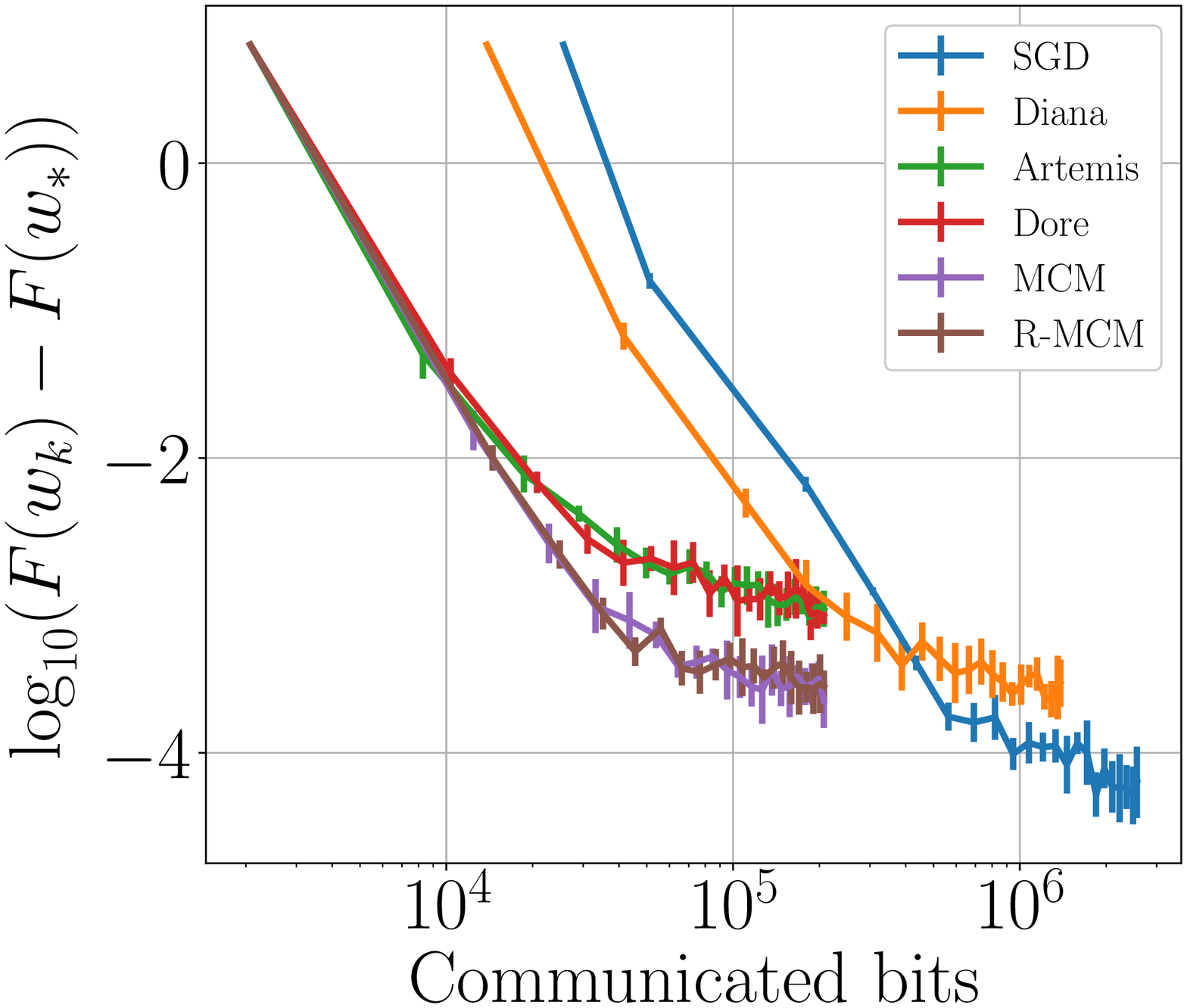}
        \caption{$\sigma^2\neq0$, $\gamma\!=(L\sqrt{k})^{-1}$\vspace{-0.5em}}
        \label{fig:LSR_sto}
    \end{subfigure}
    \begin{subfigure}{0.24\linewidth}
        \centering
        \includegraphics[width=1\textwidth]{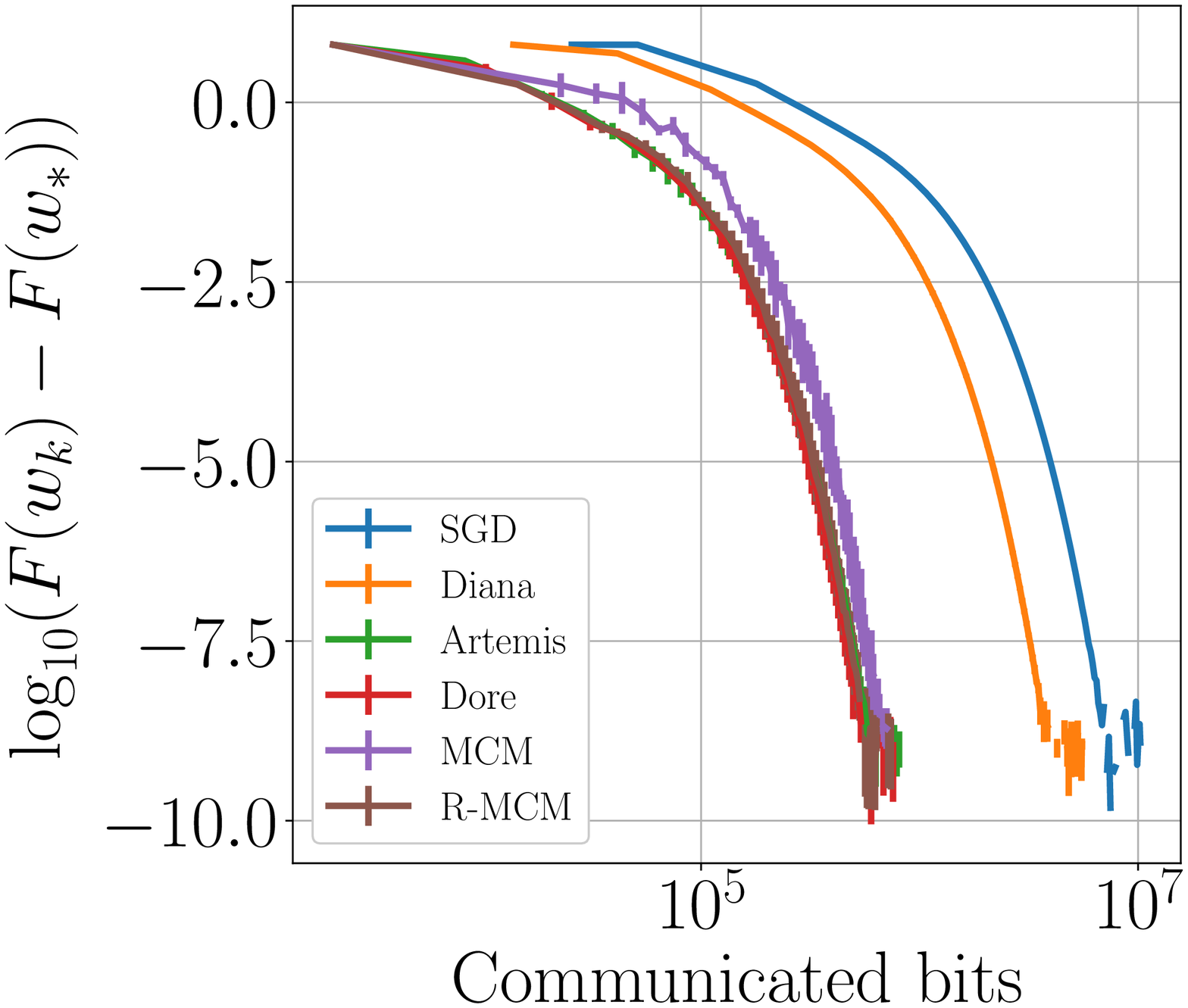}
        \caption{$\sigma^2\!=\!0$, $\gamma\!=\!L^{-1}$\vspace{-0.5em}}
        \label{fig:LSR_full}
    \end{subfigure}
    \begin{subfigure}{0.24\linewidth}
        \centering
        \includegraphics[width=1\textwidth]{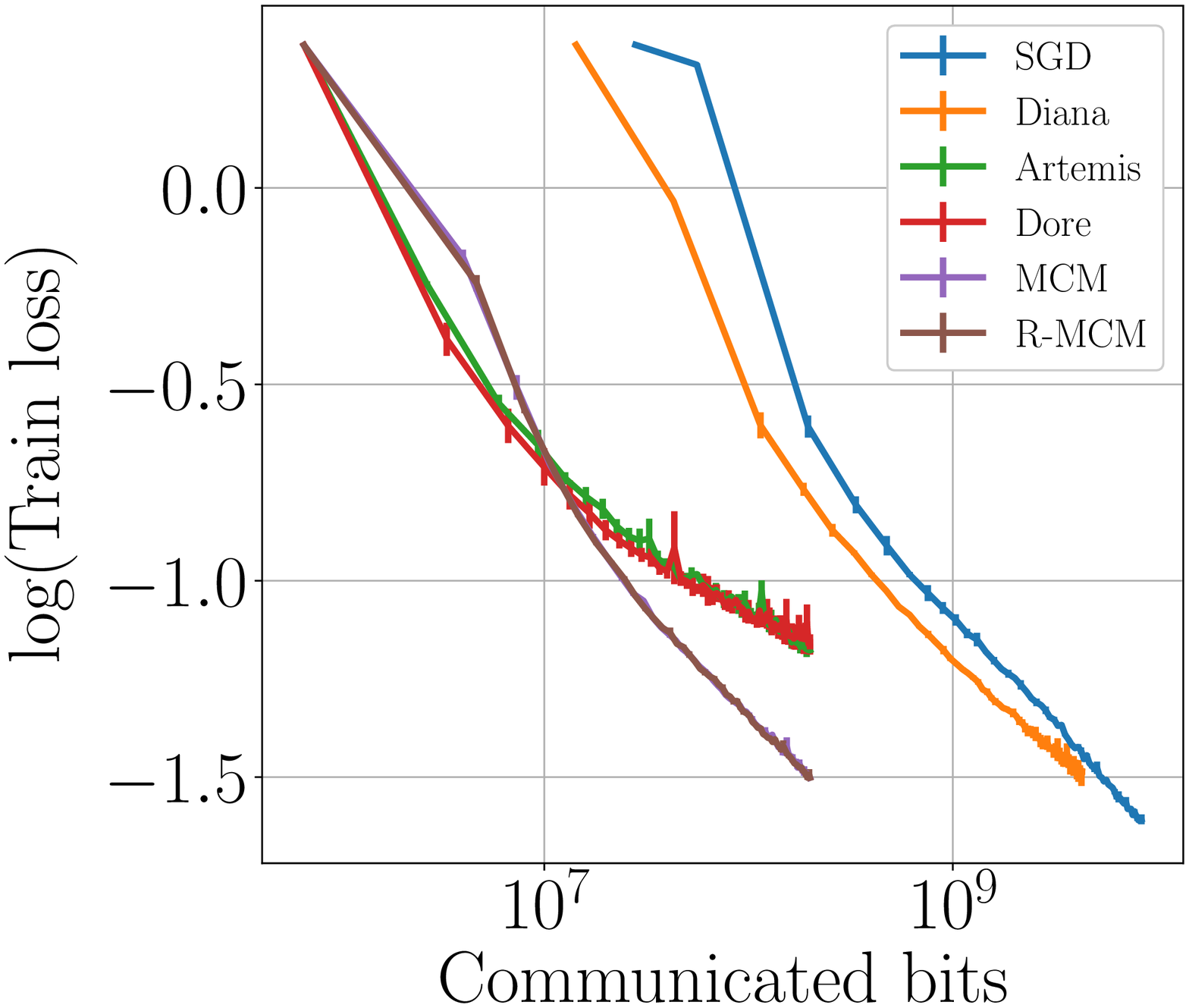}
        \caption{MNIST with a CNN\vspace{-0.5em}}
        \label{fig:MNIST}
    \end{subfigure}
    \begin{subfigure}{0.24\linewidth}
        \centering
        \includegraphics[width=1\textwidth]{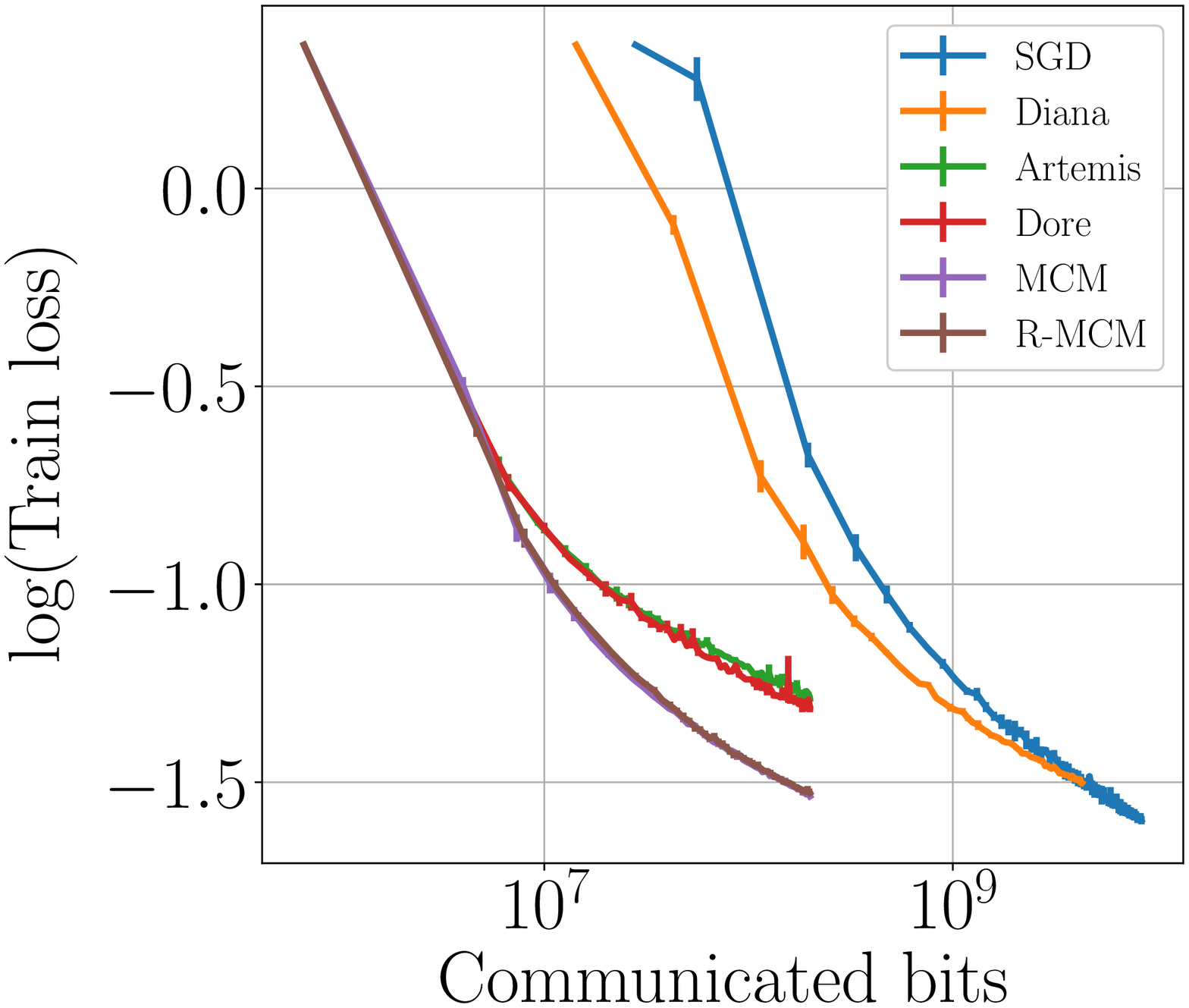}
        \caption{FE-MNIST with a CNN\vspace{-0.5em}}
        \label{fig:FEMNIST}
    \end{subfigure}
    \caption{Convergence on neural networks.}
    \label{fig:neural_net}
\end{figure}

\begin{figure}
    \centering
    \begin{subfigure}{0.23\linewidth}
        \centering
        \includegraphics[width=1\textwidth]{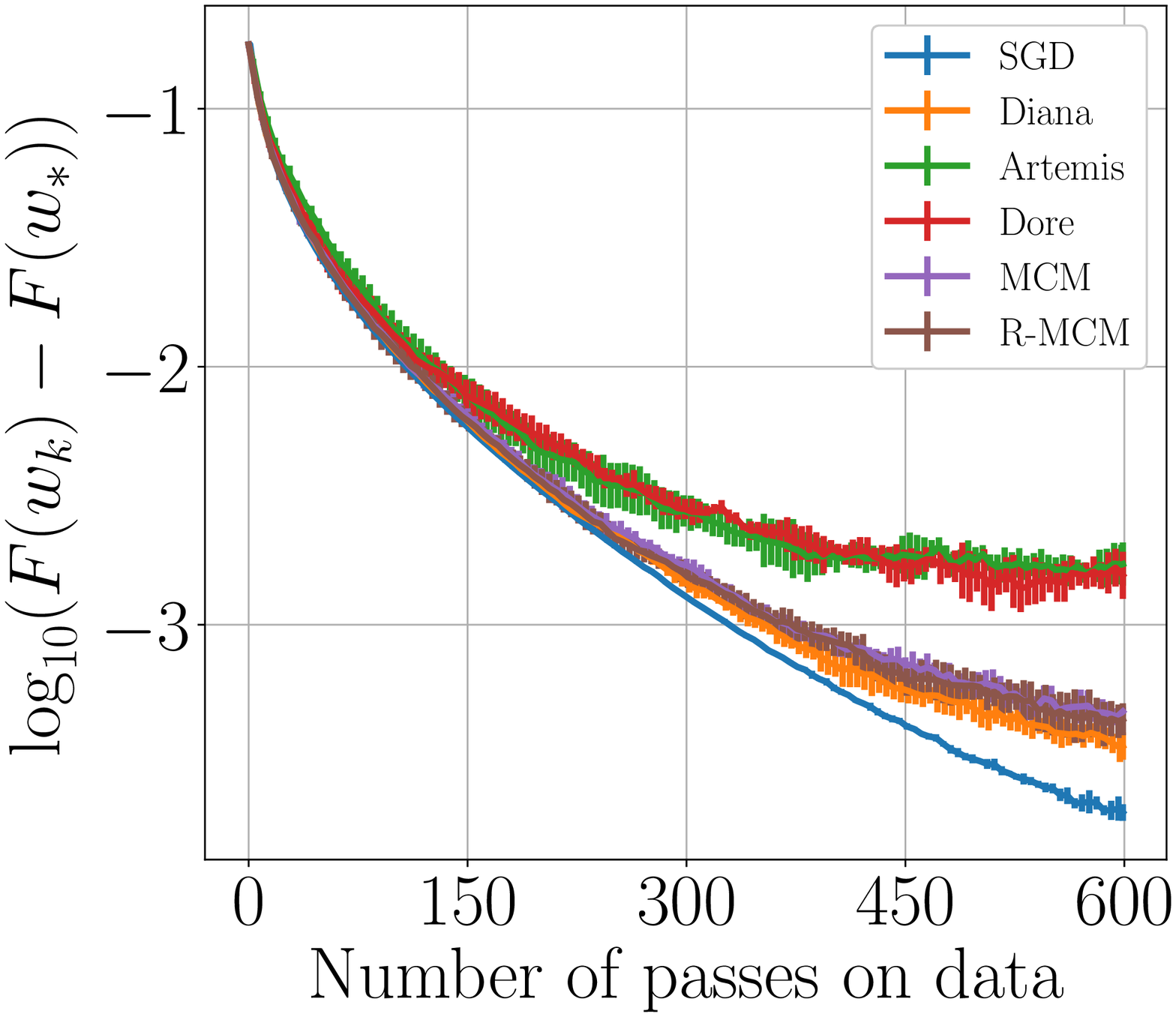}
        \caption{Quantum in \#iter.\vspace{-0.5em}}
        \label{fig:quantum_it}
    \end{subfigure}
    \begin{subfigure}{0.23\linewidth}
        \centering
        \includegraphics[trim=0cm 0 0 0,clip,width=1\textwidth]{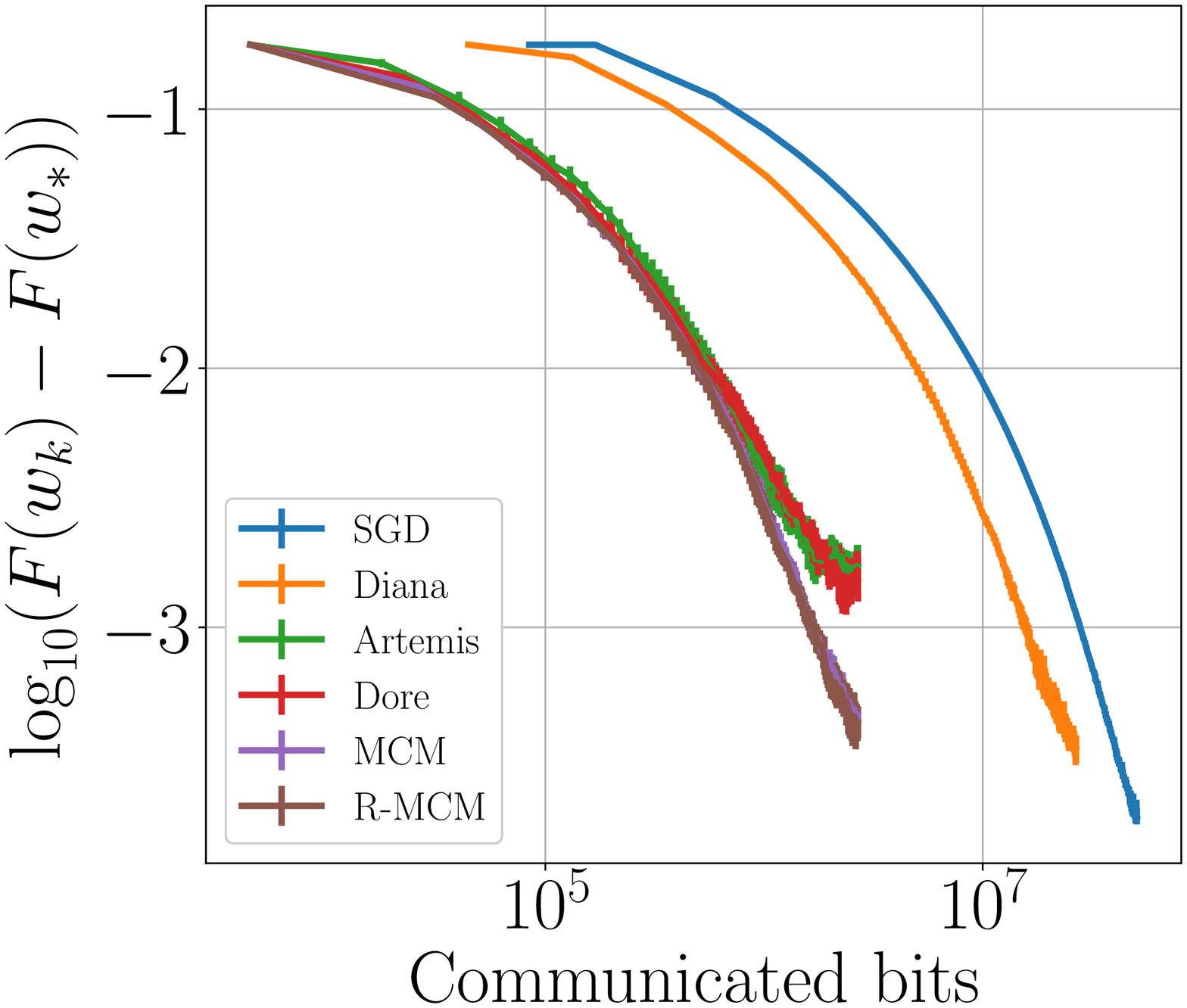} 
        \caption{Quantum in \#bits\vspace{-0.5em}}
        \label{fig:quantum_bits}
    \end{subfigure}
    \begin{subfigure}{0.23\linewidth}
        \centering
        \includegraphics[trim=0cm 0 0 0,clip,width=1\textwidth]{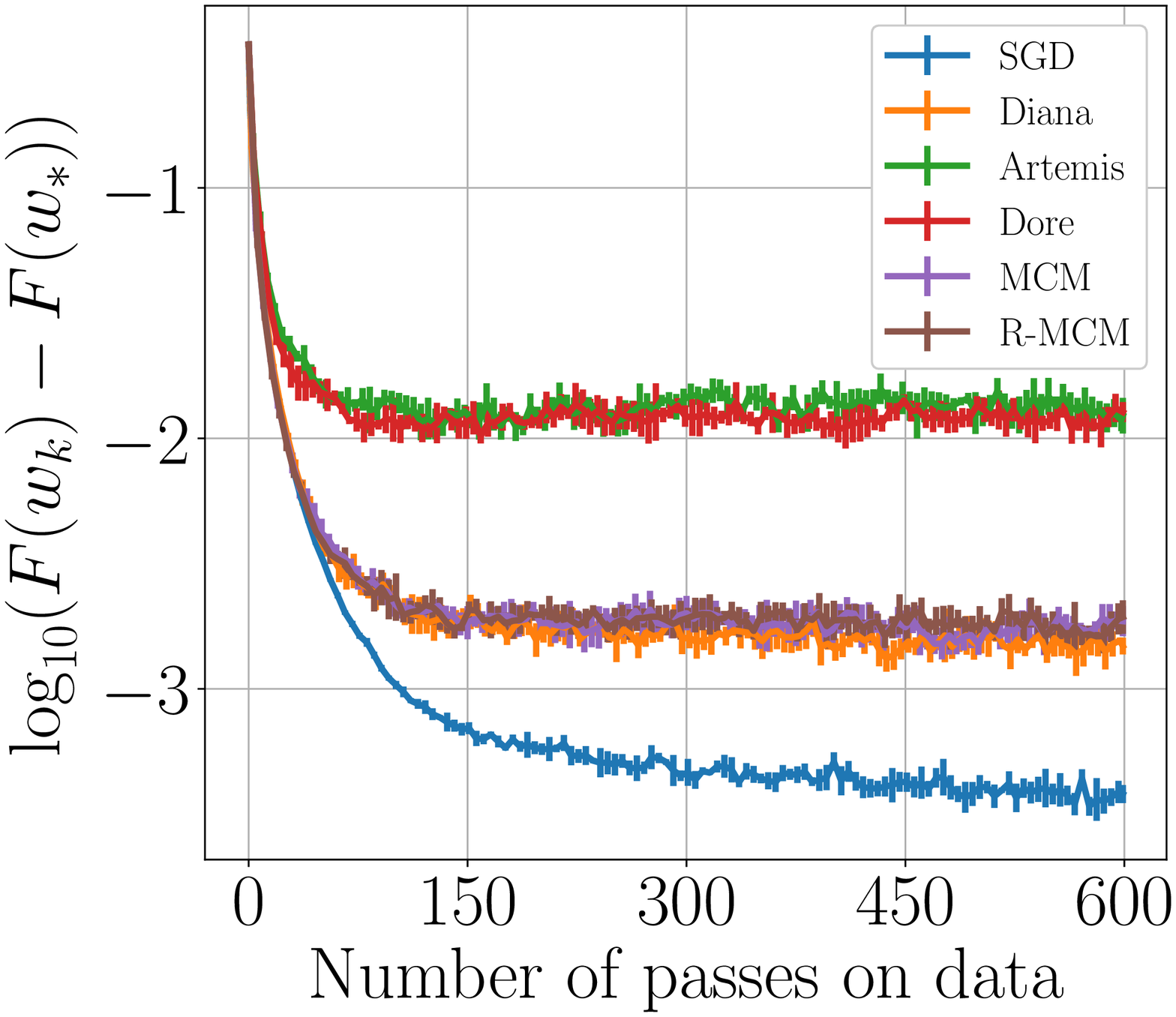}
        \caption{A9A in \#iter.\vspace{-0.5em}}
        \label{fig:a9a_it}
    \end{subfigure}
    \begin{subfigure}{0.23\linewidth}
        \centering
        \includegraphics[trim=0cm 0 0 0,clip,width=1\textwidth]{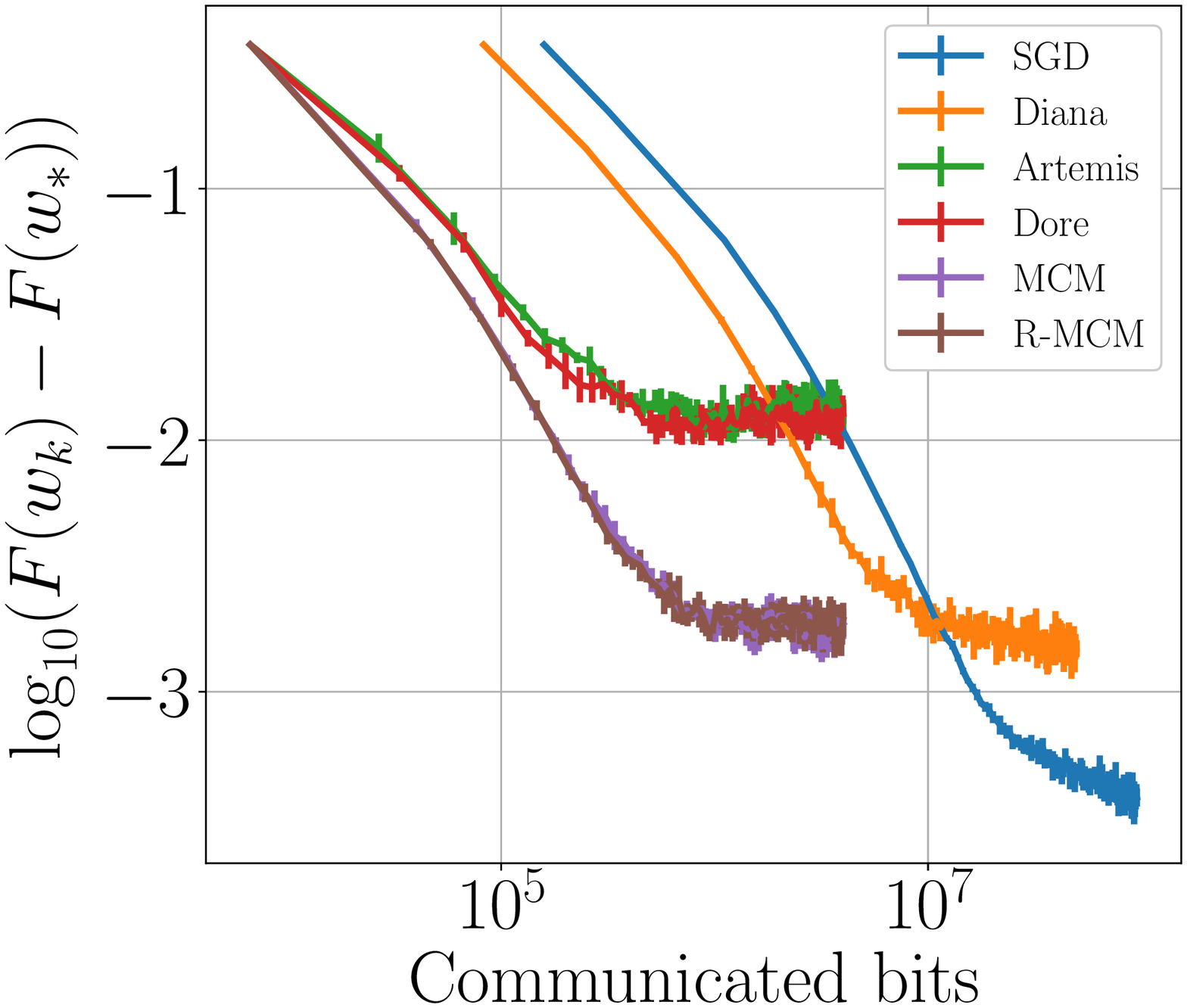}
        \caption{A9A in \#bits\vspace{-0.5em}}
        \label{fig:a9a_bits}
    \end{subfigure}
    \caption{Experiments on real dataset with $\gamma = 1/L$, quantization with $s=1$, LSR (a,b), LR (c,d).\vspace{-1.5em}}
    \label{fig:real_dataset}
\end{figure}

\begin{wrapfigure}[12]{R}{0.25\textwidth}
\flushright
    \gs\gs\gs
    \begin{center}
    \includegraphics[trim=2.2cm 0 0 0, clip,width=0.24\textwidth]{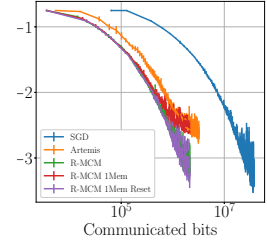}
    \end{center}
    \caption{\RMCM~(PP) on quantum with a \textit{single memory} ($s=2$).\vspace{-1.5em}}
    \label{fig:one_mem}
\end{wrapfigure}

\textbf{Saturation level.} All experiments are performed with a \textit{constant learning rate} $\gamma$ to observe the bias (initial reduction) and the variance (saturation level) independently. Stochastic gradient descent results in a   fast convergence during the first iterations, and then reaches a saturation at a given level proportional to $\sigma^2$. \Cref{thm:cvgce_mcm_convex} states that the variance of \MCM~is proportional to $\omgC\up$, this is experimentally observed on \Cref{tab:exp_nonconvex,tab:exp_convex,fig:neural_net,fig:real_dataset}: \MCM~meets \Diana~while~\Artemis~and \Dore~saturate at a higher level (scaling as $\omgC\up \times \omgC\dwn$). These trade-offs  are preserved with optimized learning rates. 

\textbf{Linear convergence when $\sigma^2 = 0$.} The six algorithms  present a linear convergence when $\sigma^2 = 0$. This is illustrated by \Cref{fig:LSR_full}: we ran experiments with a full gradient descent. Note that in these settings \MCM~has a slightly worse performance than other methods; however, this slow-down is compensated by \RMCM.

\textbf{Impact of randomization.} The impact of randomization is noticeable on \Cref{app:fig:superconduct_it,fig:LSR_full}. Randomization helps to stabilise convergence of it reduces the variance of the runs and when $\sigma^2=0$, it performs identically to \SGD. \Cref{fig:one_mem} illustrates the impact of using a single memory, instead of $N$, to alleviate the memory cost in the  PP setting  (\Cref{sec:communication_trade_offs}),  with or without periodic reset. Without reset, performance are slightly degraded, but with it, we recover previous results.

\textbf{Deep learning.} \Cref{tab:exp_nonconvex,fig:FEMNIST,fig:MNIST} illustrate experiments with neural networks, details on dataset settings and networks architecture are given in \Cref{app:subsec:deep_learning}. Again, \MCM~meets \Diana~rates as stated by \Cref{app:thm:mcm_non_convex} (theorem in the non-convex case). 

\gs\gs
\begin{table}[!htp]
\centering
\caption{Accuracy and train loss in non-convex experiments, detailed settings can be found in \Cref{app:tab:settings_nonconvex}.}
\label{tab:exp_nonconvex}
\begin{tabular}{p{0.16\textwidth}p{0.1\textwidth}cccc}
\toprule
 & Algorithm & MNIST & Fashion MNIST & FE-MNIST & CIFAR-10 \\
\midrule
Accuracy after &  \SGD: & $99.0\%$ & $92.4\%$ & $99.0\%$ & $69.1\%$   \\
$300$ epochs & \cellcolor{green!25}\Diana: & \cellcolor{green!25}$98.9\%$ & \cellcolor{green!25}$92.4\%$ & \cellcolor{green!25}$98.9\%$ & \cellcolor{green!25}$64.0\%$  \\
& \cellcolor{green!25}\MCM: & \cellcolor{green!25}$98.8\%$& \cellcolor{green!25}$90.6\%$ & \cellcolor{green!25}$98.9\%$ & \cellcolor{green!25}$63.5\%$     \\
& \Artemis: & $97.9\%$ & $86.7\%$ & $98.3\%$ & $54.8\%$ \\
& \Dore: & $97.9\%$ & $87.9\% $ & $98.5\%$ & $56.3\% $  \\
Train loss after & \SGD: & $0.025$ & $0.093$ & $0.026$ & $0.909$ \\
 $300$ epochs & \cellcolor{green!25} \Diana: & \cellcolor{green!25}$0.034$ & \cellcolor{green!25}$0.141$ & \cellcolor{green!25}$0.031$ & \cellcolor{green!25}$1.047$ \\
&  \cellcolor{green!25} \MCM: & \cellcolor{green!25}$0.033$ & \cellcolor{green!25}$0.209$ & \cellcolor{green!25}$0.030$ & \cellcolor{green!25}$1.096$ \\
&  \Artemis: & $0.075$  & $0.332$ &  $0.052$ & $1.342$ \\
& \Dore: & $0.072$ & $0.300$ & $0.048$ & 1.292 \\
\bottomrule
\end{tabular}

\gs
\end{table}

Overall, these experiments show the benefits of \MCM~and \RMCM, that reach the saturation level of \Diana~while exchanging at 10x to 100x fewer bits. More experiments with  partial participation for \RMCM~are given in  \Cref{app:subsec:expe_partial_part}. All the code is provided on our \href{https://github.com/philipco/mcm-bidirectional-compression/}{github repository}.
\gs
\gs

\section{Conclusion}\gs
\label{sec:conclusion}

In this work, we propose a new algorithm to perform bidirectional compression while achieving the convergence rate of  algorithms using compression in a single direction.
One of the main application of this framework is Federated Learning. 
With \MCM~we stress the importance of not degrading the global model. In addition, we add the concept of randomization which allows to reduce the variance associated with the downlink compression. 
The analysis of \MCM~is challenging as the algorithm involves perturbed iterates.  Proposing such an analysis is the key to unlocking numerous challenges in distributed learning, e.g., proposing practical algorithms for partial participation, incorporating privacy-preserving schemes \textit{after} the global update is performed, dealing with local steps, etc. This approach could also be pivotal in non-smooth frameworks, as it can be considered as a weak form of randomized smoothing.

\section*{Acknowledgments}
We would like to thank Richard Vidal, Laeticia Kameni from Accenture Labs (Sophia Antipolis, France) and Eric Moulines from École Polytechnique for insightful discussions. This research was supported by the \emph{SCAI: Statistics and Computation for AI} ANR Chair of research and teaching in artificial intelligence, by \textit{Hi!Paris}, and by \emph{Accenture Labs} (Sophia Antipolis, France).

\bibliography{main}

\newpage	
	
\newpage	
\onecolumn
\appendix

\begin{center}
	{\Large{\bf Supplementary material}}
	\end{center}
	
	\setcounter{equation}{0}
	\setcounter{figure}{0}
	\setcounter{table}{0}
	\renewcommand{\theequation}{S\arabic{equation}}
	\renewcommand{\thefigure}{S\arabic{figure}}
	\renewcommand{\thetheorem}{S\arabic{theorem}}
	\renewcommand{\thelemma}{S\arabic{lemma}}
	\renewcommand{\theproposition}{S\arabic{proposition}}
	\renewcommand{\thecorollary}{S\arabic{proposition}}
	\renewcommand{\thetable}{S\arabic{table}}
	
	In this appendix, we provide additional details about our work. First, in \Cref{app:sec:complementary} we give complementary references on operators of compression and on perturbed iterate analysis. We also give the pseudo-code of \RMCM. Secondly, in \Cref{app:sec:experiments} we enlarge figures provided in \Cref{sec:experiments} and complete them with experiments on partial participation and with a comparison between \MCM~and other algorithms using non-degraded updates. The next sections are all devoted to theoretical results.
	In \Cref{app:sec:technical_results} we detail some technical results required to demonstrate \Cref{thm:cvgce_mcm_strongly_convex,thm:cvgce_mcm_convex,thm:cvgce_rmcm,thm:rmcm_quad_guarantee,thm:contraction_mcm}, in \Cref{app:sec:proof_for_ghost} we highlight the key stages of the demonstration in the easier case of \ghost, in \Cref{app:sec:proofs_mcm} we completely prove the given guarantees of convergence in three regimes: convex, strongly-convex and non-convex. In \Cref{app:sec:proofs_quad} we show the benefit of \RMCM~compared to \MCM~in the context of quadratic functions. In \Cref{app:sec:adaptation_to_heterogeneous_case} we adapt the proof to the heterogeneous scenario. And finally, in \Cref{app:sec:checklist} we answer to the Neurips checklist. 
	
	\hypersetup{linkcolor = black}
	\setlength\cftparskip{2pt}
	\setlength\cftbeforesecskip{2pt}
	\setlength\cftaftertoctitleskip{3pt}
	\addtocontents{toc}{\protect\setcounter{tocdepth}{2}}
	\setcounter{tocdepth}{1}
	\tableofcontents
	\hypersetup{linkcolor=blue}
	
	\addtocontents{toc}{
    \protect\thispagestyle{empty}} 
    \thispagestyle{empty}

\section{Complementary discussions and references}
\label{app:sec:complementary}

We give the pseudo-code of \RMCM~in \Cref{algo}. It summarizes the algorithm's description given in \Cref{sec:intro}.

\begin{wrapfigure}[29]{R}{0.58\textwidth}
\flushright
\vspace{-1.6em}
\begin{minipage}{1\linewidth}
\begin{algorithm}[H]
\caption{Pseudocode of \RMCM}
\label{algo}
\begin{algorithmic}
  \STATE {\bfseries Input:} Mini-batch size $b$, learning rates $\alpha\up, \alpha\dwn, \gamma > 0$, initial model  $w_0 \in \WW$ (on all devices),  operators $\C_{\up}$ and $\C_{\dwn}$, $S = \llbracket 1, N \rrbracket$ the set of devices.
  \STATE {\bfseries Init.:} Memories: $\forall i \in S$, $h_0^i = \g^i_{1}(w_{0})$ and $H_{-1}^i = w_0$
  \STATE {\bfseries Output:} Model  $w_K$
  \FOR{$k = 1, 2, \dots, K $}
    \FOR{each device $i=1, 2, 3, \dots, N$}
		\STATE Receive $\widehat{\Omega}_{k-1}^i$, and set: $w_{k-1}^i = \widehat{\Omega}_{k-1}^i + H_{k-2}^i$
		\STATE Compute $\g^i_{k}(w_{k-1}^i)$ (with mini-batch)
		\STATE Update down memory: $H_{k-1}^i = H_{k-2}^i + \alpha\dwn \widehat{\Omega}_{k-1}^i$
		\STATE Up compr.: $\widehat{\Delta}_{k-1}^i = \C_{\up}(\g^i_k(w_{k-1}^i) - h_{k-1}^i)$ 
		\STATE \textbf{Update uplink memory}: $h_{k}^i = h_{k-1}^i + \alpha\up \widehat{\Delta}_{k-1}^i$
		\STATE Send $\widehat{\Delta}_{k-1}^i$ to central server
	\ENDFOR
	\STATE Receive $(\widehat{\Delta}_{k-1}^i)_\iN$ from all remote servers
	\STATE Compute $\widehat{\gw_k} = \ffrac{1}{N} \sum_\iN \widehat{\Delta}_{k-1}^i + h_{k-1}^i$ \;
	\STATE Update up memory: $\forall i \in S, h_{k}^i = h_{k-1}^i + \alpha\up \widehat{\Delta}_{k-1}^i$ \;
	\STATE Non-degraded update: $w_{k} = w_{k-1} - \gamma \widehat{\gw_k}$ \;
    \STATE Down compr.: $\forall i \in S,\, \widehat{\Omega}_{k}^i = \C_{\dwn, i}(w_{k} - H_{k-1}^i)$
    \STATE \textbf{Update downlink memory:} $H_{k}^i = H_{k-1}^i + \alpha\dwn \widehat{\Omega}_k^i$
    \STATE Send $(\widehat{\Omega}_{k}^i)_\iN$ to all remote servers
    \ENDFOR
\end{algorithmic}
\end{algorithm}
\end{minipage}
\end{wrapfigure}

\subsection{Compression Operators}
\label{app:subsec:compression_oper}

In this section, we give additional details on compression operators (see \Cref{asu:expec_quantization_operator}).

Operators of compression can be biased or unbiased and they may have drastically different impacts on convergence. 
For instance, if the operator is not contracting, algorithms with error-feedback may diverge. \citet{horvath_better_2020} propose a method to unbiase a biased operator and a general study of biased operator has been carried out by \citet{beznosikov_biased_2020}. 
But in this work, as stated by \Cref{asu:expec_quantization_operator}, we consider only unbiased operators: for instance \texttt{s-quantization}.

The choice of the operator of compression is crucial when compressing data. 
Operators of compression may be classified into three mains categories: 1) sparsification \cite{stich_sparsified_2018,hu_sparsified_2020,khirirat_communication_2020,alistarh_convergence_2018,khirirat_communication_2020,mayekar_ratq_2020} 2) quantization \cite{seide_1-bit_2014,zhou_dorefa-net_2018,alistarh_qsgd_2017,horvath_stochastic_2019,wen_terngrad_2017} and 3) sketching \cite{ivkin_communication-efficient_2019,li_privacy_2019}.

\paragraph{Possible Extensions}
Our analysis could be extended to biased uplink operators, following similar lines of proof as \cite{beznosikov_biased_2020}. 

The extension for the downlink operator seems more difficult as our analysis relies on numerous occurrences on the fact that the expectation of $\wkmhat$ knowing $\wkm$ is $\wkm$.

\subsection{Relation to Randomized Smoothing}
\label{app:subsec:smoothing}

Our approach can also be related to randomized smoothing. Formally, $\nabla F (\wkmhat)$ can be considered as an unbiased gradient of the smoothed function $F_\rho$ at point $\wkm$, with $F_\rho : w \mapsto \E[F(w + \wkmhat - \wkm )]$. Then $\E\PdtScl{\nabla F (\wkmhat)}{\wkm - w_*} = \E \PdtScl{\nabla F_\rho (\wkm)}{\wkm - w_*}$. One key aspect is that the condition number $\mu_\rho/L_\rho$ of $F_\rho$ is always larger (better) than the one for $F$.
However, the minimum of $F_\rho$  is different and moving, thus the proof techniques from Randomized smoothing are not adapted to a varying noise which distribution is unknown. Providing a theoretical result that quantifies the smoothing impact of \MCM~is an interesting open direction.

Randomized smoothing has been applied to non-smooth problems by \citet{duchi_randomized_2012}. The aim is to transform a non-smooth function into a smooth function, before computing the gradient. This is achieved by adding a Gaussian noise to the point where the gradient is computed. This mechanism has been applied by \citet{scaman_optimal_2018} to convex problems. 
We consider in this work a randomized version of compression: at iteration $k$ in $\N$ each worker $i$ in  $\llbracket 1, N \rrbracket$ receives a noisy estimate $\widehat{w}_k^i$ of the global model $w_k$ kept on central server. Thus, we compute the local gradient at a perturbed point $w_k + \delta_k^i$. Unlike the randomization process as defined by \citet{duchi_randomized_2012}, the noise here is not chosen to improve the function's regularity but results from the compression.

\section{Experiments}
\label{app:sec:experiments}
In this section we provide additional details about our experiments. We first give the settings of our experiments in \Cref{app:tab:settings_nonconvex,app:tab:settings_convex}. Next, we describe the numerical results obtained on our $9$ datasets. Thirdly, we add some explanation concerning the wall clock time. Finally, we provide an estimation of the carbon footprint required by this paper.

We use the same operator of compression for uplink and downlink, thus we consider that $\omgC\up=\omgC\dwn$. In addition, we choose $\alpha\up = \alpha\dwn = \ffrac{1}{2(1+\omgC_{\mathrm{up}/\mathrm{dwn}})}$. 

\textbf{Convex settings} are given in \Cref{app:tab:settings_convex}. We obtain non-i.i.d.~data distributions by computing a TSNE representation \citep[defined in][]{maaten_visualizing_2008} followed by a clustering. Experiments have been performed with $600$ epochs. Apart from the case of partial participation, we use quantization \citep[defined in][]{alistarh_qsgd_2017} with $s = 2^0$.

\begin{table}[!htp]
\caption{Settings of experiment in the convex mode.}
\label{app:tab:settings_convex}
\centering
\begin{tabular}{lccccc}
Settings & a9a & quantum & phishing & superconduct & w8a \\
\hline 
references & \cite{chang_libsvm_2011} & \cite{caruana_kdd-cup_2004} & \cite{chang_libsvm_2011} & \cite{hamidieh_data-driven_2018} &\cite{chang_libsvm_2011} \\
model & LR & LR & LSR & LR & LR \\
dimension $d$ & $124$ & $66$ & $69$ & $82$ & $301$ \\
training dataset size & $ 32,561$ & $50,000$ & $11,055$ & $21,200$ & $49,749$ \\
batch size $b$& $50$ & $400$ & $50$ & $50$ & $12$ \\
compression rate $s$ & \multicolumn{5}{c}{$2^0$ (\textit{i.e.} two levels)}\\
norm quantization  &\multicolumn{5}{c}{$\| \cdot \|_{2}$}  \\
momentum $m$ & \multicolumn{5}{c}{no momentum}\\
step size $\gamma$ & \multicolumn{5}{c}{$1/L$} \\
\bottomrule
\end{tabular}
\end{table}

\textbf{Deep-learning settings} are provided in \Cref{app:tab:settings_nonconvex}. All experiments have been performed with $300$ epochs

\begin{table}[!htp]
\caption{Settings of experiments in the non-convex mode.}
\label{app:tab:settings_nonconvex}
\centering
\begin{tabular}{lcccc}
Settings & MNIST & Fashion-MNIST & FE-MNIST & CIFAR10 \\
\hline 
references & \cite{lecun_gradient-based_1998} & \cite{xiao_fashion-mnist_2017}& \cite{caldas_leaf_2019} & \cite{krizhevsky_learning_2009}\\
model & CNN  & Fashion CNN & CNN & LeNet \\
trainable parameters $d$ & $20\times 10^3$ & $400\times 10^3$ & $20\times 10^3$ & $62\times 10^3$ \\
training dataset size & $60,000$ & $60,000$ & $805,263$ & $60,000$ \\
compression rate $s$ & $2^2$ & $2^2$ & $2^2$ & $2^4$\\
momentum $m$ & $0$ & $0$ & $0$ & $0.9$ \\
norm quantization  &\multicolumn{4}{c}{$\| \cdot \|_{2}$}  \\
batch size $b$& \multicolumn{4}{c}{$128$} \\
step size $\gamma$ & \multicolumn{4}{c}{$0.1$} \\ 
loss & \multicolumn{4}{c}{Cross Entropy} \\
\bottomrule
\end{tabular}
\end{table}

\subsection{Convex settings}
\label{app:subsec:exp_convex}

In this section, we provide the plot of excess loss for the toy dataset, for quantum and for a9a datasets. For results on superconduct, phishing and w8a, see our \href{https://github.com/philipco/mcm-bidirectional-compression}{github repository}. For these last three datasets, we give only the excess loss w.r.t. number of iteration in the basic settings of full participation on \Cref{app:fig:phishing_superconduct_w8a}. We detail experiments in the PP settings in \Cref{app:subsec:expe_partial_part}. At the left side (resp. right side) we display the result w.r.t. the number of iterations (resp. number of communicated bits).

We provide results on the log of the excess loss $F(w_k) - F_*$,  with error bars displayed on each figure, corresponding to the standard deviation of $\log_{10}(F(w_k)-F_*)$. \Cref{app:fig:LSR_full_bits,app:fig:LSR_sto_bits,app:fig:a9a,app:fig:quantum} correspond to \Cref{fig:LSR_sto,fig:LSR_full,fig:real_dataset} given in \Cref{sec:experiments}. Additionally, we provide results for the synthetic dataset (\Cref{fig:LSR_sto,fig:LSR_full}) w.r.t~to the number of iterations in \Cref{app:fig:LSR_sto} (stochastic gradient) and \Cref{app:fig:LSR_full} (full batch gradient). As predicted by \Cref{thm:cvgce_mcm_convex}, when $\sigma=0$, we observe a linear convergence.

\begin{figure}
    \centering
    \begin{subfigure}{\sizefig\textwidth}
        \centering
        \includegraphics[width=1\textwidth]{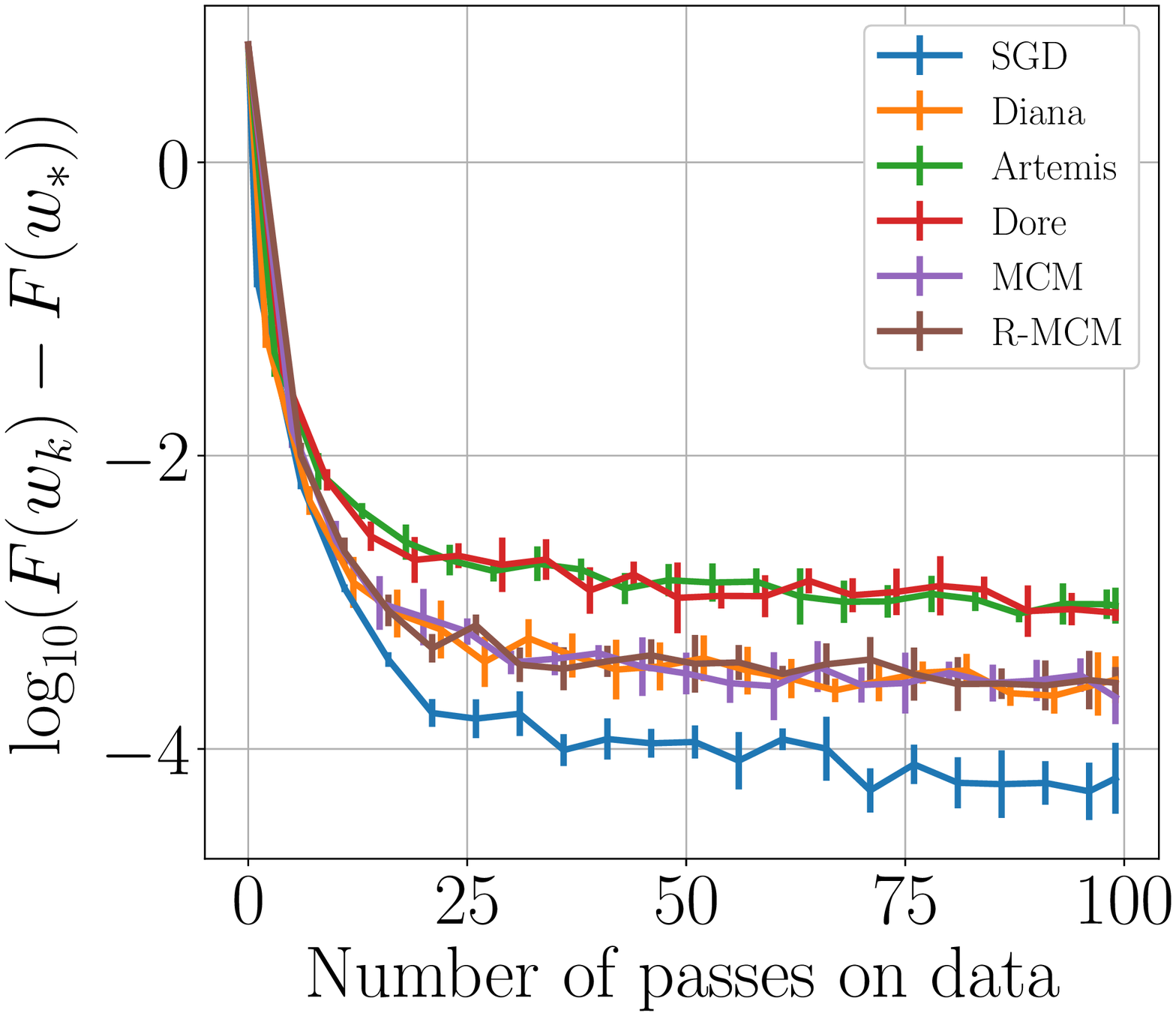}
        \caption{X axis in \# iterations.\vspace{-0.5em}}
        \label{app:fig:LSR_sto_iter}
    \end{subfigure}
    \begin{subfigure}{\sizefig\textwidth}
        \centering
        \includegraphics[width=1\textwidth]{pictures/exp/synth_noised/mcm-vs-existing/bits-noavg-sto-b1.eps}
        \caption{X axis in \# bits.\vspace{-0.5em}}
        \label{app:fig:LSR_sto_bits}
    \end{subfigure}
    \caption{Least-square regression, toy dataset:  $\gamma = (L\sqrt{k})^{-1}$, $\sigma \neq 0$. \vspace{-0.9em}}
    \label{app:fig:LSR_sto}
\end{figure}

\begin{figure}
    \centering
    \begin{subfigure}{\sizefig\textwidth}
        \centering
        \includegraphics[width=1\textwidth]{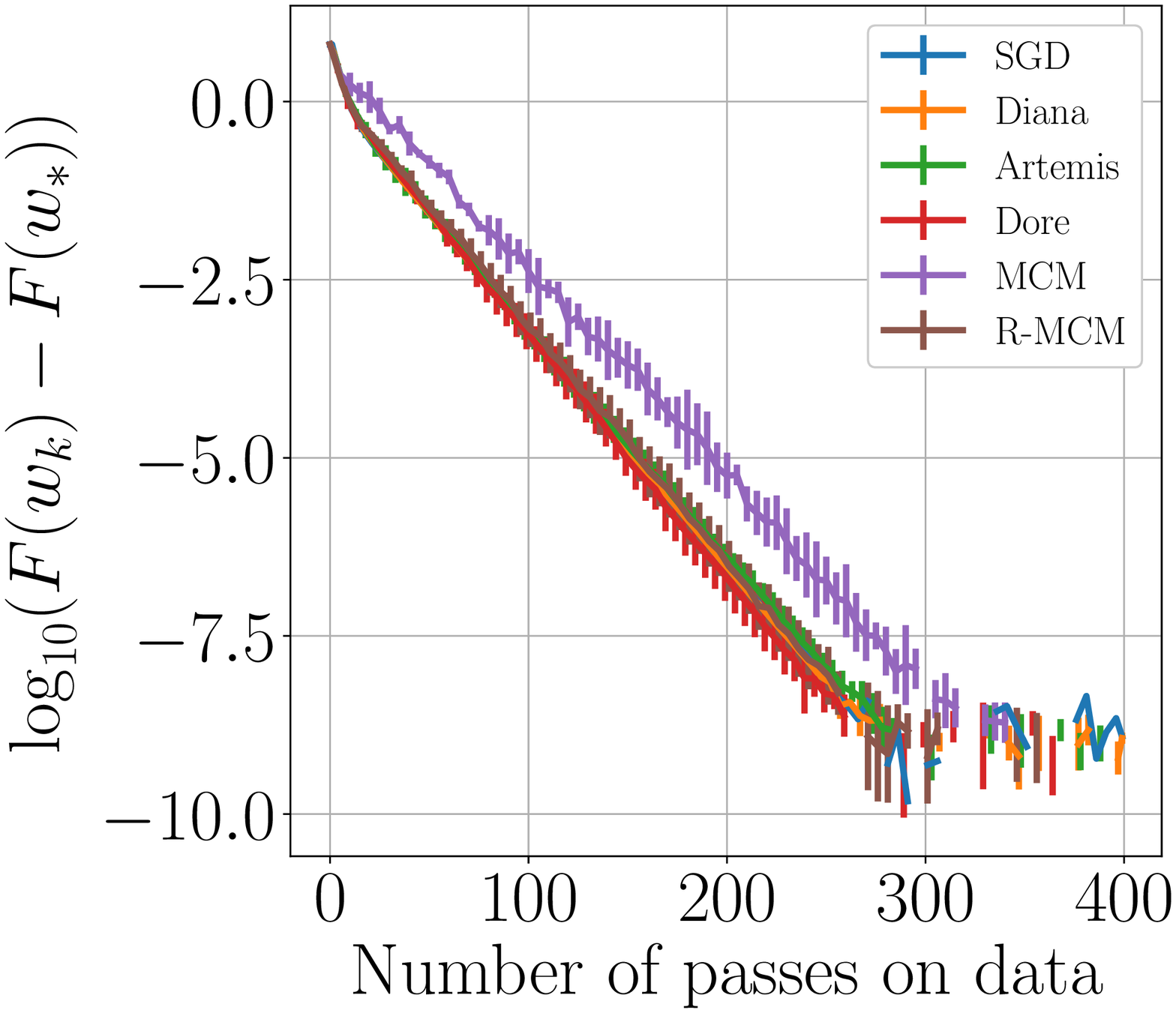}
        \caption{X axis in \# iterations.\vspace{-0.5em}}
        \label{app:fig:LSR_full_iter}
    \end{subfigure}
    \begin{subfigure}{\sizefig\textwidth}
        \centering
        \includegraphics[width=1\textwidth]{pictures/exp/synth_noised/mcm-vs-existing/bits-noavg-full.eps}
        \caption{X axis in \# bits.\vspace{-0.5em}}
        \label{app:fig:LSR_full_bits}
    \end{subfigure}
    \caption{Least-square regression, toy dataset: $\gamma = 1/L$, $\sigmstar^2 = 0$. \vspace{-0.9em}}
    \label{app:fig:LSR_full}
\end{figure}

\begin{figure}
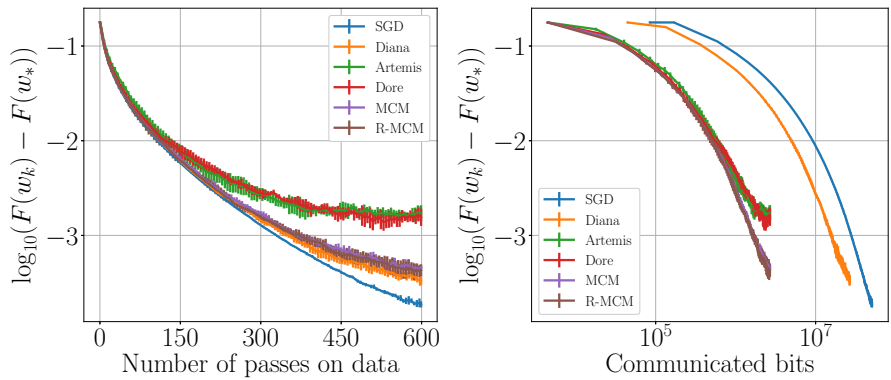

    \centering
    \begin{subfigure}{\sizefig\textwidth}
        \centering
        \includegraphics[width=1\textwidth]{pictures/exp/quantum/mcm-vs-existing/it-noavg-sto-b400.eps}
        \caption{X axis in \# iterations.\vspace{-0.5em}}
        \label{app:fig:quantum_it}
    \end{subfigure}
    \begin{subfigure}{\sizefig\textwidth}
        \centering
        \includegraphics[width=1\textwidth]{pictures/exp/quantum/mcm-vs-existing/bits-noavg-sto-b400.eps}
        \caption{X axis in \# bits.\vspace{-0.5em}}
        \label{app:fig:quantum_bits}
    \end{subfigure}
    \caption{quantum  with $b=400$, $\gamma = 1/L$.  \vspace{-0.9em}}
    \label{app:fig:quantum}
\end{figure}

\begin{figure}
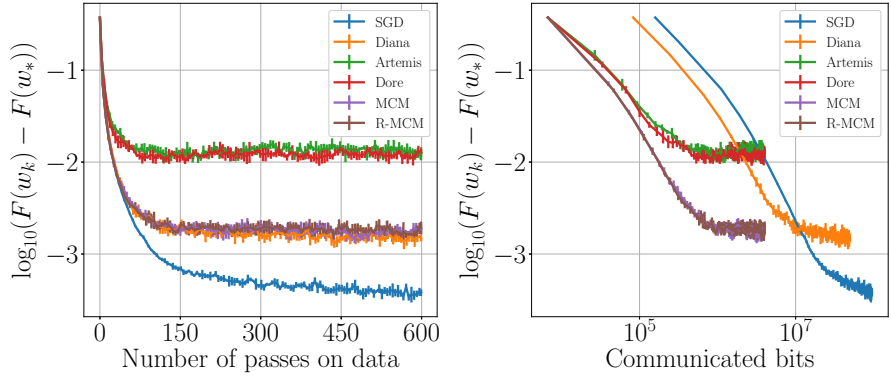

    \centering
    \begin{subfigure}{\sizefig\textwidth}
        \centering
        \includegraphics[width=1\textwidth]{pictures/exp/a9a/mcm-vs-existing/it-noavg-sto-b50.eps}
        \caption{X axis in \# iterations.\vspace{-0.5em}}
        \label{app:fig:a9a_it}
    \end{subfigure}
    \begin{subfigure}{\sizefig\textwidth}
        \centering
        \includegraphics[width=1\textwidth]{pictures/exp/a9a/mcm-vs-existing/bits-noavg-sto-b50.eps}
        \caption{X axis in \# bits.\vspace{-0.5em}}
        \label{app:fig:a9a_bits}
    \end{subfigure}
    \caption{A9A  with $b=50$, $\gamma = 1/L$.  \vspace{-0.9em}}
    \label{app:fig:a9a}
\end{figure}

\begin{figure}
    \centering
    \begin{subfigure}{0.3\textwidth}
        \centering
        \includegraphics[width=1\textwidth]{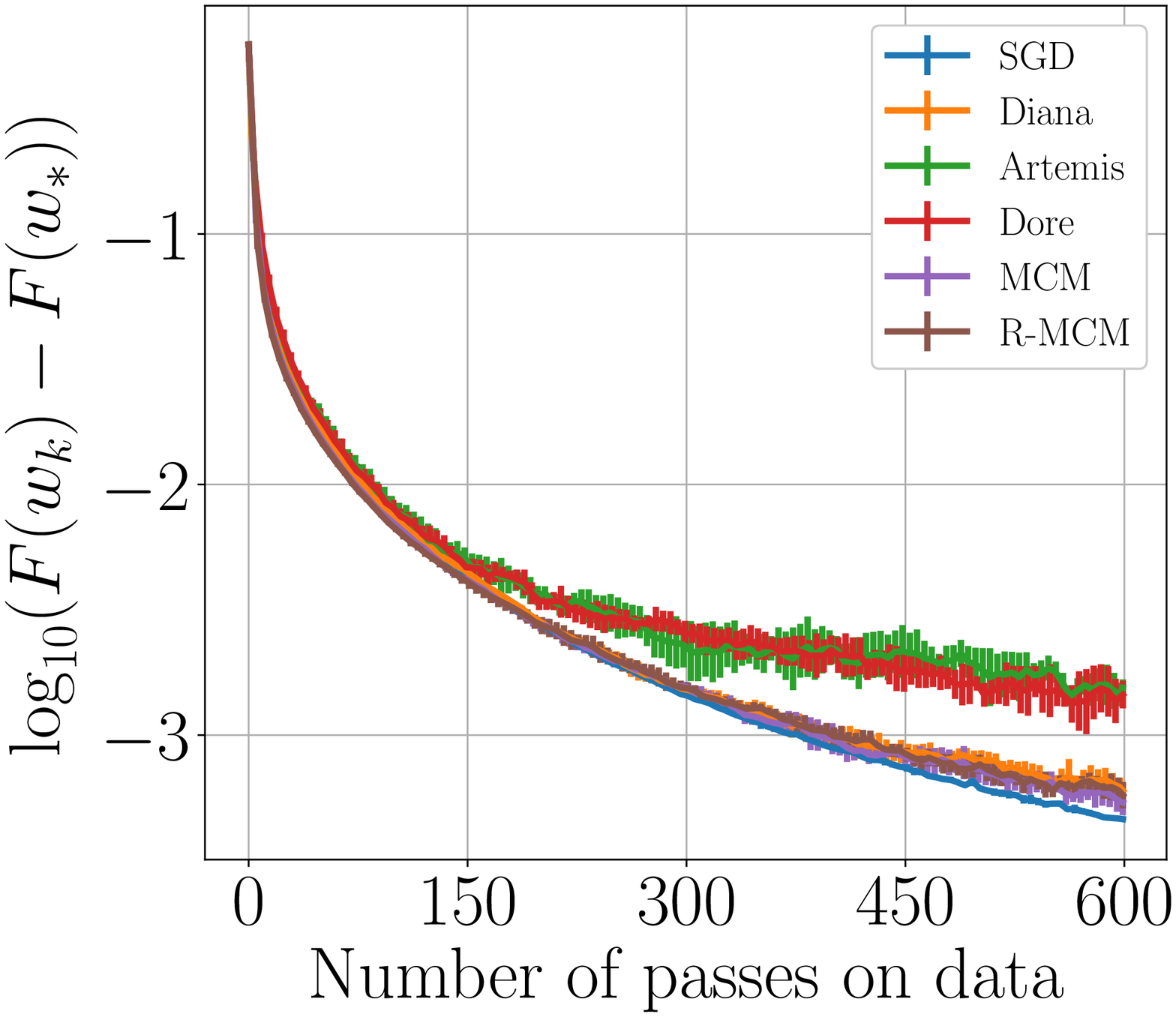}
        \caption{Phishing.\vspace{-0.5em}}
        \label{app:fig:phishing_it}
    \end{subfigure}
    \begin{subfigure}{0.3\textwidth}
        \centering
        \includegraphics[width=1\textwidth]{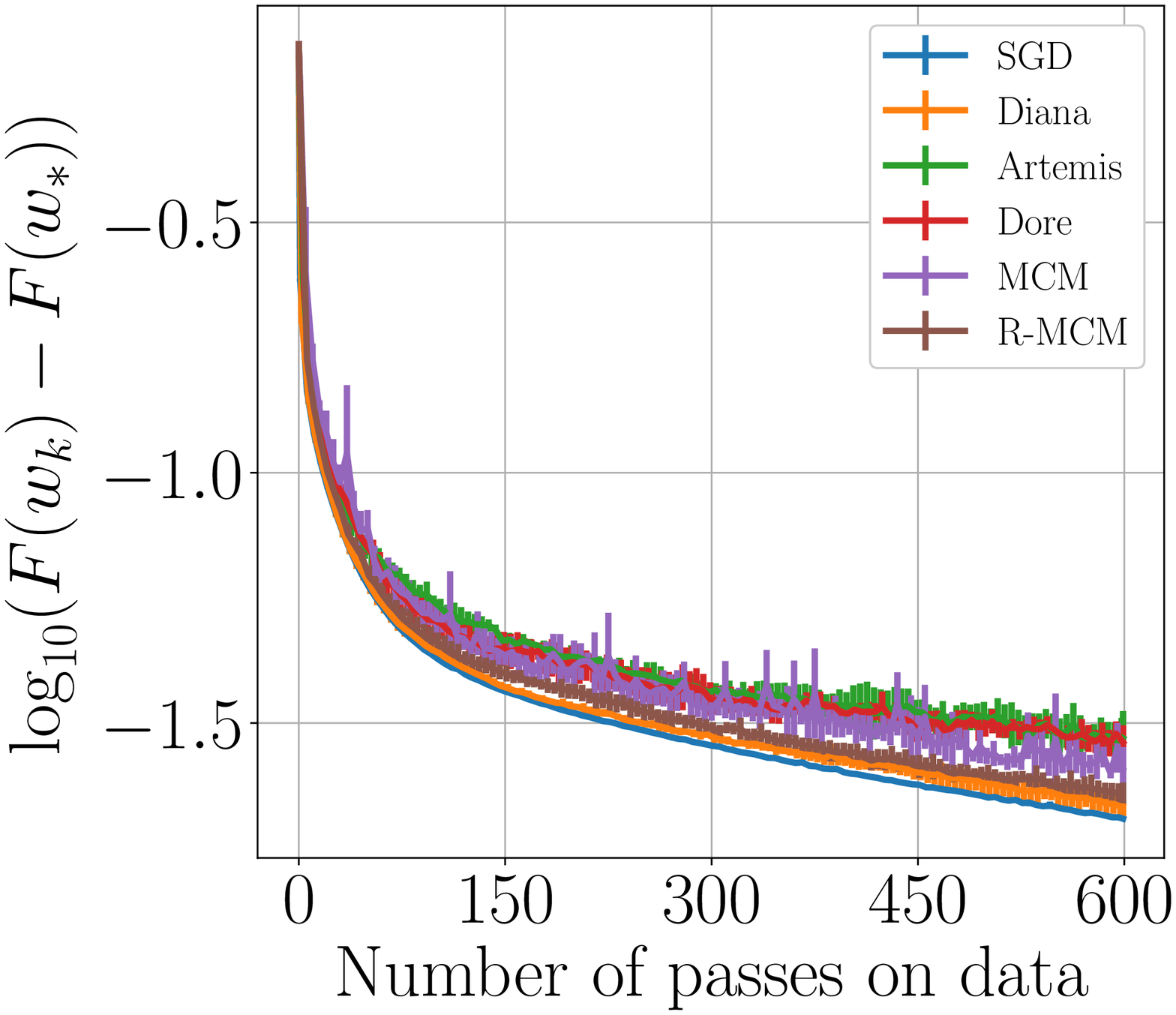}
        \caption{Superconduct.\vspace{-0.5em}}
        \label{app:fig:superconduct_it}
    \end{subfigure}
    \begin{subfigure}{0.3\textwidth}
        \centering
        \includegraphics[width=1\textwidth]{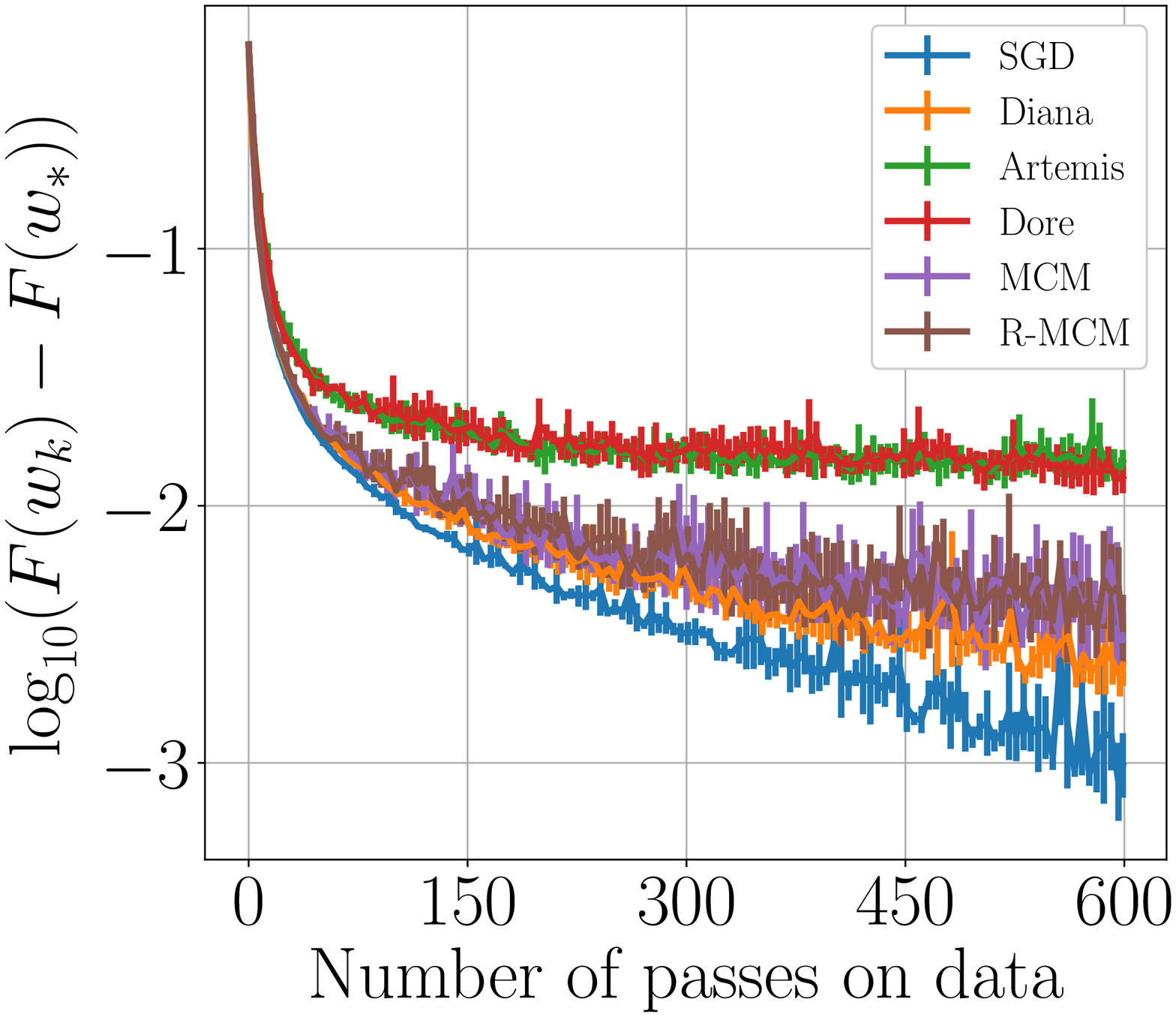}
        \caption{W8A.\vspace{-0.5em}}
        \label{app:fig:w8a_it}
    \end{subfigure}
    \caption{X axis in \# iterations.  \vspace{-0.9em}}
    \label{app:fig:phishing_superconduct_w8a}
\end{figure}

On \Cref{app:fig:random_sparsification}, we present a9a, quantum and w8a with a different operator of compression than in all other experiments. We use random unbiased sparsification: each coordinate has a likelihood $p=0.1$ to be selected.

\begin{figure}
    \centering
    \begin{subfigure}{0.3\textwidth}
        \centering
        \includegraphics[width=1\textwidth]{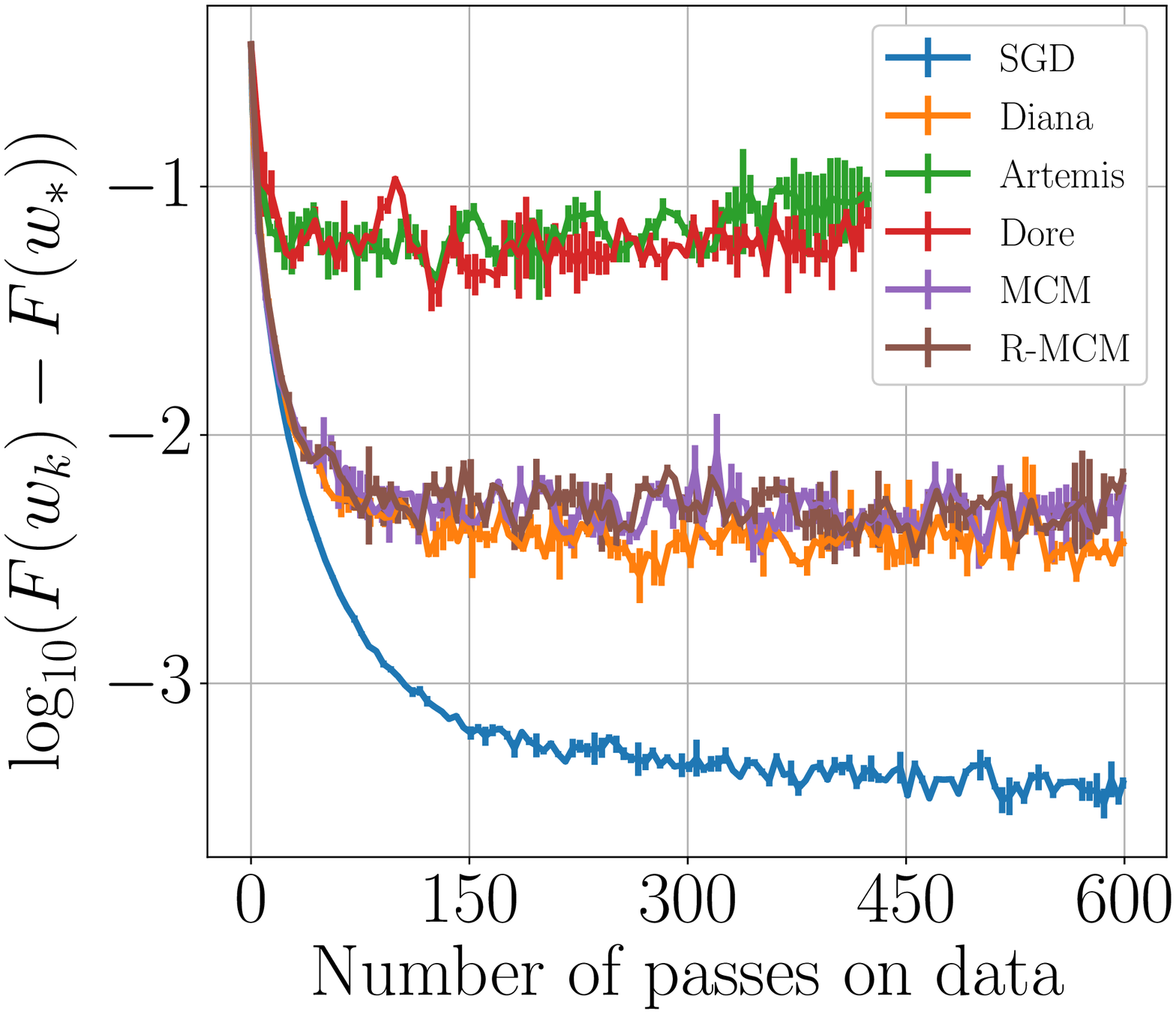}
        \caption{A9A.\vspace{-0.5em}}
        \label{app:fig:a9a_rdk}
    \end{subfigure}
    \begin{subfigure}{0.3\textwidth}
        \centering
        \includegraphics[width=1\textwidth]{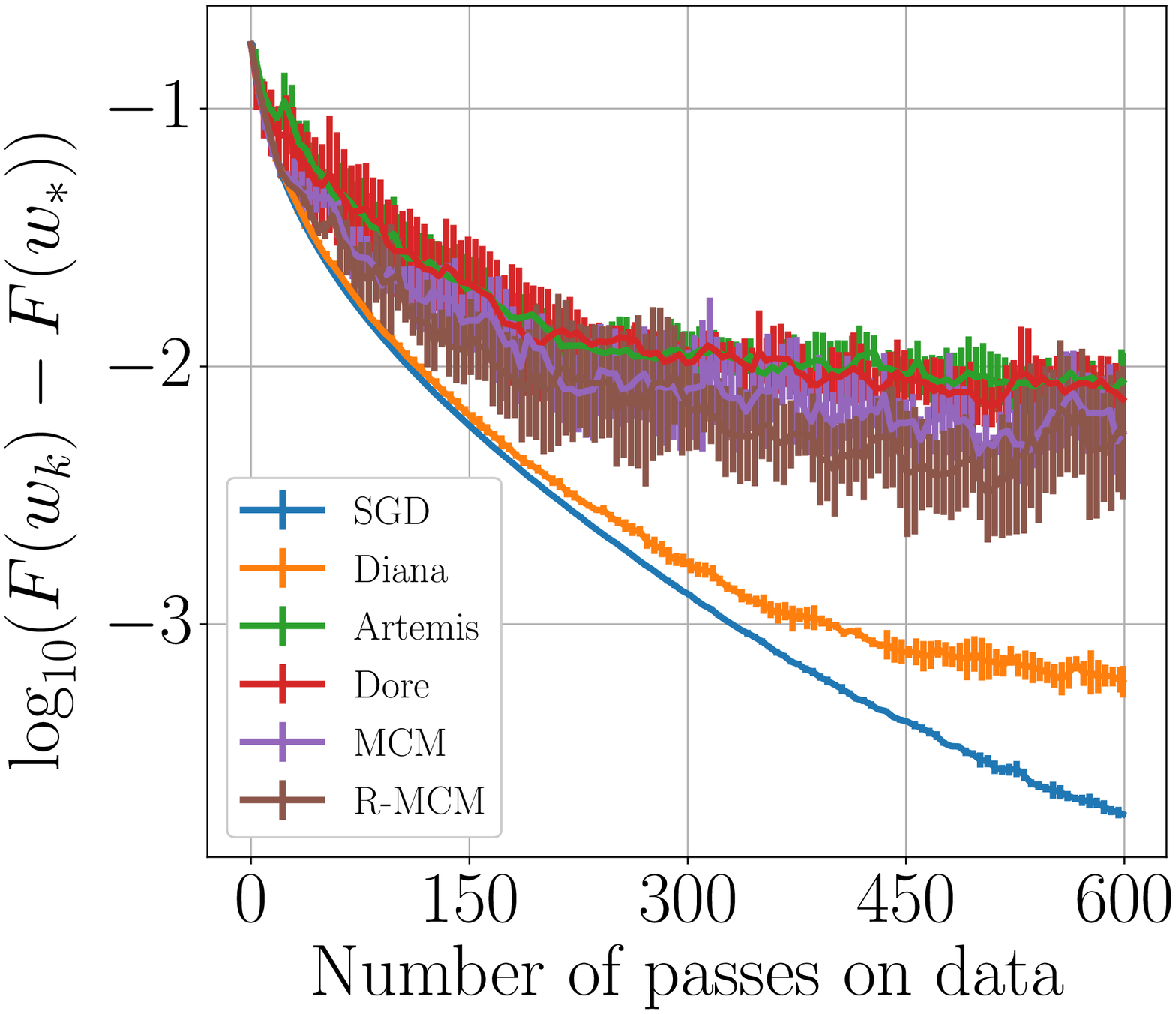}
        \caption{Quantum.\vspace{-0.5em}}
        \label{app:fig:quantum_rdk}
    \end{subfigure}
    \begin{subfigure}{0.3\textwidth}
        \centering
        \includegraphics[width=1\textwidth]{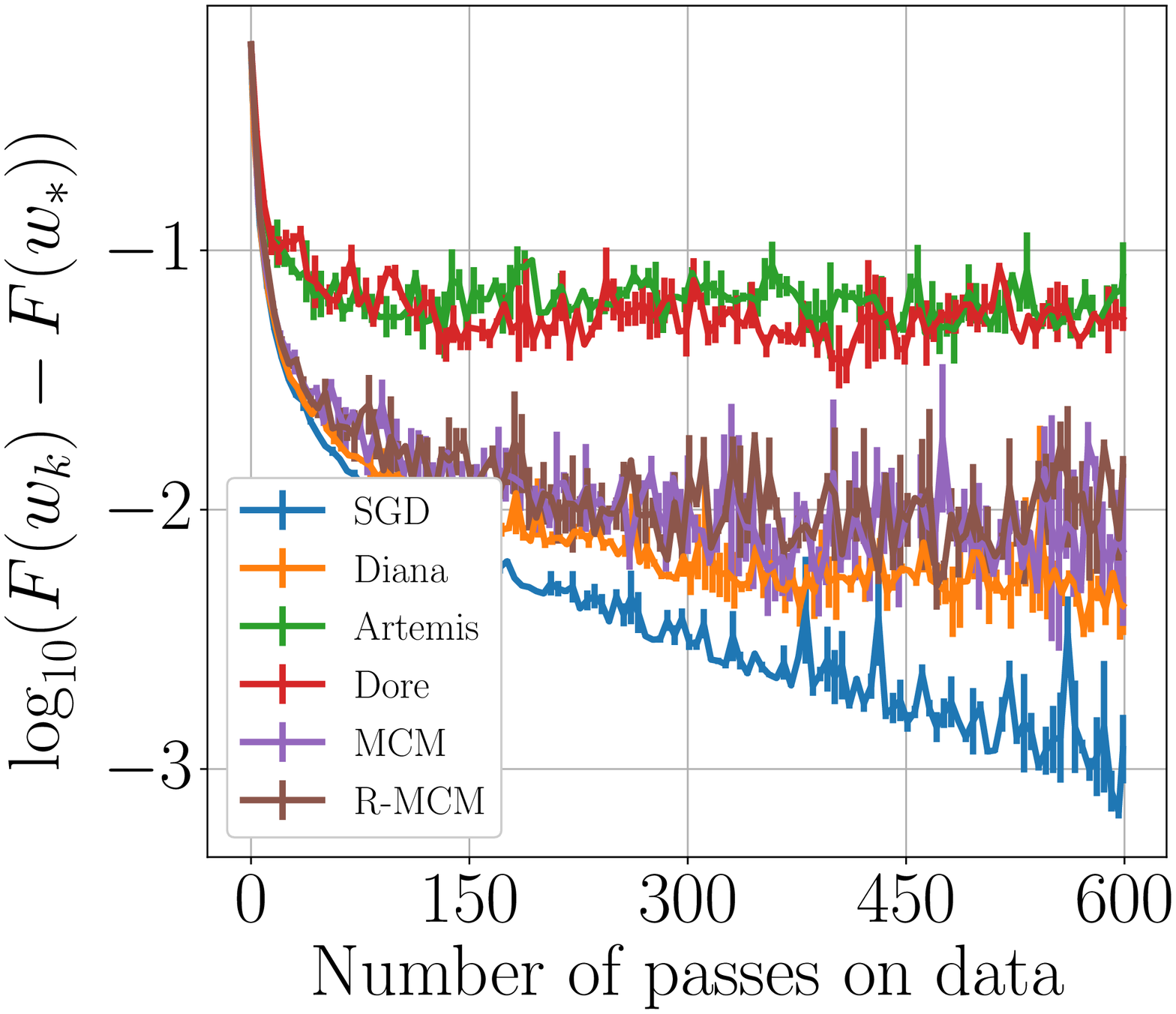}
        \caption{W8A.\vspace{-0.5em}}
        \label{app:fig:w8a_rdk}
    \end{subfigure}
    \caption{X axis in \# iterations using random sparsification with $p=0.1$.  \vspace{-0.9em}}
    \label{app:fig:random_sparsification}
\end{figure}

\subsubsection{Experiments on partial participation}
\label{app:subsec:expe_partial_part}

In this subsection, we run the experiments in a setting where only \textit{half of devices} (independently picked at each iteration) are available at each iteration, thus simulating a setting of partial participation. \Cref{app:fig:a9a_pp,app:fig:quantum_pp} present the results for respectively quantum and A9A. For these experiments, we used a $2$-quantization compression. We do not plot \MCM~on these figures because in a context of partial participation, \RMCM~is the natural thing to do. Indeed in this context, we must hold a memory for each worker, and thus the compressed vector sent to each worker is unique. 

We observe that partial participation leads to an increase of the variance for all algorithms.
Furthermore, we can observe on both \Cref{app:fig:a9a_bits_pp,app:fig:quantum_bits_pp} that \RMCM~outperforms \Artemis~and \Dore~not only in term of convergence but also in term of communication cost. This is because \RMCM~does not require the synchronization step, at which any active nodes receive any update it has missed. This saves a few communication rounds. In these settings, the level of saturation of \texttt{SGD}, \Diana~and \RMCM~seems to be almost identical, this fact stresses again the benefit of our designed algorithm.

\begin{figure}
    \centering
    \begin{subfigure}{\sizefig\textwidth}
        \centering
        \includegraphics[width=1\textwidth]{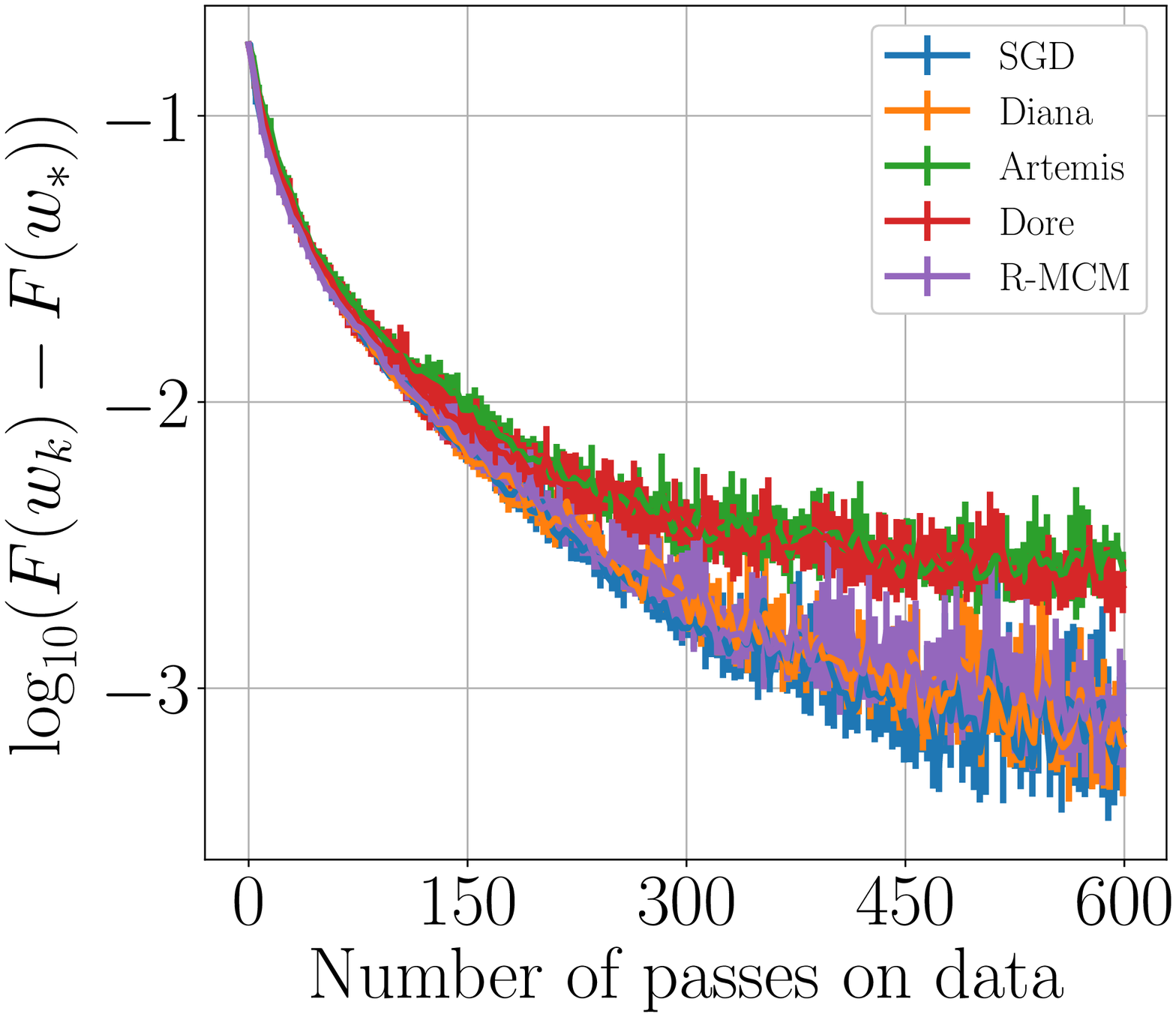}
        \caption{X axis in \# iterations.\vspace{-0.5em}}
        \label{app:fig:quantum_it_pp}
    \end{subfigure}
    \begin{subfigure}{\sizefig\textwidth}
        \centering
        \includegraphics[width=1\textwidth]{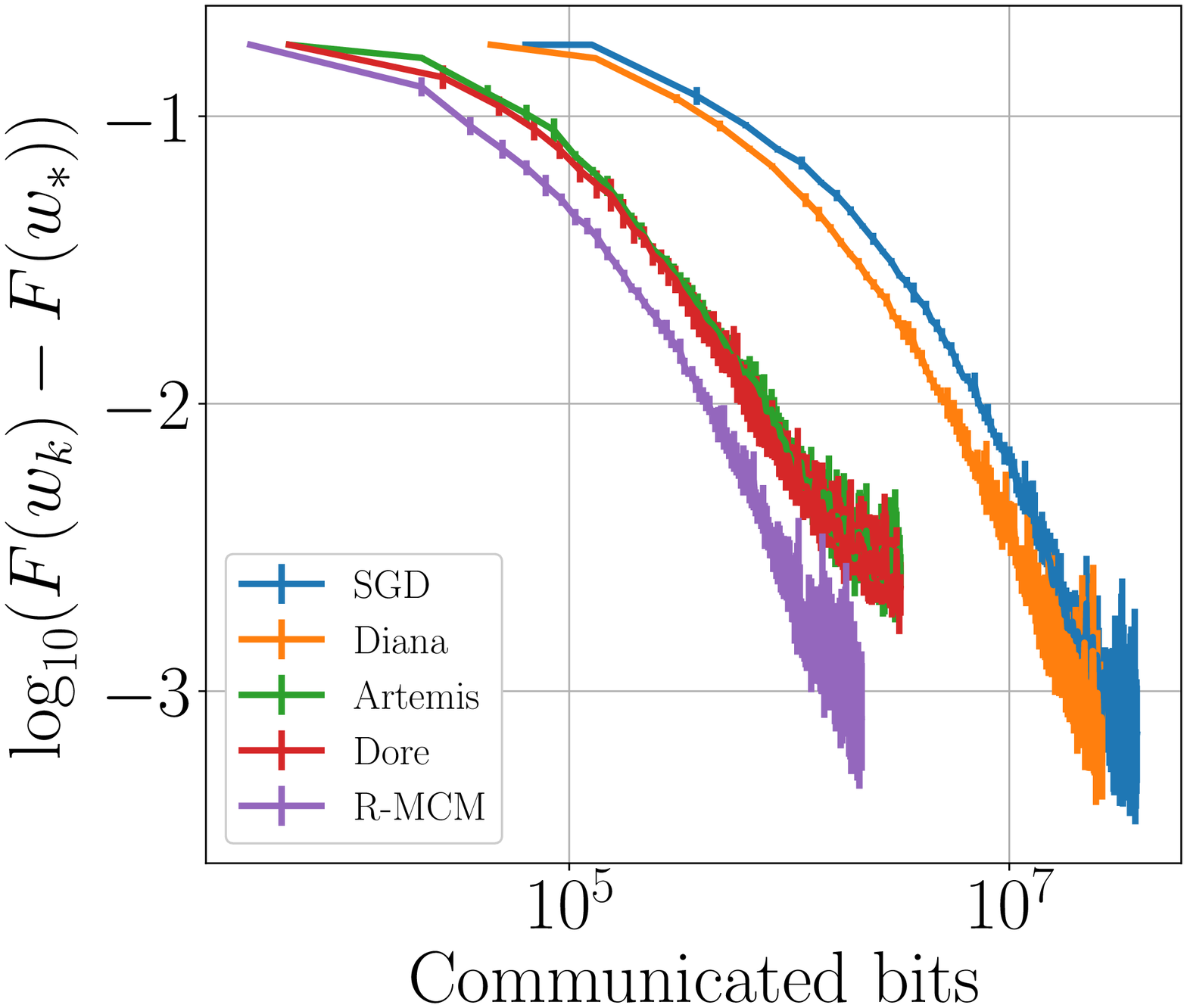}
        \caption{X axis in \# bits.\vspace{-0.5em}}
        \label{app:fig:quantum_bits_pp}
    \end{subfigure}
    \caption{quantum  with $b=400$, $\gamma = 1/L$ and a $2$-quantization. Only half of the devices are participating at each round. \vspace{-0.9em}}
    \label{app:fig:quantum_pp}
\end{figure}

\begin{figure}
    \centering
    \begin{subfigure}{\sizefig\textwidth}
        \centering
        \includegraphics[width=1\textwidth]{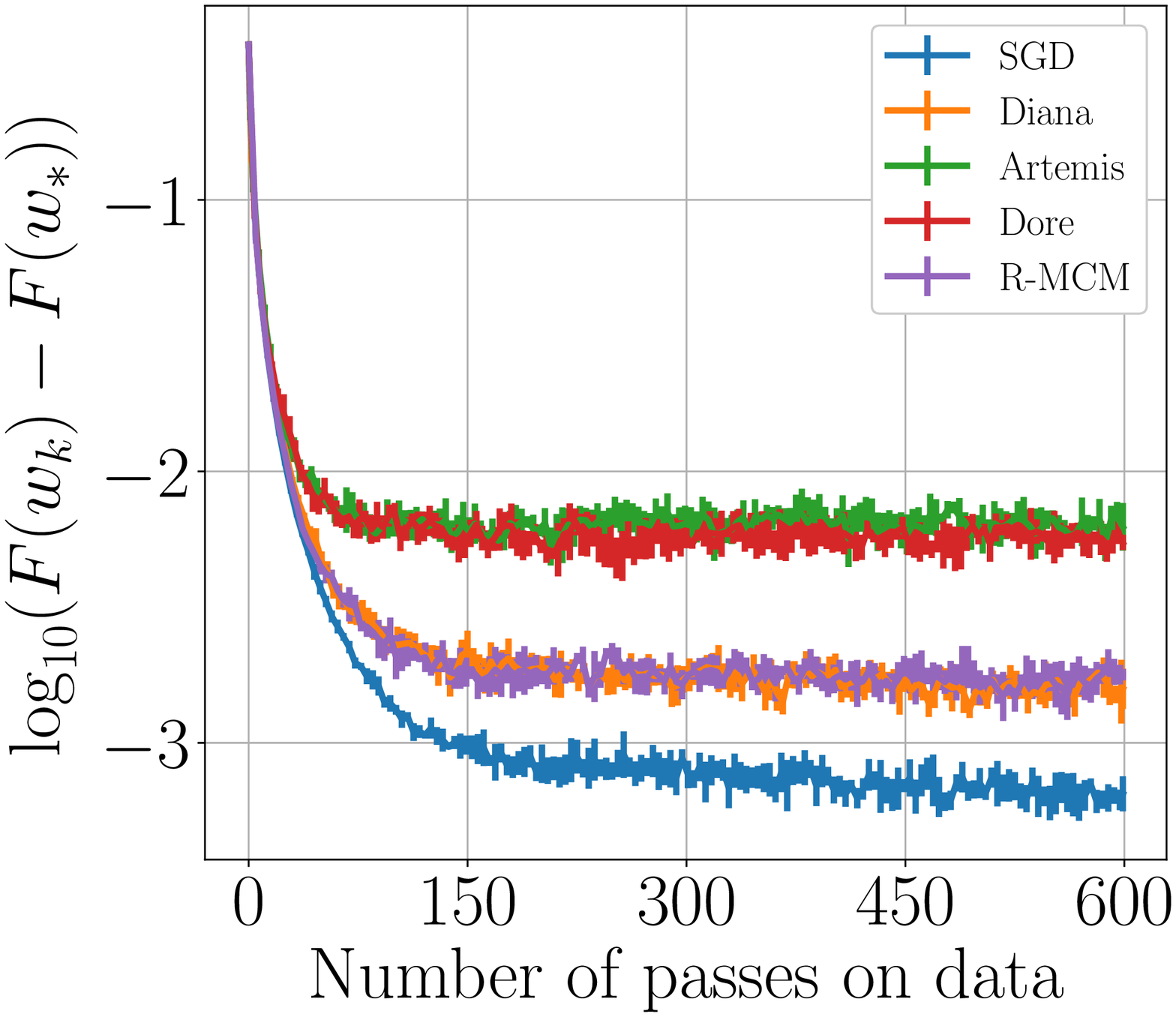}
        \caption{X axis in \# iterations.\vspace{-0.5em}}
        \label{app:fig:a9a_it_pp}
    \end{subfigure}
    \begin{subfigure}{\sizefig\textwidth}
        \centering
        \includegraphics[width=1\textwidth]{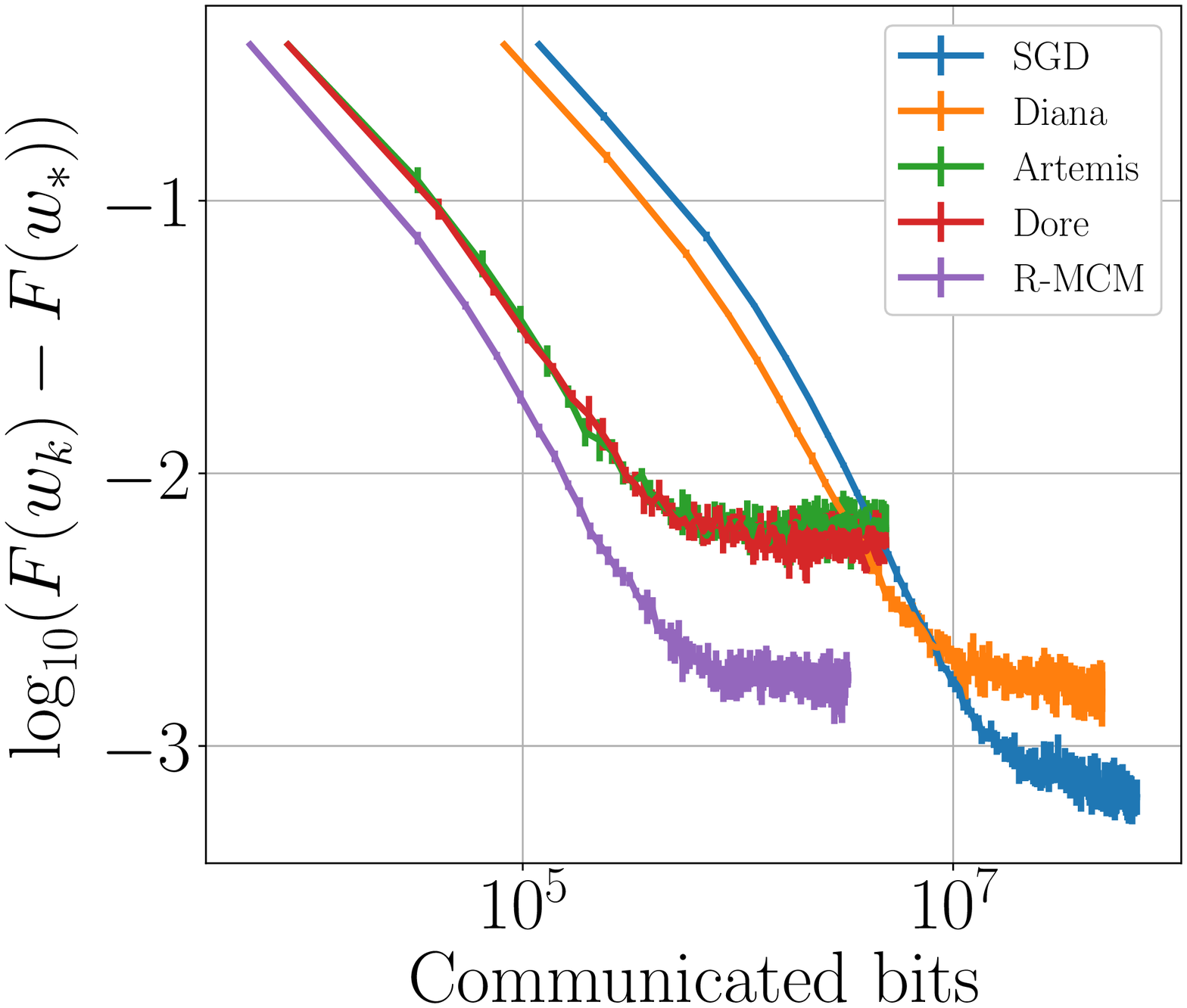}
        \caption{X axis in \# bits.\vspace{-0.5em}}
        \label{app:fig:a9a_bits_pp}
    \end{subfigure}
    \caption{A9A  with $b=50$, $\gamma = 1/L$ and a $2$-quantization. Only half of the devices are participating at each round. \vspace{-0.9em}}
    \label{app:fig:a9a_pp}
\end{figure}

Additionally, we present on \Cref{app:fig:a9a_pp_one_mem,app:fig:quantum_pp_one_mem} the impact of only using a single averaged downlink memory term instead of $N$ distinct memories. More details about update equations are given in \Cref{app:eq:rand_one_vs_N_mem}. We display three versions of \RMCM~that we compare to the SGD-baseline and to \Artemis:
\begin{enumerate}
    \item The standard \RMCM, using $N$ downlink memories,
    \item \RMCM~with a single memory, without any periodically reset.
    \item \RMCM~with both a single memory and a reset of the downlink memory every $4 \sqrt d$ iterations, where $d$ is the dimension of the optimization problem. This allows to limit the increase of communicated bits. Indeed as we use quantization with $s=1$, each communication costs $32\times \sqrt d \log(d)$ bits instead of $32\times d$. Because every $4\sqrt d$ iterations we send the uncompressed downlink memory term, there is an additional cost of $\ffrac{32 d}{4 \sqrt d}$. At the end, the memory reset leads to send $32\times \sqrt d (\log(d) + 1/4)$ bits by iterations instead of $32\times \sqrt d \log(d)$ bits for \RMCM (without reset). The increase is thus marginal.
\end{enumerate}

 For sake of clarity, we present below the two versions of \RMCM. In the first version, the central server holds $N$ memories that exactly correspond to those kept on the $N$ remote devices. In the second version, the central server holds a single memory $\bar H_k = \frac{1}{N}\sum_{i=1}^N H_k^i$ and each worker $i$ holds there own memory $H_k^i$.
\begin{align}
\begin{array}{ll}
\textbf{\text{$N$ memories}}     &  \textbf{\text{$1$ memories}}    \\
    \begin{aligned}\label{app:eq:rand_one_vs_N_mem}
\left\{
    \begin{array}{ll}
        \Omega_{k+1}^i = w_{k+1} - H_{k}^i \,, \\
    \widehat{w}_{k+1}^i = H_{k}^i + \C_{\mathrm{dwn}, i}(\Omega_{k+1}^i) \\
    H_{k+1}^i = H_k^i + \alpha\dwn \C_{\mathrm{dwn}, i}({\Omega}_{k+1}^i) .
    \end{array}
\right. 
\end{aligned}  & \begin{aligned}
\left\{
    \begin{array}{ll}
        \Omega_{k+1} = w_{k+1} - \bar H_{k} \,, \\
    \widehat{w}_{k+1}^i = H_{k}^i + \C_{\mathrm{dwn}, i}(\Omega_{k+1}) \\
    H_{k+1}^i = H_k^i + \alpha\dwn \C_{\mathrm{dwn}, i}({\Omega}_{k+1}) \\
    \bar H_{k+1} = \bar H_k + \ffrac{\alpha\dwn}{N} \sum_{i=1}^N \C_{\mathrm{dwn}, i}({\Omega}_{k+1}) .
    \end{array}
\right. 
\end{aligned}
\end{array}
\end{align}

In this experiments, it is noticeable that using single-downlink-memory-\RMCM~without periodic reset makes the algorithms saturate at a high level with an important variance. But as soon as we introduce the reset, we recover previous rates.

\begin{figure}
    \centering
    \begin{subfigure}{\sizefig\textwidth}
        \centering
        \includegraphics[width=1\textwidth]{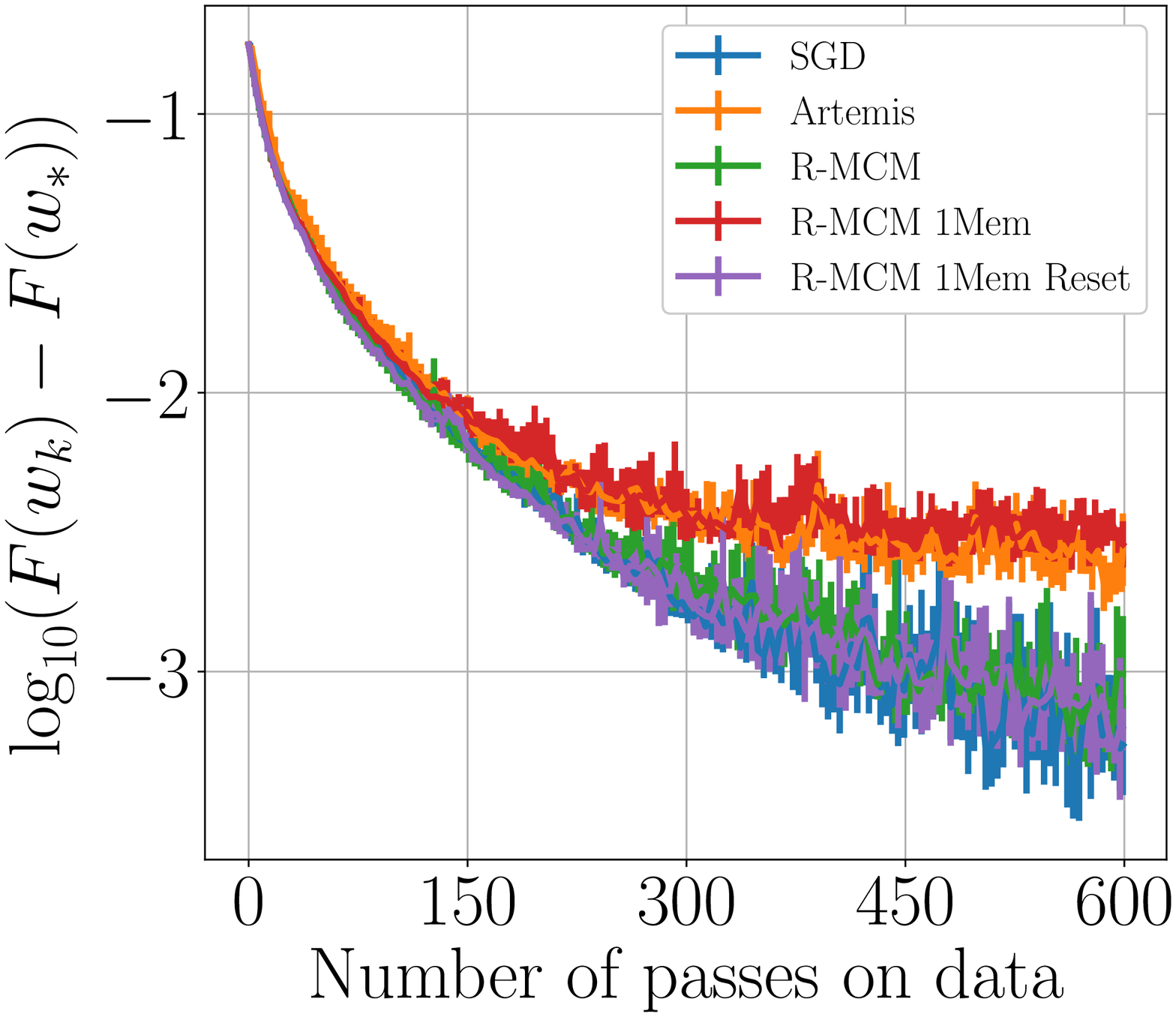}
        \caption{X axis in \# iterations.\vspace{-0.5em}}
        \label{app:fig:quantum_it_pp_one_mem}
    \end{subfigure}
    \begin{subfigure}{\sizefig\textwidth}
        \centering
        \includegraphics[width=1\textwidth]{pictures/exp/quantum/mcm-1-mem/bits-noavg-sto-b400.eps}
        \caption{X axis in \# bits.\vspace{-0.5em}}
        \label{app:fig:quantum_bits_pp_one_mem}
    \end{subfigure}
    \caption{quantum  with $b=400$, $\gamma = 1/L$ and a $2$-quantization. Only half of the devices are participating at each round. \vspace{-0.9em}}
    \label{app:fig:quantum_pp_one_mem}
\end{figure}

\begin{figure}
    \centering
    \begin{subfigure}{\sizefig\textwidth}
        \centering
        \includegraphics[width=1\textwidth]{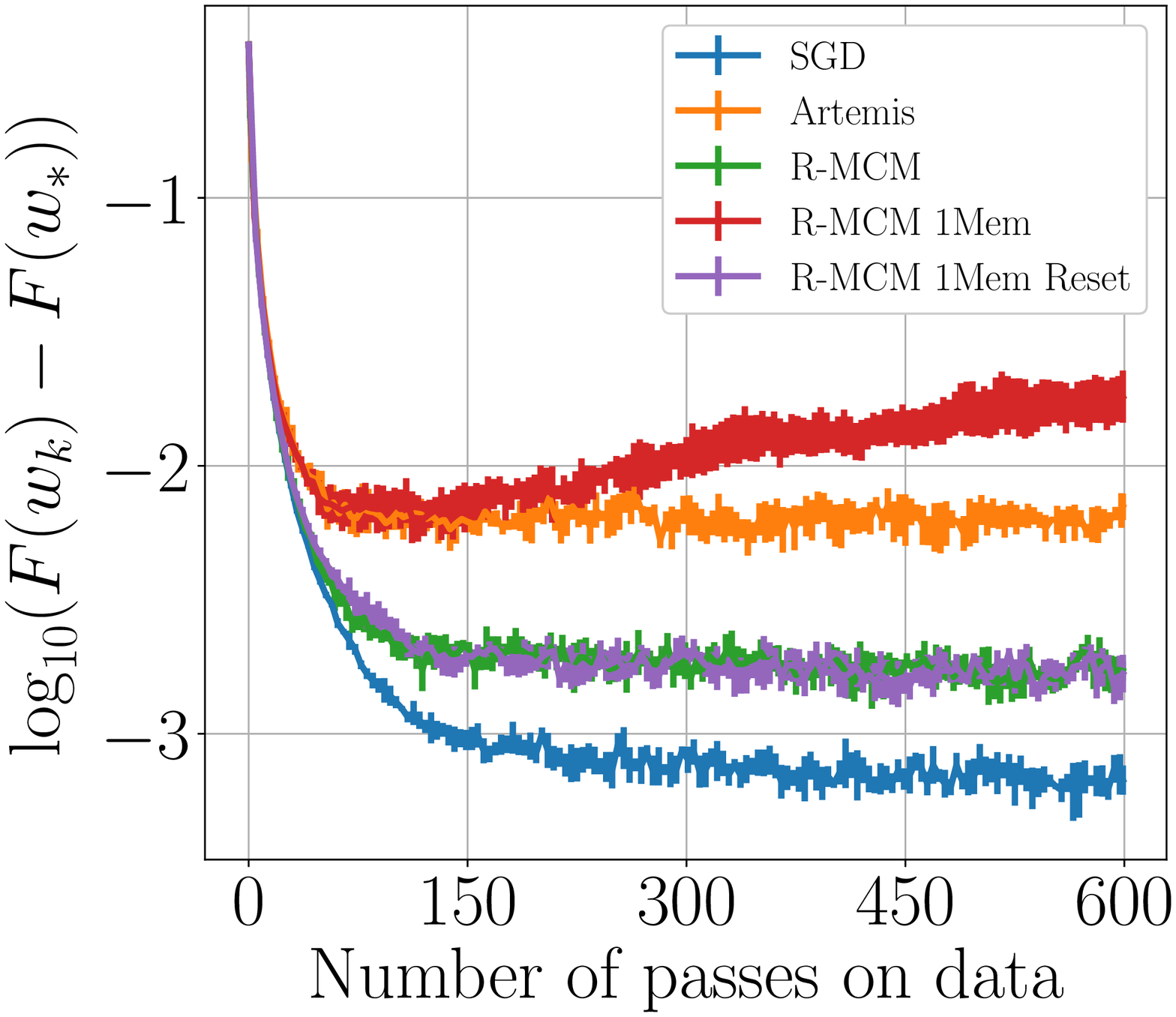}
        \caption{X axis in \# iterations.\vspace{-0.5em}}
        \label{app:fig:a9a_it_pp_one_mem}
    \end{subfigure}
    \begin{subfigure}{\sizefig\textwidth}
        \centering
        \includegraphics[width=1\textwidth]{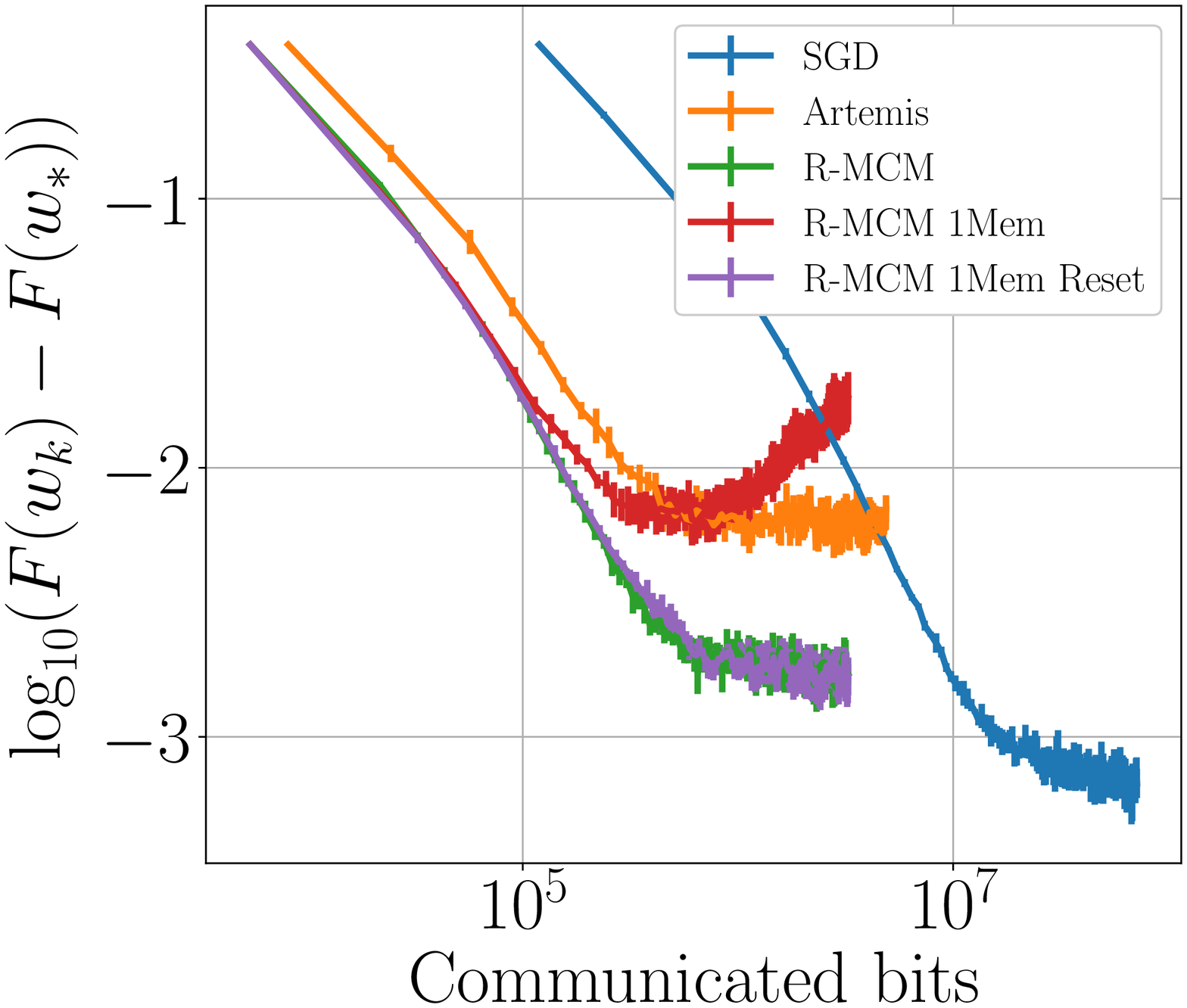}
        \caption{X axis in \# bits.\vspace{-0.5em}}
        \label{app:fig:a9a_bits_pp_one_mem}
    \end{subfigure}
    \caption{A9A  with $b=50$, $\gamma = 1/L$ and a $2$-quantization. Only half of the devices are participating at each round. \vspace{-0.9em}}
    \label{app:fig:a9a_pp_one_mem}
\end{figure}

\subsubsection{Comparing \MCM~with other algorithm using non-degraded update}
\label{app:subsec:exp_non_degraded_update}

The aim of this section is to show the importance to set $\alpha < 1$, for this purpose we compare \MCM~with three other algorithms:
\begin{enumerate}
    \item \Artemis~with a non-degraded update i.e. unlike the version proposed by \citet{philippenko_artemis_2020}, we do not update the global model with the compression sent to all remote nodes. \textit{It means that we compress only the update that has already been performed on the global server.} It corresponds to:
    \begin{align*}
    \left\{
    \begin{array}{l}
    \forall i \in \llbracket1, N \rrbracket, \Delta_{k}^i = \g_{k+1}^i(\widehat{w}_{k}) - h_k^i  \\
    w_{k+1} = w_k - \ffrac{\gamma}{N} \sum_\iN \C\up(\Delta_{k}^i) + h_k^i\\
    \widehat{w}_{k+1} = \widehat{w}_k - \gamma \C\dwn\bigpar{\ffrac{1}{N} \sum_\iN \C\up(\Delta_{k}^i) + h_k^i}\\
    h_{k+1}^{i} = h_k^i + \alpha\up \C\up({\Delta}_k^i).
    \end{array}
    \right. 
    \end{align*}
    \item \MCM~with $\alpha = 0$, \textit{thus without memory.}
    \item \MCM~with $\alpha=1$, in other words, for $k$ in $\N^*$ it corresponds to the case $H_{k+1} = \widehat{w}_{k+1}$. Indeed by definition we have $H_{k+1} = H_k + \alpha \widehat{\Omega}_{k+1}$, and furthermore, when we rebuild the compressed model on remote device, we have: $\widehat{w}_{k+1} = \widehat{\Omega}_{k+1} + H_k$. \textit{In this case, we use the compressed model as memory.}
\end{enumerate}

\Cref{app:fig:quantum_mcm_various_option,app:fig:a9a_mcm_various_option} clearly show the superiority of \MCM~over the three other variants. Some conclusions can be drawn from the observation of these figures.

\begin{itemize}
    \item \MCM~without downlink memory (orange curve, $\alpha = 0$) does not converge. As stressed in \Cref{sec:bidirectional_framework}, this mechanism is crucial to control the variance of the local model $w_{k+1}$, for $k$ in $\N$.
    \item Intuitively, while it appears reasonable to consider as memory the model that has been compressed at the previous step, experiments (green curves) show that this is not the case in practice and that $\alpha$ must be small enough to ensure convergence. This is the \textit{noise explosion} phenomenon that was mentioned earlier in the paper. 
    \item Compressing only the update gives reasonable results (blue curve). However, the convergence saturates at a higher level than for \MCM. 
\end{itemize}

\begin{figure}
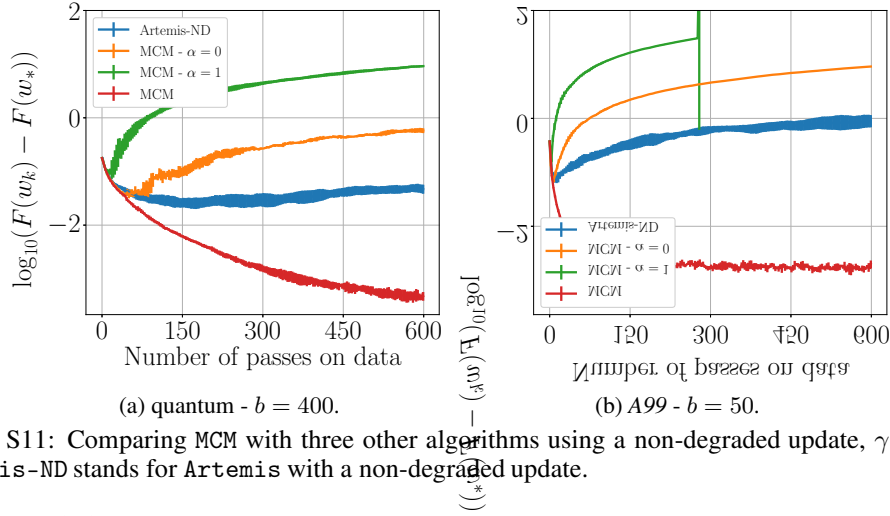

    \centering
    \begin{subfigure}{\sizefig\textwidth}
        \centering
        \includegraphics[width=1\textwidth]{pictures/exp/quantum/mcm-other-options/it-noavg-sto-b400.eps}
        \caption{quantum  - $b=400$.\vspace{-0.5em}}
        \label{app:fig:quantum_mcm_various_option}
    \end{subfigure}
    \begin{subfigure}{\sizefig\textwidth}
        \centering
        \includegraphics[width=1\textwidth]{pictures/exp/a9a/mcm-other-options/it-noavg-sto-b50.eps}
        \caption{\textit{A99}  - $b=50$.\vspace{-0.5em}}
        \label{app:fig:a9a_mcm_various_option}
    \end{subfigure}
    \caption{Comparing \MCM~with three other algorithms using a non-degraded update, $\gamma = 1/L$. \texttt{Artemis-ND} stands for \Artemis~with a non-degraded update. \vspace{-0.9em}}
    \label{app:fig:mcm_various_option}
\end{figure}

\subsubsection{Impact of the learning rate $\alpha$}

On \Cref{app:fig:alpha}, we plot the value of the excess loss obtained after $250$ epochs w.r.t. to the value of $\frac{1}{2(1+\omgC_{\mathrm{up}/\mathrm{dwn}})}$.
We observe that if $\alpha$ is too big, \MCM~converges slowly; but after reaching a threshold, the value of $\alpha$ does not impact anymore the rate of convergence. This confirms theory that suggests to use the largest possible $\alpha_{dwn}$ but smaller than a given value.  The condition $\alpha\dwn \leq \frac{1}{ 4 (\omega\dwn +1)}$ results from the proofs of \Cref{app:thm:contraction_mcm,app:thm:contraction_mcm_heterog}. But because the constant $4$ is partially an artifact of the proof, in experiments we used $\alpha\dwn = \frac{1}{2(\omega\dwn +1)}$ as in \cite{philippenko_artemis_2020} (see condition S19 in Theorem S7), and this choice is confirmed by \Cref{app:fig:alpha}.

\begin{figure}
    \centering
    \begin{subfigure}{0.3\textwidth}
        \centering
        \includegraphics[width=1\textwidth]{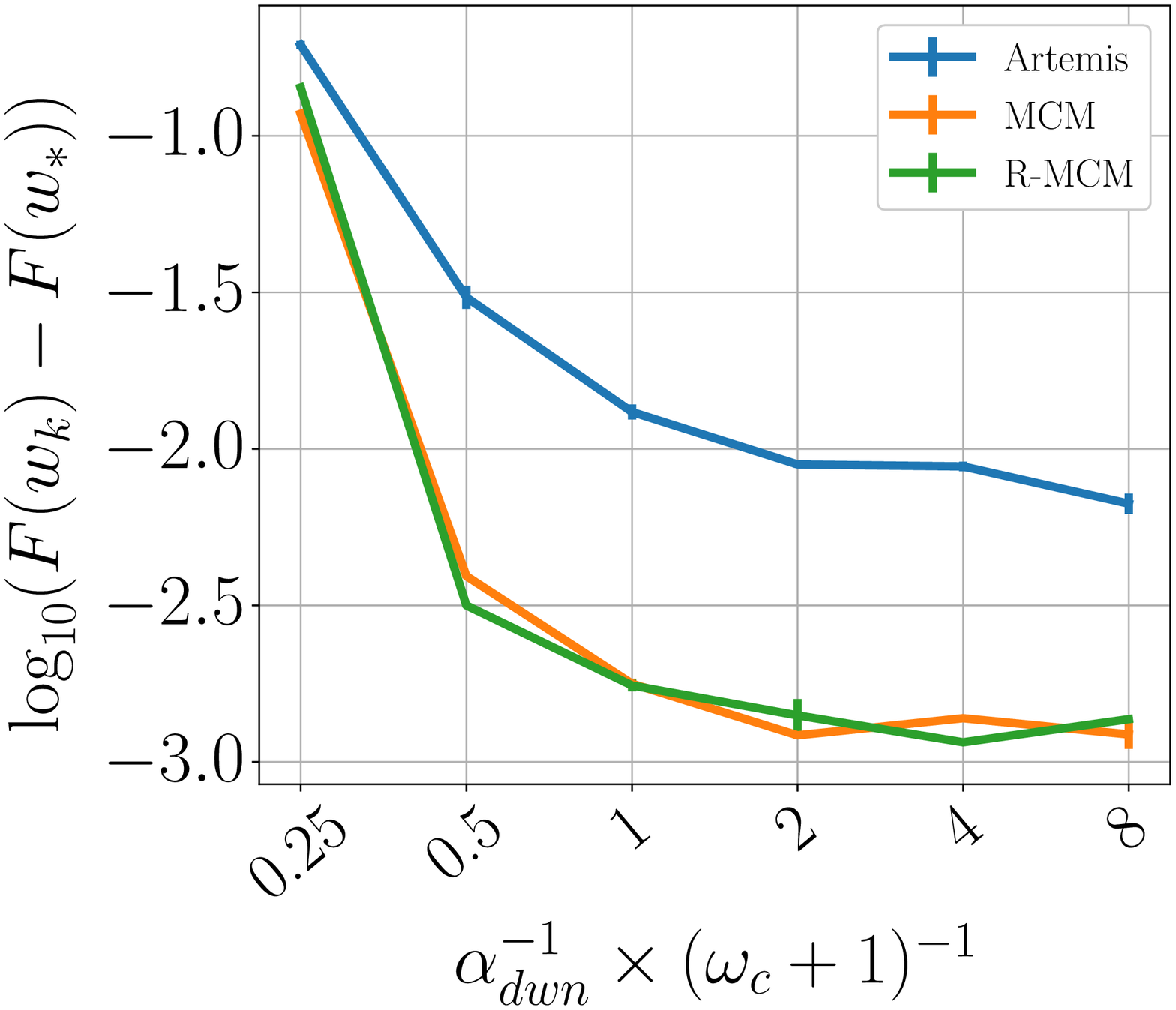}
        \caption{A9A.\vspace{-0.5em}}
        \label{app:fig:a9a_alpha}
    \end{subfigure}
    \begin{subfigure}{0.3\textwidth}
        \centering
        \includegraphics[width=1\textwidth]{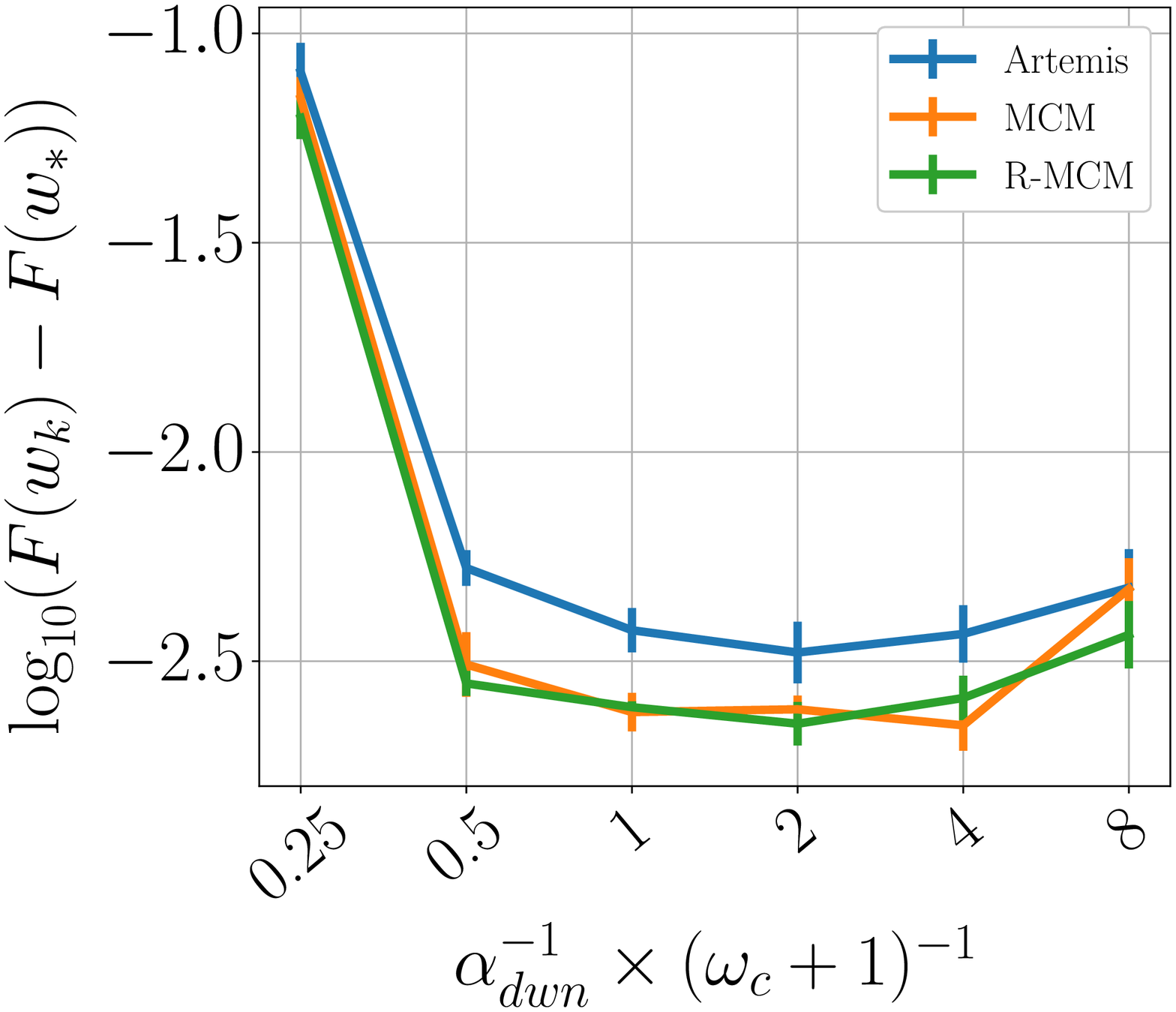}
        \caption{Quantum.\vspace{-0.5em}}
        \label{app:fig:quantum_alpha}
    \end{subfigure}
    \begin{subfigure}{0.3\textwidth}
        \centering
        \includegraphics[width=1\textwidth]{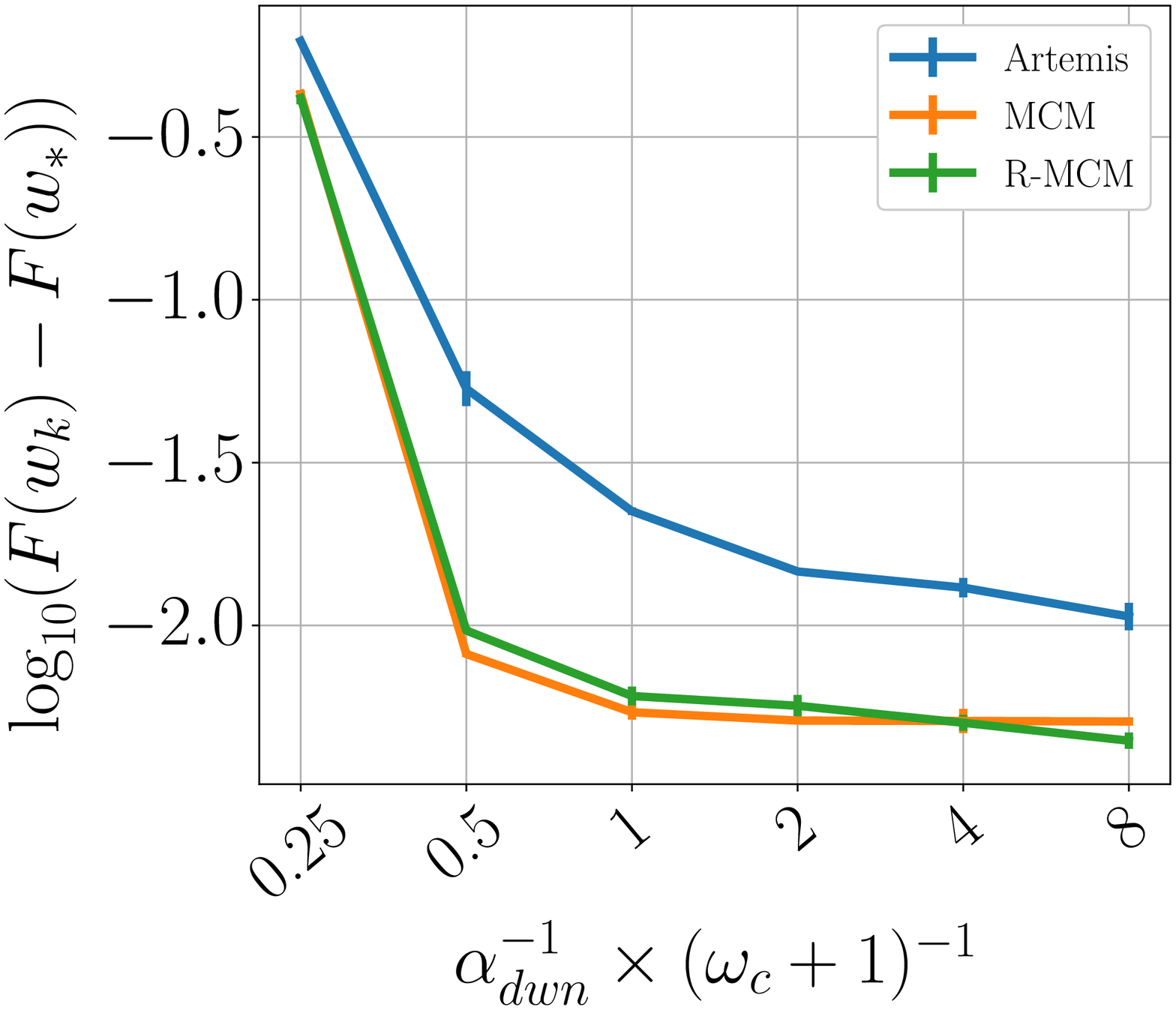}
        \caption{W8A.\vspace{-0.5em}}
        \label{app:fig:w8a_alpha}
    \end{subfigure}
    \caption{On X axis is displayed different values of $\ffrac{1}{\alpha (\omgC\dwn + 1)}$. On Y axis is given the excess loss after $250$ epochs. In all other experiments, we choose $\alpha\dwn = \ffrac{1}{2(\omgC\dwn + 1)} (=\alpha\up)$. \vspace{-0.9em}}
    \label{app:fig:alpha}
\end{figure}

\subsection{Experiments in deep learning}
\label{app:subsec:deep_learning}

In this section, we show the robustness of \MCM~in high dimension using more complex data and applying the algorithm to non-convex problems (see \Cref{app:thm:mcm_non_convex} for a guarantee of convergence in this scenario). We carried out experiments on MNIST/FE-MNIST/Fashion-MNIST  using a CNN (\Cref{app:fig:FE_MNIST,app:fig:Fashion_MNIST,app:fig:MNIST}), and on CIFAR using the LeNet model (\Cref{app:fig:CIFAR10}).
We plot the logarithm of the train loss w.r.t the number of iterations and the number of communicated bits. The accuracy has been given in \Cref{sec:experiments}, see \Cref{tab:exp_nonconvex}. Settings of the experiments can be found in \Cref{app:tab:settings_nonconvex}, all experiments are averaged over $2$ runs.

As for experiments in convex case, \MCM~presents identical rates of convergence than \Diana~but with a small shift that makes \Artemis~better during the first iterations. 

\begin{figure}
    \centering
    \begin{subfigure}{\sizefig\textwidth}
        \centering
        \includegraphics[width=1\textwidth]{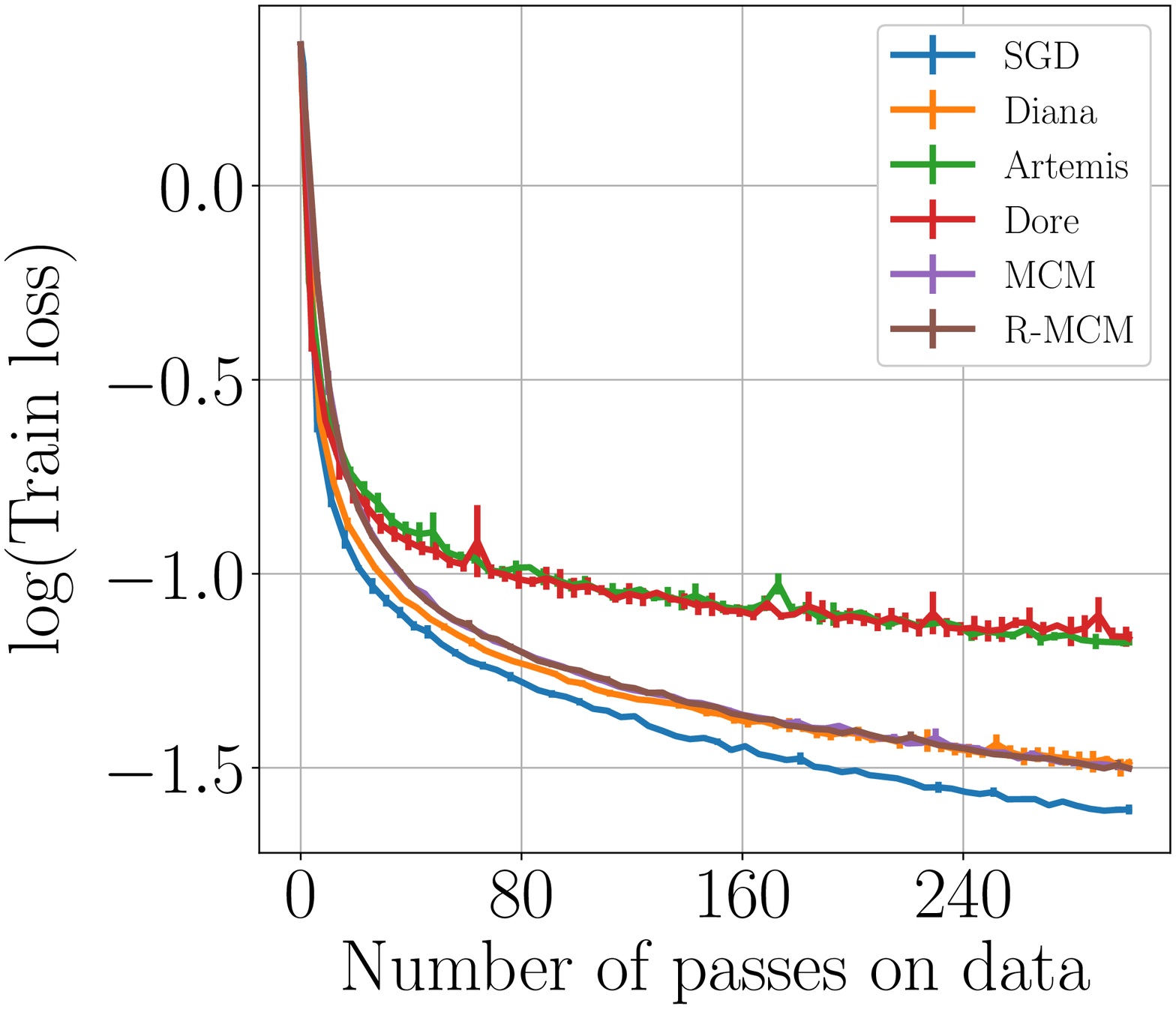}
        \caption{X axis in \# iterations.\vspace{-0.5em}}
        \label{app:fig:MNIST_its}
    \end{subfigure}
    \begin{subfigure}{\sizefig\textwidth}
        \centering
        \includegraphics[width=1\textwidth]{pictures/exp/mnist/MNIST_CNN_m0_lr0.1_sup4_sdwn4_b128_wd0_norm-2_train_losses_bits.eps}
        \caption{X axis in \# bits.\vspace{-0.5em}}
        \label{app:fig:MNIST_bits}
    \end{subfigure}
    \caption{Convergence on MNIST using a CNN.}
    \label{app:fig:MNIST}
\end{figure}

\begin{figure}
    \centering
    \begin{subfigure}{\sizefig\textwidth}
        \centering
        \includegraphics[width=1\textwidth]{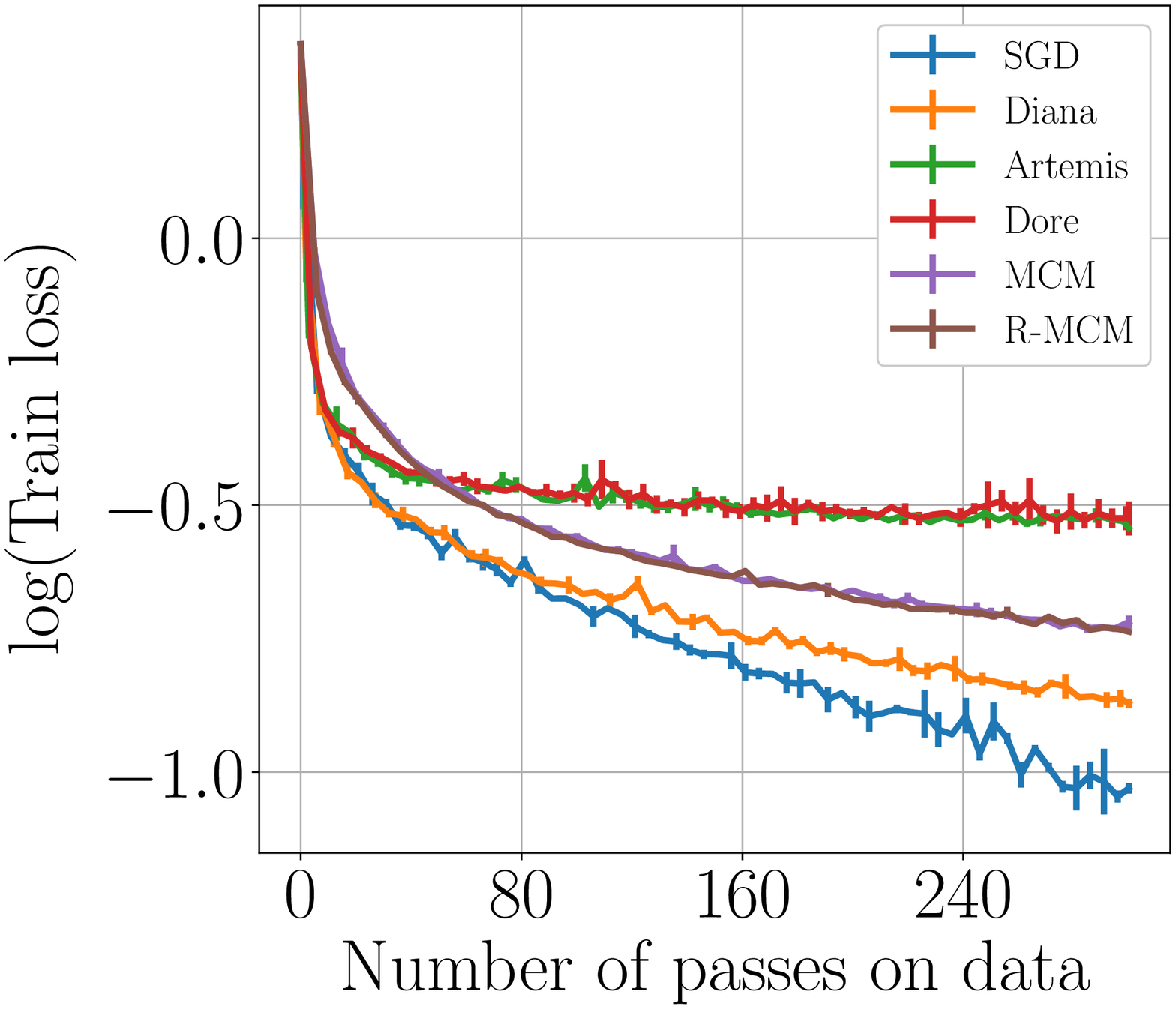}
        \caption{X axis in \# iterations.\vspace{-0.5em}}
        \label{app:Fashion_MNIST_its}
    \end{subfigure}
    \begin{subfigure}{\sizefig\textwidth}
        \centering
        \includegraphics[width=1\textwidth]{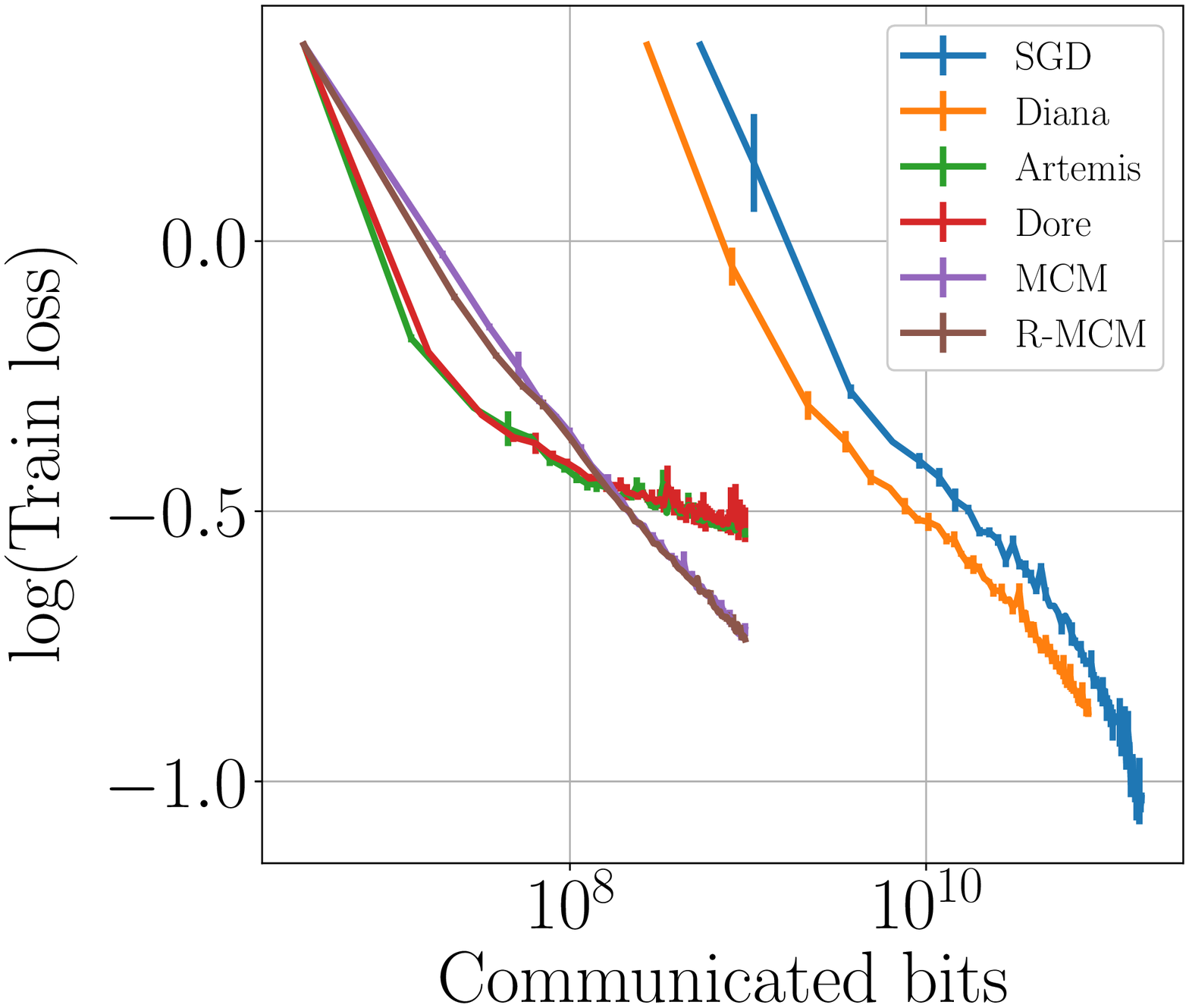}
        \caption{X axis in \# bits.\vspace{-0.5em}}
        \label{app:fig:Fashion_MNIST_bits}
    \end{subfigure}
    \caption{Convergence on Fashion-MNIST.}
    \label{app:fig:Fashion_MNIST}
\end{figure}

\begin{figure}
    \centering
    \begin{subfigure}{\sizefig\textwidth}
        \centering
        \includegraphics[width=1\textwidth]{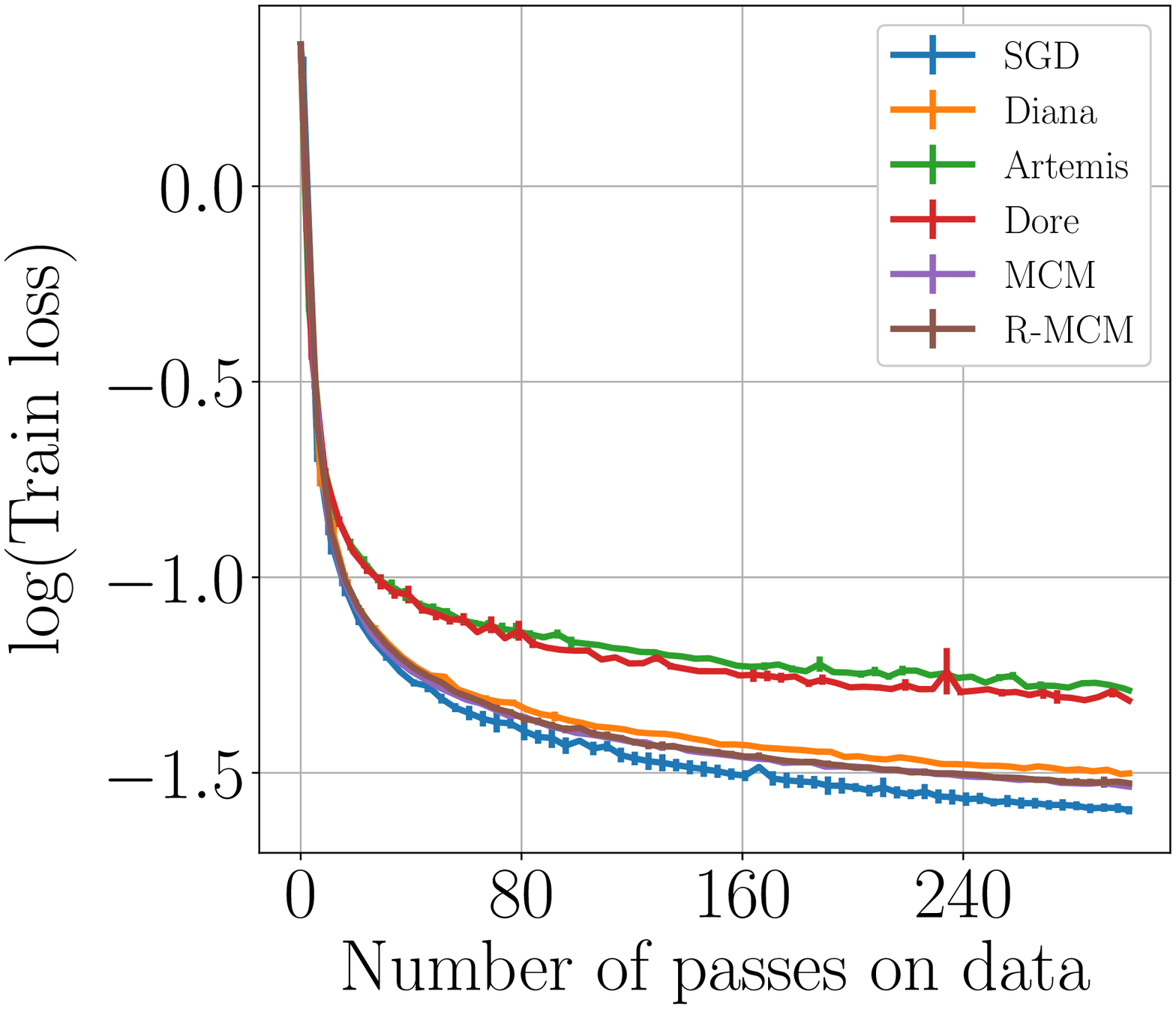}
        \caption{X axis in \# iterations.\vspace{-0.5em}}
        \label{app:fig:FE_MNIST_its}
    \end{subfigure}
    \begin{subfigure}{\sizefig\textwidth}
        \centering
        \includegraphics[width=1\textwidth]{pictures/exp/femnist/MNIST_CNN_m0_lr0.1_sup4_sdwn4_b128_wd0_norm-2_train_losses_bits.eps}
        \caption{X axis in \# bits.\vspace{-0.5em}}
        \label{app:fig:FE_MNIST_bits}
    \end{subfigure}
    \caption{Convergence on FE-MNIST.}
    \label{app:fig:FE_MNIST}
\end{figure}

\begin{figure}
    \centering
    \begin{subfigure}{\sizefig\textwidth}
        \centering
        \includegraphics[width=1\textwidth]{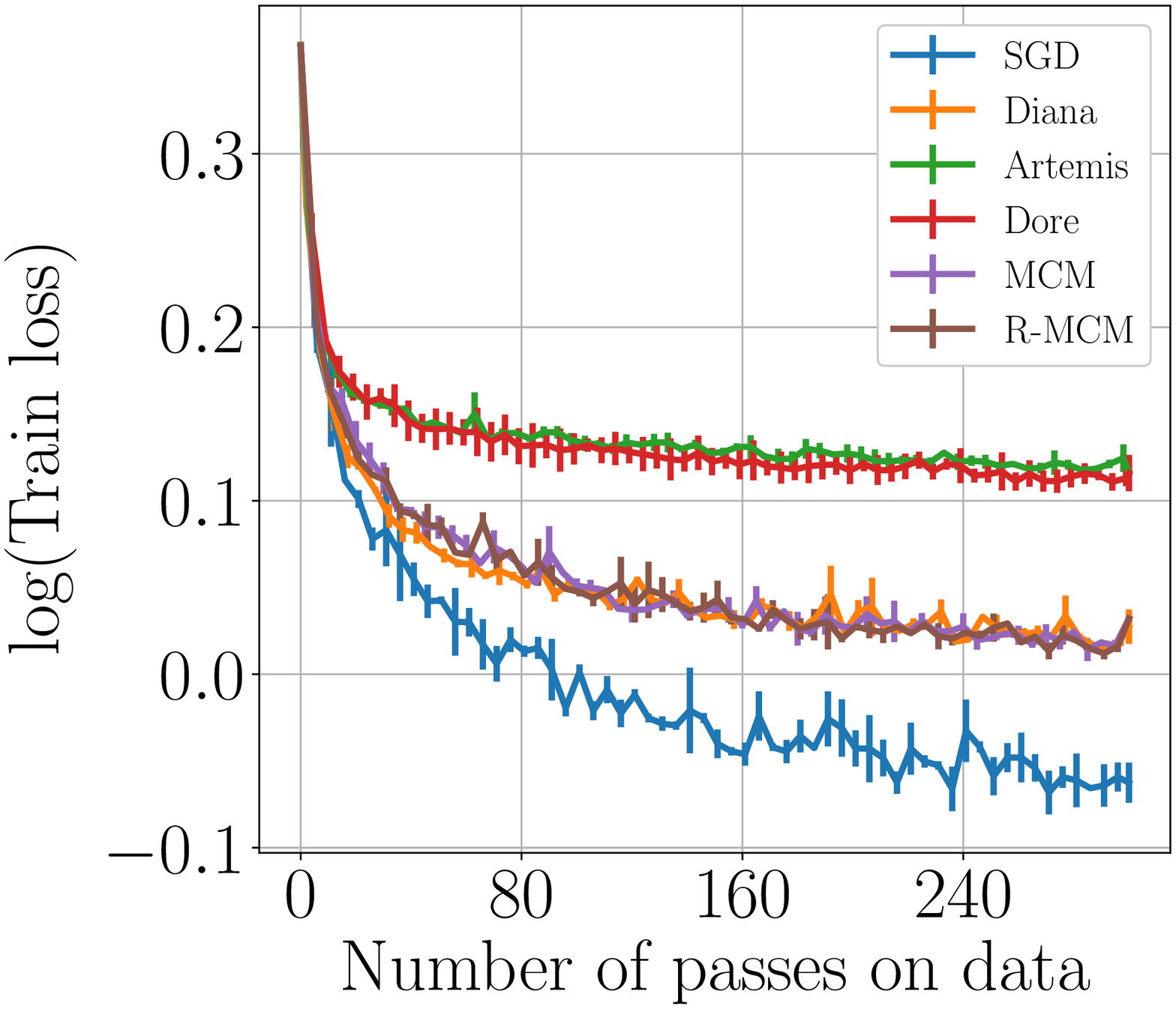}
        \caption{X axis in \# iterations.\vspace{-0.5em}}
        \label{app:fig:CIFAR10_its}
    \end{subfigure}
    \begin{subfigure}{\sizefig\textwidth}
        \centering
        \includegraphics[width=1\textwidth]{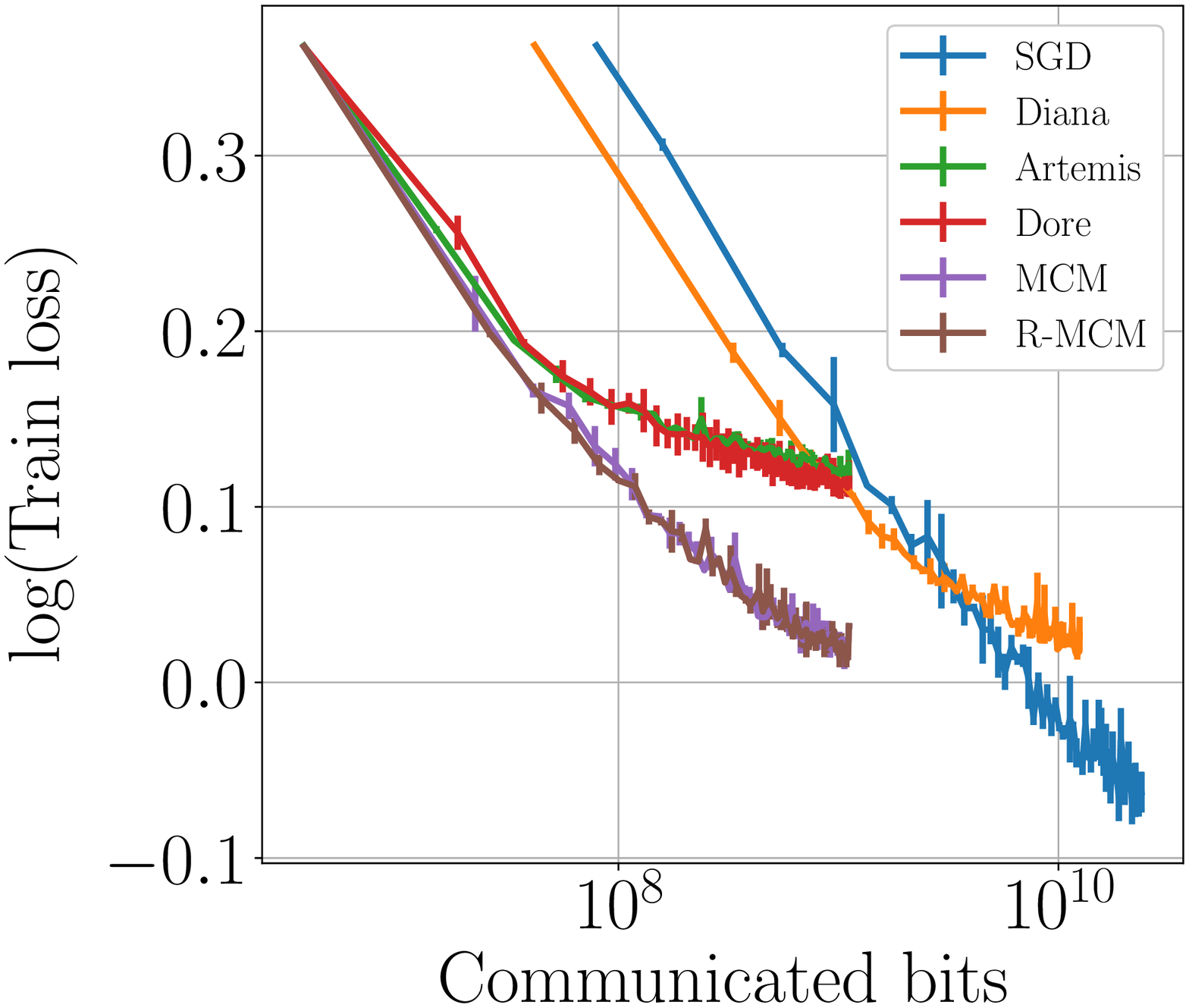}
        \caption{X axis in \# bits.\vspace{-0.5em}}
        \label{app:fig:CIFAR10_bits}
    \end{subfigure}
    \caption{Convergence on CIFAR10.}
    \label{app:fig:CIFAR10}
\end{figure}

\subsection{Wall clock time} 
\label{app:subsec:wall_clock_time}
We verified in our experiments that the downlink compression of $w_k-H_{k-1}$ on the central server does not lead to a noticeable overhead w.r.t. gradients computation and communications. Here, as experiments are performed in a \textit{simulated environment} there is no communication cost. In \Cref{app:tab:wall_clock_time} we report the computation time when training on FE-MNIST, this allows to highlight that compression only marginally increases the computation cost.

\begin{table}[!htp]
\caption{Wall clock time on FE-MNIST with $b=128$ and $s=2^2$.}
\label{app:tab:wall_clock_time}
\centering
\begin{tabular}{lc}
Compression regime & Computation time for 150 epoch  \\
\hline
 No compression (SGD) & 15421s   \\
 Compression on uplink & 16773s, ratio: 1.08   \\
 Compression on uplink and downlink & 16769s, ratio: 1.08 \\
\bottomrule
\end{tabular}
\end{table}

\subsection{Hardware and Carbon footprint}
\label{app:subsec:carbon_footprint}

As part as a community effort to report the carbon footprint of experiments, we describe in this subsection the hardware used and the total computation time.

We have two kind of experiments : for deep learning models we ran experiments on a GPU, and for linear/logistic regression on a CPU. We used an Intel(R) Xeon(R) CPU E5-2667 processor with 16 cores; and we used an Nvidia Tesla V100 GPU with $4$ nodes.

To generate all figures in this paper, our code ran (if run in a sequential mode) for $150$ hours on a CPU. In overall, we consider that the whole paper writing process required (code development, debugging, exploring settings ...) at least $6000$ hours end to end on the CPU. The carbon emissions caused by this work were subsequently evaluated with the \texttt{Green Algorithm}, built by \citet{lannelongue_green_2021}. It estimates our computations to generate around $100$kg of CO2, requiring $2.5$MWh. To compare, this corresponds to about $570$km by car. 

On the GPU, experiments require to be ran for around $140$ hours (if run in a sequential mode). In overall, we consider that the full paper writing process required at least $2800$ hours end to end on the GPU.
The \texttt{Green Algorithm} estimates our computations to generate $220$kg of CO2, requiring $5.7$MWh. To compare, this corresponds to about $1,270$km by car. 

\section{Technical results}
\label{app:sec:technical_results}

In this section, we provide some technical results required by our demonstration. In \Cref{app:subsec:basic_ineq} we recall classical inequalities and in \Cref{app:subsec:somme_lemmas} we present two preliminary lemmas.

In \Cref{app:sec:proof_for_ghost,app:sec:proofs_mcm,app:sec:technical_results}, for ease of notation we denote, for $k$ in $\N^*$, $\gwktilde = \ffrac{1}{N} \sum_\iN \gwkiHAT$. Furthermore we use the convention $\nabla F(w_{-1}) = 0$.

\subsection{Basic inequalities}
\label{app:subsec:basic_ineq}

In this subsection, we recall some very classical inequalities, for all $a, b \in \R^d$, $\beta > 0$ we have:
\begin{align}
    &\PdtScl{a}{b} \leq \ffrac{\SqrdNrm{a}}{2 \beta} +\ffrac{\beta \SqrdNrm{b}}{2} \label{app:basic_ineq:pdtscl} \,,\\
    &\SqrdNrm{a + b} \leq (1 + \ffrac{1}{\beta})\SqrdNrm{a} + (1 + \beta)\SqrdNrm{a}  \label{app:basic_ineq:trick_bound_norm} \,,\\
    &\SqrdNrm{a + b} \leq 2 \bigpar{ \SqrdNrm{a} + \SqrdNrm{b } } \,,\label{app:basic_ineq:nrm_sum} \\
    &|\PdtScl{a}{b}| \leq \| a \| \cdot \|b \| \text{\qquad(Cauchy-Schwarz inequality) \,,} \label{app:basic_ineq:cauchy_schwarz} \\
    &\PdtScl{a}{b} \leq \frac{1}{2} \bigpar{\SqrdNrm{a} + \SqrdNrm{b} - \SqrdNrm{a - b}} \text{\qquad(Polarization identity).} \label{app:basic_ineq:polarization}
\end{align}

Below, we recall Jensen's inequality.
\paragraph{Jensen inequality}
Let a probability space $(\Omega, \mathcal{A},\mathbf{P})$ with $\Omega$ a sample space, $\mathcal{A}$ an event space, and $\mathbf{P}$ a probability measure. Suppose that $X: \Omega \longrightarrow \R^d$ is a random variable, then for any convex function $f: \R^d \longrightarrow \R$ we have:
\begin{align}
\label{app:basic_ineq:jensen}
f\bigpar{\E(X)} \leq \E f(X) \,.
\end{align}

The next lemma will be used several times in the proofs.
\begin{lemma}
\label{app:lem:trickVariance}
Let a probability space $(\Omega, \mathcal{A},\mathbf{P})$ with $\Omega$ a sample space, $\mathcal{A}$ an event space, $\mathbf{P}$ a probability measure, and $\mathcal F$ a $\sigma-$algebra. For any $a \in \R^d$ and for any random vector in $\R^d$ we have:
\[
\E\left[\SqrdNrm{X-\E X }\right] \le \E\left[\SqrdNrm{X-a }\right]  \]
indeed $\E[X] = \argmin_{a\in \R^d} \E\left[\SqrdNrm{X-a }\right]$.
Similarly, for any random vector $Y$ in $\R^d$ which is $\mathcal{F}$-measurable, we have:
\[\Expec{\SqrdNrm{X-\Expec{X}{\mathcal F} }}{\mathcal F} \le \Expec{\SqrdNrm{X- Y}}{\mathcal F} \,.
\]
\end{lemma}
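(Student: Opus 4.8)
\textbf{Proof plan for \Cref{app:lem:trickVariance}.}

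The plan is to prove both statements by the same bias--variance decomposition, working first with the deterministic vector $a$ and then transferring the argument to the conditional case. First I would expand $\SqrdNrm{X - a}$ by inserting $\pm \E X$:
\begin{align*}
    \SqrdNrm{X - a} = \SqrdNrm{(X - \E X) + (\E X - a)} = \SqrdNrm{X - \E X} + 2 \PdtScl{X - \E X}{\E X - a} + \SqrdNrm{\E X - a}\,.
\end{align*}
Taking expectations, the cross term vanishes because $\E X - a$ is deterministic and $\E[X - \E X] = 0$, so that by linearity $\E[\PdtScl{X - \E X}{\E X - a}] = \PdtScl{\E[X - \E X]}{\E X - a} = 0$. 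Hence $\E[\SqrdNrm{X - a}] = \E[\SqrdNrm{X - \E X}] + \SqrdNrm{\E X - a} \geq \E[\SqrdNrm{X - \E X}]$, with equality exactly when $a = \E X$, which also gives the variational characterization $\E[X] = \argmin_{a} \E[\SqrdNrm{X - a}]$.

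For the conditional statement I would repeat the same decomposition with $a$ replaced by the $\mathcal F$-measurable random vector $Y$, inserting $\pm \Expec{X}{\mathcal F}$ and taking conditional expectation given $\mathcal F$:
\begin{align*}
    \Expec{\SqrdNrm{X - Y}}{\mathcal F} = \Expec{\SqrdNrm{X - \Expec{X}{\mathcal F}}}{\mathcal F} + 2 \Expec{\PdtScl{X - \Expec{X}{\mathcal F}}{\Expec{X}{\mathcal F} - Y}}{\mathcal F} + \Expec{\SqrdNrm{\Expec{X}{\mathcal F} - Y}}{\mathcal F}\,.
\end{align*}
Since both $\Expec{X}{\mathcal F}$ and $Y$ are $\mathcal F$-measurable, the vector $\Expec{X}{\mathcal F} - Y$ can be pulled out of the conditional expectation in the cross term, leaving $\PdtScl{\Expec{X - \Expec{X}{\mathcal F}}{\mathcal F}}{\Expec{X}{\mathcal F} - Y} = 0$ by the tower/pull-out property of conditional expectation. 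The remaining last term is nonnegative, which yields the claimed inequality.

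The only genuinely delicate point is the justification that the cross term vanishes in the conditional case: this requires the ``taking out what is known'' property of conditional expectation (that $\Expec{ZW}{\mathcal F} = Z\,\Expec{W}{\mathcal F}$ when $Z$ is $\mathcal F$-measurable), applied coordinatewise to the inner product, together with integrability so that all conditional expectations are well defined. Everything else is the elementary expansion of a squared norm and nonnegativity, so there is no real obstacle beyond bookkeeping.
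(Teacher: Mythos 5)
Your proof is correct: the bias--variance decomposition with the cross term vanishing (by centering in the unconditional case, and by the pull-out property of conditional expectation in the conditional case) is exactly the standard justification of the variational characterization $\E[X]=\argmin_a \E[\SqrdNrm{X-a}]$ that the paper invokes without further detail. Nothing is missing, and your remark about the ``taking out what is known'' property and integrability is the right level of care.
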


\begin{assumption}[Cocoercivity]
\label{app:asu:cocoercivity}
We suppose that for all $k$ in $\N$, stochastic gradients functions $(\g^i_k)_{i \in \llbracket 1,N \rrbracket}$ are $L$-cocoercive in quadratic mean.
That is, for  $k$ in $\N$,   $i$ in $\llbracket1, N \rrbracket$ and for all vectors $w_1, w_2$ in $\WW$, we have:
\[
\E [ \| \g^i_{k}(w_1) - \g_{k}^i(w_2) \|^2 ] \leq 
  L  \PdtScl{\nabla F_i (w_1) - \nabla F_i(w_2)}{w_1 - w_2} \,.
\]
This assumption is stronger than supposing convexity and $L$-smoothness of $F$.
\end{assumption}

The final proposition of this subsection presents two inequalities used in our demonstrations when invoking convexity or strong-convexity. They follow from \Cref{asu:cvx_or_strongcvx} and can be found in \cite{nesterov_introductory_2004}.
\begin{proposition}
\label{app:ineq_convex_demi_and_demi}
If a function $F$ is convex, then it satisfies for all $w$ in $\R^d$:
\begin{align}
\label{app:eq:convex}
\PdtScl{\nabla F(x)}{w-w_*} \geq \frac{1}{2} (F(w) - F(w_*)) + \ffrac{1}{2L} \SqrdNrm{\nabla F(w)} \,.
\end{align}

If a function $F$ is strongly-convex, then it satisfies for all $w$ in $\R^d$:
\begin{align}
\label{app:eq:strgly_convex}
\PdtScl{\nabla F(x)}{w-w_*} \geq \frac{1}{2} (F(w) - F(w_*)) + \ffrac{1}{2}\left( \mu \SqrdNrm{w - w_*} + \ffrac{1}{L} \SqrdNrm{\nabla F(w)} \right) \,.
\end{align}
\end{proposition}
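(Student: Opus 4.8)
Both displayed inequalities are lower bounds on the same scalar $\PdtScl{\nabla F(w)}{w - w_*}$ (in the statement one should read $x=w$), and I would obtain each by \emph{averaging two elementary bounds}: a ``first-order'' one coming from (strong) convexity, and a cocoercivity bound coming from $L$-smoothness. As a preliminary, note that $\nabla F(w_*) = 0$ since $w_*$ minimises the differentiable convex function $F$, so $\nabla F(w) = \nabla F(w) - \nabla F(w_*)$.

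For the \emph{convex case}, convexity of $F$ (\Cref{asu:cvx_or_strongcvx} with $\mu=0$) gives directly $\PdtScl{\nabla F(w)}{w - w_*} \ge F(w) - F(w_*)$. For the second bound I use that $\nabla F$ is $\tfrac1L$-cocoercive: applying the refined descent inequality $F(w_2) \ge F(w_1) + \PdtScl{\nabla F(w_1)}{w_2 - w_1} + \tfrac{1}{2L}\SqrdNrm{\nabla F(w_2) - \nabla F(w_1)}$ --- the standard consequence of convexity together with $L$-smoothness (\Cref{asu:smooth}) --- at the pairs $(w_1,w_2)$ and $(w_2,w_1)$ and adding yields $\PdtScl{\nabla F(w_1) - \nabla F(w_2)}{w_1 - w_2} \ge \tfrac1L \SqrdNrm{\nabla F(w_1) - \nabla F(w_2)}$; taking $w_1 = w$ and $w_2 = w_*$ gives $\PdtScl{\nabla F(w)}{w - w_*} \ge \tfrac1L \SqrdNrm{\nabla F(w)}$. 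Averaging this with the convexity bound, each with weight $\tfrac12$, produces \eqref{app:eq:convex}.

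For the \emph{strongly convex case} the argument is identical, only the first bound is strengthened: $\mu$-strong convexity (\Cref{asu:cvx_or_strongcvx}) gives $\PdtScl{\nabla F(w)}{w - w_*} \ge F(w) - F(w_*) + \tfrac{\mu}{2}\SqrdNrm{w - w_*}$, while the cocoercivity bound $\PdtScl{\nabla F(w)}{w - w_*} \ge \tfrac1L\SqrdNrm{\nabla F(w)}$ still holds (strong convexity implies convexity). Averaging these two bounds then gives \eqref{app:eq:strgly_convex}.

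The only non-immediate ingredient is the cocoercivity estimate; everything else is a one-line substitution of the definitions. Cocoercivity is classical (\cite{nesterov_introductory_2004}), and the ``two evaluations of the refined descent inequality, then add'' trick above makes the argument self-contained --- so the one point to check carefully is the refined descent inequality itself under \Cref{asu:smooth}, which is the usual integral-form bound for a convex function with $L$-Lipschitz gradient. With that in hand, I would simply carry out the two weighted sums to conclude.
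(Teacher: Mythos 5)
The paper does not actually prove this proposition: it is stated as a classical fact with a pointer to \cite{nesterov_introductory_2004}, so there is no internal proof to compare against. Your derivation of the convex inequality \eqref{app:eq:convex} is correct and is the natural route: with $x$ read as $w$ and $\nabla F(w_*)=0$, average the first-order convexity bound $\PdtScl{\nabla F(w)}{w-w_*}\ge F(w)-F(w_*)$ with the cocoercivity bound $\PdtScl{\nabla F(w)}{w-w_*}\ge \frac{1}{L}\SqrdNrm{\nabla F(w)}$, each with weight $\tfrac12$; your self-contained derivation of cocoercivity from the refined descent inequality is also fine.

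The strongly convex case, however, does not close as you claim. Averaging $\PdtScl{\nabla F(w)}{w-w_*}\ge F(w)-F(w_*)+\frac{\mu}{2}\SqrdNrm{w-w_*}$ with $\PdtScl{\nabla F(w)}{w-w_*}\ge \frac{1}{L}\SqrdNrm{\nabla F(w)}$ produces the coefficient $\frac{\mu}{4}$ in front of $\SqrdNrm{w-w_*}$, not the $\frac{\mu}{2}$ appearing in \eqref{app:eq:strgly_convex}; the final sentence of your argument silently upgrades $\mu/4$ to $\mu/2$. This is not a slip you can repair, because \eqref{app:eq:strgly_convex} as printed is false: take $F(w)=\frac{L}{2}\SqrdNrm{w}$, so $\mu=L$ and $w_*=0$; the left-hand side equals $L\SqrdNrm{w}$ while the right-hand side equals $\frac{L}{4}\SqrdNrm{w}+\frac{L}{2}\SqrdNrm{w}+\frac{L}{2}\SqrdNrm{w}=\frac{5L}{4}\SqrdNrm{w}$. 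What your averaging argument actually proves is the correct variant with $\frac{\mu}{4}\SqrdNrm{w-w_*}$ (alternatively one can keep $\frac{\mu}{2}$ but must then accept $\frac{1}{4L}\SqrdNrm{\nabla F(w)}$, obtained by splitting $F(w)-F(w_*)\ge \frac{1}{2L}\SqrdNrm{\nabla F(w)}$ inside the strong-convexity bound); both corrected variants are tight on the quadratic above, and either one supports the subsequent use of the proposition in the proof of \Cref{thm:cvgce_mcm_strongly_convex} up to a constant (e.g., a contraction factor $1-\gamma\mu/2$ in place of $1-\gamma\mu$). So your method is the right one, but you should state the constant you actually obtain rather than asserting that the average equals \eqref{app:eq:strgly_convex}.
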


\subsection{Two lemmas}
\label{app:subsec:somme_lemmas}

In this subsection, we give two lemmas required to prove the convergences of \ghost\footnote{\ghost~is defined in \Cref{app:sec:ghost_def}.}, \MCM~and \RMCM. 

The first lemma will be used to show that \MCM~indeed satisfies \Cref{thm:contraction_mcm}. The proof is straightforward from the definition of $w_k$ and $H_{k-1}$.

\begin{lemma}[Expectation of $w_k - H_{k-1}$]
For any $k$ in $\N^*$, the expectation of $(w_k - H_{k-1})$ conditionally to $\wkm$ can be decomposed as follows:
\label{app:lem:w_k_minus_H_k}
\begin{align*}
    \Expec{w_k - H_{k-1}}{\wkm} = (1- \alpha\dwn) (\wkm - H_{k-2}) - \gamma \Expec{\nabla F (\wkmhat)}{\wkm} \,.
\end{align*}
\end{lemma}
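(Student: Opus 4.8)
The plan is to unfold the two update equations of \MCM{} that produce $w_k$ and $H_{k-1}$, subtract them, and take the conditional expectation term by term, using only the unbiasedness of the compression operators (\Cref{asu:expec_quantization_operator}) and of the stochastic gradients (\Cref{asu:noise_sto_grad}). Throughout, the conditioning denoted ``$\wkm$'' stands for the $\sigma$-algebra $\mathcal F_{k-1}$ with respect to which $w_{k-1}$ and $H_{k-2}$ are measurable, but which does \emph{not} contain the downlink compression noise applied to $\Omega_{k-1} = w_{k-1} - H_{k-2}$, nor any iteration-$k$ sampling, gradient and uplink-compression randomness; in particular $\wkmhat$ and $H_{k-1}$ are \emph{not} $\mathcal F_{k-1}$-measurable.

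From the downlink block of \cref{eq:downlink} (shifted by one index) one has $H_{k-1} = H_{k-2} + \alpha\dwn \C\dwn(w_{k-1}-H_{k-2})$, and from the uplink block $w_k = w_{k-1} - \ffrac{\gamma}{N}\sum_\iN \bigpar{\C\up(\Delta_{k-1}^i) + h_{k-1}^i}$ with $\Delta_{k-1}^i = \g_k^i(\wkmhat) - h_{k-1}^i$. Subtracting gives
\begin{align*}
w_k - H_{k-1} = (w_{k-1} - H_{k-2}) - \alpha\dwn\, \C\dwn(w_{k-1} - H_{k-2}) - \ffrac{\gamma}{N}\sum_\iN \bigpar{\C\up(\Delta_{k-1}^i) + h_{k-1}^i}\,.
\end{align*}

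I would then take $\Expec{\cdot}{\wkm}$ and treat each piece. Since $w_{k-1}-H_{k-2}$ is $\mathcal F_{k-1}$-measurable, \Cref{asu:expec_quantization_operator} gives $\Expec{\C\dwn(w_{k-1}-H_{k-2})}{\wkm} = w_{k-1}-H_{k-2}$, so the first two terms contribute $(1-\alpha\dwn)(w_{k-1}-H_{k-2})$. For the last term I would use the tower property over the successive layers of randomness: averaging first over $\C\up$, unbiasedness yields $\Delta_{k-1}^i + h_{k-1}^i = \g_k^i(\wkmhat)$ in expectation; averaging next over the iteration-$k$ sampling and gradient noise, conditionally unbiased given $\wkmhat$, yields $\nabla F_i(\wkmhat)$; and since $\tfrac{1}{N}\sum_\iN \nabla F_i = \nabla F$, after the outstanding conditional expectation over the downlink compression that produced $\wkmhat$ this term becomes $\gamma\,\Expec{\nabla F(\wkmhat)}{\wkm}$. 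Collecting the three contributions yields the claimed identity.

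The calculation is elementary; the only point that requires care --- and the reason the right-hand side keeps $\Expec{\nabla F(\wkmhat)}{\wkm}$ rather than $\nabla F(\wkmhat)$ --- is the bookkeeping of the filtration, since $\wkmhat$ (hence $H_{k-1}$) is reconstructed from the downlink-compressed message built at the previous iteration and is thus not part of the conditioning, whereas $w_{k-1}$ and $H_{k-2}$ are. I note finally that this derivation never uses $\alpha\up=0$ and carries over verbatim to the heterogeneous case, because the memory terms $h_{k-1}^i$ cancel in expectation and $\tfrac{1}{N}\sum_i \nabla F_i = \nabla F$.
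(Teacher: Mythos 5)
Your proposal is correct and follows essentially the same route as the paper's proof: substitute the update equations for $w_k$ and $H_{k-1}$, use unbiasedness of $\C\dwn$ applied to the $\wkm$-measurable quantity $\wkm - H_{k-2}$ to obtain the $(1-\alpha\dwn)$ factor, and use the tower property with unbiasedness of the uplink compression and stochastic gradients to reduce the last term to $\gamma\,\Expec{\nabla F(\wkmhat)}{\wkm}$. Your additional remarks on the filtration bookkeeping and the heterogeneous case are accurate but not needed beyond what the paper's two-line computation already establishes.
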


\begin{proof}
Let $k$ in $\N^*$, by definition and with \Cref{asu:expec_quantization_operator}:
\begin{align*}
    \Expec{w_k - H_{k-1}}{\wkm} &= \Expec{\wkm - \gamma \widehat{\g}_k(\wkmhat) - \bigpar{H_{k-2} + \alpha\dwn \mathcal{C}(\wkm - H_{k-2})}}{\wkm} \\
    &= (\wkm - H_{k-2}) - \alpha\dwn\Expec{\C\bigpar{\wkm - H_{k-2}}}{\wkm} - \gamma \Expec{\gwkHAT}{\wkm} \,,
\end{align*}

from which the result follows.

\end{proof}

The following lemma provides a control of the impact of the uplink compression. It decomposes the squared-norm of stochastic gradients into two terms: 1) the true gradient 2) the variance of the stochastic gradient $\sigma^2$. 
\begin{lemma}[Squared-norm of stochastic gradients]
For any $k$ in $\N^*$, the second moment and variance of the compressed gradients can be bounded a.s.:
\label{app:lem:grad_sto_to_grad}
\begin{align*}
    &\Expec{\SqrdNrm{\gwkHAT}}{\wkmhat} \leq \bigpar{1 + \ffrac{\omgC\up}{N}} \SqrdNrm{\nabla F (\wkmhat)} + \ffrac{\sigma^2(1 + \omgC\up)}{Nb} \,,\\
    &\Expec{\SqrdNrm{\gwkHAT - \nabla F (\wkmhat)}}{\wkmhat} \leq \ffrac{\omgC\up}{N} \SqrdNrm{\nabla F (\wkmhat)} + \ffrac{\sigma^2(1 + \omgC\up)}{Nb} \,.
\end{align*}
Interpretation:
\begin{itemize}
    \item If $\omgC\up = 0$ (i.e. no up compression), the variance corresponds to a mini-batch.
    \item If $\sigma = 0$ and $N=1$ (i.e. full batch descent with a single device), it becomes: $\E \left[ \SqrdNrm{\C(\nabla F(\wkm)) - \nabla F(\wkm) } \right] \leq \omgC\up \SqrdNrm{\nabla F(\wkm)}$ which is consistent with \Cref{asu:expec_quantization_operator}.
\end{itemize}
\end{lemma}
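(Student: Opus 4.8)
The plan is a standard bias–variance decomposition, performed by successive conditioning and exploiting three facts: (i) $\C\up$ is unbiased with relative variance at most $\omgC\up$ (\Cref{asu:expec_quantization_operator}); (ii) the stochastic oracle is unbiased and $\sigma^2/b$-bounded in variance (\Cref{asu:noise_sto_grad}), and $F_i=F$ here; (iii) conditionally on $\wkmhat$, the pairs $\bigl(\g^i_{k+1}(\wkmhat),\C\up\bigr)$ are mutually independent across $i$, which is what produces the $1/N$ factor. Throughout I condition on the $\sigma$-algebra $\mathcal F$ generated by $\wkmhat$ and the past; in the homogeneous regime treated here the uplink memory is inactive, so $\gwkiHAT=\C\up(\g^i_{k+1}(\wkmhat))$ and $\gwkHAT=\frac{1}{N}\sum_\iN \C\up(\g^i_{k+1}(\wkmhat))$.

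First I would establish conditional unbiasedness: by the tower rule and \Cref{asu:expec_quantization_operator} then \Cref{asu:noise_sto_grad}, $\Expec{\C\up(\g^i_{k+1}(\wkmhat))}{\mathcal F}=\Expec{\g^i_{k+1}(\wkmhat)}{\mathcal F}=\nabla F(\wkmhat)$, so averaging over $i$ gives $\Expec{\gwkHAT}{\wkmhat}=\nabla F(\wkmhat)$. Consequently $\Expec{\SqrdNrm{\gwkHAT}}{\wkmhat}=\SqrdNrm{\nabla F(\wkmhat)}+\Expec{\SqrdNrm{\gwkHAT-\nabla F(\wkmhat)}}{\wkmhat}$, so it is enough to prove the variance (second) inequality; the second-moment bound then follows by adding back $\SqrdNrm{\nabla F(\wkmhat)}$.

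For the variance, the summands $\C\up(\g^i_{k+1}(\wkmhat))-\nabla F(\wkmhat)$ are conditionally centered and independent across $i$, so the cross terms vanish and $\Expec{\SqrdNrm{\gwkHAT-\nabla F(\wkmhat)}}{\wkmhat}=\frac{1}{N^2}\sum_\iN \Expec{\SqrdNrm{\C\up(\g^i_{k+1}(\wkmhat))-\nabla F(\wkmhat)}}{\wkmhat}$. Writing $\g^i:=\g^i_{k+1}(\wkmhat)$ and conditioning one step further on $\g^i$, unbiasedness of $\C\up$ kills the cross term, so each summand splits as $\Expec{\SqrdNrm{\C\up(\g^i)-\g^i}}{\mathcal F}+\Expec{\SqrdNrm{\g^i-\nabla F(\wkmhat)}}{\mathcal F}$. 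The first piece is at most $\omgC\up\Expec{\SqrdNrm{\g^i}}{\mathcal F}$ by \Cref{asu:expec_quantization_operator}, and $\Expec{\SqrdNrm{\g^i}}{\mathcal F}=\SqrdNrm{\nabla F(\wkmhat)}+\Expec{\SqrdNrm{\g^i-\nabla F(\wkmhat)}}{\mathcal F}\le \SqrdNrm{\nabla F(\wkmhat)}+\sigma^2/b$ by unbiasedness and \Cref{asu:noise_sto_grad}; the second piece is at most $\sigma^2/b$. Hence each summand is $\le \omgC\up\SqrdNrm{\nabla F(\wkmhat)}+(1+\omgC\up)\sigma^2/b$, and dividing by $N$ gives exactly $\frac{\omgC\up}{N}\SqrdNrm{\nabla F(\wkmhat)}+\frac{(1+\omgC\up)\sigma^2}{Nb}$.

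This is essentially a bookkeeping exercise; the only delicate point is to peel off the randomness in the right order — first the draw of $\wkmhat$ (on which we condition), then the stochastic gradients, then the compressions — and to invoke the across-workers independence at exactly the moment the cross terms are removed, so that the $1/N$ factor attaches to \emph{both} the gradient-norm term and the noise term rather than just one of them. The two remarks in the statement are then immediate specializations: $\omgC\up=0$ collapses the bound to the $Nb$-mini-batch variance $\sigma^2/(Nb)$, and $\sigma=0$, $N=1$ recovers precisely \Cref{asu:expec_quantization_operator}.
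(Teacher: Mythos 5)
Your proposal is correct and follows essentially the same route as the paper's proof: a bias--variance decomposition of $\gwkHAT$ around $\nabla F(\wkmhat)$, cancellation of cross terms via conditional unbiasedness and independence across workers, and then \Cref{asu:expec_quantization_operator} and \Cref{asu:noise_sto_grad} applied to each summand, yielding the identical $\frac{\omgC\up}{N}\SqrdNrm{\nabla F(\wkmhat)}+\frac{(1+\omgC\up)\sigma^2}{Nb}$ bound. The only (cosmetic) difference is the order of the peeling --- you split the worker average into individual variances first and then separate compression error from sampling error, whereas the paper separates the two error sources first and then expands each sum --- which changes nothing in substance.
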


\begin{proof}
Let $k$ in $\N^*$, then $\Expec{\SqrdNrm{\gwkHAT}}{\wkmhat} = \SqrdNrm{\nabla F(\wkmhat)} + \Expec{\SqrdNrm{\gwkHAT - \nabla F(\wkmhat)}}{\wkmhat}$.

Secondly:
\begin{align*}
    &\Expec{\SqrdNrm{\gwkHAT - \nabla F(\wkmhat)}}{\wkmhat} \\
    &\qquad= \Expec{\SqrdNrm{\ffrac{1}{N} \sum_\iN \bigpar{\gwkiHAT - \nabla F(\wkmhat)}}}{\wkmhat}\\
    &\qquad= \Expec{\SqrdNrm{\ffrac{1}{N} \sum_\iN \bigpar{\gwkiHAT -  \gwkihat + \gwkihat  - \nabla F(\wkmhat)}}}{\wkmhat} \\
    &\qquad= \Expec{\SqrdNrm{\ffrac{1}{N} \sum_\iN \bigpar{\gwkiHAT -  \gwkihat}}}{\wkmhat} \\
    &\qqquad+ \Expec{\SqrdNrm{\ffrac{1}{N} \sum_\iN \bigpar{\gwkihat  - \nabla F(\wkmhat)}}}{\wkmhat}\,,
\end{align*}
the inner product being null.

Next expanding the squared norm again, and because the two sums of inner products are null as the stochastic oracle and uplink compressions are independent:
\begin{align*}
    \Expec{\SqrdNrm{\gwkHAT - \nabla F(\wkmhat)}}{\wkmhat} &= \frac{1}{N^2} \sum_\iN \Expec{\SqrdNrm{\gwkiHAT -  \gwkihat}}{\wkmhat} \\
    &\qquad+ \frac{1}{N^2} \sum_\iN \Expec{\SqrdNrm{\gwkihat  - \nabla F(\wkmhat)}}{\wkmhat} \,.
\end{align*}

Then, for any $i$ in $\llbracket 1, N \rrbracket$ as $\Expec{\SqrdNrm{\gwkiHAT -  \gwkihat}}{\wkmhat} = \Expec{\Expec{\SqrdNrm{\gwkiHAT -  \gwkihat}}{\g_k^i}}{\wkmhat}$, and using \Cref{asu:expec_quantization_operator} we have:
\begin{align*}
    \Expec{\SqrdNrm{\gwkHAT - \nabla F(\wkmhat)}}{\wkmhat} &= \frac{\omgC\up}{N^2} \sum_\iN \Expec{\SqrdNrm{\gwkihat}}{\wkmhat} \\
    &\qquad+ \frac{1}{N^2} \sum_\iN \Expec{\SqrdNrm{\gwkihat  - \nabla F(\wkmhat)}}{\wkmhat} \,.
\end{align*}

Furthermore $\Expec{\SqrdNrm{\gwkihat}}{\wkmhat} = \Expec{\SqrdNrm{\gwkihat - \nabla F(\wkmhat)}}{\wkmhat} + \SqrdNrm{\nabla F(\wkmhat)} $, and using \Cref{asu:noise_sto_grad}:
\begin{align*}
    \Expec{\SqrdNrm{\gwkHAT - \nabla F(\wkmhat)}}{\wkmhat} &= \ffrac{\omgC\up}{N} \SqrdNrm{\nabla F (\wkmhat)} + \ffrac{\sigma^2 (1 + \omgC\up)}{Nb} \,,
\end{align*}
from which we derive the two inequalities of the lemma.

\end{proof}

\section{The \ghost~algorithm}
\label{app:sec:proof_for_ghost}

\subsection{Motivation, definition of \ghost~and proof sketch}\gs
\label{app:sec:ghost_def}

In this section, to convey the best understanding of the theorems and the spirit of the proof, we define a \textit{ghost} algorithm (that is impossible to implement in practice). \ghost~is introduced only to get some intuition of the theoretical insight. 
\begin{definition}[\ghost~algorithm]
\label{def:ghost}
The \ghost~algorithm is defined as follows, for $k \in \N$, for all $i \in \llbracket 1, N \rrbracket$ we have:
\begin{align}
\label{eq:ghost}
          w_{k+1} = w_k - \gamma \ffrac{1}{N} \sum_{i=1}^N \widehat{\g}_{k+1}^i(\widehat{w}_k) \text{\quad and\quad} \widehat{w}_{k+1} = w_k - \gamma \C\dwn \left(\ffrac{1}{N} \sum_{i=1}^N \widehat{\g}_{k+1}^i(\widehat{w}_k) \right) \,.
\end{align}
While the global model is unchanged ($1^{\text{st}}$ line),  the local model $\widehat{w}_{k+1}$ ($2^{\text{nd}}$ line) is updated using the global model~$w_k$ at the previous step, which is not available locally. 
\end{definition}

In the following, we give the main results for \ghost~and complete them with a sketch of proof. Demonstrations are all in the next subsection.

The following Proposition, provides the control of the variance of the local model for \ghost.

\begin{proposition}
\label{app:prop:ghost_contraction}
Consider the \ghost~update in \cref{eq:ghost}, under \Cref{asu:expec_quantization_operator,asu:smooth,asu:noise_sto_grad}, for all $k$ in $\N$ with the convention $\nabla F(w_{-1}) = 0$:
\begin{align*}
    \Expec{\SqrdNrm{w_k - \widehat{w}_k}}{\wkmhat} &\leq \gamma^2\omgC\dwn \bigpar{1 + \ffrac{\omgC\up}{N}}\SqrdNrm{\nabla F(\wkmhat)} + \ffrac{\gamma^2 \omgC\dwn (1 + \omgC\up)\sigma^2}{Nb} \,.
\end{align*}
\end{proposition}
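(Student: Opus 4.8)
The plan is to exploit the defining feature of \ghost: at step $k$ the global iterate $w_k$ and the local iterate $\widehat w_k$ are built from the \emph{same} aggregated stochastic gradient $\gwktilde := \ffrac{1}{N}\sum_\iN \gwkiHAT$, the only difference being that the local one passes through the downlink compression. Substituting $k-1$ for $k$ in \eqref{eq:ghost}, we get $w_k = \wkm - \gamma\gwktilde$ and $\widehat w_k = \wkm - \gamma\C\dwn(\gwktilde)$, so the shared term $\wkm$ cancels and
\[
w_k - \widehat w_k = -\gamma\bigpar{\gwktilde - \C\dwn(\gwktilde)}
\qquad\Longrightarrow\qquad
\SqrdNrm{w_k - \widehat w_k} = \gamma^2\SqrdNrm{\gwktilde - \C\dwn(\gwktilde)}.
\]

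\textbf{Conditioning.} Next I would introduce the $\sigma$-algebra $\mathcal F$ generated by $\wkmhat$ together with the step-$k$ stochastic gradient oracles and uplink compressions — i.e., all the randomness that determines $\gwktilde$, but \emph{not} the downlink compression $\C\dwn$ applied at step $k$. Since $\C\dwn$ is drawn independently of $\mathcal F$, applying \Cref{asu:expec_quantization_operator} conditionally gives $\Expec{\SqrdNrm{\gwktilde - \C\dwn(\gwktilde)}}{\mathcal F} \le \omgC\dwn\SqrdNrm{\gwktilde}$. Because $\sigma(\wkmhat)\subseteq\mathcal F$, the tower property then yields
\[
\Expec{\SqrdNrm{w_k - \widehat w_k}}{\wkmhat} \le \gamma^2\omgC\dwn\,\Expec{\SqrdNrm{\gwktilde}}{\wkmhat}.
\]
It only remains to bound the conditional second moment of the aggregated gradient, which is exactly the content of \Cref{app:lem:grad_sto_to_grad} ($\gwktilde$ being the quantity denoted $\gwkHAT$ there): $\Expec{\SqrdNrm{\gwktilde}}{\wkmhat} \le \bigpar{1+\ffrac{\omgC\up}{N}}\SqrdNrm{\nabla F(\wkmhat)} + \ffrac{\sigma^2(1+\omgC\up)}{Nb}$. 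Plugging this in gives the stated inequality for every $k\ge 1$; for $k=0$ it is trivial, since $\widehat w_0 = w_0$ makes the left-hand side vanish while the right-hand side is nonnegative (using the convention $\nabla F(w_{-1}) = 0$).

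\textbf{Main difficulty.} There is essentially no hard step. The only points that require care are (i) the index shift $k\mapsto k-1$ between the update \eqref{eq:ghost} and the statement, and (ii) the filtration bookkeeping in the conditioning step: one must isolate the downlink compression noise — which is independent of the stochastic-gradient and uplink-compression noise — so that \Cref{asu:expec_quantization_operator} can be invoked on $\SqrdNrm{\gwktilde - \C\dwn(\gwktilde)}$ before the outer conditional expectation reduces $\SqrdNrm{\gwktilde}$ via \Cref{app:lem:grad_sto_to_grad}. This is the prototype of the more delicate recursion used later for the real \MCM, where the memory term $H_{k-1}$ must be carried along.
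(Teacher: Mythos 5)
Your proposal is correct and follows essentially the same route as the paper's proof: cancel the shared term $\wkm$ to reduce $\SqrdNrm{w_k-\widehat w_k}$ to $\gamma^2\SqrdNrm{\C\dwn(\gwktilde)-\gwktilde}$, apply \Cref{asu:expec_quantization_operator} to the independent downlink compression, and conclude with \Cref{app:lem:grad_sto_to_grad}. Your filtration bookkeeping is in fact slightly more careful than the paper's (which conditions somewhat loosely on $w_k$), but the argument is the same.
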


\begin{proof}
The proof of \Cref{app:prop:ghost_contraction} is straightforward using \Cref{def:ghost}.
Let $k$ in $\N$, by \Cref{def:ghost} we have:
\begin{align*}
    \SqrdNrm{w_k - \widehat{w}_k} &= \SqrdNrm{\bigpar{\wkm - \gamma \C\dwn \left(\ffrac{1}{N} \sum_{i=1}^N \gwkiHAT \right)} - \bigpar{\wkm - \gamma \ffrac{1}{N} \sum_{i=1}^N \gwkiHAT }} \\
    &= \gamma^2 \SqrdNrm{\C\dwn \left(\ffrac{1}{N} \sum_{i=1}^N \gwkiHAT \right) - \ffrac{1}{N} \sum_{i=1}^N \gwkiHAT} \,.
\end{align*}

Taking expectation w.r.t. down compression, as $\frac{1}{N} \sum_\iN \gwkiHAT$ is $w_k$-measurable:
\begin{align*}
    \Expec{\SqrdNrm{w_k - \widehat{w}_k}}{w_k} &= \gamma^2 \omgC\dwn \Expec{\SqrdNrm{\ffrac{1}{N} \sum_{i=1}^N \gwkiHAT}}{w_k} = \gamma^2 \omgC\dwn \SqrdNrm{\gwkHAT} \,,
\end{align*}

and \Cref{app:lem:grad_sto_to_grad} gives the upper bound $\Expec{\SqrdNrm{\gwkHAT}}{\wkmhat}$.
\end{proof}
 
The takeaway from this Proposition is that we are able to bound the variance of the local model by an affine function of the squared norm of the \textit{previous} stochastic gradients $\nabla F(\wkmhat)$. For \ghost~only the previous gradient is involved, while for \MCM, we obtain an additional recursive process.
 
To obtain the convergence, we then follow the classical approach~\cite{mania_perturbed_2016}, expanding $\E \SqrdNrm{w_k - w_*} $ as $\E\SqrdNrm{\wkm - w_*} - 2\gamma\E \PdtScl{\nabla F (\wkmhat)}{\wkm - w_*}    
    + \gamma^2 \E\left[\SqrdNrm{\widehat{\g}_k(\widehat w_{k-1})}\right] $. The  critical aspect is that the inner product does not directly result in a contraction, as the support point of the gradient differs from $\wkm$. Using the fact that $\Expec{\wkmhat}{\wkm}=\wkm $, we further decompose it as 
    \begin{align}
         -2\gamma\E\PdtScl{\nabla F (\wkmhat)}{\wkmhat - w_*} +2\gamma \E \PdtScl{\nabla F (\wkmhat) - \nabla F(\wkm)}{\wkm - \wkmhat} \,. \label{eq:ligne2}
    \end{align}
The first part of \cref{eq:ligne2}, corresponds to a ``strong contraction'': by (strong-)convexity, we can upper bound it by $-2\gamma (\mu \SqrdNrm{\wkmhat-w_*} + F(\wkmhat)-F_* ) $, which is on average larger than $-2\gamma (\mu \SqrdNrm{\wkm-w_*} + F(\wkm)-F_* ) $ (Jensen's inequality). Moreover, as the function is smooth and convex, it can also be upper bounded by $-2\gamma \SqrdNrm{\nabla F(\wkmhat)}/L$. This is a crucial term: we ``gain'' something of the order of a squared norm of the gradient at $\wkmhat$, which will \textit{in fine} compensate the variance of the local model. The second part of \cref{eq:ligne2}, corresponds to a positive residual term,  proportional to the variance of the compressed model, that can be controlled thanks to \Cref{app:prop:ghost_contraction} (at $\wkm$!). Putting things together, we get, in the convex case ($\mu=0$):

\begin{theorem}[Contraction for \ghost, convex case]
\label{app:thm:contraction_ghost}
  Under \Cref{asu:cvx_or_strongcvx,asu:expec_quantization_operator,asu:smooth,asu:noise_sto_grad}, with $\mu=0$, if $\gamma L (1+\omgC\up/N) \le \frac{1}{2}$.
  \begin{align*}
      \E{\SqrdNrm{w_k - w_*}} &\leq  \E\SqrdNrm{\wkm - w_*} - \gamma \E (F(\wkm) - F_*) - \frac{\gamma}{2 L}  \E\left[\SqrdNrm{\nabla F(\wkmhat)}\right]  \\ 
    &  + 2\gamma^3 \omgC\dwn L \left(1 + \frac{\omgC\up}{N} \right)\E \SqrdNrm{\nabla F(\hat w_{k-2})} + \gamma^2 \frac{ (1 + \omgC\up)\sigma^2}{Nb} \left(1+ 2 \gamma L \omgC\dwn\right).
  \end{align*}
\end{theorem}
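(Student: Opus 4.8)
The strategy is the perturbed-iterate expansion of \citet{mania_perturbed_2016}, closed using the variance control of \Cref{app:prop:ghost_contraction}. Starting from the \ghost~update $w_k = \wkm - \gamma\gwktilde$ with $\gwktilde = \frac{1}{N}\sum_\iN \gwkiHAT$, I would write
\begin{align*}
\SqrdNrm{w_k - w_*} = \SqrdNrm{\wkm - w_*} - 2\gamma\PdtScl{\gwktilde}{\wkm - w_*} + \gamma^2\SqrdNrm{\gwktilde}\,.
\end{align*}
Taking expectations, the last term is controlled by \Cref{app:lem:grad_sto_to_grad}, namely $\gamma^2\E\SqrdNrm{\gwktilde} \le \gamma^2\bigpar{1+\omgC\up/N}\E\SqrdNrm{\nabla F(\wkmhat)} + \gamma^2(1+\omgC\up)\sigma^2/(Nb)$; and conditioning on the randomness up to the computation of $\gwktilde$ and using unbiasedness of $\C\up$ and of the stochastic oracle, the middle term reduces to $\E\PdtScl{\gwktilde}{\wkm - w_*} = \E\PdtScl{\nabla F(\wkmhat)}{\wkm - w_*}$.

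The heart of the proof is this inner product, which is \emph{not} a contraction because the gradient is evaluated at the perturbed point $\wkmhat \neq \wkm$. Using that $\Expec{\wkmhat}{\wkm} = \wkm$ — which holds for \ghost~because $\wkmhat - \wkm = -\gamma\bigpar{\C\dwn(\widetilde{g}_{k-1}) - \widetilde{g}_{k-1}}$ has zero conditional mean — I would split it, exactly as in \cref{eq:ligne2}, into a ``strong contraction'' $-2\gamma\E\PdtScl{\nabla F(\wkmhat)}{\wkmhat - w_*}$ plus a residual $2\gamma\E\PdtScl{\nabla F(\wkmhat) - \nabla F(\wkm)}{\wkm - \wkmhat}$, the leftover cross term $\E\PdtScl{\nabla F(\wkm)}{\wkm - \wkmhat}$ being zero since $\nabla F(\wkm)$ is measurable with respect to the conditioning $\sigma$-algebra. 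For the strong-contraction part, convexity (\Cref{app:ineq_convex_demi_and_demi}) bounds it by $-\gamma\Fdiff{\wkmhat} - \frac{\gamma}{L}\SqrdNrm{\nabla F(\wkmhat)}$, and then conditional Jensen (\Cref{app:basic_ineq:jensen}) replaces $\Fdiff{\wkmhat}$ by the smaller $\Fdiff{\wkm}$. For the residual, $L$-smoothness (\Cref{asu:smooth}) and Cauchy--Schwarz (\Cref{app:basic_ineq:cauchy_schwarz}) give $2\gamma L\,\E\SqrdNrm{\wkm - \wkmhat}$, which \Cref{app:prop:ghost_contraction} applied at index $k-1$ bounds by $\gamma^2\omgC\dwn(1+\omgC\up/N)\E\SqrdNrm{\nabla F(\widehat w_{k-2})} + \gamma^2\omgC\dwn(1+\omgC\up)\sigma^2/(Nb)$.

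Collecting terms, the two $\SqrdNrm{\nabla F(\wkmhat)}$ contributions combine into $-\frac{\gamma}{L}\bigpar{1 - \gamma L(1+\omgC\up/N)}\E\SqrdNrm{\nabla F(\wkmhat)}$, which is at most $-\frac{\gamma}{2L}\E\SqrdNrm{\nabla F(\wkmhat)}$ exactly under the hypothesis $\gamma L(1+\omgC\up/N) \le \tfrac12$; the two noise terms factor into $\frac{\gamma^2(1+\omgC\up)\sigma^2}{Nb}\bigpar{1 + 2\gamma L\omgC\dwn}$; and the term in $\SqrdNrm{\nabla F(\widehat w_{k-2})}$ appears with coefficient $2\gamma^3 L\omgC\dwn(1+\omgC\up/N)$, matching the statement. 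The only genuinely delicate point is the perturbed-iterate bookkeeping of the second paragraph: pinning down the conditioning $\sigma$-algebra so that $\Expec{\wkmhat}{\wkm}=\wkm$ and the cross term drops, and keeping the gain $-\frac{\gamma}{2L}\SqrdNrm{\nabla F(\wkmhat)}$ explicit, since in the subsequent convergence proof this is precisely what will absorb the perturbation term $\propto\SqrdNrm{\nabla F(\widehat w_{k-2})}$ generated at the following iteration.
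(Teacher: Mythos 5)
Your proposal is correct and follows essentially the same route as the paper: the paper's proof (given inside the proof of \Cref{app:thm:cvgce_ghost}, culminating in \cref{app:eq:equation_common_to_all}) performs exactly this perturbed-iterate expansion, kills the cross term via $\Expec{\wkmhat}{\wkm}=\wkm$, applies \cref{app:eq:convex} plus Jensen to the strong-contraction part, bounds the residual by $2\gamma L\,\E\SqrdNrm{\wkmhat-\wkm}$ via Cauchy--Schwarz and smoothness, and closes with \Cref{app:prop:ghost_contraction} at index $k-1$. All your coefficients ($2\gamma^3 L\omgC\dwn(1+\omgC\up/N)$ on $\E\SqrdNrm{\nabla F(\widehat w_{k-2})}$ and the factor $1+2\gamma L\omgC\dwn$ on the noise) match the statement.
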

We can make the following observations:
\begin{enumerate}[topsep=0pt,itemsep=1pt,leftmargin=*,noitemsep]
\item At step $k$, the residual can be upper bounded by a constant times squared norm of the gradient at point $\hat w_{k-2}$. When using recursively this upper bound, if $ 2\gamma^3 \omgC\dwn L (1 + \omgC\up/N ) \le  {\gamma}/(2 L) $, then these terms cancel out. This is equivalent to $2\gamma  L \sqrt{\omgC\dwn \left(1 + \omgC\up/N \right)} \le 1$. It is natural to chose $ \gamma \le 1/ (2L \max( 1+\omgC\up/N, 1+\omgC\dwn) )$. 
    \item The bound is in fact proved conditionally to $\wkm$, recursive conditioning is required to propagate the inequality. We carefully handle conditioning in the proofs.
\end{enumerate}

\subsection{Convergence of \ghost, complete proof}
\label{app:subsec:proof_thms_ghost}

In this subsection, we provide the complete proof of convergence for \ghost. Thus in the following demonstration, we give the key concepts required to later prove the convergence of \MCM. 

\fbox{
\begin{minipage}{\textwidth}
\begin{theorem}[Convergence of \ghost, convex case]
\label{app:thm:cvgce_ghost}
Under \Cref{asu:cvx_or_strongcvx,asu:expec_quantization_operator,asu:smooth,asu:noise_sto_grad} with $\mu=0$ (convex case),  for all $k$ in $\N$, defining $V_k := \FullExpec{w_k - w_*} + \ffrac{\gamma}{2 L} \FullExpec{\SqrdNrm{\nabla F(\wkmhat)}} + 2 \gamma L \FullExpec{\SqrdNrm{\widehat{w}_k - w_k}}$, we have:
\begin{align*}
V_k \leq V_{k-1} - \gamma \FullExpec{F(\wkm) - F(w_*)} + \frac{\gamma^2 \sigma^2 \Phi^\GG(\gamma)}{Nb}\,,
\end{align*}
with $\Phi^\GG(\gamma) := (1 + \omgC\up) (1 + 2 \gamma L \omgC\dwn) $.
\end{theorem}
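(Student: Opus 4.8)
The plan is to build $V_k$ out of the two one-step estimates that immediately precede the statement: the perturbed-iterate contraction for $\E\SqrdNrm{w_k - w_*}$ (the computation underlying \Cref{app:thm:contraction_ghost}) and the local-model deviation control of \Cref{app:prop:ghost_contraction}. The three weights in $V_k$ are dictated by the bookkeeping: $\frac{\gamma}{2L}\SqrdNrm{\nabla F(\wkmhat)}$ matches, up to the stepsize factor, the ``gain'' that convexity extracts at the perturbed point $\wkmhat$; and $2\gamma L\SqrdNrm{\hat w_k - w_k}$ is exactly the cross-term residual that will surface when $\SqrdNrm{w_{k+1}-w_*}$ is expanded at the next step, so it is carried inside $V_k$ in anticipation.

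Concretely I would carry out the following steps. (i) Expand $\SqrdNrm{w_k - w_*}$ along $w_k = \wkm - \gamma\gwktilde$ with $\gwktilde = \frac{1}{N}\sum_{i=1}^N\gwkiHAT$, and condition on the $\sigma$-algebra generated by the first $k-1$ iterations so that the cross term becomes $-2\gamma\PdtScl{\nabla F(\wkmhat)}{\wkm - w_*}$ (the stochastic oracle and $\C\up$ being unbiased). (ii) Using $\Expec{\wkmhat}{\wkm}=\wkm$ (unbiasedness of $\C\dwn$, and $w_{k-1}=w_{k-2}-\gamma\widetilde g_{k-1}$), split this as $-2\gamma\PdtScl{\nabla F(\wkmhat)}{\wkmhat - w_*} - 2\gamma\PdtScl{\nabla F(\wkmhat) - \nabla F(\wkm)}{\wkm - \wkmhat}$; bound the first bracket by \Cref{app:ineq_convex_demi_and_demi} ($\mu=0$) and then Jensen's inequality to reach $-\gamma\E(F(\wkm)-F(w_*)) - \frac{\gamma}{L}\E\SqrdNrm{\nabla F(\wkmhat)}$, and the second by Cauchy--Schwarz plus $L$-smoothness to reach $\le 2\gamma L\,\E\SqrdNrm{\wkmhat - \wkm}$. (iii) Control $\gamma^2\E\SqrdNrm{\gwktilde}$ with \Cref{app:lem:grad_sto_to_grad}. (iv) Add the two remaining terms of $V_k$, bounding $2\gamma L\E\SqrdNrm{\hat w_k - w_k}$ with \Cref{app:prop:ghost_contraction}. (v) Re-collect: $\E\SqrdNrm{\wkm - w_*}$ together with the residual $2\gamma L\E\SqrdNrm{\wkmhat - \wkm}$ reconstitute $V_{k-1}$ up to the harmless negative term $-\frac{\gamma}{2L}\E\SqrdNrm{\nabla F(\widehat{w}_{k-2})}$, which is dropped; the two noise contributions add up to $\frac{\gamma^2\sigma^2}{Nb}(1+\omgC\up)(1 + 2\gamma L\omgC\dwn) = \frac{\gamma^2\sigma^2\Phi^\GG(\gamma)}{Nb}$; and the coefficient in front of $\E\SqrdNrm{\nabla F(\wkmhat)}$ is $-\frac{\gamma}{2L} + \gamma^2(1+\omgC\up/N)(1 + 2\gamma^2 L\omgC\dwn)$, which is $\le 0$ for $\gamma\le\gamma_{\max}$. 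The convention $\nabla F(w_{-1})=0$ (and $\hat w_0 = w_0$) handles the first indices.

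The main obstacle is twofold. First, the nested conditioning: \Cref{app:lem:grad_sto_to_grad,app:prop:ghost_contraction} are stated conditionally on $\wkmhat$, whereas the convexity/Jensen step is conditional on $\wkm$, so one must take expectations in the right order through the tower property and keep careful track of measurability — this is also where the identity $\Expec{\wkmhat}{\wkm}=\wkm$, hence the necessity of an \emph{unbiased} downlink operator, is used. Second, the one genuinely quantitative point is checking that the positive $\SqrdNrm{\nabla F(\wkmhat)}$ contributions — one of order $\gamma^2(1+\omgC\up/N)$ from the stochastic/uplink term and one of order $2\gamma^3 L\omgC\dwn(1+\omgC\up/N)$ from the local-model deviation term — are jointly absorbed by the convexity gain $\frac{\gamma}{L}$ net of the Lyapunov weight $\frac{\gamma}{2L}$. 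Carrying out this inequality is precisely what fixes the stepsize ceiling and explains why $\gamma_{\max}$ must depend on $\omgC\dwn$, not only on $\gamma_{\max}^{\mathrm{up}}$.
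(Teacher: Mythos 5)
Your proposal is correct and follows essentially the same route as the paper: the same perturbed-iterate expansion split at $\wkmhat$, convexity plus Jensen for the contraction, Cauchy--Schwarz plus smoothness for the residual, \Cref{app:lem:grad_sto_to_grad} for the gradient second moment, and \Cref{app:prop:ghost_contraction} weighted by $2\gamma L$ to close the Lyapunov recursion. The only slip is a typo in your collected coefficient of $\E[\SqrdNrm{\nabla F(\wkmhat)}]$, where the inner factor should read $1+2\gamma L\omgC\dwn$ rather than $1+2\gamma^2 L\omgC\dwn$; the resulting absorption inequality is precisely what fixes $\gamma_{\max}$, exactly as in the paper.
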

\end{minipage}
}

\begin{remark}
This result is similar to \cref{eq:Lyapunov-convex} but with a different function $\Phi^\GG$ that has a weaker dependency on $\omgC\dwn$.
\end{remark}

\begin{proof}
Let $k$ in $\N^*$, by definition:
\begin{align*}
    \SqrdNrm{w_k - w_*} \leq \SqrdNrm{\wkm - w_*} - 2\gamma \PdtScl{\gwkHAT}{\wkm - w_*} + \gamma^2 \SqrdNrm{\gwkHAT} \,.
\end{align*}
Next, we expend the inner product as following:
\begin{align*}
    \SqrdNrm{w_k - w_*} \leq \SqrdNrm{\wkm - w_*} - 2\gamma \PdtScl{\gwkHAT}{\wkmhat - w_*} - 2\gamma \PdtScl{\gwkHAT}{\wkm - \wkmhat} + \gamma^2 \SqrdNrm{\gwkHAT} \,.
\end{align*}

Taking expectation conditionally to $\wkm$, and using $\Expec{ \gwkHAT}{\wkm}= \Expec{\Expec{\gwkHAT}{\wkmhat}}{\wkm} =\Expec{\nabla F(\wkmhat)}{\wkm}$, we obtain:
\begin{align*}
    \Expec{\SqrdNrm{w_k - w_*}}{\wkm} &\leq \SqrdNrm{\wkm - w_*} - \Expec{2\gamma \PdtScl{\nabla F (\wkmhat)}{\wkmhat - w_*}}{\wkm} \\
    &\qquad- 2\gamma \Expec{\PdtScl{\nabla F (\wkmhat)}{\wkm - \wkmhat}}{\wkm} \\
    &\qquad+ \gamma^2 \Expec{\SqrdNrm{\gwkHAT}}{\wkm} \,.
\end{align*}

Then invoking \Cref{app:lem:grad_sto_to_grad} to upper bound the squared norm of the stochastic gradients, and noticing that $\Expec{\PdtScl{\nabla F(\wkm)}{\wkmhat-\wkm}}{\wkm}=0$ leads to:
\begin{align}
    \Expec{\SqrdNrm{w_k - w_*}}{\wkm} &\leq \SqrdNrm{\wkm - w_*} - 2\gamma\Expec{ \PdtScl{\nabla F (\wkmhat)}{\wkmhat - w_*}}{\wkm} \nonumber \\
    &\qquad - 2\gamma\Expec{ \PdtScl{\nabla F (\wkmhat) - \nabla F(\wkm)}{\wkm - \wkmhat}}{\wkm} \label{eq:decomp} \\
    &\qquad + \gamma^2 \bigpar{ \Big(1 + \ffrac{\omgC\up}{Nb} \Big) \Expec{\SqrdNrm{\nabla F (\wkmhat)}}{\wkm} + \ffrac{\sigma^2 \bigpar{1 + \omgC\up}}{Nb} } \,. \nonumber
\end{align}

In the upper inequality:
\begin{enumerate}
    \item the term $\Expec{ \PdtScl{\nabla F (\wkmhat)}{\wkmhat - w_*}}{\wkm}$ allows the ``strong contraction''
    \item the terms $\Expec{ \PdtScl{\nabla F (\wkmhat) - \nabla F(\wkm)}{\wkm - \wkmhat}}{\wkm}$ and $\Expec{\SqrdNrm{\nabla F (\wkmhat)}}{\wkm}$ are two positives terms that we treat as residuals.
    \item the last term $\sigma^2 \bigpar{1 + \omgC\up } / (Nb)$ is due to the stochastic noise.
\end{enumerate}

Now using Cauchy-Schwarz inequality (\cref{app:basic_ineq:cauchy_schwarz}) and smoothness: 
\begin{align*}
    &- \Expec{2\gamma \PdtScl{\nabla F (\wkmhat) - \nabla F(\wkm) }{\wkm - \wkmhat}}{\wkm} \\
    &\qqquad= 2\gamma \Expec{\PdtScl{\nabla F (\wkmhat) - \nabla F(\wkm)}{\wkmhat - \wkm}}{\wkm} \\
    &\qqquad\leq 2 \gamma L \Expec{\SqrdNrm{\wkmhat - \wkm}}{\wkm} \,,
\end{align*}

and thus:
\begin{align}
\label{app:eq:ghost_before_convexity}
\begin{split}
    \Expec{\SqrdNrm{w_k - w_*}}{\wkm} &\leq \SqrdNrm{\wkm - w_*} - 2\gamma\Expec{ \PdtScl{\nabla F (\wkmhat)}{\wkmhat - w_*}}{\wkm} \\
    &\qquad + 2\gamma L \Expec{\SqrdNrm{\wkmhat - \wkm}}{\wkm} \\
    &\qquad + \gamma^2 \Big(1 + \ffrac{\omgC\up}{N} \Big) \Expec{\SqrdNrm{\nabla F (\wkmhat)}}{\wkm} + \ffrac{\gamma^2 \sigma^2(1 + \omgC\up)}{Nb}  \,. 
\end{split}
\end{align}

Now, using convexity with \cref{app:ineq_convex_demi_and_demi}:
\begin{align*}
    \Expec{\SqrdNrm{w_k - w_*}}{\wkm} &\leq \SqrdNrm{\wkm - w_*} \\
    &\qquad- \gamma\Expec{ \bigpar{F(\wkmhat) - F(w_*) + \frac{1}{L} \SqrdNrm{\nabla F(\wkmhat)}}}{\wkm} \\
    &\qquad + 2\gamma L \Expec{\SqrdNrm{\wkmhat - \wkm}}{\wkm} \\
    &\qquad + \gamma^2 \Big(1 + \ffrac{\omgC\up}{N} \Big) \Expec{\SqrdNrm{\nabla F (\wkmhat)}}{\wkm} + \ffrac{\gamma^2 \sigma^2(1 + \omgC\up)}{Nb}  \,. 
\end{align*}

Taking the full expectation (without conditioning over any random vectors), and because invoking Jensen inequality (\ref{app:basic_ineq:jensen}) leads to $\FullExpec{F(\wkmhat)}\geq \FullExpec{F(\wkm)}$, we finally obtain this intermediate result:
\begin{align}
\label{app:eq:equation_common_to_all}
\begin{split}
    \FullExpec{\SqrdNrm{w_k - w_*}} &\leq \FullExpec{\SqrdNrm{\wkm - w_*}} - \gamma \bigpar{\FullExpec{F(\wkm)} - F(w_*)} \\
    &\qquad - \frac{\gamma}{2L} \FullExpec{\SqrdNrm{\nabla F(\wkmhat)}} \\
    &\qquad + 2\gamma L \FullExpec{\SqrdNrm{\wkmhat - \wkm}} + \ffrac{\gamma^2\sigma^2(1 + \omgC\up)}{Nb} \,,
\end{split}
\end{align}

where we considered that  $\gamma L(1+\omgC\up/N) \le 1/2$, which implies that $\gamma\bigpar{1 - \gamma L \bigpar{1 + \ffrac{\omgC\up}{N}}} \geq \ffrac{\gamma}{2}$.

Remark that \cref{app:eq:equation_common_to_all} is valid for both \ghost~and \MCM, and that the proof of \MCM~will follow the same initial line.

With \Cref{app:prop:ghost_contraction}:
\begin{align}
\label{app:eq:ghost_contraction_used_in_lyapunov}
    \Expec{\SqrdNrm{w_k - \widehat{w}_k}}{\wkmhat} &\leq \gamma^2\omgC\dwn \bigpar{1 + \ffrac{\omgC\up}{N}}\SqrdNrm{\nabla F(\wkmhat)} + \ffrac{\gamma^2 \omgC\dwn (1 + \omgC\up)\sigma^2}{Nb} \,.
\end{align}

Defining $V_k := \FullExpec{w_k - w_*} + \ffrac{\gamma}{2 L} \FullExpec{\SqrdNrm{\nabla F(\wkmhat)}} + \cst \FullExpec{\SqrdNrm{\widehat{w}_k - w_k}}$ with $C = 2\gamma L$, and combining this two equations as following $(\ref{app:eq:equation_common_to_all}) + C (\ref{app:eq:ghost_contraction_used_in_lyapunov})$ leads to:
\begin{align*}
    &\FullExpec{\SqrdNrm{w_k - w_*}} + \cst\FullExpec{\SqrdNrm{\wkmhat - \wkm}} + \frac{\gamma}{2L} \FullExpec{\SqrdNrm{\nabla F(\wkmhat)}}\\
    &\qquad\leq \FullExpec{\SqrdNrm{\wkm - w_*}} - \gamma \bigpar{\FullExpec{F(\wkm)} - F(w_*)}  \\
    &\qqquad+ 2\gamma L \FullExpec{\SqrdNrm{\wkmhat - \wkm}} + \ffrac{\gamma^2\sigma^2(1 + \omgC\up)}{Nb} \\
    &\qqquad+ 2 \gamma L \times \gamma^2\omgC\dwn \bigpar{1 + \ffrac{\omgC\up}{N}} \SqrdNrm{\nabla F(\wkmhat)} + 2 \gamma L \times \ffrac{\gamma^2 \omgC\dwn (1 + \omgC\up) \sigma^2}{Nb} \,.
\end{align*}

To ensure a contraction of the Lyapunov function we require:
\[
\gamma^2\omgC\dwn \bigpar{1 + \ffrac{\omgC\up}{N}} \leq \frac{\gamma}{2L} \Longleftrightarrow \gamma L \leq \ffrac{1}{2 \sqrt{\omgC\dwn \bigpar{1 + \ffrac{\omgC\up}{N}} }}
\]

Under this condition, we obtain:
\begin{align*}
V_k \leq V_{k-1} - \gamma \FullExpec{F(\wkm) - F(w_*)} + \frac{\gamma^2 \sigma^2 \Phi^\GG(\gamma)}{Nb}\,,
\end{align*}
with $\Phi^\GG(\gamma) := (1 + \omgC\up) (1 + 2 \gamma L \omgC\dwn) $.

By recurrence and for $k=K$:
\begin{align*}
    V_K &\leq V_{0} - \sum_{k=1}^K \gamma  \FullExpec{F(\wkm) - F(w_*)} + \sum_{k=1}^K \frac{\gamma^2 \sigma^2 \Phi^\GG(\gamma)}{Nb} \,,
\end{align*}

which leads to:
\begin{align*}
    \ffrac{1}{K} \sum_{k=1}^K \FullExpec{F(\wkm) - F(w_*)} &\leq \ffrac{V_{0} - V_k}{\gamma K} + \frac{\gamma \sigma^2 \Phi^\GG(\gamma)}{Nb}  \,.
\end{align*}

Finally, for any $K$ in $\N^*$, with $\gamma L \leq \min \bigg\{ \ffrac{1}{2 \bigpar{ 1 + \ffrac{\omgC\up}{N}}}, \ffrac{1}{2 \sqrt{ \omgC\dwn \bigpar{ 1 + \ffrac{\omgC\up}{N}}} }\bigg\}$ we have:
\begin{align*}
    \ffrac{\gamma}{K} \sum_{t=1}^K \FullExpec{F(w_{t}) - F(w_*)} \leq \ffrac{\SqrdNrm{w_0 - w_*}}{K} + \frac{\gamma \sigma^2 \Phi^\GG(\gamma)}{Nb} \,.
\end{align*}

Note that the bound of $\gamma L$ encompass the case $\omgC\dwn = 0$ (i.e. no downlink compression), but in the general case of bidirectional compression, we nearly always have $\omgC\dwn > 1$, and thus the dominant term is in fact $\ffrac{1}{2 \sqrt{\omgC\dwn \bigpar{ 1 + \ffrac{\omgC\up}{N}}}}$.

And by Jensen, it implies that:
\begin{align*}
    \FullExpec{F(\bar{w}_K) - F(w_*)} \leq \ffrac{\SqrdNrm{w_{0} - w_*}}{\gamma K} + \ffrac{\gamma \sigma^2 \Phi(\gamma) }{N b}  \text{\quad with $\Phi^\GG(\gamma) := (1 + \omgC\up)(1 + 2\omgC\dwn \gamma L)$} \,.
\end{align*}

\end{proof}

\section{Proofs for \MCM~(and \RMCM)}
\label{app:sec:proofs_mcm}

In this section, we provide the proofs for \MCM~in the convex, strongly-convex, and non-convex cases in respectively \Cref{app:thm:cvgce_mcm_strongly_convex,app:thm:mcm_non_convex,app:thm:cvgce_mcm_convex}. The proofs for \RMCM~(see \Cref{thm:cvgce_rmcm}) are identical and only require to adapt notations as explained in \cref{app:subsec:proofs_rmcm}. 

We denote for $\gamma$ in $\R$, $\Phi(\gamma)~:=~(1 + \omgC\up) \bigpar{1 + \frac{8 \gamma L \omgC\dwn}{\alpha\dwn}}$, for $k$ in $\N$, $\dwnMemTerm_k =  \SqrdNrm{w_k - H_{k-1}}$ and we define $\gamma_{\max}$ such that:
$$\gamma_{\max} L \leq \min \bigg\{ \ffrac{1}{8 \omgC\dwn},  \ffrac{1}{2 \bigpar{ 1 + \ffrac{\omgC\up}{N}}}, \ffrac{1}{4\sqrt{ \ffrac{\omgC\dwn}{\alpha\dwn} \bigpar{ \ffrac{1}{\alpha\dwn} + \ffrac{\omgC\up}{N}}}} \bigg\} \,.$$
 Note that this is equivalent to notations given in \Cref{sec:theory} if we take $\alpha\dwn = 1/8\omgC\dwn$.

\subsection{Control of the Variance of the local model for \MCM~(Theorem 3)}
\label{app:subsec:proof_prop_mcm_assumption}

In this section, we provide a control of the variance of the local model for \MCM, as done previously in \Cref{app:prop:ghost_contraction} for \ghost: this corresponds to \Cref{thm:contraction_mcm}. The demonstration is more complex than for \ghost~and it highlights the trade-offs for the learning rate $\alpha\dwn$. The demonstration builds a bias-variance decomposition of $\SqrdNrm{\Omega_k} = \SqrdNrm{w_k - H_k}$. The variance is then decomposed in three terms, as a result we will need to compute four terms:
\begin{align}
\label{app:eq:bias_var_decomposition}
 \SqrdNrm{w_k - H_{k-1}} = \text{Bias}^2 + 2\gamma^2(\text{Var}_{11} + \text{Var}_{12}) + 2 \alpha\dwn^2 \text{Var}_2\,.
\end{align}

\fbox{
\begin{minipage}{\textwidth}
\begin{theorem}
\label{app:thm:contraction_mcm}
Consider the \MCM~update as in \cref{eq:model_compression_eq_statement}. Under \Cref{asu:expec_quantization_operator,asu:smooth,asu:noise_sto_grad} with $\mu=0$,  if $\gamma \leq ({8\omgC\dwn L})^{-1}$ and $\alpha\dwn \leq (8 \omgC\dwn)^{-1}$, then for all $k$ in $\N$:
\begin{align*}
    \FullExpec{\dwnMemTerm_{k}} &\leq \bigpar{1 - \ffrac{\alpha\dwn}{2}} \FullExpec{\dwnMemTerm_{k-1}} + 2 \gamma^2\bigpar{\frac{1}{\alpha\dwn} + \ffrac{\omgC\up}{N}} \FullExpec{\SqrdNrm{\nabla F(\wkmhat)}} + \ffrac{2\gamma^2 \sigma^2(1+\omgC\up)}{Nb} \,.
\end{align*}
\end{theorem}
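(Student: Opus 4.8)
The plan is to turn the \MCM~downlink update into a one-step recursion on $\dwnMemTerm_k = \SqrdNrm{w_k - H_{k-1}}$ via a bias--variance decomposition, deriving all bounds first conditionally on the filtration $\mathcal F_{k-1}$ up to the construction of $w_{k-1}$ and $H_{k-2}$ (so that $\Omega_{k-1} := w_{k-1} - H_{k-2}$ is $\mathcal F_{k-1}$-measurable) and then taking total expectations, exactly as in the \ghost~proof. Substituting $w_k = \wkm - \gamma \gwkHAT$ and $H_{k-1} = H_{k-2} + \alpha\dwn \C\dwn(\Omega_{k-1})$ gives the identity $w_k - H_{k-1} = \Omega_{k-1} - \alpha\dwn \C\dwn(\Omega_{k-1}) - \gamma \gwkHAT$. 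Taking conditional expectation and using unbiasedness of $\C\dwn$ (\Cref{asu:expec_quantization_operator}) together with \Cref{app:lem:w_k_minus_H_k} yields $\Expec{w_k - H_{k-1}}{\mathcal F_{k-1}} = (1-\alpha\dwn)\Omega_{k-1} - \gamma\Expec{\nabla F(\wkmhat)}{\mathcal F_{k-1}}$, so the centered fluctuation is precisely $-\alpha\dwn\bigpar{\C\dwn(\Omega_{k-1}) - \Omega_{k-1}} - \gamma\bigpar{\gwkHAT - \Expec{\nabla F(\wkmhat)}{\mathcal F_{k-1}}}$.

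Writing $\Expec{\dwnMemTerm_k}{\mathcal F_{k-1}} = \SqrdNrm{\Expec{w_k - H_{k-1}}{\mathcal F_{k-1}}} + \Expec{\SqrdNrm{\text{fluctuation}}}{\mathcal F_{k-1}}$, I would bound the second term by $\SqrdNrm{a+b}\le 2\SqrdNrm a + 2\SqrdNrm b$ to separate the compression-error piece from the gradient piece, then split $\gwkHAT - \Expec{\nabla F(\wkmhat)}{\mathcal F_{k-1}} = \bigpar{\gwkHAT - \nabla F(\wkmhat)} + \bigpar{\nabla F(\wkmhat) - \Expec{\nabla F(\wkmhat)}{\mathcal F_{k-1}}}$, the two pieces being conditionally orthogonal because, given $\sigma(\mathcal F_{k-1},\wkmhat)$, the first has mean zero. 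This reproduces the decomposition $\dwnMemTerm_k \le \text{Bias}^2 + 2\gamma^2(\text{Var}_{11}+\text{Var}_{12}) + 2\alpha\dwn^2\text{Var}_2$ of \Cref{app:eq:bias_var_decomposition} in conditional form. The four pieces are then controlled by the tools already at hand: $\text{Var}_{11} = \Expec{\SqrdNrm{\gwkHAT - \nabla F(\wkmhat)}}{\mathcal F_{k-1}} \le \frac{\omgC\up}{N}\Expec{\SqrdNrm{\nabla F(\wkmhat)}}{\mathcal F_{k-1}} + \frac{\sigma^2(1+\omgC\up)}{Nb}$ by \Cref{app:lem:grad_sto_to_grad}; $\text{Var}_{12}$ is bounded with \Cref{app:lem:trickVariance} against the $\mathcal F_{k-1}$-measurable vector $\nabla F(\wkm)$ and then by $L^2\Expec{\SqrdNrm{\wkmhat - \wkm}}{\mathcal F_{k-1}}$ via \Cref{asu:smooth}; and, since $\wkmhat - \wkm = \C\dwn(\Omega_{k-1}) - \Omega_{k-1}$, both $\Expec{\SqrdNrm{\wkmhat-\wkm}}{\mathcal F_{k-1}}$ and $\text{Var}_2 = \Expec{\SqrdNrm{\C\dwn(\Omega_{k-1})-\Omega_{k-1}}}{\mathcal F_{k-1}}$ are at most $\omgC\dwn\SqrdNrm{\Omega_{k-1}} = \omgC\dwn\dwnMemTerm_{k-1}$ by the variance bound in \Cref{asu:expec_quantization_operator}. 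For $\text{Bias}^2 = \SqrdNrm{(1-\alpha\dwn)\Omega_{k-1} - \gamma\Expec{\nabla F(\wkmhat)}{\mathcal F_{k-1}}}$ I would use Young's inequality $\SqrdNrm{a+b}\le(1+\beta)\SqrdNrm a + (1+\beta^{-1})\SqrdNrm b$ with $\beta = \alpha\dwn$ (so $(1+\beta)(1-\alpha\dwn)^2\le 1-\alpha\dwn$, using $\alpha\dwn\le1$), followed by Jensen to replace $\SqrdNrm{\Expec{\nabla F(\wkmhat)}{\mathcal F_{k-1}}}$ by $\Expec{\SqrdNrm{\nabla F(\wkmhat)}}{\mathcal F_{k-1}}$.

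Collecting the pieces gives $\Expec{\dwnMemTerm_k}{\mathcal F_{k-1}} \le \bigl[(1+\alpha\dwn)(1-\alpha\dwn)^2 + 2\gamma^2L^2\omgC\dwn + 2\alpha\dwn^2\omgC\dwn\bigr]\dwnMemTerm_{k-1} + \bigl[(1+\alpha\dwn^{-1})\gamma^2 + \tfrac{2\gamma^2\omgC\up}{N}\bigr]\Expec{\SqrdNrm{\nabla F(\wkmhat)}}{\mathcal F_{k-1}} + \frac{2\gamma^2\sigma^2(1+\omgC\up)}{Nb}$; taking total expectation and using $1+\alpha\dwn^{-1}\le 2\alpha\dwn^{-1}$ (again $\alpha\dwn\le1$) already matches the target gradient coefficient $2\gamma^2(\alpha\dwn^{-1}+\omgC\up/N)$ and the target $\sigma^2$ coefficient, so it only remains to check that $\gamma\le(8\omgC\dwn L)^{-1}$ and $\alpha\dwn\le(8\omgC\dwn)^{-1}$ force $2\gamma^2L^2\omgC\dwn + 2\alpha\dwn^2\omgC\dwn \le \alpha\dwn/2$, which upgrades $(1+\alpha\dwn)(1-\alpha\dwn)^2\le1-\alpha\dwn$ to the claimed contraction factor $1-\alpha\dwn/2$. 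This last constant-tracking step is the main obstacle: making the three competing requirements (the contraction factor, the gradient coefficient and the $\sigma^2$ coefficient) hold simultaneously with the stated constants is precisely where the combined constraint entering $\gamma_{\max}$ (the bound $\gamma_{\max}^\Upsilon$) originates, and where the exact choice of $\beta$ and of the two applications of $\SqrdNrm{a+b}\le2\SqrdNrm a+2\SqrdNrm b$ matters; everything else is a mechanical chaining of the already-proven lemmas, with the only subtlety being — as for \ghost~— that the inequality must genuinely be propagated conditionally, since $\dwnMemTerm_{k-1}$ on the right is itself random.
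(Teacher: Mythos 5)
Your proposal is correct and follows essentially the same route as the paper's proof: the same bias--variance decomposition of $w_k-H_{k-1}$ conditioned on the past, the same split of the fluctuation into $\mathrm{Var}_{11}$ (handled by \Cref{app:lem:grad_sto_to_grad}), $\mathrm{Var}_{12}$ (handled by \Cref{app:lem:trickVariance} plus smoothness), and $\mathrm{Var}_2$, the same Young's inequality with $\beta=\alpha\dwn$ on the bias, and the same final constant check that $2\gamma^2L^2\omgC\dwn+2\alpha\dwn^2\omgC\dwn\le\alpha\dwn/2$ under the stated conditions on $\gamma$ and $\alpha\dwn$. No gaps.
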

\end{minipage}
}

\begin{proof}
Let $k$ in $\N$, we recall that by definition: 
\[
\left\{
    \begin{array}{ll}
        \Omega_k = w_k - H_{k-1} \\
    	\widehat{\Omega}_k = \C\dwn(\Omega_k) \\
    	\widehat{w}_k = \widehat{\Omega}_k + H_{k-1} \,.
    \end{array}
\right.
\]

We start the proof by introducing $\SqrdNrm{\Omega_k}$:
\begin{align*}
    \Expec{\SqrdNrm{w_k - \widehat{w}_k}}{w_k} = \Expec{\SqrdNrm{\widehat{\Omega}_k - \Omega_k}}{w_k} \leq \omgC\dwn \SqrdNrm{\Omega_k} \,.
\end{align*}

Next, we perform a bias-variance decomposition:
\begin{align*}
\SqrdNrm{\Omega_k} = \SqrdNrm{w_k - H_{k-1}} &= \SqrdNrm{w_k - H_{k-1} - \Expec{w_k - H_{k-1}}{\wkm}}  \\
&\quad + \SqrdNrm{\Expec{w_k - H_{k-1}}{\wkm}} \\
&\quad + 2 \PdtScl{w_k - H_{k-1} - \Expec{w_k - H_{k-1}}{\wkm}}{\Expec{w_k - H_{k-1}}{\wkm}} \,,
\end{align*}

taking expectation w.r.t.~$\wkm$:
\begin{align*}
\Expec{\dwnMemTerm_{k}}{\wkm} &= \underbrace{\Expec{\SqrdNrm{w_k - H_{k-1} - \Expec{w_k - H_{k-1}}{\wkm}}}{\wkm}}_{\text{Var}} \\
&\qquad + \underbrace{\SqrdNrm{\Expec{w_k - H_{k-1}}{\wkm}}}_{\text{Bias}^2} \,.
\end{align*}

The first term is the variance $\text{Var}$, and the second term corresponds to the squared bias $\text{Bias}^2$.

Let's handle first the variance, by definition:
\begin{align*}
    \text{Var} &= \Expec{\SqrdNrm{w_k - H_{k-1} - \Expec{w_k - H_{k-1}}{\wkm}}}{\wkm} \\
    &= \E \left[ \| \wkm - \gamma \gwkHAT - H_{k-2} - \alpha\dwn \C(\wkm - H_{k-2})  \right. \\    
    &\qquad \left.-  \wkm - \gamma \Expec{\gwkHAT}{\wkm} - H_{k-2} - \alpha\dwn \Expec{\C(\wkm - H_{k-2}}{\wkm}) \|^2 \big | \wkm \right] \,.
\end{align*}

After simplification and using \cref{app:basic_ineq:nrm_sum}:
\begin{align*}
    \text{Var} &= \E \left[ \| - \gamma \bigpar{ \gwkHAT + \Expec{\nabla F(\wkmhat)}{\wkm}} + \alpha\dwn \bigpar{\C (\wkm - H_{k-2} )} \right.\\
    &\qquad \left.- (\wkm - H_{k-2}) \|^2 \big | \wkm \right]  \\
    &\leq 2 \gamma^2 \Expec{ \SqrdNrm{\gwkHAT - \Expec{\nabla F(\wkmhat)}{\wkm}}}{\wkm} \\
    &\qquad + 2\alpha\dwn^2 \Expec{\SqrdNrm{\C (\wkm - H_{k-2} ) - (\wkm - H_{k-2})}} {\wkm} \\
    &\leq 2 \gamma^2 \underbrace{\Expec{ \SqrdNrm{\gwkHAT - \Expec{\nabla F(\wkmhat)}{\wkm}}}{\wkm}}_{\text{Var}_1} + 2\alpha\dwn^2 \underbrace{\omgC\dwn\SqrdNrm{w_{k-1} - H_{k-2}}}_{\text{Var}_2} \\
    &\leq 2 \gamma^2 \text{Var}_1 + 2 \alpha\dwn^2 \text{Var}_2 \,.
\end{align*}

An interpretation of the above decomposition is that:
\begin{itemize}[topsep=0pt,itemsep=1pt,leftmargin=*,noitemsep]  
    \item $\text{Var}_1$ is the part of the downlink compression caused by the increment $\gwkHAT$, it is similar to \ghost.
    \item $\text{Var}_2$ is the impact of the propagation of the previous noise.
\end{itemize}

We compute the first term by introducing $\nabla F(\wkmhat)$, the second being kept as it is:
\begin{align*}
    \text{Var}_1 &= \Expec{ \SqrdNrm{\gwkHAT - \Expec{\nabla F(\wkmhat)}{\wkm}}}{\wkm} \\
    &= \Expec{ \SqrdNrm{\gwkHAT - \nabla F(\wkmhat) + \nabla F(\wkmhat) - \Expec{\nabla F(\wkmhat)}{\wkm} }}{\wkm} \\
    &= \underbrace{\Expec{ \SqrdNrm{\gwkHAT - \nabla F(\wkmhat)}}{\wkm} }_{\text{Var}_{11}}+ \underbrace{\Expec{\SqrdNrm{\nabla F(\wkmhat) - \Expec{\nabla F(\wkmhat)}{\wkm}}}{\wkm}}_{\text{Var}_{12}} \\
    &= \text{Var}_{11} + \text{Var}_{12} \,,
\end{align*}
the inner product is null given that $\Expec{\nabla F(\wkmhat) - \Expec{\nabla F(\wkmhat)}{\wkm}}{\wkm} = 0$.

Moreover:
\begin{align*}
    \text{Var}_{11} &= \Expec{ \SqrdNrm{\gwkHAT - \nabla F(\wkmhat)}}{\wkm} = \Expec{\Expec{ \SqrdNrm{\gwkHAT - \nabla F(\wkmhat)}}{\wkmhat}}{\wkm}\,,
\end{align*}
so, we can use \Cref{app:lem:grad_sto_to_grad}: $\text{Var}_{11} = \Expec{ \ffrac{\sigma^2}{Nb} (1 + \omgC\up) + \ffrac{\omgC\up}{N} \SqrdNrm{\nabla F(\wkmhat)} }{\wkm}\,.$

And now we use smoothness for the second term:
\begin{align*}
    \text{Var}_{12} &= \Expec{\SqrdNrm{\nabla F(\wkmhat) - \Expec{\nabla F(\wkmhat)}{\wkm}}}{\wkm} \\
    &\le  \Expec{\SqrdNrm{\nabla F(\wkmhat) - \nabla F(\wkm)}}{\wkm} \text{\quad by \cref{app:lem:trickVariance},}\\
    &\leq L^2 \Expec{\SqrdNrm{\wkmhat - \wkm}}{\wkm} \text{\quad using smoothness,}\\
    &\leq L^2 \omgC\dwn \Upsilon_{k-1} \text{\quad with \Cref{asu:expec_quantization_operator}}\,. 
\end{align*}

At the end:
\begin{align}
\label{app:eq:var_mcm}
\begin{split}
    \text{Var} &\leq 2\gamma^2 \bigpar{ \ffrac{\sigma^2 (1 + \omgC\up)}{Nb} + \frac{\omgC\up}{N} \Expec{\SqrdNrm{\nabla F(\wkmhat)}}{\wkm} + L^2 \omgC\dwn \Upsilon_{k-1} } \\
    &\qquad+ 2 \alpha\dwn^2 \omgC\dwn \Upsilon_{k-1} \,.
\end{split}
\end{align}

Now we focus on the squared bias $\text{Bias}^2$, with \Cref{app:lem:w_k_minus_H_k}:
\begin{align*}
    \text{Bias}^2 &= \SqrdNrm{\Expec{w_k - H_{k-1}}{w_{k-1}}} \\
    &= \SqrdNrm{(1-\alpha\dwn) (w_{k-1} - H_{k-2}) - \gamma \Expec{\nabla F(\wkmhat)}{\wkm}} \,, \text{\quad and with \Cref{app:basic_ineq:trick_bound_norm}} \,, \\
    &\leq (1-\alpha\dwn)^2 \bigpar{1 + \alpha\dwn} \Upsilon_{k-1} + \gamma^2 (1 + \ffrac{1}{\alpha\dwn}) \SqrdNrm{\Expec{\nabla F(\wkmhat)}{\wkm}} \,.
\end{align*}

And because $(1 - \alpha\dwn) (1 + \alpha\dwn) < 1$, we finally get that:
\begin{align}
\label{app:eq:bias_mcm}
    \text{Bias}^2 &\leq (1-\alpha\dwn) \Upsilon_{k-1} + \gamma^2 \left(1 + \ffrac{1}{\alpha\dwn}\right) \SqrdNrm{\Expec{\nabla F(\wkmhat)}{\wkm}} \,.
\end{align}

Combining all \cref{app:eq:var_mcm,app:eq:bias_mcm} into \cref{app:eq:bias_var_decomposition}:
\begin{align*}
    \Expec{\dwnMemTerm_{k}}{\wkm} &\leq (1 - \alpha\dwn) \Upsilon_{k-1} + \gamma^2 \bigpar{1 + \frac{1}{\alpha\dwn}} \SqrdNrm{\Expec{\nabla F(\wkmhat)}{\wkm}} \\
    &\qquad + 2\gamma^2 \bigpar{ \ffrac{\sigma^2 (1 + \omgC\up)}{Nb} + \frac{\omgC\up}{N} \Expec{\SqrdNrm{\nabla F(\wkmhat)}}{\wkm}} \\
    &\qquad + 2\gamma^2 \bigpar{L^2 \omgC\dwn \Upsilon_{k-1} } \\
    &\qquad + 2 \alpha\dwn^2 \omgC\dwn \Upsilon_{k-1} \\
    &\leq \bigpar{1 - \alpha\dwn + 2\gamma^2 L^2 \omgC\dwn + 2 \alpha\dwn^2 \omgC\dwn} \SqrdNrm{w_{k-1} - H_{k-2}} \\
    &\qquad + \gamma^2 \bigpar{1 + \frac{1}{\alpha\dwn}} \SqrdNrm{\Expec{\nabla F(\wkmhat)}{\wkm}} \\
    &\qquad + \frac{2 \gamma^2 \omgC\up}{N} \Expec{\SqrdNrm{\nabla F(\wkmhat)}}{\wkm} + \ffrac{2\gamma^2 \sigma^2(1+\omgC\up)}{Nb} \,.
\end{align*}

Next, we require:
\[
\left\{
    \begin{array}{ll}
        2\alpha\dwn^2 \omgC\dwn \leq \frac{1}{4} \alpha\dwn \Longleftrightarrow \alpha\dwn \leq \ffrac{1}{8 \omgC\dwn} \,, \\
    	2\gamma^2 L^2 \omgC\dwn \leq \frac{1}{4} \alpha\dwn = \ffrac{1}{32 \omgC\dwn} \,, \text{~by taking $\alpha\dwn=\ffrac{1}{8 \omgC\dwn}$ \quad} \Longleftrightarrow  \gamma \leq \ffrac{1}{8 \omgC\dwn L} \,, \\
    	1 + \frac{1}{\alpha\dwn} \leq \frac{2}{\alpha\dwn} \text{~which is not restrictive if $\omgC\dwn \geq 1$.}
    \end{array}
\right.
\]

Thus, it leads to:
\begin{align*}
    \Expec{\dwnMemTerm_{k}}{\wkm} &\leq \bigpar{1 - \ffrac{\alpha\dwn}{2}} \Upsilon_{k-1} + \frac{2 \gamma^2}{\alpha\dwn} \SqrdNrm{\Expec{\nabla F(\wkmhat)}{\wkm}} \\
    &\qquad + \frac{2 \gamma^2 \omgC\up}{N} \Expec{\SqrdNrm{\nabla F(\wkmhat)}}{\wkm} + \ffrac{2\gamma^2 \sigma^2(1+\omgC\up)}{Nb} \,.
\end{align*}

Next, we bound $\SqrdNrm{\Expec{\nabla F(\wkmhat)}{w_{k-1}}}$ with $\Expec{\SqrdNrm{\nabla F(\wkmhat)}}{w_{k-1}}$, and we obtain:
\begin{align*}
    \Expec{\dwnMemTerm_{k}}{\wkm} &\leq \bigpar{1 - \ffrac{\alpha\dwn}{2}} \Upsilon_{k-1} \\
    &\qquad+ 2 \gamma^2\bigpar{\frac{1}{\alpha\dwn} + \ffrac{\omgC\up}{N}} \Expec{\SqrdNrm{\nabla F(\wkmhat)}}{\wkm} \\
    &\qquad + \ffrac{2\gamma^2 \sigma^2(1+\omgC\up)}{Nb} \,.
\end{align*}

Taking the unconditional expectation gives the result.

\end{proof}

\subsection{Convex case (Theorem 2)}

In this section, we give the demonstration of \MCM~in the convex case (\Cref{thm:cvgce_mcm_convex}).

\fbox{
\begin{minipage}{\textwidth}
\begin{theorem}[Convergence of \MCM~in the homogeneous and convex case]
\label{app:thm:cvgce_mcm_convex}
Under \Cref{asu:cvx_or_strongcvx,asu:expec_quantization_operator,asu:smooth,asu:noise_sto_grad} with $\mu=0$, for a learning rate $\alpha\dwn~\leq~\ffrac{1}{8 \omgC\dwn}$, for all $k>0$, for any $\gamma\leq \gamma_{\max}$,
defining $V_k := \E[ \SqrdNrm{w_k - w_*}] + 32\gamma L \omgC\dwn^2 \E [\Upsilon_k]$, for $\bar{w}_k = \frac{1}{k}\sum_{i=0}^{k-1} w_i$, we have:
\begin{align*}
   \hspace{-1em}\gamma \FullExpec{F(w_{k-1}) - F(w_*)} &\leq V_{k-1} - V_k   + \ffrac{\gamma^2\sigma^2 \Phi(\gamma)}{Nb} \Longrightarrow    \E[    F(\bar{w}_k) - F_*] \leq \frac{V_{0}}{\gamma k}+ \ffrac{\gamma\sigma^2 \Phi(\gamma)}{Nb}\,.
\end{align*}
Consequently, for $K$ in $\N$ large enough, a step-size $\gamma= \sqrt{\frac{ \SqrdNrm{w_{0} - w_*}Nb  }{(1 + \omgC\up)  \sigma^2 K }}$ and a learning rate~$\alpha\dwn~=~\frac{1}{8\omgC\dwn}$, we have:
\begin{align*} 
\E[    F(\bar{w}_K) - F_*] \leq 2 \sqrt{\frac{ \SqrdNrm{w_{0} - w_*} (1 + \omgC\up) \sigma^2 }{Nb K }}  + O(K^{-1}).
\end{align*}
Moreover if $\sigma^2 =0 $ (noiseless case), we recover a faster convergence: $\E[    F(\bar{w}_K) - F_*] =O(K^{-1})$. 
\end{theorem}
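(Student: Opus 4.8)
The plan is to turn the two results already in hand into a single Lyapunov recursion. The first ingredient is the ``master inequality'' \eqref{app:eq:equation_common_to_all}, which the text notes holds verbatim for \MCM:
\begin{align*}
\FullExpec{\SqrdNrm{w_k - w_*}} &\leq \FullExpec{\SqrdNrm{\wkm - w_*}} - \gamma\bigpar{\FullExpec{F(\wkm)} - F(w_*)} \\
&\quad - \ffrac{\gamma}{2L}\FullExpec{\SqrdNrm{\nabla F(\wkmhat)}} + 2\gamma L\FullExpec{\SqrdNrm{\wkmhat - \wkm}} + \ffrac{\gamma^2\sigma^2(1+\omgC\up)}{Nb}\,.
\end{align*}
The second is the contraction on the local-model variance, \Cref{app:thm:contraction_mcm}. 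First I would use \Cref{asu:expec_quantization_operator} to bound $\FullExpec{\SqrdNrm{\wkmhat - \wkm}} \le \omgC\dwn \FullExpec{\Upsilon_{k-1}}$ (since $\wkmhat - \wkm = \C\dwn(\Omega_{k-1}) - \Omega_{k-1}$ and $\Upsilon_{k-1} = \SqrdNrm{\Omega_{k-1}}$). Then, with the Lyapunov function $V_k := \FullExpec{\SqrdNrm{w_k - w_*}} + C\,\FullExpec{\Upsilon_k}$ for $C := 32\gamma L\omgC\dwn^2$, I would add $C$ times the inequality of \Cref{app:thm:contraction_mcm} to the master inequality.

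On the resulting right-hand side, the term $\FullExpec{\Upsilon_{k-1}}$ carries coefficient $2\gamma L\omgC\dwn + C(1 - \alpha\dwn/2)$ and the term $\FullExpec{\SqrdNrm{\nabla F(\wkmhat)}}$ carries coefficient $-\gamma/(2L) + 2C\gamma^2(1/\alpha\dwn + \omgC\up/N)$. The choice $\alpha\dwn = (8\omgC\dwn)^{-1}$ makes $C\alpha\dwn/2 = 2\gamma L\omgC\dwn$ exactly, so the $\Upsilon_{k-1}$ coefficient equals $C$ and is absorbed into $V_{k-1}$; and the gradient coefficient is $\le 0$ precisely when $128\gamma^2 L^2\omgC\dwn^2(8\omgC\dwn + \omgC\up/N) \le 1$, i.e. $\gamma \le \gamma_{\max}^\Upsilon$. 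Together with the two constraints $\gamma \le (8\omgC\dwn L)^{-1}$ (needed to apply \Cref{app:thm:contraction_mcm}) and $\gamma L(1 + \omgC\up/N) \le 1/2$ (needed for the master inequality), this is exactly $\gamma \le \gamma_{\max}$. Collecting the two noise contributions gives $\tfrac{\gamma^2\sigma^2(1+\omgC\up)}{Nb}(1 + 2C) = \tfrac{\gamma^2\sigma^2\Phi(\gamma)}{Nb}$ since $1 + 2C = 1 + 64\gamma L\omgC\dwn^2$, producing the one-step recursion $\gamma\FullExpec{F(\wkm) - F_*} \le V_{k-1} - V_k + \tfrac{\gamma^2\sigma^2\Phi(\gamma)}{Nb}$.

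The remainder is routine. Summing the recursion over $k = 1,\dots,K$ telescopes $V$; dropping $V_K \ge 0$ and dividing by $\gamma K$ gives $\tfrac{1}{K}\sum_{j=0}^{K-1}\FullExpec{F(w_j) - F_*} \le \tfrac{V_0}{\gamma K} + \tfrac{\gamma\sigma^2\Phi(\gamma)}{Nb}$, and Jensen's inequality for the convex $F$ applied to $\bar w_K = \tfrac1K\sum_{j=0}^{K-1}w_j$ replaces the left-hand side by $\FullExpec{F(\bar w_K) - F_*}$. Because the initialization $H_{-1} = w_0$ yields $\Upsilon_0 = 0$, we have $V_0 = \SqrdNrm{w_0 - w_*}$. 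Writing $\Phi(\gamma) = (1 + \omgC\up)(1 + 64\gamma L\omgC\dwn^2)$ and plugging in $\gamma = \sqrt{\SqrdNrm{w_0 - w_*}\,Nb/((1+\omgC\up)\sigma^2 K)}$ — which is $\le \gamma_{\max}$ once $K$ is large enough — balances the two leading terms into $2\sqrt{\SqrdNrm{w_0 - w_*}(1+\omgC\up)\sigma^2/(NbK)}$, while the remaining piece $\tfrac{64\gamma^2 L\omgC\dwn^2\sigma^2(1+\omgC\up)}{Nb}$ equals $64 L\omgC\dwn^2\SqrdNrm{w_0-w_*}/K = O(K^{-1})$. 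For $\sigma^2 = 0$, the recursion reduces to $\gamma\FullExpec{F(\wkm) - F_*} \le V_{k-1} - V_k$, so a constant step $\gamma = \gamma_{\max}$ with telescoping and Jensen gives $\FullExpec{F(\bar w_K) - F_*} \le V_0/(\gamma_{\max}K) = O(K^{-1})$.

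The only delicate point is the constant bookkeeping: the pair $C = 32\gamma L\omgC\dwn^2$ and $\alpha\dwn = (8\omgC\dwn)^{-1}$ is chosen precisely so that the $\Upsilon_{k-1}$ coefficient closes up to $C$ and so that the negative gradient term from the master inequality dominates the positive gradient term coming from \Cref{app:thm:contraction_mcm} under $\gamma \le \gamma_{\max}^\Upsilon$. One must also be careful that the master inequality and \Cref{app:thm:contraction_mcm} are both obtained by conditioning on $\wkm$ before taking full expectations, so that they can be combined cleanly; but beyond this, the argument introduces no idea not already present in the \ghost~proof.
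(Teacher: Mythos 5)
Your proposal is correct and follows essentially the same route as the paper's own proof: starting from the master inequality \eqref{app:eq:equation_common_to_all}, bounding $\FullExpec{\SqrdNrm{\wkmhat-\wkm}}$ by $\omgC\dwn\FullExpec{\Upsilon_{k-1}}$, adding $32\gamma L\omgC\dwn^2$ times the contraction of \Cref{app:thm:contraction_mcm} (the paper's $\gamma L\cst$ with $\cst = 4\omgC\dwn/\alpha\dwn$), and verifying the same two coefficient conditions before telescoping and applying Jensen. Your constant bookkeeping ($C\alpha\dwn/2 = 2\gamma L\omgC\dwn$, the condition $128\gamma^2L^2\omgC\dwn^2(8\omgC\dwn+\omgC\up/N)\le 1$, and $1+2C = 1+64\gamma L\omgC\dwn^2$ giving $\Phi$) matches the paper's exactly.
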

\end{minipage}
}

\begin{proof}
Let $k$ in $\N^*$, the proof follows the one for \ghost, and we start from \cref{app:eq:equation_common_to_all}:
\begin{align*}
    \FullExpec{\SqrdNrm{w_k - w_*}} &\leq \FullExpec{\SqrdNrm{\wkm - w_*}} - \gamma \bigpar{\FullExpec{F(\wkm)} - F(w_*)} - \frac{\gamma}{2L} \FullExpec{\SqrdNrm{\nabla F(\wkmhat)}} \\
    &\qquad + 2\gamma L \FullExpec{\SqrdNrm{\wkmhat - \wkm}} + \ffrac{\gamma^2\sigma^2(1 + \omgC\up)}{Nb} \,,
\end{align*}

with \Cref{asu:expec_quantization_operator}, it easily becomes:
\begin{align*}
    \FullExpec{\SqrdNrm{w_k - w_*}} &\leq \FullExpec{\SqrdNrm{\wkm - w_*}} - \gamma \bigpar{\FullExpec{F(\wkm)} - F(w_*)} - \frac{\gamma}{2L} \FullExpec{\SqrdNrm{\nabla F(\wkmhat)}} \\
    &\qquad + 2\gamma L \omgC\dwn \FullExpec{\Upsilon_{k-1}} + \ffrac{\gamma^2\sigma^2(1 + \omgC\up)}{Nb}\,.
\end{align*}

\Cref{thm:contraction_mcm} which is specific to \MCM~gives:
\begin{align*}
    \FullExpec{\dwnMemTerm_{k}} &\leq \bigpar{1 - \ffrac{\alpha\dwn}{2}} \FullExpec{\Upsilon_{k-1}} + 2 \gamma^2\bigpar{\frac{1}{\alpha\dwn} + \ffrac{\omgC\up}{N}} \FullExpec{\SqrdNrm{\nabla F(\wkmhat)}} + \ffrac{2\gamma^2 \sigma^2(1+\omgC\up)}{Nb} \,.
\end{align*}

Defining:
$V_k := \FullExpec{\SqrdNrm{w_k - w_*}} + \gamma L \cst \FullExpec{\dwnMemTerm_{k}}$ with $\cst = \ffrac{4 \omgC\dwn}{\alpha\dwn}$,
and, combining the two last equations:
\begin{align*}
    \FullExpec{\SqrdNrm{w_k - w_*}} + \gamma L \cst \FullExpec{\dwnMemTerm_{k}} &\leq \FullExpec{\SqrdNrm{\wkm - w_*}} - \gamma \FullExpec{F(\wkm) - F(w_*)} \\
    &\quad + 2\gamma L \omgC\dwn \FullExpec{\Upsilon_{k-1}} \\
    &\quad- \frac{\gamma}{2L} \FullExpec{\SqrdNrm{\nabla F(\wkmhat)}}  + \ffrac{\gamma^2\sigma^2(1 + \omgC\up)}{Nb} \\
    &\quad + \bigpar{1 - \ffrac{\alpha\dwn}{2}} \gamma L \cst \FullExpec{\Upsilon_{k-1}} \\
    &\quad + 2 \gamma^3 L \cst \bigpar{\frac{1}{\alpha\dwn} + \ffrac{\omgC\up}{N}} \FullExpec{\SqrdNrm{\nabla F(\wkmhat)}} \\
    &\quad+\ffrac{2\gamma^3 L \sigma^2(1+\omgC\up) \cst}{N} \,,
\end{align*}

and reordering the terms gives:
\begin{align*}
    V_k &\leq \FullExpec{\SqrdNrm{\wkm - w_*}} + \bigpar{2\gamma L \omgC\dwn + \bigpar{1 - \ffrac{\alpha\dwn}{2}} \gamma L \cst } \FullExpec{\SqrdNrm{\wkm - H_{k-1}}} \\
    &\qquad+ \bigpar{2\gamma^3 L \cst \bigpar{\frac{1}{\alpha\dwn} + \ffrac{\omgC\up}{N}} - \frac{\gamma}{2L} } \FullExpec{\SqrdNrm{\nabla F(\wkmhat)}}  \\
    &\qquad - \gamma \FullExpec{F(\wkm) - F(w_*)} \\
    &\qquad + (2 \gamma L \cst + 1 ) \ffrac{\gamma^2\sigma^2(1 + \omgC\up)}{Nb} \,.
\end{align*}

We observe that:
\begin{align*}
    2\gamma L \omgC\dwn + \bigpar{1 - \ffrac{\alpha\dwn}{2} } \gamma L C \leq \gamma L C \Longleftrightarrow C \geq \ffrac{4 \omgC\dwn}{\alpha\dwn} \quad \text{which is true by definition of $\cst$.}
\end{align*}

Secondly, to get the contraction requires
\begin{align*}
    2\gamma^3 L \cst \bigpar{\frac{1}{\alpha\dwn} + \ffrac{\omgC\up}{N}} - \frac{\gamma}{2L} \leq 0 &\Longleftrightarrow \gamma^2 L \leq \ffrac{1}{4 L \cst \bigpar{\frac{1}{\alpha\dwn} + \ffrac{\omgC\up}{N}}} \\ &\Longleftrightarrow \gamma L\leq \ffrac{1}{4 \sqrt{\ffrac{\omgC\dwn}{\alpha\dwn} \bigpar{\frac{1}{\alpha\dwn} + \ffrac{\omgC\up}{N}}}} \,,
\end{align*}

because $\cst = 4 \omgC\dwn / \alpha$.
Thus, we have that:
\begin{align*}
    V_k &\leq V_{k-1} - \gamma \FullExpec{F(\wkm) - F(w_*)} + \ffrac{\gamma^2\sigma^2 \Phi(\gamma)}{Nb} \text{\quad denoting $\Phi(\gamma) := (1 + \omgC\up) \bigpar{1 + \ffrac{8 \gamma L \omgC\dwn}{\alpha\dwn}}$} \,,
\end{align*}

and then for $k = K \in \N^*$, by recurrence:
\begin{align*}
    V_K &\leq V_{0} - \gamma \sum_{k=1}^K \FullExpec{F(\wkm) - F(w_*)} + \ffrac{\gamma^2\sigma^2 \Phi(\gamma)}{Nb} \,,
\end{align*}

which implies:
\begin{align*}
    \ffrac{1}{K} \sum_{k=1}^K \FullExpec{F(\wkm) - F(w_*)}  &\leq \ffrac{V_{0} - V_K}{\gamma K} + \ffrac{\gamma^2\sigma^2 \Phi(\gamma)}{Nb} \,,
\end{align*}

Finally, by Jensen, for any $K$ in $\N^*$ such that $\gamma L \leq \min \bigg\{ \ffrac{1}{8 \omgC\dwn},  \ffrac{1}{2 \bigpar{ 1 + \ffrac{\omgC\up}{N}}}, \ffrac{1}{4\sqrt{ \ffrac{\omgC\dwn}{\alpha\dwn} \bigpar{ \ffrac{1}{\alpha\dwn} + \ffrac{\omgC\up}{N}}}} \bigg\}$, we have:
\begin{align*}
    \FullExpec{F(\bar{w}_K) - F(w_*)} \leq \ffrac{V_0}{\gamma K} + \ffrac{\gamma\sigma^2 \Phi(\gamma) }{ N b}  \,,
\end{align*}

which concludes the proof.

\end{proof}

\subsection{Strongly-convex case (Theorem 1)}
\label{app:subsec:stronglyMCM}

In this section, we give the demonstration for \MCM~in the strongly-convex case (\Cref{thm:cvgce_mcm_strongly_convex}).

\fbox{
\begin{minipage}{\textwidth}
\begin{theorem}[Convergence of \MCM~in the homogeneous and strongly-convex case]
\label{app:thm:cvgce_mcm_strongly_convex}
Under \Cref{asu:cvx_or_strongcvx,asu:expec_quantization_operator,asu:smooth,asu:noise_sto_grad} with $\mu>0$, for $k$ in $\N$, for a learning rate $\alpha\dwn~\leq~\ffrac{1}{8 \omgC\dwn}$, 
for any sequence $(\gamma_k)_{k\geq 0}\le \gamma_{\max}$,
defining $V_k := \E[ \SqrdNrm{w_k - w_*}] + 32\gamma L \omgC\dwn^2 \E [\Upsilon_k]$, we have:
\begin{align*}
    V_k &\leq (1 - \gamma_{k} \mu) V_{k-1}  - \gamma_{k} \FullExpec{F(\wkmhat) - F(w_*)} +  \ffrac{\gamma_{k}^2 \sigma^2 \Phi(\gamma_k)}{Nb}\,,
\end{align*}
Consequently, 
\begin{enumerate}
    \item if $\sigma^2 =0 $ (noiseless case), for $\gamma_k\equiv \gamma_{\max}$ we recover a linear convergence rate: $\E [\SqrdNrm{{w}_K) - w_*}]\le (1-\gamma_{\max} \mu)^k V_0$; 
    \item if $\sigma^2 >0 $, defining $\widetilde{L}$ such that $\gamma_{\max} = (2 \widetilde{L})^{-1}$, taking for all $k$ in $\N$, $\gamma_k =2/(\mu (k+1) + \widetilde{L})$, for the weighted Polyak-Ruppert average $\bar{w}_K = \sum_{k=1}^K \lambda_k \wkm / \sum_{k=1}^K \lambda_k$, with~$\lambda_k~: =~\ffrac{1}{\gamma_{k-1}^{-1}}$, we have:\gs\gs
    \[
    \hspace{-0.3cm}\FullExpec{F(\bar{w}_K) - F(w_*)} \leq \ffrac{ \mu + 2 \widetilde L}{4\mu K^2} \SqrdNrm{w_0 - w_*} + \ffrac{4\sigma^2(1 + \omgC\up)}{\mu K Nb} \bigpar{1 + \ffrac{64 L \omgC\dwn^2}{ \mu K}\ln(\mu K + \widetilde{L}) }. \]
\end{enumerate}  

\end{theorem}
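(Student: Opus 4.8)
The plan is to establish the one-step Lyapunov inequality \eqref{eq:Lyapunov-str-convex} and then read off the two consequences. For the recursion I would run the same perturbed-iterate expansion used in the proofs of \Cref{app:thm:cvgce_ghost} and \Cref{app:thm:cvgce_mcm_convex}: expand $\SqrdNrm{w_k-w_*}$ along the \MCM~update, condition on $\wkm$, and split $\PdtScl{\nabla F(\wkmhat)}{\wkm-w_*}$ into $\PdtScl{\nabla F(\wkmhat)}{\wkmhat-w_*}+\PdtScl{\nabla F(\wkmhat)-\nabla F(\wkm)}{\wkm-\wkmhat}+\PdtScl{\nabla F(\wkm)}{\wkm-\wkmhat}$, the last summand vanishing in conditional expectation since $\Expec{\wkmhat}{\wkm}=\wkm$. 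The single change with respect to the convex derivation leading to \eqref{app:eq:equation_common_to_all} is that I lower-bound the ``strong contraction'' term $\PdtScl{\nabla F(\wkmhat)}{\wkmhat-w_*}$ through \eqref{app:eq:strgly_convex} instead of \eqref{app:eq:convex}: this contributes the extra $-\gamma_k\mu\,\Expec{\SqrdNrm{\wkmhat-w_*}}{\wkm}$, which is $\le-\gamma_k\mu\,\SqrdNrm{\wkm-w_*}$ by \Cref{app:lem:trickVariance} together with $\Expec{\wkmhat}{\wkm}=\wkm$. Bounding the cross term by $L\,\Expec{\SqrdNrm{\wkmhat-\wkm}}{\wkm}\le L\omgC\dwn\Upsilon_{k-1}$ (smoothness, Cauchy--Schwarz, \Cref{asu:expec_quantization_operator}), controlling the second moment of $\gwkHAT$ with \Cref{app:lem:grad_sto_to_grad}, and using $\gamma_k L(1+\omgC\up/N)\le\tfrac12$ to merge the two $\SqrdNrm{\nabla F(\wkmhat)}$ contributions, taking the full expectation gives
\begin{align*}
\FullExpec{\SqrdNrm{w_k-w_*}}
&\le (1-\gamma_k\mu)\FullExpec{\SqrdNrm{\wkm-w_*}}
 - \gamma_k\FullExpec{F(\wkmhat)-F_*}
 - \tfrac{\gamma_k}{2L}\FullExpec{\SqrdNrm{\nabla F(\wkmhat)}}\\
&\quad + 2\gamma_k L\omgC\dwn\FullExpec{\Upsilon_{k-1}}
 + \tfrac{\gamma_k^2\sigma^2(1+\omgC\up)}{Nb}\,.
\end{align*}

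\textbf{Closing the recursion.} I would then add $32\gamma_k L\omgC\dwn^{2}$ times the contraction of \Cref{thm:contraction_mcm} (which does not use convexity, hence holds for $\mu>0$), taking $\alpha\dwn=(8\omgC\dwn)^{-1}$ so that $4\omgC\dwn/\alpha\dwn=32\omgC\dwn^{2}$. The $\SqrdNrm{\nabla F(\wkmhat)}$ term it brings in, $64\gamma_k^{3}L\omgC\dwn^{2}\bigl(1/\alpha\dwn+\omgC\up/N\bigr)\SqrdNrm{\nabla F(\wkmhat)}$, is $\le\tfrac{\gamma_k}{2L}\SqrdNrm{\nabla F(\wkmhat)}$ exactly when $\gamma_k\le\gamma_{\max}^\Upsilon$, so these cancel; the coefficient of $\Upsilon_{k-1}$ becomes $2\gamma_k L\omgC\dwn+(1-\tfrac{\alpha\dwn}{2})\,32\gamma_k L\omgC\dwn^{2}=32\gamma_k L\omgC\dwn^{2}$, and since $(\gamma_k)_k$ is non-increasing with $\gamma_k\mu$ small relative to $\alpha\dwn$ this is $\le(1-\gamma_k\mu)\,32\gamma_{k-1}L\omgC\dwn^{2}$. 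With $V_k=\FullExpec{\SqrdNrm{w_k-w_*}}+32\gamma_k L\omgC\dwn^{2}\FullExpec{\Upsilon_k}$ this is precisely \eqref{eq:Lyapunov-str-convex}, with $\Phi(\gamma_k)=(1+\omgC\up)(1+64\gamma_k L\omgC\dwn^{2})$ since $8\gamma L\omgC\dwn/\alpha\dwn=64\gamma L\omgC\dwn^{2}$.

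\textbf{The two consequences.} If $\sigma^2=0$ and $\gamma_k\equiv\gamma_{\max}$, I drop the nonnegative $\gamma_{\max}\FullExpec{F(\wkmhat)-F_*}$, iterate $V_k\le(1-\gamma_{\max}\mu)V_{k-1}$, and use $\FullExpec{\SqrdNrm{w_k-w_*}}\le V_k$. If $\sigma^2>0$, with $\gamma_k=2/(\mu(k+1)+\widetilde L)$ the elementary identity $(\mu(k-1)+\widetilde L)(\mu(k+1)+\widetilde L)\le(\mu k+\widetilde L)^{2}$ reads $(1-\gamma_k\mu)\gamma_k^{-2}\le\gamma_{k-1}^{-2}$; multiplying \eqref{eq:Lyapunov-str-convex} by $\nu_k:=\gamma_k^{-1}\gamma_{k-1}^{-1}$, for which $\nu_k(1-\gamma_k\mu)=\nu_{k-1}$ and $\nu_k\gamma_k=\gamma_{k-1}^{-1}=\lambda_k$, and summing telescopes to $\sum_{k=1}^{K}\lambda_k\FullExpec{F(\wkmhat)-F_*}\le\nu_0V_0+\sum_{k=1}^{K}\lambda_k\gamma_k\,\sigma^2\Phi(\gamma_k)/(Nb)$. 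Dividing by $\sum_{k=1}^{K}\lambda_k=\Theta(\mu K^{2})$ and applying Jensen twice — first $\FullExpec{F(\wkm)}\le\FullExpec{F(\wkmhat)}$ (conditional Jensen, $\Expec{\wkmhat}{\wkm}=\wkm$), then convexity of $F$ for the weighted average $\bar w_K$ — yields \eqref{eq:strongly_convex_polyak_ruppert}; the $\ln(\mu K+\widetilde L)$ comes from $\sum_k\Phi(\gamma_k)=(1+\omgC\up)(K+64L\omgC\dwn^{2}\sum_k\gamma_k)$ with $\sum_k\gamma_k=O(\mu^{-1}\ln(\mu K+\widetilde L))$, and $V_0=\SqrdNrm{w_0-w_*}$ because $\Upsilon_0=\SqrdNrm{w_0-H_{-1}}=0$ under the initialization $H_{-1}=w_0$.

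\textbf{Main obstacle.} The delicate part is the Lyapunov bookkeeping of the middle step: the single memory coefficient $32\gamma_k L\omgC\dwn^{2}$ must simultaneously (i) absorb the perturbation term $2\gamma_k L\omgC\dwn\Upsilon_{k-1}$ produced by the cross term, (ii) keep the net coefficient of $\SqrdNrm{\nabla F(\wkmhat)}$ non-positive — which is exactly what dictates the $\omgC\dwn^{3/2}$-type bound $\gamma_{\max}^\Upsilon$ and the coupling $\alpha\dwn=(8\omgC\dwn)^{-1}$ — and (iii) still allow the factor $(1-\gamma_k\mu)$ to pass through the memory part when the step size decreases, which needs $(\gamma_k)_k$ non-increasing and $\gamma_k\mu$ controlled by $\alpha\dwn$. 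Balancing all three is what pins down $\gamma_{\max}=\min(\gamma_{\max}^\mathrm{up},\gamma_{\max}^\mathrm{dwn},\gamma_{\max}^\Upsilon)$; with the tightest constants requirement (iii) leaves essentially no slack, so a mild numerical adjustment (a slightly larger constant in the memory term, compensated by shrinking $\gamma_{\max}$ by a bounded factor) is needed — in line with the paper's own remark that such constants are partly proof artifacts. Finally, the heterogeneous variant only adds the term $\Xi_k$ to $V_k$ and its recursive control à la \citet{mishchenko_distributed_2019,philippenko_artemis_2020}, leaving the structure unchanged.
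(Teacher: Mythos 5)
Your proposal follows essentially the same route as the paper's proof: the same perturbed-iterate expansion with strong convexity applied at $\wkmhat$ and transferred to $\wkm$ via $\Expec{\wkmhat}{\wkm}=\wkm$, the same Lyapunov combination with coefficient $\gamma_k L\cdot 4\omgC\dwn/\alpha\dwn=32\gamma_k L\omgC\dwn^2$ against \Cref{thm:contraction_mcm}, and the same weighted telescoping built on $\lambda_k=\gamma_{k-1}^{-1}$ and the identity $(1-\gamma_k\mu)/\gamma_k=\gamma_{k-2}^{-1}$, followed by the two Jensen steps. The near-zero-slack issue you flag in passing the factor $(1-\gamma_k\mu)$ through the memory term is genuine, but it is present in the paper's own argument as well and is resolved exactly as you suggest, so the proposal is correct.
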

\end{minipage}
}

\begin{proof}
Let $k$ in $\N^*$, the proof starts like the one for \ghost, and we start from \cref{app:eq:ghost_before_convexity} but we consider a variable step size $\gamma_{k} = 2/(\mu (k+1) + \widetilde{L})$ that depends of the iteration $k$ in $\N$.
\begin{align*}
    \FullExpec{\SqrdNrm{w_k - w_*}} &\leq \FullExpec{\SqrdNrm{\wkm - w_*}} - 2\gamma_{k} \FullExpec{ \PdtScl{\nabla F (\wkmhat)}{\wkmhat - w_*}} \\
    &\qquad + 2\gamma_{k} L \FullExpec{\SqrdNrm{\wkmhat - \wkm}} + \gamma_{k}^2 \bigpar{1 + \ffrac{\omgC\up}{N}} \FullExpec{\SqrdNrm{\nabla F (\wkmhat)}} \\
    &\qquad +\ffrac{\gamma_{k}^2 \sigma^2(1 + \omgC\up)}{Nb}  \,. 
\end{align*}

Now we apply strong-convexity (\cref{app:eq:strgly_convex} of \Cref{app:ineq_convex_demi_and_demi}):
\begin{align*}
    \FullExpec{\SqrdNrm{w_k - w_*}} &\leq \FullExpec{\SqrdNrm{\wkm - w_*}} + 2\gamma_{k} L \FullExpec{\SqrdNrm{\wkmhat - \wkm}}  \\
    &\qquad - \gamma_{k} \FullExpec{F(\wkmhat) - F(w_*)} - \gamma_{k} \left( \mu \SqrdNrm{\wkmhat - w_*} + \ffrac{1}{L} \SqrdNrm{\nabla F(\wkmhat)} \right) \\
    &\qquad + \gamma_{k}^2 \bigpar{1 + \ffrac{\omgC\up}{N}} \FullExpec{\SqrdNrm{\nabla F (\wkmhat)}} + \ffrac{\gamma_{k}^2 \sigma^2(1 + \omgC\up)}{Nb}  \,.
\end{align*}

As $\gamma_{k} \leq \ffrac{2}{\widetilde{L}} \leq \ffrac{1}{2L \bigpar{1 + \ffrac{\omgC\up}{N}}}$, and thus $\bigpar{1 - \gamma_{k} L \bigpar{1 + \ffrac{\omgC\up}{N}} } \geq 1/2$; this allows to simplify the coefficient of $\FullExpec{\SqrdNrm{\nabla F(\wkmhat)}}$:
\begin{align*}
    \FullExpec{\SqrdNrm{w_k - w_*}} &\leq (1 - \gamma_{k} \mu) \SqrdNrm{\wkm - w_*}  - \gamma_{k} \FullExpec{F(\wkmhat) - F(w_*)} \\
    &\qquad- \ffrac{\gamma_{k}}{2L} \FullExpec{\SqrdNrm{\nabla F (\wkmhat)}} + 2\gamma_{k} L \FullExpec{\SqrdNrm{\wkmhat - \wkm}} \\
    &\qquad+ \ffrac{\gamma_{k}^2 \sigma^2(1 + \omgC\up)}{Nb}\\
\end{align*}

equivalent to:
\begin{align}
\label{app:eq:mcm_strongly_convex}
\begin{split}
    \FullExpec{\SqrdNrm{w_k - w_*}} &\leq (1 - \gamma_{k} \mu) \SqrdNrm{\wkm - w_*}  - \gamma_{k} \FullExpec{F(\wkmhat) - F(w_*)} \\
    &\qquad- \ffrac{\gamma_{k}}{2L} \FullExpec{\SqrdNrm{\nabla F (\wkmhat)}} + 2\gamma_{k} L \omgC\dwn \FullExpec{\SqrdNrm{\wkm - H_{k-1}}} \\
    &\qquad+ \ffrac{\gamma_{k}^2 \sigma^2(1 + \omgC\up)}{Nb}  \,.
\end{split}
\end{align}

\Cref{thm:contraction_mcm} adapted to the case of decaying steps gives:
\begin{align}
\label{app:eq:contraction_mcm_stgly_cvxe}
\begin{split}
    \FullExpec{\dwnMemTerm_{k}} &\leq \bigpar{1 - \ffrac{\alpha\dwn}{2}} \FullExpec{\Upsilon_{k-1}} + 2 \gamma_{k}^2\bigpar{\frac{1}{\alpha\dwn} + \ffrac{\omgC\up}{N}} \FullExpec{\SqrdNrm{\nabla F(\wkmhat)}} \\
    &\qquad + \ffrac{2\gamma_{k}^2 \sigma^2(1+\omgC\up)}{Nb} \,.
\end{split}
\end{align}

Defining $V_k := \FullExpec{\SqrdNrm{w_k - w_*}} +  \gamma_k L\cst \FullExpec{\dwnMemTerm_{k}}$ with $\cst = 4 \omgC\dwn/\alpha$,
combining the two later equations \eqref{app:eq:strgly_convex} + $\gamma_k L \cst$ \eqref{app:eq:contraction_mcm_stgly_cvxe}:
\begin{align*}
    &\FullExpec{\SqrdNrm{w_k - w_*}} + \gamma_k L \cst \FullExpec{\dwnMemTerm_{k}} \\
    &\qquad\leq (1 - \gamma_{k} \mu) \SqrdNrm{\wkm - w_*}  - \gamma_{k} \FullExpec{F(\wkmhat) - F(w_*)} \\
    &\qqquad- \ffrac{\gamma_{k}}{2L} \FullExpec{\SqrdNrm{\nabla F (\wkmhat)}} + 2\gamma_{k} L \omgC\dwn \FullExpec{\SqrdNrm{\wkm - H_{k-1}}} + \ffrac{\gamma_{k}^2 \sigma^2(1 + \omgC\up)}{Nb} \\
    &\qqquad + \bigpar{1 - \ffrac{\alpha\dwn}{2}} \gamma_k L \cst \FullExpec{\Upsilon_{k-1}} + 2 \gamma_{k}^3 L  \cst \bigpar{\frac{1}{\alpha\dwn} + \ffrac{\omgC\up}{N}} \FullExpec{\SqrdNrm{\nabla F(\wkmhat)}} \\
    &\qqquad+\ffrac{2\gamma_{k}^3 L \sigma^2(1+\omgC\up) \cst}{Nb} \,,
\end{align*}

and reordering the terms gives:
\begin{align*}
    V_k &\leq (1 - \gamma_{k} \mu) \SqrdNrm{\wkm - w_*}  - \gamma_{k} \FullExpec{F(\wkmhat) - F(w_*)} \\
    &\qquad + \bigpar{ 1 - \ffrac{\alpha\dwn}{2} + \ffrac{2 \omgC\dwn}{\cst}} \gamma_k L \cst \FullExpec{\SqrdNrm{\wkm - H_{k-1}}}  \\
    &\qquad + \bigpar{2 \gamma_{k}^3 L \cst \bigpar{\frac{1}{\alpha\dwn} + \ffrac{\omgC\up}{N}} - \ffrac{\gamma_{k}}{2L}  }\FullExpec{\SqrdNrm{\nabla F(\wkmhat)}} \\
    &\qquad+ \bigpar{2 \gamma_k L \cst +1 } \ffrac{\gamma_{k}^2 \sigma^2(1 + \omgC\up)}{Nb}\,,
\end{align*}

To reach a $(1-\gamma \mu)$-convergence we first need
$\bigpar{1 - \ffrac{\alpha\dwn}{2} + \ffrac{2\omgC\dwn}{\cst}} \gamma_k L \cst \leq (1-\gamma_{k} \mu) \gamma_{k-1} L \cst  $ i.e $1 - \ffrac{\alpha\dwn}{2} + \ffrac{2\omgC\dwn}{\cst} \leq \ffrac{(1 - \gamma_k \mu) \gamma_{k-1}}{\gamma_k}$.

We need that for all $k \in \N$, $\ffrac{1 - \gamma_k \mu}{\gamma_k} \leq \ffrac{1}{\gamma_{k-1}}$ i.e., $1 - \gamma_k \mu \leq \ffrac{\gamma_k}{\gamma_{k-1}}$, but:
\begin{align*}
    \ffrac{\gamma_k}{\gamma_{k-1}} = \ffrac{\mu k - \mu + \widetilde{L}}{\mu k +\widetilde{L}} = 1 - \ffrac{\mu}{\mu k + \widetilde{L}} \text{\quad and\quad} 1 - \gamma_k \mu = 1- \ffrac{2\mu}{\mu k + \widetilde{L}} \,,
\end{align*}
and so, the inequality is always true.

Thus we must have $2 \omgC\dwn / \cst \leq \alpha\dwn / 2$ which is true by definition of $\cst$.

Secondly, it requires:
\begin{align*}
    2 \gamma_{k}^2 \cst \bigpar{\frac{1}{\alpha\dwn} + \ffrac{\omgC\up}{N}} - \ffrac{\gamma_{k}}{2L} \leq 0 &\Longleftrightarrow \gamma_{k} L \leq \ffrac{1}{4 \cst \bigpar{\ffrac{1}{\alpha\dwn} + \ffrac{\omgC\up}{N}}} \\
    &\Longleftrightarrow \gamma_{k} L \leq \ffrac{1}{4 \sqrt{ \ffrac{\omgC\dwn}{\alpha\dwn} \bigpar{\ffrac{1}{\alpha\dwn} + \ffrac{\omgC\up}{N}}}} \,,
\end{align*}
by definition of $\cst$. And it follows that the first part of the theorem is proved:
\begin{align*}
    V_k &\leq (1 - \gamma_{k} \mu) V_{k-1}  - \gamma_{k} \FullExpec{F(\wkmhat) - F(w_*)} + \ffrac{\gamma_{k}^2 \sigma^2 \Phi(\gamma_k)}{Nb}\,,
\end{align*}

where $\Phi(\gamma_k) := (1 + \omgC\up) \bigpar{1+\ffrac{8 \gamma_{k} L \omgC\dwn}{\alpha\dwn}}$. 

We now prove the second part, which requires to carefully handle the term of noise. By definition $\gamma_k = \ffrac{2}{\mu(k+1) + L}$, we denote $\lambda_k = \ffrac{1}{\gamma_{k-1}}$ and we sum the above equation weighted with the sequence of $(\lambda_k)_{k=1}^K$:
\begin{align*}
    \frac{1}{\sum_{k=1}^K \lambda_k} \sum_{k=1}^K \lambda_k \FullExpec{F(\wkmhat) - F(w_*)} &\leq \frac{1}{\sum_{k=1}^K \lambda_k} \sum_{k=1}^K \ffrac{(1 - \gamma_{k} \mu)\lambda_k}{\gamma_k} V_{k-1}  - \ffrac{\lambda_k}{\gamma_k} V_k\\
    &\qquad+ \frac{1}{\sum_{k=1}^K \lambda_k} \sum_{k=1}^K \lambda_k \ffrac{\gamma_{k} \sigma^2 \Phi(\gamma_k)}{Nb} \,.
\end{align*}

The weights are chosen to ensure that the sum of $(V_k)_{k=1}^K$ is telescopic. Because $(1 - \gamma_{k} \mu ) / \gamma_k = \gamma_{k-2}^{-1}$, we have:
\begin{align*}
    \frac{1}{\sum_{k=1}^K \lambda_k} \sum_{k=1}^K \lambda_k \FullExpec{F(\wkmhat) - F(w_*)} &\leq \frac{1}{\sum_{k=1}^K \lambda_k} \sum_{k=1}^K \ffrac{1}{\gamma_{k-2} \gamma_{k-1}} V_{k-1}  - \ffrac{1}{\gamma_k \gamma_{k-1}} V_k\\
    &\qquad+ \frac{1}{\sum_{k=1}^K \lambda_k} \sum_{k=1}^K \lambda_k \ffrac{\gamma_{k} \sigma^2 \Phi(\gamma_k)}{Nb} \,,
\end{align*}

and because for $K \in \N^*$ big enough $\frac{1}{\sum_{k=1}^K \lambda_k} = \ffrac{1}{\mu (K +1)K/4 + (\widetilde{L}K)/2} \leq \ffrac{4}{\mu K^2}$, it results that:
\begin{align}
\label{app:eq:strongly_convex_weighted_sum}
    &\frac{1}{\sum_{k=1}^K \lambda_k} \sum_{k=1}^K \lambda_k \FullExpec{F(\wkmhat) - F(w_*)} \leq \ffrac{V_0}{\gamma_0 \gamma_{-1} \mu K^2} + \frac{4}{\mu K^2} \sum_{k=1}^K \lambda_k \ffrac{\gamma_{k} \sigma^2 \Phi(\gamma_k)}{Nb}\,.
\end{align}

At the end, using the Jensen inequality - $\FullExpec{\Expec{F(\wkmhat)}{\wkm}} \leq \FullExpec{F(\wkm)}$, see \Cref{app:basic_ineq:jensen} - we have for all $K$ in~$\N$:
\begin{align*}
    &\frac{1}{\sum_{k=1}^K \lambda_k} \sum_{k=1}^K \lambda_k \FullExpec{F(\wkm) - F(w_*)} \\
    &\qqquad\leq \ffrac{V_0}{\gamma_0 \gamma_{-1} \mu K^2} + \frac{4}{\mu K^2} \sum_{k=1}^K \frac{1}{\gamma_{k-1}} \bigpar{1+\ffrac{8 \gamma_{k} L \omgC\dwn}{\alpha\dwn} } \ffrac{\gamma_{k} \sigma^2(1 + \omgC\up)}{Nb} \\
    &\qqquad\leq \ffrac{V_0}{\gamma_0 \gamma_{-1} \mu K^2} + \frac{4}{\mu K^2} \sum_{k=1}^K \bigpar{1+\ffrac{8 \gamma_{k} L \omgC\dwn}{\alpha\dwn} } \ffrac{\sigma^2(1 + \omgC\up)}{Nb}\,,
\end{align*}

because for all $k$ in $N^*$, $\gamma_k \leq \gamma_{k-1}$. We need to compute the following classical sum:
\[
    \sum_{k=1}^K \ffrac{1}{\mu k + \widetilde{L}} \le \int_{x=0}^K \ffrac{1}{\mu x + \widetilde{L}} \rm d x \le  \ffrac{1}{\mu} \ln \bigpar{\mu K+\tilde L} \,.
\]
At the end, using again the Jensen inequality, defining $\widetilde{L} = \max \left\{ 4 L \sqrt{\ffrac{ \omgC\dwn}{\alpha\dwn} \bigpar{\ffrac{1}{\alpha\dwn} + \ffrac{\omgC\up}{N}}} ,~4L \bigpar{1 + \ffrac{\omgC\up}{N}} \right\}$, taking for all $k$ in $\N$, $\gamma_k = \ffrac{2}{\mu (k+1) + \widetilde{L}}$, for all $k$ in $N^*$, $\lambda_k = \ffrac{1}{\gamma_{k-1}}$ and denoting $\bar{w}_K = \frac{\sum_{k=1}^K \lambda_k \wkm}{\sum_{k=1}^K \lambda_k} $, then for any $K$ in $\N^*$, we have:
\begin{align*}
    &\FullExpec{F(\bar{w}_K) - F(w_*)} \leq \ffrac{ \mu + 2 \widetilde L}{4\mu K^2} \SqrdNrm{w_0 - w_*} + \bigpar{1 + \ffrac{64 L \omgC\dwn^2}{ \mu K}\ln\bigpar{\mu K + \widetilde{L}} } \cdot \ffrac{4\sigma^2(1 + \omgC\up)}{\mu K Nb} \,,
\end{align*}

and the demonstration is completed.

\end{proof}

\subsection{Non-convex case (extra theorem)}
\label{app:subsec:nonconvexMCM}

In this section, we detail the convergence guarantee given for \MCM~in the non-convex case. In this scenario, the theorem will hold on the average of gradients after $K$ in $\N^*$ iterations. The structure of the proof is different from the one used for \ghost~and \MCM~in convex and strongly-convex case. Instead, the demonstration starts from the equation resulting from smoothness and use the polarization identity to handle the inner product of gradients taken at two different points. 

\fbox{
\begin{minipage}{\textwidth}
\begin{theorem}[Convergence of \MCM~in the non-convex case]
\label{app:thm:mcm_non_convex}
Under \Cref{asu:expec_quantization_operator,asu:smooth,asu:noise_sto_grad} (non-convex case), for a learning rate $\alpha\dwn~=~\ffrac{1}{8 \omgC\dwn}$, for any step size $\gamma$ s.t.
$$\gamma L \leq \min \left\{\ffrac{1}{8 \omgC\dwn}, \ffrac{1}{2 \bigpar{ 1 + \ffrac{\omgC\up}{N}}}, \ffrac{1}{8 \sqrt{\omgC\dwn^2 \bigpar{ 8\omgC\dwn + \ffrac{\omgC\up}{N}}}} \right\} \,,$$
after running $K$ in  $\N^*$ iterations, we have: 
\begin{align*}
    \ffrac{1}{K} \sum_{k=1}^K \FullExpec{\SqrdNrm{\nabla F(\wkm)}} &\leq \ffrac{2 \bigpar{F(w_0) - F(w_*)}}{\gamma K} + \ffrac{\gamma L \sigma^2 \Phi^{\mathrm{non-cvx}}(\gamma)}{Nb} \,,
\end{align*}
with $\Phi^{\mathrm{non-cvx}}(\gamma) := (1 + \omgC\up) \bigpar{1 + 32 \gamma L \omgC\dwn^2}$. 
Thus, for $K$ in $\N^*$ large enough, taking $\gamma = \sqrt{\ffrac{2 Nb \bigpar{F(w_0) - F(w_*)}}{ \sigma^2 L (1 + \omgC\up)  K} }$:
\begin{align*}
    \ffrac{1}{K} \sum_{k=1}^K \FullExpec{\SqrdNrm{\nabla F(\wkm)}} &\leq 2 \sqrt{ \ffrac{2 L \sigma^2 ( 1 + \omgC\up) \bigpar{F(w_0) - F(w_*)}}{Nb K} } + O(K^{-1})\,.
\end{align*}

\end{theorem}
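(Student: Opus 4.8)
The plan is to run the standard non-convex SGD descent argument on $w_k=\wkm-\gamma\gwktilde$, but with the gradient evaluated at the \emph{perturbed} point $\wkmhat$, using the polarization identity to turn the resulting cross term $\PdtScl{\nabla F(\wkm)}{\nabla F(\wkmhat)}$ into manageable squared norms, and then absorbing the residuals via the $\Upsilon$-contraction of \Cref{app:thm:contraction_mcm} inside a Lyapunov potential. \textbf{Step 1 (descent inequality).} Apply $L$-smoothness (\Cref{asu:smooth}) to get $F(w_k)\le F(\wkm)-\gamma\PdtScl{\nabla F(\wkm)}{\gwktilde}+\tfrac{L\gamma^2}{2}\SqrdNrm{\gwktilde}$. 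Conditioning first on $\wkmhat$ and then on $\wkm$, and using $\E[\gwktilde\mid\wkmhat]=\nabla F(\wkmhat)$ (homogeneous case), the cross term becomes $-\gamma\,\E[\PdtScl{\nabla F(\wkm)}{\nabla F(\wkmhat)}\mid\wkm]$.

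\textbf{Step 2 (polarization and the two bounds).} The cross term is \emph{not} a contraction since the support points differ; rewrite it by the polarization identity~\eqref{app:basic_ineq:polarization} as $\tfrac12\SqrdNrm{\nabla F(\wkm)}+\tfrac12\SqrdNrm{\nabla F(\wkmhat)}-\tfrac12\SqrdNrm{\nabla F(\wkm)-\nabla F(\wkmhat)}$. The last term is controlled by smoothness and \Cref{asu:expec_quantization_operator}: since $\wkm-\wkmhat=(I-\C\dwn)(w_{k-1}-H_{k-2})$, one has $\E[\SqrdNrm{\nabla F(\wkm)-\nabla F(\wkmhat)}\mid\wkm]\le L^2\omgC\dwn\,\Upsilon_{k-1}$. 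For the second moment, invoke \Cref{app:lem:grad_sto_to_grad}: $\E[\SqrdNrm{\gwktilde}\mid\wkmhat]\le(1+\tfrac{\omgC\up}{N})\SqrdNrm{\nabla F(\wkmhat)}+\tfrac{\sigma^2(1+\omgC\up)}{Nb}$. Taking full expectations yields a one-step inequality for $\FullExpec{\Fdiff{w_k}}$ exhibiting the target term $-\tfrac\gamma2\FullExpec{\SqrdNrm{\nabla F(\wkm)}}$, a term $\tfrac{\gamma L^2\omgC\dwn}{2}\FullExpec{\Upsilon_{k-1}}$, a $\SqrdNrm{\nabla F(\wkmhat)}$-term with coefficient $-\tfrac\gamma2+\tfrac{L\gamma^2}{2}(1+\tfrac{\omgC\up}{N})$, and a noise term $\tfrac{L\gamma^2\sigma^2(1+\omgC\up)}{2Nb}$.

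\textbf{Step 3 (Lyapunov combination).} Add $\kappa$ times the $\Upsilon$-recursion of \Cref{app:thm:contraction_mcm} (which is stated for $\mu=0$ and uses no convexity, hence applies verbatim here), namely $\FullExpec{\Upsilon_k}\le(1-\tfrac{\alpha\dwn}{2})\FullExpec{\Upsilon_{k-1}}+2\gamma^2(\tfrac1{\alpha\dwn}+\tfrac{\omgC\up}{N})\FullExpec{\SqrdNrm{\nabla F(\wkmhat)}}+\tfrac{2\gamma^2\sigma^2(1+\omgC\up)}{Nb}$, and set $V_k:=\FullExpec{\Fdiff{w_k}}+\kappa\FullExpec{\Upsilon_k}$. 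With $\alpha\dwn=(8\omgC\dwn)^{-1}$ and $\kappa=8\gamma L^2\omgC\dwn^2$, the $\Upsilon_{k-1}$ coefficients telescope ($\tfrac{\gamma L^2\omgC\dwn}{2}\le\kappa\tfrac{\alpha\dwn}{2}$, which is in fact an equality), and the $\SqrdNrm{\nabla F(\wkmhat)}$-coefficient $-\tfrac\gamma2+\tfrac{L\gamma^2}{2}(1+\tfrac{\omgC\up}{N})+16\gamma^3L^2\omgC\dwn^2(8\omgC\dwn+\tfrac{\omgC\up}{N})$ becomes non-positive exactly under $\gamma L(1+\tfrac{\omgC\up}{N})\le\tfrac12$ and $\gamma L\le(8\omgC\dwn\sqrt{8\omgC\dwn+\omgC\up/N})^{-1}$; together with the hypothesis $\gamma L\le(8\omgC\dwn)^{-1}$ of the contraction lemma these are precisely the three constraints in the $\min$ of the statement. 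This gives $V_k\le V_{k-1}-\tfrac\gamma2\FullExpec{\SqrdNrm{\nabla F(\wkm)}}+\tfrac{L\gamma^2\sigma^2(1+\omgC\up)}{2Nb}(1+32\gamma L\omgC\dwn^2)$.

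\textbf{Step 4 (telescope and tune $\gamma$).} Sum $k=1,\dots,K$, use $\Upsilon_0=\SqrdNrm{w_0-H_{-1}}=0$ so $V_0=F(w_0)-F(w_*)$ and $V_K\ge0$, then divide by $\gamma K/2$ to obtain the displayed bound with $\Phi^{\mathrm{non-cvx}}(\gamma)=(1+\omgC\up)(1+32\gamma L\omgC\dwn^2)$. Finally, balancing $\tfrac{2(F(w_0)-F_*)}{\gamma K}$ against $\tfrac{\gamma L\sigma^2(1+\omgC\up)}{Nb}$ (the $\gamma^2$ piece is lower order) gives the stated $\gamma=\sqrt{2Nb(F(w_0)-F_*)/(\sigma^2L(1+\omgC\up)K)}$, which for $K$ large enough satisfies all step-size constraints and yields the $O(K^{-1/2})$ rate plus an $O(K^{-1})$ remainder. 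I expect the main obstacle to be the \textbf{bookkeeping of constants} — arranging that the $\SqrdNrm{\nabla F(\wkmhat)}$ contributions from the descent step and from the $\Upsilon$-recursion exactly cancel inside the prescribed $\gamma$-range, and keeping the conditioning consistent (the squared-norm lemma conditions on $\wkmhat$, the contraction lemma's internal bias–variance split conditions on $\wkm$) so that passing to full expectation is legitimate at each stage; the conceptual crux, already visible in the \ghost~sketch, is simply that without convexity one must use polarization instead of the ``strong contraction'' $-2\gamma(\mu\SqrdNrm{\wkmhat-w_*}+F(\wkmhat)-F_*)$ exploited in the convex proofs.
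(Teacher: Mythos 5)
Your proposal is correct and matches the paper's proof in all essential respects: the same smoothness-plus-polarization decomposition, the same invocation of \Cref{app:lem:grad_sto_to_grad} and of the $\Upsilon$-contraction of \Cref{app:thm:contraction_mcm}, and the same three step-size constraints and constants (including $\Phi^{\mathrm{non-cvx}}$). The only difference is organizational: you absorb the $\Upsilon$-recursion into the potential $\FullExpec{F(w_k)-F_*}+8\gamma L^2\omgC\dwn^2\,\FullExpec{\Upsilon_k}$ and telescope, whereas the paper unrolls the recursion into a geometric sum over past gradients and then swaps the double sum — the two computations are equivalent and yield the identical bound.
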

\end{minipage}}

\begin{proof}
Let $k$ in $\N^*$, then smoothness (see \Cref{asu:smooth}) implies:
\begin{align*}
    &F(w_k) \leq F(\wkm) + \PdtScl{\nabla F(\wkm)}{w_k - \wkm} + \ffrac{L}{2} \SqrdNrm{w_k - \wkm} \\
    \Longleftrightarrow\quad& F(w_k) \leq F(\wkm) - \gamma \PdtScl{\nabla F(\wkm)}{\gwkHAT} + \ffrac{\gamma^2 L}{2} \SqrdNrm{\gwkHAT} \,.
\end{align*}

The inner product is not easy to handle because it implies two gradients computed at two different points: $\wkm$ and $\wkmhat$. To turn around this difficulty, we use the polarization identity, and so we have: 
\begin{align*}
    - \Expec{\PdtScl{\nabla F(\wkm)}{\gwkHAT}}{\wkm}  &= - \PdtScl{\nabla F(\wkm)}{\Expec{\nabla F(\wkmhat)}{\wkm}} \\
    &= \ffrac{1}{2} \left( - \SqrdNrm{\nabla F(\wkm)} - \Expec{\SqrdNrm{\nabla F(\wkmhat)}}{\wkm} \right. \\
    &\qquad+ \left. \Expec{\SqrdNrm{\nabla F(\wkm) - \nabla F (\wkmhat)}}{\wkm} \right)
\end{align*}

where we used the Polarization identity (\cref{app:basic_ineq:polarization}), and next with smoothness:

\begin{align*}
    - \Expec{\PdtScl{\nabla F(\wkm)}{\gwkHAT}}{\wkm} &\leq \ffrac{1}{2} \left( - \SqrdNrm{\nabla F(\wkm)} - \Expec{\SqrdNrm{\nabla F(\wkmhat)}}{\wkm} \right. \\
    &\qquad \left.+ L^2 \Expec{\SqrdNrm{\wkm - \wkmhat}}{\wkm} \right) \,,
\end{align*}

Combining with \Cref{app:lem:grad_sto_to_grad}, we obtain:
\begin{align*}
    F(w_k) &\leq F(\wkm) - \ffrac{\gamma}{2} \SqrdNrm{\nabla F(\wkm)} - \ffrac{\gamma}{2} \Expec{\SqrdNrm{\nabla F(\wkmhat)}}{\wkm} \\
    &\qquad+ \ffrac{\gamma L^2}{2} \Expec{\SqrdNrm{\wkm - \wkmhat}}{\wkm} \\
    &\qquad+ \ffrac{\gamma^2 L}{2} \bigpar{\bigpar{1 + \ffrac{\omgC\up}{N}} \SqrdNrm{\nabla F (\wkmhat)} + \ffrac{\sigma^2(1 + \omgC\up)}{Nb}} \,.
\end{align*}

Taking the full expectation and re-ordering the terms gives:
\begin{align*}
    \FullExpec{F(w_k)} &\leq \FullExpec{F(\wkm)} - \ffrac{\gamma}{2} \FullExpec{\SqrdNrm{\nabla F(\wkm)}} - \ffrac{\gamma}{2}  \left( 1 - \gamma L \bigpar{1 + \ffrac{\omgC\up}{N}} \right) \FullExpec{\SqrdNrm{\nabla F(\wkmhat)}} \\
    &\qquad + \ffrac{\gamma L^2}{2} \FullExpec{\SqrdNrm{\wkm - \wkmhat}} + \ffrac{\gamma^2 L}{2} \times \ffrac{\sigma^2(1 + \omgC\up)}{Nb} \,.
\end{align*}

Exactly like the convex case, we consider that $\gamma L (1 + \omgC\up / N) \leq 1/2$ and because $\FullExpec{\SqrdNrm{\wkm - \wkmhat}} = \FullExpec{\Expec{\SqrdNrm{\wkm - \wkmhat}}{\widehat{w}_{k-2}}} $ we can use \Cref{asu:expec_quantization_operator}:
\begin{align}
\label{app:eq:non_convex_before_contraction}
\begin{split}
    \FullExpec{F(w_k)} &\leq \FullExpec{F(\wkm)} - \ffrac{\gamma}{2} \FullExpec{\SqrdNrm{\nabla F(\wkm)}} - \ffrac{\gamma}{4} \FullExpec{\SqrdNrm{\nabla F(\wkmhat)}} \\
    &\qquad + \ffrac{\omgC\dwn\gamma L^2}{2} \FullExpec{\dwnMemTerm_k} + \ffrac{\gamma^2 L}{2} \times \ffrac{\sigma^2(1 + \omgC\up)}{Nb} \,.
\end{split}
\end{align}

Next, \Cref{thm:contraction_mcm} gives:
\begin{align*}
    \FullExpec{\dwnMemTerm_{k}} &\leq \bigpar{1 - \ffrac{\alpha\dwn}{2}} \FullExpec{\dwnMemTerm_{k-1}} + 2 \gamma^2\bigpar{\frac{1}{\alpha\dwn} + \ffrac{\omgC\up}{N}} \FullExpec{\SqrdNrm{\nabla F(\wkmhat)}} + \ffrac{2\gamma^2 \sigma^2(1+\omgC\up)}{Nb} \,.
\end{align*}

We iterate over $k$ and compute the resulting geometric sum, it gives:
\begin{align*}
    \FullExpec{\dwnMemTerm_{k}} &\leq \bigpar{1- \frac{\alpha\dwn}{2}}^k \SqrdNrm{\dwnMemTerm_{0}} + 2 \gamma^2 \bigpar{\frac{1}{\alpha\dwn} + \ffrac{\omgC\up}{N}} \sum_{t=1}^{k} \bigpar{1 - \ffrac{\alpha}{2}}^{k-t} \FullExpec{\SqrdNrm{\nabla F(\widehat{w}_{t-1})}} \\
    &\qquad + \ffrac{4\gamma^2 \sigma^2(1+\omgC\up)}{\alpha\dwn N b} \,,
\end{align*}
where we considered for the last term of the above equation that $\sum_{t=1}^k \bigpar{1 - \ffrac{\alpha\dwn}{2}}^k \leq \ffrac{2}{\alpha\dwn}$. This is equivalent to:
\begin{align*}
    \FullExpec{\dwnMemTerm_{k}} &\leq 2 \gamma^2\bigpar{\frac{1}{\alpha\dwn} + \ffrac{\omgC\up}{N}} \sum_{t=1}^{k} \bigpar{1 - \ffrac{\alpha\dwn}{2}}^{k-t} \FullExpec{\SqrdNrm{ \nabla F(\widehat{w}_{t-1})}} + \ffrac{4\gamma^2 \sigma^2(1+\omgC\up)}{\alpha\dwn N b} \,.
\end{align*}

We apply this last result to \cref{app:eq:non_convex_before_contraction}:
\begin{align*}
    \ffrac{\gamma}{2} \FullExpec{\SqrdNrm{\nabla F(\wkm)}} &\leq \FullExpec{F(\wkm) - F(w_k)} - \ffrac{\gamma}{4} \FullExpec{\SqrdNrm{\nabla F(\wkmhat)}} \\
    &\quad + \ffrac{\gamma L^2}{2} \left( \ffrac{4 \omgC\dwn \gamma^2 \sigma^2(1+\omgC\up)}{N b \alpha\dwn} \right.\\ 
    &\qqquad \left. + 2 \omgC\dwn \gamma^2 \bigpar{\ffrac{1}{\alpha\dwn} + \ffrac{\omgC\up}{N}} \sum_{t=1}^k \bigpar{1 - \ffrac{\alpha\dwn}{2}}^{k-t} \FullExpec{\SqrdNrm{\nabla F(\widehat{w}_{t-1})}} \right) \\
    &\quad + \ffrac{\gamma^2 L}{2} \times \ffrac{\sigma^2(1 + \omgC\up)}{Nb} \\
    &\leq \FullExpec{F(\wkm) - F(w_k)} - \ffrac{\gamma}{4} \FullExpec{\SqrdNrm{\nabla F(\wkmhat)}} \\
    &\quad + \gamma^3 L^2 \omgC\dwn \bigpar{\ffrac{1}{\alpha\dwn} + \ffrac{\omgC\up}{N}} \sum_{t=1}^k \bigpar{1 - \ffrac{\alpha\dwn}{2}}^{k-t} \FullExpec{\SqrdNrm{\nabla F(\widehat{w}_{t-1})}} \\
    &\quad + \ffrac{\gamma^2 \sigma^2 L (1 + \omgC\up)}{2Nb} \left( 1 + \ffrac{4 \gamma L \omgC\dwn}{\alpha\dwn} \right) \,.
\end{align*}

Summing this equation, for $k$ in range $1$ to $K$:
\begin{align*}
    \ffrac{\gamma}{2} \sum_{k=1}^K \FullExpec{\SqrdNrm{\nabla F(\wkm)}} &\leq \FullExpec{F(w_0) - F(w_k)} - \ffrac{\gamma}{4} \sum_{k=1}^K \FullExpec{\SqrdNrm{\nabla F(\wkmhat)}} \\
    &\quad + \gamma^3 L^2 \omgC\dwn\bigpar{\ffrac{1}{\alpha\dwn} + \ffrac{\omgC\up}{N}} \sum_{k=1}^K \sum_{t=1}^k \bigpar{1 - \ffrac{\alpha\dwn}{2}}^{k-t} \FullExpec{\SqrdNrm{\nabla F(\widehat{w}_{t-1})}} \\
    &\quad + \ffrac{\gamma^2 \sigma^2 L (1 + \omgC\up)}{2Nb} \left( 1 + \ffrac{4 \gamma L \omgC\dwn}{\alpha\dwn} \right) K \,.
\end{align*}

We need to invert the double-sum and we obtain:
\begin{align*}
    \ffrac{\gamma}{2} \sum_{k=1}^K \FullExpec{\SqrdNrm{\nabla F(\wkm)}} &\leq \gamma{F(w_0) - F(w_k)} - \ffrac{\gamma}{4} \sum_{i=1}^K \FullExpec{\SqrdNrm{\nabla F(\wkmhat)}} \\
    &\qquad + \ffrac{2}{\alpha\dwn} \times \gamma^3 L^2 \omgC\dwn \bigpar{\ffrac{1}{\alpha\dwn} + \ffrac{\omgC\up}{N}} \sum_{k=1}^K \FullExpec{\SqrdNrm{\nabla F(\widehat{w}_{k-1})}} \\
    &\qquad + \ffrac{\gamma^2 \sigma^2 L (1 + \omgC\up)}{2Nb} \left( 1 + \ffrac{4 \gamma L \omgC\dwn}{\alpha\dwn} \right) K \\
    &\leq \FullExpec{F(w_0) - F(w_k)} \\
    &\qquad+ \bigpar{2 \gamma^3 L^2\ffrac{ \omgC\dwn}{\alpha\dwn}\bigpar{\ffrac{1}{\alpha\dwn} + \ffrac{\omgC\up}{N}} - \ffrac{\gamma}{4}} \sum_{k=1}^K \FullExpec{\SqrdNrm{\nabla F(\widehat{w}_{k-1})}} \\
    &\qquad + \ffrac{\gamma^2 \sigma^2 L (1 + \omgC\up)}{2Nb} \left( 1 + \ffrac{4 \gamma L \omgC\dwn}{\alpha\dwn} \right) K \,.
\end{align*}

Now we consider that $2 \gamma^3 L^2\ffrac{ \omgC\dwn}{\alpha\dwn}\bigpar{\ffrac{1}{\alpha\dwn} + \ffrac{\omgC\up}{N}} \leq \gamma/4$, and because for all $k$ in $\N$, $F(w_0) - F(w_k) \leq F(w_0) - F(w_*)$:
\begin{align*}
    \ffrac{1}{K} \sum_{k=1}^K \FullExpec{\SqrdNrm{\nabla F(\wkm)}} &\leq \ffrac{2 \bigpar{F(w_0) - F(w_*)}}{\gamma K} + \ffrac{\gamma \sigma^2 L (1 + \omgC\up)}{Nb} \left( 1 + \ffrac{4 \gamma L \omgC\dwn}{\alpha\dwn} \right)  \,.
\end{align*}

Finally, for any $K$ in $\N^*$, such that  $\gamma L \leq \min \left\{\ffrac{1}{8 \omgC\dwn}, \ffrac{1}{2 \bigpar{ 1 + \ffrac{\omgC\up}{N}}}, \ffrac{1}{2 \sqrt{ 2\ffrac{ \omgC\dwn}{\alpha\dwn} \bigpar{ \ffrac{1}{\alpha\dwn} + \ffrac{\omgC\up}{N}}}} \right\}$ and~$\alpha\dwn~\leq~\ffrac{1}{8 \omgC\dwn}$, we have:
\begin{align*}
    \ffrac{1}{K} \sum_{k=1}^K \FullExpec{\SqrdNrm{\nabla F(\wkm)}} &\leq \ffrac{2 \bigpar{F(w_0) - F(w_*)}}{\gamma K} + \ffrac{\gamma L \sigma^2 \Phi^{\mathrm{non-cvx}}(\gamma)}{Nb} \,,
\end{align*}

denoting $\Phi^{\mathrm{non-cvx}}(\gamma) := (1 + \omgC\up) \bigpar{1 + \ffrac{4 \gamma L \omgC\dwn}{\alpha\dwn}}$.

Thus, for $K$ in $\N^*$ large enough, taking $\gamma = \sqrt{\ffrac{2 Nb \bigpar{F(w_0) - F(w_*)}}{ \sigma^2 L (1 + \omgC\up)  K} }$ and $\alpha\dwn = 1/(8 \omgC\dwn)$:
\begin{align*}
    \ffrac{1}{K} \sum_{k=1}^K \FullExpec{\SqrdNrm{\nabla F(\wkm)}} &\leq 2 \sqrt{ \ffrac{2 L \sigma^2 ( 1 + \omgC\up) \bigpar{F(w_0) - F(w_*)}}{Nb K} }  + O(K^{-1})\,.
\end{align*}

\end{proof}

\subsection{Proof for \RMCM~(Theorem 4)}
\label{app:subsec:proofs_rmcm}

The proof for \RMCM~is almost identical to the \MCM-scenario. It only requires to modify some notations because each device $i$ in $\llbracket 1, N \rrbracket$ holds a unique model $\wkmhati$.

For $k$ in $\N$:
\begin{enumerate}
    \item $\gwkHAT$ is now defined as $\gwkHAT = \ffrac{1}{N} \sum_\iN \widehat{\g}_k^i(\wkmhati)$,
    \item for all $i$ in $\llbracket 1, N \rrbracket$, $\gwkiHAT$ and $\nabla F(\wkmhat)$ must be replaced by $\gwkiHATRdmizd$ and $\nabla F(\wkmhati)$,
    \item instead of having a unique memory $H_k$, there is $N$ memories $(H_k^i)_\iN$ that keep track of the updates done on each worker,
    \item furthermore the notation $w_{k-1} - H_{k-2}$ is no more correct as we have $N$ different memories. Thus, it must be replaced by $\ffrac{1}{N} \sum_\iN \wkm - H_{k-2}^i$.
\end{enumerate}

\section{Proofs in the quadratic case for \MCM~and \RMCM}
\label{app:sec:proofs_quad}

In this section, for ease of notation we denote for $k$ in $\N^*$, $\gwkHAT = \ffrac{1}{N} \sum_\iN \gwkiHATRdmizd$. 

\MCM~has a unique memory $H_k$, and \RMCM~has $N$ different memories $(H_k^i)_\iN$. But for the sake of factorization, we will consider that both algorithm have $N$ memories, thus we will always consider the quantity $\ffrac{1}{N} \sum_\iN \SqrdNrm{\wkm - H_{k-2}^i}$, while we should consider the quantity $\ffrac{1}{N} \sum_\iN \SqrdNrm{\wkm - H_{k-2}}$ for \MCM. However this notation is correct considering that for \MCM, for all $i$ in $\llbracket 1, N \rrbracket$, $H_k^i = H_k$. And it follows that we have $\ffrac{1}{N} \sum_\iN \SqrdNrm{\wkm - H_{k-2}^i} = \ffrac{1}{N} \sum_\iN \SqrdNrm{\wkm - H_{k-2}^i}$.

Unlike the previous sections where the proofs for \MCM~and \RMCM~do not require any distinction, here in the quadratic case, we will on the contrary stress on the difference between the two.
The difference appears in \Cref{app:lem:rand_or_not} and comes from the way we handle the expectation of $\SqrdNrm{\ffrac{1}{N} \sum_\iN \nabla F(\wkmhati) - \nabla F(\wkm)}$  for $k$ in $\N^*$. For this purpose we define a constant $\RandOrNot$ such that $\RandOrNot = 1$ in the \MCM-case and $\RandOrNot=N$ in the \RMCM-case.

The proofs for quadratic functions relies on the fact that for any $k$ in $\N^*$, $\Expec{\nabla F(\wkmhat}{\wkm} = \nabla F(\wkm)$. 

\begin{definition}[Quadratic function]
A function $f: \R^d \mapsto \R$ is said to be quadratic if there exists a symmetric matrix $A$ in~$\mathcal{M}_{d,d}(\R)$ such that for all $x$ in $\R^d$: $f(x) - f(x_*) = \ffrac{1}{2} (x-x_*)^T A (x-x_*)$.
And then its gradient is defined for all $x$ in $\R^d$ as: $\nabla f(x) = A (x-x_*)$.

\end{definition}
\subsection{Two other lemmas}

In this section, we detail two lemmas required to prove the convergence of \MCM~and \RMCM~in the case of quadratic functions.

The first lemma allows to factorize all the results obtained for both \MCM~and \RMCM~algorithms. For $k$ in $\N^*$ and $i$ in $\llbracket 1, N \rrbracket$, the difference between the \MCM-case and the \RMCM-case results from the tigher control of $\SqrdNrm{\sum_\iN \nabla F (\wkmhati) - \nabla F(\wkm)}$. 
\begin{lemma}
\label{app:lem:rand_or_not}
 We define $\RandOrNot$ such that $\RandOrNot = 1$ in the \MCM-case and $\RandOrNot=N$ in the \RMCM-case. Then for any $k$ in $\N^*$, we have:
\begin{align*}
    \Expec{\SqrdNrm{\frac{1}{N} \sum_\iN \nabla F (\wkmhati) - \nabla F(\wkm)}}{\wkm}  \leq \frac{L^2 \omgC\dwn}{\RandOrNot} \wkmHSqrd \,.
\end{align*}
\end{lemma}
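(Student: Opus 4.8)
\textbf{Proof plan for \Cref{app:lem:rand_or_not}.}
The plan is to exploit the two distinctive features of the quadratic setting: first, that $\nabla F$ is affine, so $\nabla F(\wkmhati) - \nabla F(\wkm) = A(\wkmhati - \wkm)$ for the Hessian $A$; second, that conditionally on $\wkm$ the perturbations $\wkmhati - \wkm$ are centered, i.e.\ $\Expec{\wkmhati - \wkm}{\wkm} = 0$ by \Cref{asu:expec_quantization_operator} (unbiasedness of $\C\dwn$). I would begin by writing the quantity to be bounded as $\Expec{\SqrdNrm{\frac1N\sum_\iN A(\wkmhati - \wkm)}}{\wkm}$, and then split into the \MCM~and \RMCM~cases.

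In the \RMCM~case ($\RandOrNot = N$), the key point is that the $N$ downlink compressions $\C_{\mathrm{dwn},i}$ are mutually independent \emph{and} each is centered conditionally on $\wkm$. Hence the cross terms $\Expec{\PdtScl{A(\wkmhati-\wkm)}{A(\wkmhatj-\wkm)}}{\wkm}$ vanish for $i\neq j$, and the sum-of-squares collapses: $\Expec{\SqrdNrm{\frac1N\sum_\iN A(\wkmhati-\wkm)}}{\wkm} = \frac{1}{N^2}\sum_\iN \Expec{\SqrdNrm{A(\wkmhati-\wkm)}}{\wkm}$. Then for each $i$ I would bound $\SqrdNrm{A(\wkmhati-\wkm)} \le L^2 \SqrdNrm{\wkmhati-\wkm}$ using $\|A\|\le L$ ($L$-smoothness), and invoke \Cref{asu:expec_quantization_operator} on $\C_{\mathrm{dwn},i}$ to get $\Expec{\SqrdNrm{\wkmhati-\wkm}}{\wkm} = \Expec{\SqrdNrm{\C_{\mathrm{dwn},i}(\wkm - H_{k-2}^i) - (\wkm - H_{k-2}^i)}}{\wkm} \le \omgC\dwn \SqrdNrm{\wkm - H_{k-2}^i}$. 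Summing gives $\frac{L^2\omgC\dwn}{N^2}\sum_\iN \SqrdNrm{\wkm - H_{k-2}^i} = \frac{L^2\omgC\dwn}{N}\cdot\frac1N\sum_\iN\SqrdNrm{\wkm - H_{k-2}^i}$, which is exactly the claimed bound with $\RandOrNot = N$.

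In the \MCM~case ($\RandOrNot = 1$), there is a single compression $\C\dwn$, so $\wkmhati - \wkm = \wkmhat - \wkm$ for all $i$ and the average over $i$ does nothing: the left side equals $\Expec{\SqrdNrm{A(\wkmhat-\wkm)}}{\wkm} \le L^2\omgC\dwn\SqrdNrm{\wkm - H_{k-2}}$. Writing this using the (formally identical) notation $\frac1N\sum_\iN\SqrdNrm{\wkm - H_{k-2}^i}$ with $H_{k-2}^i \equiv H_{k-2}$, as the paper sets up just before the lemma, yields the bound with $\RandOrNot = 1$. The only mild subtlety — and the main thing to get right — is the conditioning bookkeeping: one must argue that $\wkm - H_{k-2}^i$ is $\wkm$-measurable (so it can be pulled out of the conditional expectation and treated as a constant when applying \Cref{asu:expec_quantization_operator}), and that in the \RMCM~case the independence of the $\C_{\mathrm{dwn},i}$ genuinely kills the cross terms conditionally on $\wkm$; both hold by construction of the algorithm, but should be stated cleanly. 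Everything else is a one-line application of $\|A\|\le L$ and the compression variance bound.
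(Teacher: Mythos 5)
Your proposal is correct and follows essentially the same route as the paper's proof: split into the \MCM~and \RMCM~cases, use mutual independence of the $\C_{\mathrm{dwn},i}$ to reduce to a sum of per-worker second moments, then apply $L$-smoothness and the compression variance bound from \Cref{asu:expec_quantization_operator}. If anything, you are slightly more careful than the paper, which justifies the vanishing of the cross terms only by "independence" while you correctly note that one also needs the perturbations to be conditionally centered — which here relies on the unbiasedness of $\C\dwn$ \emph{and} on the gradient being affine in the quadratic case.
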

\begin{proof}
Let $k$ in $\N^*$, we apply smoothness (see \Cref{asu:smooth}), and then we upper bound the variance of the quantization operator with \Cref{asu:expec_quantization_operator}. But we must distinguish \MCM~and \RMCM~because in the first case we have $\wkmhati$ equal to $\wkmhat$ for all $i$ in $\llbracket 1, N \rrbracket$.

In the \MCM-case: 
\begin{align*}
    \Expec{\SqrdNrm{\frac{1}{N} \sum_\iN \nabla F (\wkmhati) - \nabla F(\wkm)}}{\wkm} &= \Expec{\nabla F(\wkmhat) - F(\wkm)}{\wkm} \\
    &\leq L^2 \Expec{\SqrdNrm{\wkmhat - \wkm}}{\wkm} \\
    &\leq L^2 \omgC\dwn \SqrdNrm{\Omega_{k-1}} \\
    &\leq L^2 \omgC\dwn \wkmHSqrd \,,
\end{align*}
because we consider that $\SqrdNrm{\Omega_{k-1}} = \SqrdNrm{w_{k-1} - H_{k-2}} = \wkmHSqrd$. 

In the \RMCM-case, by independence of the compressions on the downlink direction:
\begin{align*}
    \Expec{\SqrdNrm{\frac{1}{N} \sum_\iN \nabla F (\wkmhati) - \nabla F(\wkm)}}{\wkm} &=  \frac{1}{N^2} \sum_\iN\Expec{\SqrdNrm{ \nabla F (\wkmhati) - \nabla F(\wkm)}}{\wkm} \\
    &\leq \ffrac{L^2}{N^2}  \sum_\iN \SqrdNrm{\wkmhati - \wkm} \\
    &\leq \ffrac{L^2 \omgC\dwn}{N} \times \ffrac{1}{N} \sum_\iN \SqrdNrm{\wkm - H_{k-2}^i} \\
    &\leq \frac{L^2 \omgC\dwn}{N} \wkmHSqrd \,.
\end{align*}
    	
We factorize the two results and define $\RandOrNot$ such that $\RandOrNot = 1$ in the \MCM-case and $\RandOrNot=N$ in the \RMCM-case, and the result follows.
\begin{align*}
    \Expec{\SqrdNrm{\frac{1}{N} \sum_\iN \nabla F (\wkmhati) - \nabla F(\wkm)}}{\wkm}  \leq \frac{L^2 \omgC\dwn}{\RandOrNot} \wkmHSqrd \,.
\end{align*}
\end{proof}

The next lemma replaces \Cref{app:lem:grad_sto_to_grad} in the context of randomization and quadratic functions. Note that the conditioning in \Cref{app:lem:grad_sto_to_grad} is w.r.t.~to $\wkmhat$ while here we take the expectation w.r.t.~$\wkm$. This is because we remove $\wkmhat$ from the gradient and give a result which depends of $\SqrdNrm{\nabla F(\wkm)}$ instead of $\SqrdNrm{\nabla F(\wkmhat)}$. This is made possible by the fact that for all $k$ in $\N$, for quadratic functions, we have $\FullExpec{\nabla F(\wkmhat)} = \nabla F(\wkm)$. 
\begin{lemma}[Squared-norm of stochastic gradients]
\label{app:lem:quad:grad_sto_grad}
For any $k$ in $\N^*$, the squared-norm of gradients can be bounded a.s.:
\begin{align}
\label{app:eq:quad:sqrd_norm_diff_sto_and_full}
 \begin{split}
    \Expec{\SqrdNrm{\ffrac{1}{N} \sum_\iN \gwkiHATRdmizd - \nabla F(\wkmhati)}}{\wkm} &\leq \ffrac{\omgC\up}{N} \SqrdNrm{\nabla F (\wkm)} + \ffrac{\sigma^2(1 + \omgC\up)}{Nb}\\
    &\quad+\ffrac{\omgC\up \omgC\dwn L^2}{N} \wkmHSqrd \,,
\end{split}
\end{align}
\begin{align}
 \label{app:eq:quad:sqrd_norm_sto}
 \begin{split}
    \Expec{\SqrdNrm{\gwkHAT}}{\wkm} &\leq \bigpar{1 + \ffrac{\omgC\up}{N}} \SqrdNrm{\nabla F(\wkm)} + \ffrac{\sigma^2 (1 + \omgC\up)}{Nb} \\
    &\qquad+  L^2 \omgC\dwn \bigpar{\ffrac{1}{\RandOrNot} + \ffrac{\omgC\up}{N}} \wkmHSqrd \,.
\end{split}
\end{align}
\end{lemma}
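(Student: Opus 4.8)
The plan is to prove both bounds by repeated conditioning, exploiting the order in which randomness is revealed at step $k$: given $\wkm$, the downlink operators produce the perturbed models $(\wkmhati)_{i=1}^N$; then, independently across workers, the stochastic oracles produce $(\gwkihatRdmizd)_{i=1}^N$; then, again independently across workers and independently of the oracles, the uplink operators produce $(\gwkiHATRdmizd)_{i=1}^N$. The quadratic assumption enters through the single identity $\Expec{\nabla F(\wkmhati)}{\wkm} = \nabla F(\wkm)$, which follows from $\nabla F(x) = A(x-w_*)$ together with $\Expec{\wkmhati}{\wkm} = \wkm$ (the latter because $\wkmhati = H_{k-2}^i + \C_{\mathrm{dwn},i}(\wkm - H_{k-2}^i)$ and $\C_{\mathrm{dwn},i}$ is unbiased). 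This is exactly what lets us anchor every term at the global model $\wkm$ rather than at the perturbed points, in contrast with \Cref{app:lem:grad_sto_to_grad}.

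For \eqref{app:eq:quad:sqrd_norm_diff_sto_and_full}, I would split each summand as $\gwkiHATRdmizd - \nabla F(\wkmhati) = \bigpar{\gwkiHATRdmizd - \gwkihatRdmizd} + \bigpar{\gwkihatRdmizd - \nabla F(\wkmhati)}$. Conditioning first on $(\wkmhati,\gwkihatRdmizd)_{i=1}^N$ kills the cross term between the two brackets (the first has conditional mean $0$), and conditioning on $(\wkmhati)_{i=1}^N$ kills the cross terms between distinct workers inside each bracket (independence of the uplink operators, resp. of the oracles, across $i$). The left-hand side thus reduces to $\tfrac1{N^2}\sum_\iN\bigl(\Expec{\SqrdNrm{\gwkiHATRdmizd - \gwkihatRdmizd}}{\wkm} + \Expec{\SqrdNrm{\gwkihatRdmizd - \nabla F(\wkmhati)}}{\wkm}\bigr)$. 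The first term is at most $\omgC\up\,\Expec{\SqrdNrm{\gwkihatRdmizd}}{\wkm}$ by \Cref{asu:expec_quantization_operator}, and the second at most $\sigma^2/b$ by \Cref{asu:noise_sto_grad}. Expanding $\Expec{\SqrdNrm{\gwkihatRdmizd}}{\wkm} \le \Expec{\SqrdNrm{\nabla F(\wkmhati)}}{\wkm} + \sigma^2/b$ (again \Cref{asu:noise_sto_grad}), then $\Expec{\SqrdNrm{\nabla F(\wkmhati)}}{\wkm} = \SqrdNrm{\nabla F(\wkm)} + \Expec{\SqrdNrm{\nabla F(\wkmhati) - \nabla F(\wkm)}}{\wkm}$ by the quadratic identity, and finally $\Expec{\SqrdNrm{\nabla F(\wkmhati) - \nabla F(\wkm)}}{\wkm} \le L^2\Expec{\SqrdNrm{\wkmhati - \wkm}}{\wkm} \le L^2\omgC\dwn\SqrdNrm{\wkm - H_{k-2}^i}$ by \Cref{asu:smooth} and \Cref{asu:expec_quantization_operator}, summing over $i$ and collecting $\tfrac1N\sum_\iN\SqrdNrm{\wkm - H_{k-2}^i}$ into $\wkmHSqrd$ produces exactly \eqref{app:eq:quad:sqrd_norm_diff_sto_and_full}. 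This argument never uses independence of the $\wkmhati$ across $i$, so it holds verbatim for both \MCM~and \RMCM.

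For \eqref{app:eq:quad:sqrd_norm_sto}, I would write $\gwkHAT = \bigpar{\gwkHAT - \tfrac1N\sum_\iN \nabla F(\wkmhati)} + \bigpar{\tfrac1N\sum_\iN\nabla F(\wkmhati) - \nabla F(\wkm)} + \nabla F(\wkm)$ and note that, conditionally on $\wkm$, the three pieces are pairwise orthogonal: the first has conditional mean $0$ given $(\wkmhati)_{i=1}^N$, hence is orthogonal to the other two (which are $\sigma(\wkm,(\wkmhati)_i)$-measurable), and the second has conditional mean $0$ given $\wkm$ by the quadratic identity, hence is orthogonal to $\nabla F(\wkm)$. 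Therefore $\Expec{\SqrdNrm{\gwkHAT}}{\wkm}$ is the sum of the three conditional second moments: the first is precisely the left-hand side of \eqref{app:eq:quad:sqrd_norm_diff_sto_and_full}; the second is at most $\tfrac{L^2\omgC\dwn}{\RandOrNot}\wkmHSqrd$ by \Cref{app:lem:rand_or_not} (the only place the value of $\RandOrNot$, i.e. the \MCM/\RMCM~distinction, intervenes); the third is $\SqrdNrm{\nabla F(\wkm)}$. Adding the three and grouping the $\wkmHSqrd$-coefficients into $L^2\omgC\dwn\bigpar{\tfrac1{\RandOrNot} + \tfrac{\omgC\up}{N}}$ gives \eqref{app:eq:quad:sqrd_norm_sto}.

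The main obstacle is not any single estimate — each is a one-line invocation of an earlier assumption or lemma — but the bookkeeping of the conditioning order and of the independence structure across the three layers of randomness and across the $N$ workers, so that every cross term is discarded legitimately; and, relatedly, ensuring that the quadratic identity is the \emph{only} place curvature is used, which is what keeps everything anchored at $\wkm$ with no spurious factor of $2$.
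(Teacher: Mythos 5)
Your proposal is correct and follows essentially the same route as the paper's proof: the same compression-error/stochastic-noise split with vanishing cross terms for \eqref{app:eq:quad:sqrd_norm_diff_sto_and_full}, the same use of the quadratic identity $\Expec{\nabla F(\wkmhati)}{\wkm}=\nabla F(\wkm)$ to anchor everything at $\wkm$, the same smoothness-plus-compression bound yielding $L^2\omgC\dwn\,\wkmHSqrd$, and the same orthogonal decomposition around $\tfrac1N\sum_\iN\nabla F(\wkmhati)$ and $\nabla F(\wkm)$ with \Cref{app:lem:rand_or_not} supplying the $1/\RandOrNot$ factor for \eqref{app:eq:quad:sqrd_norm_sto}. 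The only cosmetic difference is that the paper peels off $\nabla F(\wkm)$ first and then splits the remainder in two, whereas you present the three-term orthogonality at once; the resulting bounds are identical.
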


The demonstration will be in two stages. We first show \cref{app:eq:quad:sqrd_norm_diff_sto_and_full}, and in a second time, we show \cref{app:eq:quad:sqrd_norm_sto}.

\begin{proof}
Let $k$ in $\N^*$.

\paragraph{First part (\cref{app:eq:quad:sqrd_norm_diff_sto_and_full}).} We can decompose the squared-norm in two terms:
\begin{align*}
    &\Expec{\SqrdNrm{\ffrac{1}{N} \sum_\iN \bigpar{\gwkiHATRdmizd - \nabla F(\wkmhati)}}}{\wkm} \\
    &\qquad= \Expec{\SqrdNrm{\ffrac{1}{N} \sum_\iN \bigpar{\gwkiHATRdmizd - \gwkihatRdmizd }}}{\wkm} \\
    &\qquad+ \Expec{\SqrdNrm{\ffrac{1}{N} \sum_\iN \bigpar{\gwkihatRdmizd - \nabla F(\wkmhati)}}}{\wkm} \,,
\end{align*}

the first term is bounded by \Cref{asu:expec_quantization_operator} and the last term by \Cref{asu:noise_sto_grad}:
\begin{align*}
    &\Expec{\SqrdNrm{\ffrac{1}{N} \sum_\iN \bigpar{\gwkiHATRdmizd - \nabla F(\wkmhati)}}}{\wkm} \\
    &\qquad\leq \ffrac{\omgC\up}{N^2} \sum_\iN \Expec{\SqrdNrm{\gwkihatRdmizd }}{\wkm} + \ffrac{\sigma^2}{Nb} \\
    &\qquad\leq \ffrac{\omgC\up}{N^2} \sum_\iN \Expec{\SqrdNrm{\gwkihatRdmizd - \nabla F (\wkmhati)}}{\wkm} \\
    &\qqquad + \ffrac{\omgC\up}{N^2} \sum_\iN \Expec{\SqrdNrm{\nabla F (\wkmhati)}}{\wkm} +\ffrac{\sigma^2}{Nb} \,.
\end{align*}    
    
And again applying \Cref{asu:noise_sto_grad} on $\Expec{\SqrdNrm{\gwkihatRdmizd - \nabla F (\wkmhati)}}{\wkm}$ for $i$ in $\{1, \cdots N\}$:
\begin{align*}
    \Expec{\SqrdNrm{\ffrac{1}{N} \sum_\iN \bigpar{\gwkiHATRdmizd - \nabla F(\wkmhati)}}}{\wkm} &= \ffrac{\omgC\up}{N^2} \sum_\iN \Expec{\SqrdNrm{\nabla F (\wkmhati)}}{\wkm} \\
    &\qquad+\ffrac{\sigma^2(1 + \omgC\up)}{Nb} \,.
\end{align*}

Now, we have:
\begin{align*}
    \ffrac{\omgC\up}{N^2} \sum_\iN \Expec{\SqrdNrm{\nabla F (\wkmhati)}}{\wkm} &= \ffrac{\omgC\up}{N^2} \sum_\iN \Expec{\SqrdNrm{\nabla F (\wkmhati) - \nabla F (\wkm)}}{\wkm} \\
    &\qquad + \ffrac{\omgC\up}{N^2} \sum_\iN \Expec{\SqrdNrm{\nabla F (\wkm)}}{\wkm} \,,
\end{align*}

using smoothness (\Cref{asu:smooth}) gives:
\begin{align*}
    \ffrac{\omgC\up}{N^2} \sum_\iN \Expec{\SqrdNrm{\nabla F (\wkmhati)}}{\wkm} &= \ffrac{\omgC\up \omgC\dwn L^2}{N} \wkmHSqrd + \ffrac{\omgC\up}{N} \SqrdNrm{\nabla F (\wkm)} \,,
\end{align*}

and putting everythings together allows to conclude for \cref{app:eq:quad:sqrd_norm_diff_sto_and_full}.

\paragraph{Second part (\cref{app:eq:quad:sqrd_norm_sto}).} We start by introducing $\SqrdNrm{\nabla F(\wkm)}$:
\begin{align*}
    \Expec{\SqrdNrm{\ffrac{1}{N} \sum_\iN \gwkiHATRdmizd}}{\wkm} &= \Expec{\SqrdNrm{\ffrac{1}{N} \sum_\iN \gwkiHATRdmizd - \nabla F(\wkm)}}{\wkm} \\
    &\qquad+ \SqrdNrm{\nabla F(\wkm)} \\
    &= \Expec{\SqrdNrm{\ffrac{1}{N} \sum_\iN \gwkiHATRdmizd - \nabla F(\wkmhati)}}{\wkm} \\
    &\qquad+ \Expec{\SqrdNrm{\ffrac{1}{N} \sum_\iN \nabla F(\wkmhati) - \nabla F(\wkm)}}{\wkm} \\
    &\qqquad+ \SqrdNrm{\nabla F(\wkm)} \,.
\end{align*}

The second term of the previous line is controlled by \Cref{app:lem:rand_or_not} which distinguish the \MCM~and \RMCM-cases by defining a constant $\RandOrNot$ such that $\RandOrNot = 1$ for \MCM~and $\RandOrNot=N$ for \RMCM:
\begin{align*}
    \Expec{\SqrdNrm{\frac{1}{N} \sum_\iN \nabla F (\wkmhati) - \nabla F(\wkm)}}{\wkm}  \leq \frac{L^2 \omgC\dwn}{\RandOrNot} \wkmHSqrd \,.
\end{align*}

Thus, we have:
\begin{align*}
    \Expec{\SqrdNrm{\ffrac{1}{N} \sum_\iN \gwkiHATRdmizd}}{\wkm} &= \Expec{\SqrdNrm{\ffrac{1}{N} \sum_\iN \gwkiHATRdmizd - \nabla F(\wkmhati)}}{\wkm} \\
    &\qquad+ \ffrac{\omgC\dwn L^2}{\RandOrNot} \wkmHSqrd + \SqrdNrm{\nabla F(\wkm)} \,,
\end{align*}

and \cref{app:eq:quad:sqrd_norm_diff_sto_and_full} allows to conclude.
\end{proof}

\subsection{Control of the Variance of the local model for quadratic function (both \MCM~and \RMCM)}
\label{app:subsec:proof_prop_rmcm_assumption}

The next theorem replaces the \Cref{thm:contraction_mcm} in the case of quadratic functions. The results are almost identical except that in these settings we control the variance using non-degraded points $(w_{t})_{t \in \N}$. This is necessary because,  for quadratic functions, the analysis is slightly different. Previously, we upper-bounded the inner product in the decomposition (\cref{eq:decomp}) by a ``strong contraction'' that was allowing to subtract $\SqrdNrm{\nabla F(\wkmhat)}$ and an extra residual term. Here we instead directly get a smaller contraction proportional to $\SqrdNrm{\nabla F(\wkm)}$ (but without any residual!). Indeed  for all $k$ in $\N$, we have $\FullExpec{\nabla F(\wkmhat)} = \nabla F(\wkm)$. This difference will appear in  \Cref{app:subsec:proof_quadratic}. 

As a consequence, we need to also control the variance of the local iterates that will appear when expanding the expected squared gradient $\E{\SqrdNrm{\tilde g_k}}$ by an affine function of the squared norms of the gradients \textbf{at the non perturbed points}. This is what \Cref{app:thm:contraction_rmcm_quadratic} provides.

\fbox{
\begin{minipage}{\textwidth}
\begin{theorem}
\label{app:thm:contraction_rmcm_quadratic}
Consider the \MCM~update as in \cref{eq:model_compression_eq_statement} or the \RMCM~update as described in \Cref{sec:randomization_def}. Under \Cref{asu:cvx_or_strongcvx,asu:expec_quantization_operator,asu:smooth,asu:noise_sto_grad} with $\mu=0$,  if $\gamma \leq \ffrac{1}{8 L \omgC\dwn \sqrt{(1/\RandOrNot + \omgC\up/N)}}$ and $\alpha\dwn \leq 1/(8 \omgC\dwn)$, then for all $k$ in $\N$:
\begin{align*}
    &\ffrac{1}{N} \sum_\iN \Expec{\SqrdNrm{w_k - H_{k-1}^i}}{\wkm} \\
    &\qquad\leq 2 \gamma^2 \bigpar{\frac{1}{\alpha\dwn} + \ffrac{\omgC\up}{N}} \sum_{t=1}^{k} (1 - \ffrac{\alpha\dwn}{2})^{k-t} \Expec{\SqrdNrm{ \nabla F(w_{t-1})}}{w_{t-1}} \\
    &\qqquad+ \ffrac{4\gamma^2 \sigma^2(1+\omgC\up)}{\alpha\dwn N b} \,.
\end{align*}
\end{theorem}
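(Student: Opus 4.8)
The plan is to follow the proof of \Cref{app:thm:contraction_mcm} almost line for line, with two modifications: track the $N$ downlink memories $(H_{k-1}^i)_{\iN}$ rather than a single one — so the controlled quantity is $\widetilde{\dwnMemTerm}_k := \tfrac1N\sum_\iN \SqrdNrm{w_k - H_{k-1}^i}$, which reduces to $\dwnMemTerm_k$ for \MCM\ since there all $H_k^i$ coincide — and exploit the quadratic structure of $F$. The crucial simplification is that, $\nabla F$ being affine and each local model obeying $\Expec{\wkmhati}{\wkm}=\wkm$, one has $\Expec{\gwkHAT}{\wkm}=\nabla F(\wkm)$ for \MCM, \RMCMG\ and \RMCM\ alike. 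Unlike in \Cref{app:thm:contraction_mcm}, the conditional ``bias'' of $w_k - H_{k-1}^i$ can then be written directly through $\SqrdNrm{\nabla F(\wkm)}$ rather than $\SqrdNrm{\nabla F(\wkmhat)}$, which is precisely what lets the bound be stated at the non-perturbed iterates $(w_{t-1})_t$ and also forces the final inequality to be an unrolled geometric sum (no Lyapunov folding).

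First I would, for fixed $k$ in $\N^*$ and $i$ in $\llbracket1,N\rrbracket$, write $w_k - H_{k-1}^i = \wkm - \gamma\gwkHAT - H_{k-2}^i - \C_{\mathrm{dwn},i}(\wkm - H_{k-2}^i)\,\alpha\dwn$ and split it into its conditional mean and the deviation from it. By the affine-gradient identity and the computation of \Cref{app:lem:w_k_minus_H_k} (done per worker), the conditional mean is $(1-\alpha\dwn)(\wkm - H_{k-2}^i) - \gamma\nabla F(\wkm)$; applying \cref{app:basic_ineq:trick_bound_norm} with parameter $\alpha\dwn$ and $(1-\alpha\dwn)^2(1+\alpha\dwn)\le 1-\alpha\dwn$ bounds its squared norm by $(1-\alpha\dwn)\SqrdNrm{\wkm - H_{k-2}^i} + \gamma^2\bigpar{1+\tfrac1{\alpha\dwn}}\SqrdNrm{\nabla F(\wkm)}$. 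For the deviation, equal to $-\gamma\bigpar{\gwkHAT - \nabla F(\wkm)} - \alpha\dwn\bigpar{\C_{\mathrm{dwn},i}(\wkm - H_{k-2}^i) - (\wkm - H_{k-2}^i)}$, I would use only the crude bound \cref{app:basic_ineq:nrm_sum} — the $i$-th downlink compression appears both inside $\gwkHAT$ through $\wkmhati$ and in the memory increment, so the two pieces are correlated and cannot be separated by a variance identity — together with \Cref{asu:expec_quantization_operator}, yielding $2\gamma^2\Expec{\SqrdNrm{\gwkHAT - \nabla F(\wkm)}}{\wkm} + 2\alpha\dwn^2\omgC\dwn\SqrdNrm{\wkm - H_{k-2}^i}$.

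Next I would insert the gradient estimate. Since $\Expec{\gwkHAT}{\wkm}=\nabla F(\wkm)$, we have $\Expec{\SqrdNrm{\gwkHAT - \nabla F(\wkm)}}{\wkm} = \Expec{\SqrdNrm{\gwkHAT}}{\wkm} - \SqrdNrm{\nabla F(\wkm)}$, and \cref{app:eq:quad:sqrd_norm_sto} of \Cref{app:lem:quad:grad_sto_grad} — which already carries the $1/\RandOrNot$ improvement coming from \Cref{app:lem:rand_or_not}, with $\RandOrNot=1$ for \MCM, $\RandOrNot=G$ for \RMCMG, $\RandOrNot=N$ for \RMCM\ — bounds it by $\tfrac{\omgC\up}{N}\SqrdNrm{\nabla F(\wkm)} + \tfrac{\sigma^2(1+\omgC\up)}{Nb} + L^2\omgC\dwn\bigpar{\tfrac1{\RandOrNot}+\tfrac{\omgC\up}{N}}\wkmHSqrd$. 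Summing bias and deviation, averaging over $i$ (using $\wkmHSqrd = \widetilde{\dwnMemTerm}_{k-1}$), and collecting terms gives the one-step recursion $\Expec{\widetilde{\dwnMemTerm}_k}{\wkm}\le \bigpar{1-\alpha\dwn + 2\gamma^2 L^2\omgC\dwn\bigpar{\tfrac1{\RandOrNot}+\tfrac{\omgC\up}{N}} + 2\alpha\dwn^2\omgC\dwn}\widetilde{\dwnMemTerm}_{k-1} + \gamma^2\bigpar{1+\tfrac1{\alpha\dwn}+\tfrac{2\omgC\up}{N}}\SqrdNrm{\nabla F(\wkm)} + \tfrac{2\gamma^2\sigma^2(1+\omgC\up)}{Nb}$. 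Choosing $\alpha\dwn=(8\omgC\dwn)^{-1}$ makes $2\alpha\dwn^2\omgC\dwn\le\alpha\dwn/4$, the hypothesis $\gamma L\le\bigpar{8\omgC\dwn\sqrt{1/\RandOrNot+\omgC\up/N}}^{-1}$ makes $2\gamma^2 L^2\omgC\dwn(1/\RandOrNot+\omgC\up/N)\le\alpha\dwn/4$, and $1+1/\alpha\dwn\le 2/\alpha\dwn$ (valid since $\alpha\dwn\le1$); the contraction factor thus collapses to $1-\alpha\dwn/2$ and the gradient coefficient to $2\gamma^2(1/\alpha\dwn+\omgC\up/N)$. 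Finally I would unroll from $\widetilde{\dwnMemTerm}_0=0$ (initialisation $H_{-1}^i=w_0$), keeping the gradient contributions as $2\gamma^2\bigpar{\tfrac1{\alpha\dwn}+\tfrac{\omgC\up}{N}}\sum_{t=1}^k(1-\tfrac{\alpha\dwn}2)^{k-t}\Expec{\SqrdNrm{\nabla F(w_{t-1})}}{w_{t-1}}$ and bounding the constant noise contribution with $\sum_{t\ge1}(1-\tfrac{\alpha\dwn}2)^{k-t}\le 2/\alpha\dwn$ to obtain $\tfrac{4\gamma^2\sigma^2(1+\omgC\up)}{\alpha\dwn Nb}$, which is exactly the asserted inequality.

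The main obstacle is not any single estimate but the bookkeeping: verifying that the three contributions to the coefficient of $\widetilde{\dwnMemTerm}_{k-1}$ telescope precisely to $1-\alpha\dwn/2$ under the stated $(\gamma,\alpha\dwn)$ constraints, and that the $1/\RandOrNot$ gain is channelled cleanly through \Cref{app:lem:quad:grad_sto_grad,app:lem:rand_or_not} rather than being lost to a slack inequality. A secondary care point, handled exactly as in \Cref{app:thm:contraction_mcm}, is the conditioning: the nested conditional expectations in the statement are to be read via the tower property, and one must keep in mind that it is the affine-gradient identity $\Expec{\gwkHAT}{\wkm}=\nabla F(\wkm)$ that turns the per-worker independence of the downlink compressions of \RMCM\ into the $1/N$ variance reduction rather than a mere cancellation.
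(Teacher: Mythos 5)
Your proposal is correct and follows essentially the same route as the paper's proof: the same bias--variance decomposition per worker, the same crude $\SqrdNrm{a+b}\le 2\SqrdNrm{a}+2\SqrdNrm{b}$ split of the deviation, the same use of the quadratic identity $\Expec{\gwkHAT}{\wkm}=\nabla F(\wkm)$ to center at the non-perturbed iterate, the same channelling of the $1/\RandOrNot$ gain through \Cref{app:lem:rand_or_not,app:lem:quad:grad_sto_grad}, and the same unrolling of the $(1-\alpha\dwn/2)$ recursion from $H_{-1}^i=w_0$. The only cosmetic difference is that you invoke \cref{app:eq:quad:sqrd_norm_sto} and subtract $\SqrdNrm{\nabla F(\wkm)}$ where the paper uses \cref{app:eq:quad:sqrd_norm_diff_sto_and_full} plus \Cref{app:lem:rand_or_not} directly; these are trivially equivalent.
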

\end{minipage}
}

\begin{proof}
Let $k$ in $\N^*$ and $i$ in $\{1, \dots N\}$, from \Cref{app:thm:contraction_mcm} we have:
\begin{align*}
    \Expec{\SqrdNrm{w_k - H_{k-1}^i}}{\wkm} = \text{Var} + \text{Bias}^2 = 2\gamma^2 \text{Var}_{1} + 2 \alpha\dwn^2 \text{Var}_2 + \text{Bias}^2 \,,
\end{align*}

with
\[
\left\{
    \begin{array}{ll}
        \text{Var}_1 &= \Expec{ \SqrdNrm{\ffrac{1}{N} \sum_\iN \gwkiHATRdmizd + \Expec{\nabla F(\wkmhati)}{\wkm}}}{\wkm} \\
    	\text{Var}_2 &= \omgC\dwn \wkmHSqrd \\
    	\text{Bias}^2 &= \SqrdNrm{\Expec{w_k - H_{k-1}}{w_{k-1}}} \,.
    \end{array}
\right.
\]

Recall that in the case of quadratic functions, we have for all $i$ in $\llbracket 1, N \rrbracket$: $\Expec{\nabla F(\wkmhati)}{\wkm} = \nabla F(\wkm)$. And so for the first term of variance we can decompose as following:
\begin{align*}
    \text{Var}_1 &= \Expec{ \SqrdNrm{\frac{1}{N} \sum_\iN  \gwkiHATRdmizd - \Expec{\nabla F(\wkmhati)}{\wkm}}}{\wkm} \\
    &= \Expec{ \SqrdNrm{\frac{1}{N} \sum_\iN  \gwkiHATRdmizd - \nabla F(\wkm)}}{\wkm} \\
    &= \Expec{\SqrdNrm{\frac{1}{N} \sum_\iN  \gwkiHATRdmizd - \nabla F (\wkmhati)}}{\wkm} \\
    &\qquad+ \Expec{\SqrdNrm{\frac{1}{N} \sum_\iN \nabla F (\wkmhati) - \nabla F(\wkm)}}{\wkm} \,.
\end{align*}

The first part is handled by \cref{app:eq:quad:sqrd_norm_diff_sto_and_full} of \Cref{app:lem:quad:grad_sto_grad}:
\begin{align*}
    \Expec{\SqrdNrm{\frac{1}{N} \sum_\iN  \gwkiHATRdmizd - \nabla F (\wkmhati)}}{\wkm}  &= \ffrac{\omgC\up \omgC\dwn L^2}{N} \wkmHSqrd \\
    &\qquad+ \ffrac{\omgC\up}{N} \SqrdNrm{\nabla F (\wkm)} \\
    &\qquad+\ffrac{\sigma^2(1 + \omgC\up)}{Nb} \,,
\end{align*}
        
and the second part is tackled by \Cref{app:lem:rand_or_not} where is defined a constant $\RandOrNot$ such that $\RandOrNot = 1$ in the \MCM-case, and $\RandOrNot=N$ in the \RMCM-case: $
    \Expec{\SqrdNrm{\frac{1}{N} \sum_\iN \nabla F (\wkmhati) - \nabla F(\wkm)}}{\wkm}  \leq \frac{L^2 \omgC\dwn}{\RandOrNot} \wkmHSqrd$.
        
Finally, given that $\text{Var} = 2\gamma^2 \text{Var}_{1} + 2 \alpha\dwn^2 \text{Var}_2 $ we have:
\begin{align*}
    \text{Var} &\leq 2 \gamma^2 L^2 \omgC\dwn \bigpar{\ffrac{1}{\RandOrNot} + \ffrac{\omgC\up}{N}} \wkmHSqrd + 2 \alpha\dwn^2 \omgC\dwn \SqrdNrm{\wkm - H_{k-2}^i} \\
    &\qquad+ \ffrac{2\gamma^2 \omgC\up}{N} \SqrdNrm{\nabla F (\wkm)} +\ffrac{2\gamma^2 \sigma^2(1 + \omgC\up)}{Nb}\,.
\end{align*}

Now we focus on the squared bias $\text{Bias}^2$ exactly like in  \Cref{app:thm:contraction_mcm} and we obtain:
\begin{align*}
    \text{Bias}^2 \le (1 - \alpha\dwn) \SqrdNrm{\wkm - H_{k-2}^i} + \gamma^2 (1 + \frac{1}{\alpha\dwn}) \SqrdNrm{\nabla F(\wkm)} \,.
\end{align*}

At the end:
\begin{align*}
     \Expec{\SqrdNrm{w_k - H_{k-1}^i}}{\wkm} &\leq 2 \gamma^2 L^2 \omgC\dwn \bigpar{\ffrac{1}{\RandOrNot} + \ffrac{\omgC\up}{N}} \wkmHSqrd \\
     &\qquad+ \gamma^2 ( 1 + \ffrac{1}{\alpha\dwn} + \ffrac{2 \omgC\up}{N}) \SqrdNrm{\nabla F (\wkm)} \\
    &\qquad+ \bigpar{ (1 - \alpha\dwn) + 2 \alpha\dwn^2 \omgC\dwn} \SqrdNrm{\wkm - H_{k-2}^i} \\
    &\qquad+\ffrac{2\gamma^2 \sigma^2(1 + \omgC\up)}{Nb} \,.
\end{align*}

Summing this last equation over the $N$ devices gives:
\begin{align*}
    &\ffrac{1}{N} \sum_\iN \Expec{\SqrdNrm{w_k - H_{k-1}^i}}{\wkm} \\
    &\qquad\leq \bigpar{1 - \alpha\dwn + 2\alpha\dwn^2 \omgC\dwn +  \gamma^2 L^2 \omgC\dwn \bigpar{\ffrac{1}{\RandOrNot} + \ffrac{\omgC\up}{N}}} \wkmHSqrd \\
    &\qqquad+ \gamma^2 ( 1 + \ffrac{1}{\alpha\dwn} + \ffrac{2 \omgC\up}{N}) \SqrdNrm{\nabla F (\wkm)} \\
    &\qqquad +\ffrac{2\gamma^2 \sigma^2(1 + \omgC\up)}{Nb} \,.
\end{align*}

Exactly like in \Cref{app:thm:contraction_mcm}, we need and by taking $\alpha\dwn = 1/(8 \omgC\dwn)$:
\[
\left\{
    \begin{array}{ll}
        2\alpha\dwn^2 \omgC\dwn \leq \frac{1}{4} \alpha\dwn \Longleftrightarrow \alpha\dwn \leq \ffrac{1}{8 \omgC\dwn} \,, \\
    	2 \gamma^2 L^2 \omgC\dwn \bigpar{\ffrac{1}{\RandOrNot} + \ffrac{\omgC\up}{N}} \leq \frac{1}{4} \alpha\dwn = \ffrac{1}{32 \omgC\dwn} \Longleftrightarrow  \gamma \leq \ffrac{1}{8 L \omgC\dwn \sqrt{(1/\RandOrNot + \omgC\up/N)}} \,, \\
    	1 + \frac{1}{\alpha\dwn} \leq \frac{2}{\alpha\dwn} \text{~which is not restrictive.}
    \end{array}
\right.
\]

Thus, we can write:
\begin{align*}
    \ffrac{1}{N} \sum_\iN \Expec{\SqrdNrm{w_k - H_{k-1}^i}}{\wkm} &\leq \bigpar{1 - \ffrac{\alpha\dwn}{2}} \wkmHSqrd  \\
    &\quad + 2\gamma^2 ( \ffrac{1}{\alpha\dwn} + \ffrac{\omgC\up}{N}) \SqrdNrm{\nabla F (\wkm)} \\
    &\quad+\ffrac{2\gamma^2 \sigma^2(1 + \omgC\up)}{Nb} \,.
\end{align*}

Finally, we take the full expectation without any conditioning, we iterate over $k$ and compute the geometric sums:
\begin{align*}
    \ffrac{1}{N} \sum_\iN \FullExpec{\SqrdNrm{w_k - H_{k-1}^i}} &\leq (1- \frac{\alpha\dwn}{2})^k \SqrdNrm{w_0 - H_{-1}} + \ffrac{4\gamma^2 \sigma^2(1+\omgC\up)}{\alpha\dwn N b} \\
    &\qquad + 2 \gamma^2(\frac{1}{\alpha\dwn} + \ffrac{\omgC\up}{N}) \sum_{t=1}^{k} (1 - \ffrac{\alpha\dwn}{2})^{k-t} \FullExpec{\SqrdNrm{\nabla F(w_{t-1})}} \,.
\end{align*}

and the result follows.

\end{proof}

\subsection{Proof for quadratic function (Theorem 5)}
\label{app:subsec:proof_quadratic}

\fbox{
\begin{minipage}{\textwidth}
\begin{theorem}
\label{app:thm:rmcm_quad_guarantee}
Under \Cref{asu:cvx_or_strongcvx,asu:expec_quantization_operator,asu:smooth,asu:noise_sto_grad} with $\mu=0$, if the function is quadratic, for $\gamma= 1/(L\sqrt{K})$ and a given learning rate $\alpha\dwn = 1/(8 \omgC\dwn)$, after running $K$ iterations: 
\begin{equation*}
    \FullExpec{F(\bar{w}_K) - F_*} \leq \ffrac{\SqrdNrm{w_{0} - w_*}L}{\sqrt K} + \ffrac{\sigma^2 \Phi(\gamma)  }{NbL\sqrt{K}}  \,.
\end{equation*}
with $\Phi=(1 + \omgC\up)\left(1 +  32 \frac{\omgC\dwn^2}{\sqrt{K}} \times \frac{1}{\sqrt{K}} \left( \frac{1}{\RandOrNot} + \frac{\omgC\up}{N}\right) \right)$ and $ \RandOrNot = N$ for \RMCM, and 1 for \MCM.
\end{theorem}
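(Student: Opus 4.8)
The plan is to rerun the perturbed-iterate argument of \Cref{app:thm:cvgce_mcm_convex}, cashing in the one structural simplification a quadratic objective provides: since $\nabla F$ is affine and both the downlink compression and the stochastic oracle are unbiased, $\Expec{\wkmhati}{\wkm}=\wkm$ for every worker $i$, hence $\Expec{\nabla F(\wkmhati)}{\wkm}=\nabla F(\wkm)$ and therefore $\Expec{\gwkHAT}{\wkm}=\nabla F(\wkm)$ --- the average gradient at the perturbed points is an \emph{unbiased} estimate of $\nabla F(\wkm)$. First I would expand $\SqrdNrm{w_k-w_*}=\SqrdNrm{\wkm-w_*}-2\gamma\PdtScl{\gwkHAT}{\wkm-w_*}+\gamma^2\SqrdNrm{\gwkHAT}$ and take the conditional expectation given $\wkm$; the identity above turns the cross term into the clean $-2\gamma\PdtScl{\nabla F(\wkm)}{\wkm-w_*}$, with \emph{no perturbation residual} (contrary to the general convex proof, where this was the term needing Cauchy--Schwarz, smoothness and the variance bound). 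Convexity in the form of \cref{app:eq:convex} then bounds this by $-\gamma(F(\wkm)-F_*)-\frac{\gamma}{L}\SqrdNrm{\nabla F(\wkm)}$.

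Next I would bound $\Expec{\SqrdNrm{\gwkHAT}}{\wkm}$ using \cref{app:eq:quad:sqrd_norm_sto} of \Cref{app:lem:quad:grad_sto_grad}, producing $\bigpar{1+\frac{\omgC\up}{N}}\SqrdNrm{\nabla F(\wkm)}$, the irreducible noise $\frac{\sigma^2(1+\omgC\up)}{Nb}$, and the local-model-variance term $L^2\omgC\dwn\bigpar{\frac{1}{\RandOrNot}+\frac{\omgC\up}{N}}\cdot\frac1N\sum_\iN\SqrdNrm{\wkm-H_{k-2}^i}$; the factor $\frac{1}{\RandOrNot}$ is exactly where the randomization gain lives, and it traces back through \Cref{app:lem:rand_or_not} to the mutual independence of the per-worker downlink compressions (for \RMCM~that lemma collects a $\frac1{N^2}\sum$, giving $\RandOrNot=N$; the same computation across $G$ independent groups gives $\RandOrNot=G$ for \RMCMG; a single compression gives $\RandOrNot=1$ for \MCM). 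Taking full expectations and using $\gamma L(1+\omgC\up/N)\le\frac12$ (valid since $\gamma\le\gamma_{\max}$) to keep a clean $-\frac{\gamma}{2L}\SqrdNrm{\nabla F(\wkm)}$, I obtain
\begin{align*}
\FullExpec{\SqrdNrm{w_k-w_*}}&\le\FullExpec{\SqrdNrm{\wkm-w_*}}-\gamma\FullExpec{F(\wkm)-F_*}-\frac{\gamma}{2L}\FullExpec{\SqrdNrm{\nabla F(\wkm)}}\\
&\quad+\gamma^2L^2\omgC\dwn\bigpar{\tfrac{1}{\RandOrNot}+\tfrac{\omgC\up}{N}}\tfrac1N\sum_\iN\FullExpec{\SqrdNrm{\wkm-H_{k-2}^i}}+\frac{\gamma^2\sigma^2(1+\omgC\up)}{Nb}.
\end{align*}

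Then I would close the recursion with \Cref{app:thm:contraction_rmcm_quadratic}, which controls $\frac1N\sum_\iN\FullExpec{\SqrdNrm{\wkm-H_{k-2}^i}}$ by a geometric average of $\SqrdNrm{\nabla F(w_{t-1})}$ at the \emph{non-perturbed} iterates plus a $\sigma^2$ term --- and those non-perturbed gradients are precisely what the negative $-\frac{\gamma}{2L}\SqrdNrm{\nabla F(\wkm)}$ term can absorb. Two equivalent routes finish the job: either build the Lyapunov function $V_k=\FullExpec{\SqrdNrm{w_k-w_*}}+\frac{4\gamma^2L^2\omgC\dwn}{\alpha\dwn}\bigpar{\frac{1}{\RandOrNot}+\frac{\omgC\up}{N}}\FullExpec{\Upsilon_k}$ from the one-step form of the contraction, or plug the unrolled form in directly and invert the double sum $\sum_{k=1}^K\sum_{t=1}^k(1-\frac{\alpha\dwn}{2})^{k-t}(\cdot)$ using $\sum_{k\ge t}(1-\frac{\alpha\dwn}{2})^{k-t}\le\frac{2}{\alpha\dwn}$. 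Either way, summing over $k=1,\dots,K$, telescoping $\SqrdNrm{w_k-w_*}$ (note $V_0=\SqrdNrm{w_0-w_*}$ since $\Upsilon_0=0$ because $H_{-1}=w_0$), and keeping $\gamma\le\gamma_{\max}$ so that the positive gradient-norm residual --- which now carries an \emph{extra} $\gamma^2L^2$, the quadratic payoff --- is dominated by $\frac{\gamma}{2L}\sum_k\FullExpec{\SqrdNrm{\nabla F(\wkm)}}$, all gradient-norm terms cancel and one is left with $\gamma\sum_{k=1}^K\FullExpec{F(\wkm)-F_*}\le V_0+K\frac{\gamma^2\sigma^2}{Nb}\Phi^{\mathrm{Rd}}(\gamma)$ with $\Phi^{\mathrm{Rd}}(\gamma)=(1+\omgC\up)\bigl(1+\frac{4\gamma^2L^2\omgC\dwn}{\alpha\dwn}(\frac{1}{\RandOrNot}+\frac{\omgC\up}{N})\bigr)$. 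Dividing by $\gamma K$, applying Jensen to pass to $\bar w_K$, and finally substituting $\gamma=1/(L\sqrt K)$ and $\alpha\dwn=1/(8\omgC\dwn)$ yields $\FullExpec{F(\bar w_K)-F_*}\le\frac{\SqrdNrm{w_0-w_*}L}{\sqrt K}+\frac{\sigma^2\Phi(\gamma)}{NbL\sqrt K}$.

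The main obstacle is the constant-tracking in that last step, on two fronts. One must check that the local-model-variance term, once fed by \Cref{app:thm:contraction_rmcm_quadratic} and (double-)summed, is genuinely absorbed by the $\frac{\gamma}{2L}$-gain --- this is where the $\gamma_{\max}$ restriction (a $\gamma_{\max}^\Upsilon$-type bound coupled with $\alpha\dwn=1/(8\omgC\dwn)$) is needed --- and, more delicately, that the $\frac{1}{\RandOrNot}$ factor survives undiluted all the way into $\Phi^{\mathrm{Rd}}$, since that reduction (by a factor $N$ for \RMCM) is the entire point of randomization. The genuinely new work --- exploiting the independence of the downlink compressions and the affineness of $\nabla F$ --- is done upstream in \Cref{app:lem:rand_or_not,app:lem:quad:grad_sto_grad,app:thm:contraction_rmcm_quadratic}; granted those, the present argument is a controlled variation on the proof of \Cref{app:thm:cvgce_mcm_convex}.
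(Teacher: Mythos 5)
Your proposal is correct and follows essentially the same route as the paper's proof: the unbiasedness $\Expec{\gwkHAT}{\wkm}=\nabla F(\wkm)$ from quadraticity to get a clean cross term, \cref{app:eq:quad:sqrd_norm_sto} of \Cref{app:lem:quad:grad_sto_grad} for the second moment, \Cref{app:thm:contraction_rmcm_quadratic} for the memory term, and then the unrolled double-sum inversion with $\sum_{k\ge t}(1-\alpha\dwn/2)^{k-t}\le 2/\alpha\dwn$ (the paper takes exactly this second of your two proposed routes, calling it a ``top-down recurrence''), followed by telescoping, Jensen, and the substitution $\gamma=1/(L\sqrt K)$, $\alpha\dwn=1/(8\omgC\dwn)$. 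Your $\Phi^{\mathrm{Rd}}$ and the identification of where the $1/\RandOrNot$ gain enters match the paper's.
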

\end{minipage}
}

The structure of the proof is different from the one used in \Cref{app:sec:proofs_mcm,app:sec:proof_for_ghost}.

\begin{proof}
Let $k$ in $\N$, by definition:
\begin{align*}
    \SqrdNrm{w_k - w_*} \leq \SqrdNrm{\wkm - w_*} - 2\gamma \PdtScl{\gwkHAT}{\wkm - w_*} + \gamma^2 \SqrdNrm{\gwkHAT} \,.
\end{align*}

Because $F$ is quadratic, we have $\Expec{\nabla F(\wkmhat)}{\wkm} = \nabla F(\wkm)$, thus taking expectation gives:
\begin{align*}
    \Expec{\SqrdNrm{w_k - w_*}}{\wkm} \leq \SqrdNrm{\wkm - w_*} - 2\gamma \PdtScl{\nabla F(\wkm)}{\wkm - w_*} + \gamma^2 \Expec{\SqrdNrm{\gwkHAT}}{\wkm} \,.
\end{align*}

We can directly apply convexity with \cref{app:eq:convex} from \Cref{app:ineq_convex_demi_and_demi}:
\begin{align*}
    \Expec{\SqrdNrm{w_k - w_*}}{\wkm} &\leq \SqrdNrm{\wkm - w_*} - \gamma \bigpar{F(\wkm) - F(w_*) + \ffrac{1}{L} \SqrdNrm{\nabla F(\wkm)}} \\
    &\quad+ \gamma^2 \Expec{\SqrdNrm{\gwkHAT}}{\wkm} \,.
\end{align*}

Now, with \cref{app:eq:quad:sqrd_norm_sto} of \Cref{app:lem:quad:grad_sto_grad}:
\begin{align*}
    \Expec{\SqrdNrm{w_k - w_*}}{\wkm} &\leq \SqrdNrm{\wkm - w_*} - \gamma(F(\wkm) - F(w_*)) - \ffrac{\gamma}{L} \SqrdNrm{\nabla F(\wkm)} \\
    &\qquad +\gamma^2 \left( \bigpar{1 + \ffrac{\omgC\up}{N}} \SqrdNrm{\nabla F(\wkm)} \right. \\
    &\qqquad\quad+ L^2 \omgC\dwn \bigpar{\ffrac{1}{\RandOrNot} + \ffrac{\omgC\up}{N}} \wkmHSqrd \\
    &\qqquad\quad+ \left. \ffrac{\sigma^2 (1 + \omgC\up)}{Nb} \right) \,,
\end{align*}

which gives:
\begin{align*}
    \Expec{\SqrdNrm{w_k - w_*}}{\wkm}&\leq \SqrdNrm{\wkm - w_*} - \gamma(F(\wkm) - F(w_*)) - \ffrac{\gamma}{L} \SqrdNrm{\nabla F(\wkm)} \\
    &\qquad +\gamma^2 \bigpar{1 + \ffrac{\omgC\up}{N}} \SqrdNrm{\nabla F(\wkm)} \\
    &\qquad+ \gamma^2 L^2 \omgC\dwn \bigpar{\ffrac{1}{\RandOrNot} + \ffrac{\omgC\up}{N}} \wkmHSqrd \\
    &\qquad+ \ffrac{\sigma^2 \gamma^2 (1 + \omgC\up)}{Nb} \,.
\end{align*}

Taking full expectation, and because for all $i$ in $\{1, \cdots, N\}$, $\FullExpec{\SqrdNrm{\wkm - H_{k-2}^i}} = \FullExpec{\Expec{\SqrdNrm{\wkm - H_{k-2}^i}}{\widehat{w}_{k-2}}}$, we can use the inequality controlling $\wkmHSqrd$ (see \Cref{app:thm:contraction_rmcm_quadratic}):
\begin{align*}
    \FullExpec{\SqrdNrm{w_k - w_*}} &\leq \FullExpec{\SqrdNrm{\wkm - w_*}} - \gamma \FullExpec{F(\wkm) - F(w_*)} \\
    &\quad- \ffrac{\gamma}{L} \bigpar{1 - \gamma L\bigpar{1 + \ffrac{\omgC\up}{N}}} \FullExpec{\SqrdNrm{\nabla F(\wkm)}} \\
    &\quad+ \gamma^2 L^2 \omgC\dwn \bigpar{\ffrac{1}{\RandOrNot} + \ffrac{\omgC\up}{N}} \times 2 \gamma^2 \bigpar{\frac{1}{\alpha\dwn} + \ffrac{\omgC\up}{N}} \sum_{t=1}^{k} (1 - \ffrac{\alpha\dwn}{2})^{k-t} \FullExpec{\SqrdNrm{ \nabla F(w_{t-1})}} \\
    &\quad+ \ffrac{\sigma^2 \gamma^2 (1 + \omgC\up)}{Nb} + \gamma^2 L^2 \omgC\dwn \bigpar{\ffrac{1}{\RandOrNot} + \ffrac{\omgC\up}{N}} \times \ffrac{4 \sigma^2 \gamma^2 (1 + \omgC\up)}{\alpha\dwn Nb} \,.
\end{align*}

Next, we consider - as in previous proofs - that $\gamma L(1+\omgC\up/N) \le 1/2$, and thus $\ffrac{\gamma}{L} \bigpar{1 - \gamma L\bigpar{1 + \ffrac{\omgC\up}{N}}} \geq \ffrac{\gamma}{2}$. Next we carry out the ``top-down recurrence'':
\begin{align*}
    &\FullExpec{\SqrdNrm{w_k - w_*}} \leq \SqrdNrm{w_{0} - w_*} - \gamma \sum_{j=1}^k \FullExpec{F(w_{k-j}) - F(w_*)} \\
    &\qqquad - \frac{\gamma}{2L} \sum_{j=1}^{k} \FullExpec{\SqrdNrm{\nabla F( w_{k-j-1})}} \\
    &\qqquad + \sum_{j=1}^k  2 \gamma^4 L^2 \omgC\dwn \bigpar{\ffrac{1}{\RandOrNot} + \ffrac{\omgC\up}{N}} \bigpar{\ffrac{1}{\alpha\dwn} + \ffrac{\omgC\up}{N}} \sum_{t=1}^{k-j} \bigpar{1 - \ffrac{\alpha\dwn}{2}}^{k-j-t  } \FullExpec{\SqrdNrm{\nabla F(w_{t-1})}} \\
    &\qqquad+ \sum_{j=1}^k \ffrac{\gamma^2\sigma^2(1 + \omgC\up)}{Nb} \bigpar{1 + \ffrac{4\gamma^2 L^2 \omgC\dwn}{\alpha\dwn} \bigpar{\ffrac{1}{\RandOrNot} + \ffrac{\omgC\up }{N}}} \,.
\end{align*}

We invert the double-sum, it leads to:
\begin{align*}
    &\FullExpec{\SqrdNrm{w_k - w_*}} \leq \SqrdNrm{w_{0} - w_*} - \gamma \sum_{j=1}^k \FullExpec{F(w_{j-1}) - F(w_*)} \\
    &\qqquad - \frac{\gamma}{2L} \FullExpec{\SqrdNrm{\nabla F(\wkm)}} \\
    &\qquad+ \ffrac{2}{\alpha\dwn} \times 2 \gamma^4 L^2 \omgC\dwn \bigpar{\ffrac{1}{\RandOrNot} + \ffrac{\omgC\up}{N}} \bigpar{\ffrac{1}{\alpha\dwn} + \ffrac{\omgC\up}{N}} \FullExpec{\SqrdNrm{\nabla F(w_{-1})}} \\
    &\qqquad + \sum_{j=1}^{k-1} \bigpar{ \ffrac{2}{\alpha\dwn} \times 2 \gamma^4 L^2 \omgC\dwn \bigpar{\ffrac{1}{\RandOrNot} + \ffrac{\omgC\up}{N}} \bigpar{\ffrac{1}{\alpha\dwn} + \ffrac{\omgC\up}{N}} - \frac{\gamma}{2L} } \FullExpec{ \SqrdNrm{\nabla F(w_{j-1})}} \\
    &\qqquad+ \ffrac{\gamma^2\sigma^2(1 + \omgC\up)}{Nb} \bigpar{1 + \ffrac{4\gamma^2 L^2 \omgC\dwn}{\alpha\dwn} \bigpar{\ffrac{1}{\RandOrNot} + \ffrac{\omgC\up }{N}}} \times k \,.
\end{align*}

Now, we consider that $\ffrac{4\omgC\dwn\gamma^4 L^2}{\alpha\dwn} \bigpar{\ffrac{1}{\RandOrNot} + \ffrac{\omgC\up}{N}} \bigpar{\ffrac{1}{\alpha\dwn} + \ffrac{\omgC\up}{N}} < \ffrac{\gamma}{2L} $, thus we have:
\begin{align*}
    \frac{\gamma}{k} \sum_{t=1}^k \FullExpec{F(w_{t-1}) - F(w_*)} & \leq \ffrac{\SqrdNrm{w_{0} - w_*}}{k} + \ffrac{\gamma^2\sigma^2(1 + \omgC\up)}{Nb} \bigpar{1 + \ffrac{4\gamma^2 L^2 \omgC\dwn}{\alpha\dwn} \bigpar{\ffrac{1}{\RandOrNot} + \ffrac{\omgC\up }{N}}} \,.
\end{align*}

Finally, by Jensen, for any $K$ in $\N^*$, taking $\gamma$ such that: 
\[
    \gamma L \leq \min \left\{\ffrac{1}{8 \omgC\dwn \sqrt{\frac{1}{\RandOrNot} + \frac{\omgC\up}{N}}}, \ffrac{1}{2 \bigpar{ 1 + \ffrac{\omgC\up}{N}}}, \ffrac{1}{\sqrt[3]{  \ffrac{8 \omgC\dwn}{\alpha\dwn} \bigpar{ \ffrac{1}{\RandOrNot} + \ffrac{\omgC\up}{N}} \bigpar{ \ffrac{1}{\alpha\dwn} + \ffrac{\omgC\up}{N}}}} \right\}
\]

and with $\alpha\dwn \leq\ffrac{1}{8 \omgC\dwn}$, we recover \Cref{thm:rmcm_quad_guarantee}:
\begin{align*}
    \FullExpec{F(\bar{w}_K) - F(w_*)} & \leq \ffrac{\SqrdNrm{w_{0} - w_*}}{\gamma K} + \ffrac{\gamma\sigma^2 \Phi^{\mathrm{Rd}}(\gamma)}{Nb}  \,,
\end{align*}

denoting $\Phi^{\mathrm{Rd}}(\gamma) = (1 + \omgC\up) \bigpar{1 + \ffrac{4 \gamma^2 L^2 \omgC\dwn}{K} \bigpar{\ffrac{1}{\RandOrNot} + \ffrac{\omgC\up }{N}}}$.

\end{proof}

\section{Adataptation to the heterogeneous scenario}
\label{app:sec:adaptation_to_heterogeneous_case}

In this section, we give the complete proof of \cref{thm:cvgce_mcm_strongly_convex,thm:cvgce_mcm_convex} in the case of heterogeneous workers. 

\fbox{
\ \ \begin{minipage}{0.95\textwidth}
We choose to not merge the proofs in the homogeneous and heterogeneous cases. This is to avoid the technicalities associated with the heterogeneity and the uplink compression (that have been extensively studied in previous works \cite{mishchenko_distributed_2019,horvath_stochastic_2019,li_acceleration_2020,philippenko_artemis_2020}) in the proof of our main results which aim at alleviating the impact of downlink compression. 
We thus propose two proofs that can be read almost independently in order to make proof-checking easier. 
We stress that the result in the homogeneous setting is not exactly a consequence of the heterogeneous case (the constants are degraded in the heterogeneous framework) but merging  the proofs is ultimately possible.
\end{minipage} \ \ }

\Cref{app:subsec:lemmas_for_up_mem} first presents some lemmas from \cite{philippenko_artemis_2020} required to handle the additional uplink memory. 
\Cref{app:lem:bounding_compressed_term}  (resp. \Cref{app:lem:noise_over_local_grad}) corresponds to Lemma S5 (resp. Lemma S7) evaluated at point $\wkmhat$; and \Cref{app:lem:recursive_inequalities_over_memory} corresponds to Lemma S13.
Secondly, \Cref{app:subsec:proofs_mcm_heterog} gives the demonstration of \MCM. We denote $\Phi^{\heterog}(\gamma) := (1+8\omgC\up) \bigpar{1 + \ffrac{ 8 \gamma L \omgC\dwn}{\alpha\dwn} }$ and $\gamma_{\max}^{\heterog}$ such that:
$$
\gamma_{\max}^{\heterog} L \leq \min \bigg\{ \gamma_{\max}, \ffrac{1}{16 \frac{\omgC\up}{N}}, \ffrac{1}{8 \sqrt{2 \ffrac{\omgC\dwn}{\alpha\dwn} \cdot \ffrac{\omgC\up}{N} }} \bigg\} \,.
$$

We make the following assumption on the heterogeneity.
\begin{assumption}[Bounded gradient at $w_*$]
\label{asu:bounded_noises_across_devices}
 There is a constant $B$ in $\mathbb{R_+}$, s.t.: 
$\frac{1}{N} \sum_{i=0}^N \| \nabla F_i(w_*)\|^2 = B^2\,.$ And we denote for all $i$ in $\llbracket 1 , N \rrbracket$, $h_*^i = \nabla F_i(w_*)$.
\gs
\end{assumption}

\subsection{Control of the uplink memory}
\label{app:subsec:lemmas_for_up_mem}

In this section we give the theorems that are required by the uplink memory.

\begin{lemma}[Bounding the compressed term]
\label{app:lem:bounding_compressed_term}
The squared norm of the compressed term sent by each node to the central server can be bounded as following:
\begin{align*}
    \forall k \in \N\,, \,\forall i \in \llbracket 1 , N \rrbracket\,, \quad \SqrdNrm{ \Delta_k^i } \leq 2 \left( \SqrdNrm{ \gwkihat - h_*^i } + \SqrdNrm{ h_k^i - h_*^i } \right)\,. \\
\end{align*}
\end{lemma}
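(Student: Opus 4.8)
\textbf{Proof plan for Lemma~\ref{app:lem:bounding_compressed_term}.}
The statement to establish is that for every $k$ in $\N$ and every worker $i$ in $\llbracket 1, N \rrbracket$,
\[
\SqrdNrm{\Delta_k^i} \leq 2\bigpar{\SqrdNrm{\gwkihat - h_*^i} + \SqrdNrm{h_k^i - h_*^i}}\,.
\]
The plan is to start from the very definition of the compressed increment in the uplink equation of \cref{eq:downlink}, namely $\Delta_k^i = \g_{k+1}^i(\widehat w_k) - h_k^i$. The whole argument is then a one-line application of a basic inequality, so there is essentially no obstacle: the work is just bookkeeping of the decomposition. (Note that in the notation of the appendix, $\gwkihat = \g_k^i(\widehat w_{k-1})$, so the index shift between $\Delta_k^i$ written with $\g_{k+1}^i(\widehat w_k)$ in the main text and the $\gwkihat$ appearing in the lemma is purely a matter of convention; I will phrase everything consistently with the appendix's convention, where $\Delta_k^i = \gwkihat - h_k^i$.)

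First I would insert and subtract the ``optimal memory'' term $h_*^i = \nabla F_i(w_*)$ inside the norm:
\[
\SqrdNrm{\Delta_k^i} = \SqrdNrm{\gwkihat - h_k^i} = \SqrdNrm{\bigpar{\gwkihat - h_*^i} - \bigpar{h_k^i - h_*^i}}\,.
\]
Then I would apply the elementary bound \eqref{app:basic_ineq:nrm_sum}, $\SqrdNrm{a+b} \leq 2(\SqrdNrm{a} + \SqrdNrm{b})$, with $a = \gwkihat - h_*^i$ and $b = -(h_k^i - h_*^i)$. Since $\SqrdNrm{-b} = \SqrdNrm{b}$, this yields directly
\[
\SqrdNrm{\Delta_k^i} \leq 2\bigpar{\SqrdNrm{\gwkihat - h_*^i} + \SqrdNrm{h_k^i - h_*^i}}\,,
\]
which is exactly the claimed inequality. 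Note this bound is entirely deterministic (pointwise), requiring no expectation and no assumption beyond the definition of $\Delta_k^i$; this is why the lemma is stated ``for all $k$'' without any probabilistic qualifier.

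There is no real difficulty here; the only thing to be careful about is to make sure the notational conventions line up — in particular that $\Delta_k^i$ in the appendix denotes the difference between the stochastic gradient evaluated at the \emph{local} (perturbed) model $\widehat w_{k-1}$ and the uplink memory $h_k^i$, consistently with \Cref{app:sec:technical_results} where $\gwkiHAT$ is used. The role of this lemma in the larger proof (the heterogeneous analysis of \MCM) is that it reduces the control of the compressed uplink message to controlling two quantities that are handled separately: the ``noise over local gradients'' term $\SqrdNrm{\gwkihat - h_*^i}$, bounded via cocoercivity and \Cref{asu:noise_sto_grad} (cf. \Cref{app:lem:noise_over_local_grad}), and the uplink-memory term $\SqrdNrm{h_k^i - h_*^i}$, which satisfies a recursive contraction (cf. \Cref{app:lem:recursive_inequalities_over_memory}); combining them feeds into the definition of $\Xi_k = \frac{1}{N^2}\sum_{i=1}^N \SqrdNrm{h_k^i - \nabla F_i(w_*)}$ and ultimately into the Lyapunov function for the heterogeneous setting.
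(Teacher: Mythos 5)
Your proof is correct and is the standard argument: insert $h_*^i$, write $\Delta_k^i = (\gwkihat - h_*^i) - (h_k^i - h_*^i)$, and apply $\SqrdNrm{a+b}\le 2(\SqrdNrm{a}+\SqrdNrm{b})$. The paper does not reprove this lemma but imports it as Lemma S5 of the \Artemis~paper evaluated at $\wkmhat$, and that source proof is exactly this one-line decomposition, so there is nothing to add.
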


\begin{lemma}[Noise over local gradients]
\label{app:lem:noise_over_local_grad}
Let $k \in N^*$ and $i \in \llbracket 1, N \rrbracket$. The noise in the stochastic gradients as defined in \Cref{asu:noise_sto_grad,asu:bounded_noises_across_devices} can be controlled as following:
\begin{align*}
    \frac{1}{N^2} \sum_{i=1}^N \Expec{\SqrdNrm{\gwkihat - h_*^i}}{\wkm} &\leq \ffrac{2L}{N} \Expec{\PdtScl{\nabla F(\wkmhat)}{\wkmhat - w_*}}{\wkm} + \ffrac{2 \sigma^2}{Nb}  \,. 
\end{align*}
\end{lemma}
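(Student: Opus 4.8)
The final statement to prove is \Cref{app:lem:noise_over_local_grad}, which controls the averaged squared distance $\frac{1}{N^2}\sum_i \E[\|\gwkihat - h_*^i\|^2 \mid \wkm]$ in terms of the inner product $\frac{L}{N}\E[\langle \nabla F(\wkmhat), \wkmhat - w_*\rangle \mid \wkm]$ plus a noise term $\sigma^2/(Nb)$. This is essentially a restatement of a known lemma from \cite{philippenko_artemis_2020} (Lemma S7) evaluated at the perturbed point $\wkmhat$, so the plan is to reproduce that argument carefully with the perturbation in place.

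\textbf{Plan.} First I would split the stochastic local gradient $\gwkihat = \g_k^i(\wkmhat)$ into its mean and fluctuation: $\gwkihat - h_*^i = (\g_k^i(\wkmhat) - \nabla F_i(\wkmhat)) + (\nabla F_i(\wkmhat) - \nabla F_i(w_*))$, using $h_*^i = \nabla F_i(w_*)$ from \Cref{asu:bounded_noises_across_devices}. Conditioning on $\wkmhat$ (and then on $\wkm$), the cross term vanishes since the stochastic oracle is unbiased, giving
\begin{align*}
\Expec{\SqrdNrm{\gwkihat - h_*^i}}{\wkm} = \Expec{\SqrdNrm{\g_k^i(\wkmhat) - \nabla F_i(\wkmhat)}}{\wkm} + \Expec{\SqrdNrm{\nabla F_i(\wkmhat) - \nabla F_i(w_*)}}{\wkm}.
\end{align*}
The first term is bounded by $\sigma^2/b$ using \Cref{asu:noise_sto_grad}. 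The second term is where cocoercivity (\Cref{app:asu:cocoercivity}) enters: $\E[\|\g_k^i(w_1) - \g_k^i(w_2)\|^2] \le L\langle \nabla F_i(w_1) - \nabla F_i(w_2), w_1 - w_2\rangle$ — but here I need it for the \emph{deterministic} gradient difference, which follows because $L$-cocoercivity in quadratic mean implies $\|\nabla F_i(w_1) - \nabla F_i(w_2)\|^2 \le \E[\|\g_k^i(w_1) - \g_k^i(w_2)\|^2] \le L\langle \nabla F_i(w_1) - \nabla F_i(w_2), w_1 - w_2\rangle$ by Jensen. Applying this with $w_1 = \wkmhat$, $w_2 = w_*$ gives $\|\nabla F_i(\wkmhat) - \nabla F_i(w_*)\|^2 \le L\langle \nabla F_i(\wkmhat) - \nabla F_i(w_*), \wkmhat - w_*\rangle$.

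\textbf{Summation step.} Then I would sum over $i$ and divide by $N^2$. The noise terms sum to $\frac{1}{N^2}\cdot N \cdot \frac{\sigma^2}{b} = \frac{\sigma^2}{Nb}$; after multiplying the whole bound by the stated factor (the factor of $2$ in the lemma presumably comes from a slightly different splitting or from absorbing a constant), this matches the $\frac{2\sigma^2}{Nb}$ term. For the cocoercivity part, $\frac{1}{N^2}\sum_i L\langle \nabla F_i(\wkmhat) - \nabla F_i(w_*), \wkmhat - w_*\rangle = \frac{L}{N}\langle \frac{1}{N}\sum_i(\nabla F_i(\wkmhat) - \nabla F_i(w_*)), \wkmhat - w_*\rangle = \frac{L}{N}\langle \nabla F(\wkmhat) - \nabla F(w_*), \wkmhat - w_*\rangle$. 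Since $w_*$ is the minimizer of $F$ (not of each $F_i$), we have $\nabla F(w_*) = 0$, so this equals $\frac{L}{N}\langle \nabla F(\wkmhat), \wkmhat - w_*\rangle$. Taking conditional expectation given $\wkm$ and combining yields exactly the claimed inequality.

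\textbf{Main obstacle.} The only delicate points are: (i) getting the constants to line up with the factor of $2$ in the statement — I would check whether the bound requires the cruder split $\|a+b\|^2 \le 2\|a\|^2 + 2\|b\|^2$ applied before taking expectations (in which case both terms carry a factor $2$, matching the lemma), rather than the exact bias-variance decomposition I sketched; the lemma's form with the $2$'s suggests the authors used the cruder inequality, which is what I would follow to match \cite{philippenko_artemis_2020} verbatim. (ii) Justifying the deterministic cocoercivity inequality from the "quadratic mean" version of \Cref{app:asu:cocoercivity} via Jensen — this is a one-line argument but worth stating explicitly. Everything else is routine; since this is a direct transcription of an existing lemma with $\wkmhat$ substituted for the iterate, no new difficulty arises beyond careful bookkeeping of the conditioning (always condition on $\wkmhat$ first for the oracle noise, then on $\wkm$).
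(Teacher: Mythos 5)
Your proof is correct, but note that the paper itself does not prove this lemma: it is imported verbatim from \citet{philippenko_artemis_2020} (their Lemma~S7, evaluated at the perturbed point $\wkmhat$), so the relevant comparison is with that source. There the decomposition is $\gwkihat - h_*^i = (\g_k^i(\wkmhat) - \g_k^i(w_*)) + (\g_k^i(w_*) - \nabla F_i(w_*))$, followed by the crude bound $\SqrdNrm{a+b}\le 2\SqrdNrm{a}+2\SqrdNrm{b}$ (the cross term does not vanish there, since both summands are random), cocoercivity in quadratic mean (\Cref{app:asu:cocoercivity}) applied directly to $\E\SqrdNrm{\g_k^i(\wkmhat) - \g_k^i(w_*)}$, and the noise bound \emph{at the optimum} for the second term --- this is exactly where the two factors of $2$ come from. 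Your route instead centers the decomposition at $\nabla F_i(\wkmhat)$, so the cross term vanishes by unbiasedness of the oracle and you get an exact bias--variance identity; you then need cocoercivity only for the deterministic gradients, which you correctly recover from \Cref{app:asu:cocoercivity} via Jensen. The payoff is a strictly tighter bound, $\frac{L}{N}\Expec{\PdtScl{\nabla F(\wkmhat)}{\wkmhat - w_*}}{\wkm} + \frac{\sigma^2}{Nb}$, which implies the stated inequality because the inner product is nonnegative by convexity ($\nabla F(w_*)=0$); the price is that you invoke the uniform noise bound of \Cref{asu:noise_sto_grad} at the random point $\wkmhat$ rather than only at $w_*$, which is harmless here but would matter under a weaker ``noise at optimum'' assumption. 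Two bookkeeping points you should make explicit: unbiasedness $\E[\g_k^i(w)\mid w]=\nabla F_i(w)$ (implicit in the paper but needed for the cross term to vanish and for the Jensen step), and the fact that the per-worker cocoercivity cannot be deduced from \Cref{asu:smooth,asu:cvx_or_strongcvx} alone since those are stated for $F$, not for each $F_i$ --- so \Cref{app:asu:cocoercivity} is genuinely required, even though the lemma statement only cites \Cref{asu:noise_sto_grad,asu:bounded_noises_across_devices}.
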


\begin{lemma}[Recursive inequalities over memory term]
\label{app:lem:recursive_inequalities_over_memory}
Let $k \in \N$ and let $i \in \llbracket 1 , N \rrbracket$. The memory term used in the uplink broadcasting can be bounded using a recursion:
\begin{align*}
\begin{split}
    \Expec{\upMemTerm_{k}}{\wkm} &\leq (1 - \TOne) \Expec{\upMemTerm_{k-1}}{\wkm} \\
    &\qquad+ \ffrac{2\TTwo L}{N}\Expec{ \PdtScl{\nabla F (\wkmhat)}{\wkmhat - w_*}}{\wkm} \\
    &\qquad+  \frac{2}{N} \ffrac{\sigma^2}{b} \TTwo \,.
\end{split}
\end{align*}
\end{lemma}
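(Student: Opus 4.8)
The plan is to establish the recursion one node at a time and then average with the $\tfrac{1}{N^2}$ weights that define $\upMemTerm_k = \tfrac{1}{N^2}\sum_\iN \SqrdNrm{h_k^i - h_*^i}$. Fix $i \in \llbracket 1, N\rrbracket$ and recall the uplink memory update $h_k^i = h_{k-1}^i + \alpha\up \C\up(\Delta_{k-1}^i)$ with $\Delta_{k-1}^i = \gwkihat - h_{k-1}^i$. I would start from $\SqrdNrm{h_k^i - h_*^i}$ and perform a bias--variance decomposition with respect to the uplink compression, conditioning on the $\sigma$-algebra generated by the stochastic gradient $\gwkihat$ so that $\C\up$ is the only remaining source of randomness at this stage. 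Writing the conditional mean as the convex combination $(1-\alpha\up)(h_{k-1}^i - h_*^i) + \alpha\up(\gwkihat - h_*^i)$ is the structural observation that produces the contraction factor $1-\TOne$.

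For the two pieces of the decomposition I would bound the squared conditional mean by $(1-\alpha\up)\SqrdNrm{h_{k-1}^i - h_*^i} + \alpha\up \SqrdNrm{\gwkihat - h_*^i}$ using convexity of $\SqrdNrm{\cdot}$ (Jensen, \Cref{app:basic_ineq:jensen}), while the conditional variance is at most $\alpha\up^2 \omgC\up \SqrdNrm{\Delta_{k-1}^i}$ by the unbiasedness and variance control of \Cref{asu:expec_quantization_operator}. I would then invoke \Cref{app:lem:bounding_compressed_term} to replace $\SqrdNrm{\Delta_{k-1}^i}$ by $2\bigpar{\SqrdNrm{\gwkihat - h_*^i} + \SqrdNrm{h_{k-1}^i - h_*^i}}$, and sort the resulting terms: the $h_{k-1}^i$-contributions fold into a contraction coefficient $1 - \alpha\up + 2\alpha\up^2\omgC\up$, and the $\gwkihat$-contributions into a gradient coefficient $\alpha\up + 2\alpha\up^2\omgC\up$. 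A smallness condition on $\alpha\up$ (of order $\alpha\up \lesssim \omgC\up^{-1}$, consistent with the constraint absorbed into $\gamma_{\max}^{\heterog}$) then controls the quadratic remainder $2\alpha\up^2\omgC\up$, reducing the contraction factor to $1-\TOne$ and the gradient/noise coefficient to the stated $2\TTwo$. Averaging $\tfrac{1}{N^2}\sum_\iN$ recovers $\upMemTerm_{k-1}$ on the right and $\upMemTerm_k$ on the left.

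The crucial and least routine step is the final substitution: the averaged term $\tfrac{1}{N^2}\sum_\iN \Expec{\SqrdNrm{\gwkihat - h_*^i}}{\wkm}$ is not yet in a form that combines with the Lyapunov descent. Applying \Cref{app:lem:noise_over_local_grad} converts it into $\tfrac{2L}{N}\Expec{\PdtScl{\nabla F(\wkmhat)}{\wkmhat - w_*}}{\wkm} + \tfrac{2\sigma^2}{Nb}$, i.e.\ exactly the inner-product ``strong-contraction'' quantity that will later cancel against the corresponding term of the main per-step inequality, plus the irreducible stochastic-noise term. The main obstacle I anticipate is bookkeeping the conditioning correctly---keeping the compression noise independent of the gradient oracle, and passing from conditioning on $\gwkihat$ to conditioning on $\wkm$ so that \Cref{app:lem:noise_over_local_grad} applies verbatim---rather than any deep inequality; once the conditioning is arranged, the remaining manipulations are the convexity and variance bounds already catalogued in \Cref{app:subsec:basic_ineq,app:subsec:lemmas_for_up_mem}.
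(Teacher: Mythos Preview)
The paper does not supply its own proof of this lemma; it simply records that the result ``corresponds to Lemma~S13'' of \cite{philippenko_artemis_2020}, specialized to the perturbed iterate $\wkmhat$. Your outline is exactly the standard \Diana/\Artemis argument underlying that reference: bias--variance split of $h_k^i - h_*^i$ with respect to the uplink compression, Jensen on the convex combination $(1-\alpha\up)(h_{k-1}^i-h_*^i)+\alpha\up(\gwkihat - h_*^i)$, the bound on $\SqrdNrm{\Delta_{k-1}^i}$ from \Cref{app:lem:bounding_compressed_term}, then averaging over $i$ and applying \Cref{app:lem:noise_over_local_grad}. So the structure matches what the cited proof does.

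One small point to tighten: after collecting terms you obtain the contraction coefficient $1-\alpha\up+2\alpha\up^2\omgC\up$, and no smallness condition on $\alpha\up$ can make $2\alpha\up^2\omgC\up\le 0$, so this cannot be reduced \emph{literally} to $1-\TOne=1-\alpha\up$ as you write. What a condition of the type $\alpha\up(1+\omgC\up)\le 1$ (imposed in \Cref{app:thm:cvgce_mcm_convex_heterog,app:thm:cvgce_mcm_strongly_convex_heterog}) buys you is $2\alpha\up^2\omgC\up\le c\,\alpha\up$ for some $c<1$, yielding a contraction $1-(1-c)\alpha\up$; e.g.\ $\alpha\up\omgC\up\le 1/4$ gives $1-\alpha\up/2$. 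This only shifts absolute constants downstream (the Lyapunov weight $\cst_1$ absorbs a factor of~$2$), so the argument is unaffected, but you should state the contraction you actually obtain rather than asserting the exact $1-\alpha\up$.
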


\begin{lemma}[Squared-norm of stochastic gradients]
For any $k$ in $\N^*$, the squared-norm of gradients can be bounded a.s.:
\label{app:lem:grad_sto_to_grad_heterog}
\begin{align*}
    &\Expec{\SqrdNrm{\gwkHAT}}{\wkmhat} \leq \bigpar{1 + \frac{4\omgC\up }{N}} L \Expec{\PdtScl{\nabla F(\wkmhat)}{\wkmhat - w_*}}{\wkmhat} \\
    &\qqquad + 2\omgC\up\Expec{\upMemTerm_{k-1}}{\wkmhat} + \frac{\sigma^2}{Nb} (1 + 4 \omgC\up) \,,\\
    &\Expec{\SqrdNrm{\gwkHAT - \nabla F(\wkmhat)}}{\wkmhat} \leq \frac{4\omgC\up L }{N} \Expec{\PdtScl{\nabla F(\wkmhat)}{\wkmhat - w_*}}{\wkmhat} \\
    &\qqquad + 2\omgC\up\Expec{\upMemTerm_{k-1}}{\wkmhat} + \frac{\sigma^2}{Nb} (1 + 4 \omgC\up) \,,
\end{align*}
\Cref{app:lem:grad_sto_to_grad_heterog} extends \Cref{app:lem:grad_sto_to_grad}.
\end{lemma}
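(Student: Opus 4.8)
The plan is to replay the proof of \Cref{app:lem:grad_sto_to_grad} in the memory‑equipped, heterogeneous setting, feeding in the three auxiliary lemmas above. All computations will be carried out conditionally on $\wkmhat$ — more precisely on the $\sigma$‑algebra of everything generated strictly before step $k$, so that $\wkmhat$, the uplink memories $(h_{k-1}^i)_{i=1}^N$, and $\upMemTerm_{k-1}$ are all measurable. The structural facts I rely on are: conditionally on this $\sigma$‑algebra the oracle noises $\gwkihat-\nabla F_i(\wkmhat)$ and the uplink compression errors $\C\up(\Delta_k^i)-\Delta_k^i$, with $\Delta_k^i=\gwkihat-h_{k-1}^i$, are mutually independent and mean‑zero (so every cross term below vanishes), and $\Expec{\gwkHAT}{\wkmhat}=\tfrac1N\sum_i\nabla F_i(\wkmhat)=\nabla F(\wkmhat)$.

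First I would reduce the two bounds to one. Writing $\Expec{\SqrdNrm{\gwkHAT}}{\wkmhat}=\SqrdNrm{\nabla F(\wkmhat)}+\Expec{\SqrdNrm{\gwkHAT-\nabla F(\wkmhat)}}{\wkmhat}$ and using $L$‑cocoercivity of $\nabla F$ (a standard consequence of \Cref{asu:smooth,asu:cvx_or_strongcvx}; alternatively \Cref{app:asu:cocoercivity} applied at $w_1=\wkmhat$, $w_2=w_*$ after summing over workers and Jensen), together with $\nabla F(w_*)=0$, gives $\SqrdNrm{\nabla F(\wkmhat)}\le L\,\PdtScl{\nabla F(\wkmhat)}{\wkmhat-w_*}$. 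Hence the first displayed inequality follows from the second by adding this $L\PdtScl{\cdot}{\cdot}$ to the $\tfrac{4\omgC\up}{N}L\PdtScl{\cdot}{\cdot}$ already present there.

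For the variance term I would expand $\gwkHAT-\nabla F(\wkmhat)=\tfrac1N\sum_i\bigpar{\gwkiHAT-\nabla F_i(\wkmhat)}$, drop the cross terms, then split each summand as $\bigpar{\C\up(\Delta_k^i)-\Delta_k^i}+\bigpar{\gwkihat-\nabla F_i(\wkmhat)}$ and drop the cross term again. \Cref{asu:expec_quantization_operator} bounds the first piece by $\omgC\up\SqrdNrm{\Delta_k^i}$ and \Cref{asu:noise_sto_grad} the second by $\sigma^2/b$, so $\Expec{\SqrdNrm{\gwkHAT-\nabla F(\wkmhat)}}{\wkmhat}\le\tfrac{\omgC\up}{N^2}\sum_i\Expec{\SqrdNrm{\Delta_k^i}}{\wkmhat}+\tfrac{\sigma^2}{Nb}$. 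Then I would invoke \Cref{app:lem:bounding_compressed_term}, $\SqrdNrm{\Delta_k^i}\le 2\SqrdNrm{\gwkihat-h_*^i}+2\SqrdNrm{h_{k-1}^i-h_*^i}$: the $h_{k-1}^i$ terms assemble, with the $\omgC\up/N^2$ prefactor, into $2\omgC\up\upMemTerm_{k-1}$ by definition of $\upMemTerm$, while $\tfrac1{N^2}\sum_i\Expec{\SqrdNrm{\gwkihat-h_*^i}}{\wkmhat}\le\tfrac{2L}{N}\Expec{\PdtScl{\nabla F(\wkmhat)}{\wkmhat-w_*}}{\wkmhat}+\tfrac{2\sigma^2}{Nb}$ is exactly \Cref{app:lem:noise_over_local_grad} evaluated at $\wkmhat$. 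Multiplying by $2\omgC\up$ and collecting $\sigma^2$ terms ($\tfrac{\sigma^2}{Nb}+\tfrac{4\omgC\up\sigma^2}{Nb}=\tfrac{(1+4\omgC\up)\sigma^2}{Nb}$) yields the second inequality, and then the first.

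There is no genuinely new idea relative to \Cref{app:lem:grad_sto_to_grad}; the only delicate points are bookkeeping. One must check that \Cref{app:lem:noise_over_local_grad}, although stated with conditioning on $\wkm$, is really a pointwise inequality in the evaluation point and hence may be applied at $\wkmhat$; and one must track the correct memory index — $h_{k-1}^i$, hence $\upMemTerm_{k-1}$ (not $\upMemTerm_k$) — since that is the memory entering $\gwkiHAT$ via $\Delta_k^i=\gwkihat-h_{k-1}^i$. Getting the factor $L$ (rather than $2L$) in front of $\PdtScl{\nabla F(\wkmhat)}{\wkmhat-w_*}$ also requires using the tight cocoercivity inequality rather than the weaker convexity bound \cref{app:eq:convex}.
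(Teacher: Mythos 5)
Your proposal is correct and follows essentially the same route as the paper's proof: the same bias--variance split $\Expec{\SqrdNrm{\gwkHAT}}{\wkmhat}=\SqrdNrm{\nabla F(\wkmhat)}+\Expec{\SqrdNrm{\gwkHAT-\nabla F(\wkmhat)}}{\wkmhat}$ with cocoercivity absorbing $\SqrdNrm{\nabla F(\wkmhat)}$ into $L\PdtScl{\nabla F(\wkmhat)}{\wkmhat-w_*}$, the same decomposition of the variance into compression error plus oracle noise with vanishing cross terms, and the same chaining of \Cref{app:lem:bounding_compressed_term} and \Cref{app:lem:noise_over_local_grad} to produce the $\upMemTerm_{k-1}$ term and the $(1+4\omgC\up)\sigma^2/(Nb)$ constant. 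The bookkeeping points you flag (conditioning, the $k-1$ memory index) match what the paper does.
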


\begin{proof}
Let $k$ in $\N^*$, then: 
\begin{align*}
    \Expec{\SqrdNrm{\gwkHAT}}{\wkmhat} &= \SqrdNrm{\nabla F(\wkmhat)} + \Expec{\SqrdNrm{\gwkHAT - \nabla F(\wkmhat)}}{\wkmhat} \\
    &\leq L\Expec{\PdtScl{\nabla F(\wkmhat)}{\wkmhat - w_*}}{\wkmhat} + \Expec{\SqrdNrm{\gwkHAT - \nabla F(\wkmhat)}}{\wkmhat}
\end{align*}

Secondly:
\begin{align*}
    &\Expec{\SqrdNrm{\gwkHAT - \nabla F(\wkmhat)}}{\wkmhat} \\
    &\quad= \Expec{\SqrdNrm{\ffrac{1}{N} \sum_\iN \bigpar{\widehat{\Delta}_{k-1}^i + h_{k-1}^i - \nabla F_i(\wkmhat)}}}{\wkmhat}\\
    &\quad= \Expec{\SqrdNrm{\ffrac{1}{N} \sum_\iN \bigpar{\widehat{\Delta}_{k-1}^i + h_{k-1}^i -  \gwkihat + \gwkihat  - \nabla F_i(\wkmhat)}}}{\wkmhat} \\
    &\quad= \Expec{\SqrdNrm{\ffrac{1}{N} \sum_\iN \widehat{\Delta}_{k-1}^i - \Delta_{k-1}^i}}{\wkmhat} + \Expec{\SqrdNrm{\ffrac{1}{N} \sum_\iN \gwkihat  - \nabla F_i(\wkmhat)}}{\wkmhat}\,,
\end{align*}
the inner product being null.

Next, expanding the squared norm again, and because the two sums of inner products are null as the stochastic oracle and uplink compressions are independent:
\begin{align*}
    \Expec{\SqrdNrm{\gwkHAT - \nabla F(\wkmhat)}}{\wkmhat} &= \frac{1}{N^2} \sum_\iN \Expec{\SqrdNrm{\widehat{\Delta}_{k-1}^i - \Delta_{k-1}^i}}{\wkmhat} \\
    &\qquad+ \frac{1}{N^2} \sum_\iN \Expec{\SqrdNrm{\gwkihat  - \nabla F_i(\wkmhat)}}{\wkmhat} \,.
\end{align*}

Then, for any $i$ in $\llbracket 1, N \rrbracket$ as $\Expec{\SqrdNrm{\widehat{\Delta}_{k-1}^i - \Delta_{k-1}^i}}{\wkmhat} = \Expec{\Expec{\SqrdNrm{\widehat{\Delta}_{k-1}^i - \Delta_{k-1}^i}}{\g_k^i}}{\wkmhat}$, and using \Cref{asu:expec_quantization_operator} we have:
\begin{align*}
    \Expec{\SqrdNrm{\gwkHAT - \nabla F(\wkmhat)}}{\wkmhat} &\leq \frac{\omgC\up}{N^2} \sum_\iN \Expec{\SqrdNrm{\Delta_{k-1}^i}}{\wkmhat} \\
    &\qquad+ \frac{1}{N^2} \sum_\iN \Expec{\SqrdNrm{\gwkihat  - \nabla F_i(\wkmhat)}}{\wkmhat} \,.
\end{align*}

Furthermore with \Cref{app:lem:bounding_compressed_term} and \Cref{asu:noise_sto_grad}:
\begin{align*}
    \Expec{\SqrdNrm{\gwkHAT - \nabla F(\wkmhat)}}{\wkmhat} &\leq \frac{2\omgC\up}{N^2} \sum_\iN \Expec{\SqrdNrm{\gwkhat - h_*^i}}{\wkmhat} \\
    &\qquad + 2\omgC\up\Expec{\upMemTerm_{k-1}}{\wkmhat} + \frac{\sigma^2}{Nb} \,.
\end{align*}

And finally with \cref{app:lem:noise_over_local_grad}:
\begin{align*}
    \Expec{\SqrdNrm{\gwkHAT - \nabla F(\wkmhat)}}{\wkmhat} &\leq \frac{4\omgC\up L }{N} \Expec{\PdtScl{\nabla F(\wkmhat)}{\wkmhat - w_*}}{\wkmhat} + \frac{4\omgC\up\sigma^2}{Nb} \\
    &\qquad + 2\omgC\up\Expec{\upMemTerm_{k-1}}{\wkmhat} + \frac{\sigma^2}{Nb} \,,
\end{align*}
from which we derive the two inequalities of the lemma.

\end{proof}

\subsection{Proofs for \MCM}
\label{app:subsec:proofs_mcm_heterog}

In this section, we provide the demonstration of \Cref{thm:cvgce_mcm_strongly_convex,thm:cvgce_mcm_convex} in the convex and strongly-convex cases with heterogeneous workers.

\subsubsection{Control of the Variance of the local model for \MCM}
\label{app:subsec:proof_prop_mcm_assumption_heterog}

In this section, the aim is to control the variance of the local model for \MCM~but in the setting of heterogeneous worker, as done previously in \Cref{app:thm:contraction_mcm}.

\fbox{
\begin{minipage}{\textwidth}
\begin{theorem}
\label{app:thm:contraction_mcm_heterog}
Consider the \MCM~update as in \cref{eq:model_compression_eq_statement}. Under \Cref{asu:expec_quantization_operator,asu:smooth,asu:noise_sto_grad},  if $\gamma \leq 1/({8\omgC\dwn L})$ and $\alpha\dwn \leq 1/(8 \omgC\dwn)$, then for all $k$ in $\N$:
\begin{align*}
    \Expec{\dwnMemTerm_{k}}{\wkm} &\leq \bigpar{1 - \ffrac{\alpha\dwn}{2}} \Upsilon_{k-1} \\
    &\qquad+ 2 \gamma^2 L \bigpar{\frac{1}{\alpha\dwn} + \ffrac{4\omgC\up }{N}}  \Expec{\PdtScl{\nabla F(\wkmhat)}{\wkmhat - w_*}}{\wkmhat}\\
    &\qquad + 4\gamma^2\omgC\up \Expec{\upMemTerm_{k-1}}{\wkmhat} + \ffrac{2\gamma^2 \sigma^2(1+4\omgC\up)}{Nb} \,.
\end{align*}
\end{theorem}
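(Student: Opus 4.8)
The plan is to replay the bias--variance decomposition of $\Upsilon_k = \SqrdNrm{w_k - H_{k-1}}$ used in the proof of \Cref{app:thm:contraction_mcm}, substituting \Cref{app:lem:grad_sto_to_grad_heterog} for \Cref{app:lem:grad_sto_to_grad} and carrying along the extra uplink-memory term $\upMemTerm_{k-1}$. Conditioning on $\wkm$, write $\Expec{\Upsilon_k}{\wkm} = \mathrm{Bias}^2 + \mathrm{Var}$, with $\mathrm{Bias}^2 = \SqrdNrm{\Expec{w_k - H_{k-1}}{\wkm}}$ and $\mathrm{Var} = \Expec{\SqrdNrm{w_k - H_{k-1} - \Expec{w_k - H_{k-1}}{\wkm}}}{\wkm}$. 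The key observation that keeps \Cref{app:lem:w_k_minus_H_k} applicable is that the uplink compression and uplink memory cancel in conditional expectation, i.e. $\Expec{\gwkHAT}{\wkm} = \Expec{\nabla F(\wkmhat)}{\wkm}$; hence $\Expec{w_k - H_{k-1}}{\wkm} = (1-\alpha\dwn)(\wkm - H_{k-2}) - \gamma\Expec{\nabla F(\wkmhat)}{\wkm}$ and, via \Cref{app:basic_ineq:trick_bound_norm} together with $(1-\alpha\dwn)(1+\alpha\dwn)\le 1$, $\mathrm{Bias}^2 \le (1-\alpha\dwn)\Upsilon_{k-1} + \gamma^2(1 + 1/\alpha\dwn)\SqrdNrm{\Expec{\nabla F(\wkmhat)}{\wkm}}$.

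Second, I would bound the variance exactly as before: $\mathrm{Var} \le 2\gamma^2(\mathrm{Var}_{11} + \mathrm{Var}_{12}) + 2\alpha\dwn^2 \mathrm{Var}_2$, where $\mathrm{Var}_2 = \omgC\dwn\Upsilon_{k-1}$ by \Cref{asu:expec_quantization_operator}, $\mathrm{Var}_{12} = \Expec{\SqrdNrm{\nabla F(\wkmhat) - \Expec{\nabla F(\wkmhat)}{\wkm}}}{\wkm} \le L^2\Expec{\SqrdNrm{\wkmhat - \wkm}}{\wkm} \le L^2\omgC\dwn\Upsilon_{k-1}$ by \Cref{app:lem:trickVariance} (comparison vector $\nabla F(\wkm)$), \Cref{asu:smooth} and \Cref{asu:expec_quantization_operator}, and $\mathrm{Var}_{11} = \Expec{\SqrdNrm{\gwkHAT - \nabla F(\wkmhat)}}{\wkm}$, which — conditioning first on $\wkmhat$, invoking the second inequality of \Cref{app:lem:grad_sto_to_grad_heterog}, then using the tower property — is at most $\frac{4\omgC\up L}{N}\Expec{\PdtScl{\nabla F(\wkmhat)}{\wkmhat - w_*}}{\wkm} + 2\omgC\up\Expec{\upMemTerm_{k-1}}{\wkm} + \frac{\sigma^2(1+4\omgC\up)}{Nb}$. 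This is exactly where the heterogeneous memory enters; since \Cref{app:lem:grad_sto_to_grad_heterog} expresses the residual through the inner product $\PdtScl{\nabla F(\wkmhat)}{\wkmhat - w_*}$ (it is that inner product which is later absorbed by the strong-contraction term in the convergence proof), the gradient norm left over from $\mathrm{Bias}^2$ must be converted with $\SqrdNrm{\Expec{\nabla F(\wkmhat)}{\wkm}} \le \Expec{\SqrdNrm{\nabla F(\wkmhat)}}{\wkm} \le L\,\Expec{\PdtScl{\nabla F(\wkmhat)}{\wkmhat - w_*}}{\wkm}$, using cocoercivity of $\nabla F$ (convexity plus \Cref{asu:smooth}, with $\nabla F(w_*)=0$).

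Third, collect all contributions. The coefficient of $\Upsilon_{k-1}$ is $1 - \alpha\dwn + 2\gamma^2L^2\omgC\dwn + 2\alpha\dwn^2\omgC\dwn$; with $\alpha\dwn \le 1/(8\omgC\dwn)$ the last term is $\le \alpha\dwn/4$, and taking $\alpha\dwn = 1/(8\omgC\dwn)$ together with $\gamma \le 1/(8\omgC\dwn L)$ gives $2\gamma^2L^2\omgC\dwn \le 1/(32\omgC\dwn) = \alpha\dwn/4$, so this coefficient is $\le 1 - \alpha\dwn/2$. The coefficient of $\Expec{\PdtScl{\nabla F(\wkmhat)}{\wkmhat - w_*}}{\wkm}$ is $\gamma^2 L(1 + 1/\alpha\dwn) + 2\gamma^2\cdot 4\omgC\up L/N \le 2\gamma^2 L(1/\alpha\dwn + 4\omgC\up/N)$ using $1 + 1/\alpha\dwn \le 2/\alpha\dwn$ (not restrictive); the coefficient of $\Expec{\upMemTerm_{k-1}}{\wkm}$ is $2\gamma^2\cdot 2\omgC\up = 4\gamma^2\omgC\up$; and the noise term is $2\gamma^2\sigma^2(1+4\omgC\up)/(Nb)$. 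These are exactly the constants claimed (the conditional expectations on the right of the statement being read via the tower property $\Expec{\Expec{\cdot}{\wkmhat}}{\wkm}$).

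The main obstacle is not any single estimate but the bookkeeping of the nested conditioning: $\mathrm{Var}_{11}$ must be controlled by conditioning on $\wkmhat$ (to apply \Cref{app:lem:grad_sto_to_grad_heterog}) before conditioning on $\wkm$, with a check that $\upMemTerm_{k-1}$ is measurable at the level where the lemma is used. That, and the conversion of the bias gradient norm into the inner product via cocoercivity, are the only points where the argument genuinely departs from the proof of \Cref{app:thm:contraction_mcm}; all remaining manipulations are identical to the homogeneous case.
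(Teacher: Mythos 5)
Your proposal follows essentially the same route as the paper's proof: the identical bias--variance decomposition of $\Upsilon_k$, the substitution of \Cref{app:lem:grad_sto_to_grad_heterog} for \Cref{app:lem:grad_sto_to_grad} in the $\mathrm{Var}_{11}$ term, unchanged bounds for $\mathrm{Var}_{12}$, $\mathrm{Var}_2$ and the bias, and the same coefficient bookkeeping under $\alpha\dwn \le 1/(8\omgC\dwn)$ and $\gamma \le 1/(8\omgC\dwn L)$. If anything you are slightly more explicit than the paper, which silently converts the leftover $\SqrdNrm{\Expec{\nabla F(\wkmhat)}{\wkm}}$ from the bias into $L\,\Expec{\PdtScl{\nabla F(\wkmhat)}{\wkmhat-w_*}}{\wkm}$ without naming the cocoercivity step you spell out.
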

\end{minipage}
}

\begin{proof}
Let $k$ in $\N$, we recall that by definition: 
\[
\left\{
    \begin{array}{ll}
        \Omega_k = w_k - H_{k-1} \\
    	\widehat{\Omega}_k = \C\dwn(\Omega_k) \\
    	\widehat{w}_k = \widehat{\Omega}_k + H_{k-1} \,.
    \end{array}
\right.
\]

We start the proof by performing a bias-variance decomposition, and exactly like in the proof of \Cref{app:thm:contraction_mcm}, we obtain:
\begin{align*}
\SqrdNrm{\Omega_k} = \text{Bias}^2 + 2\gamma^2\text{Var}_{12} + 2\gamma^2\text{Var}_{12} + 2\alpha\dwn^2\text{Var}_{2}
\end{align*}

We first have:
\begin{align*}
    \text{Var}_{11} &= \Expec{ \SqrdNrm{\gwkHAT - \nabla F(\wkmhat)}}{\wkm} = \Expec{\Expec{ \SqrdNrm{\gwkHAT - \nabla F(\wkmhat)}}{\wkmhat}}{\wkm}\,,
\end{align*}
so, we can use \Cref{app:lem:grad_sto_to_grad_heterog}: 
\begin{align*}
    \text{Var}_{11} &= \frac{4\omgC\up L }{N} \Expec{\PdtScl{\nabla F(\wkmhat)}{\wkmhat - w_*}}{\wkmhat} + 2\omgC\up \Expec{\upMemTerm_{k-1}}{\wkmhat} + \frac{\sigma^2}{Nb} (1 + 4 \omgC\up) \,.
\end{align*}

The other terms are exactly as before in \Cref{app:thm:contraction_mcm}:

\[
\left\{
    \begin{array}{ll}
         &\text{Var}_{12} \leq L^2 \omgC\dwn \Upsilon_{k-1} \\
    &\text{Var}_{2} \leq \omgC\dwn \Upsilon_{k-1} \\
    &\text{Bias}^2 \leq (1-\alpha\dwn) \Upsilon_{k-1} + \gamma^2  L(1 + \ffrac{1}{\alpha\dwn}) \Expec{\PdtScl{\nabla F(\wkmhat)}{\wkmhat - w_*}}{\wkmhat} \,. 
    \end{array}
\right.
\]

At the end:
\begin{align*}
    \Expec{\dwnMemTerm_{k}}{\wkm} &\leq (1 - \alpha\dwn) \Upsilon_{k-1} + \gamma^2  L(1 + \ffrac{1}{\alpha\dwn}) \Expec{\PdtScl{\nabla F(\wkmhat)}{\wkmhat - w_*}}{\wkmhat} \\
    &\qquad + \frac{8\omgC\up \gamma^2  L }{N} \Expec{\PdtScl{\nabla F(\wkmhat)}{\wkmhat - w_*}}{\wkmhat} \\
    &\qquad + 4\gamma^2\omgC\up \Expec{\upMemTerm_{k-1}}{\wkmhat} + \frac{2\gamma^2 \sigma^2}{Nb} (1 + 4 \omgC\up)\\
    &\qquad + 2 \gamma^2 L^2 \omgC\dwn \Upsilon_{k-1}  + 2 \alpha\dwn^2 \omgC\dwn \Upsilon_{k-1} \,,
\end{align*}
    
which is equivalent to:
\begin{align*}
    \Expec{\dwnMemTerm_{k}}{\wkm} &\leq \bigpar{1 - \alpha\dwn + 2\gamma^2 L^2 \omgC\dwn + 2 \alpha\dwn^2 \omgC\dwn} \SqrdNrm{w_{k-1} - \upMemTerm_{k-2}} \\
    &\qquad + \gamma^2 L \bigpar{1 + \frac{1}{\alpha\dwn} + \ffrac{8 \omgC\up }{N}} \Expec{\PdtScl{\nabla F(\wkmhat)}{\wkmhat - w_*}}{\wkmhat}  \\
    &\qquad + 4\gamma^2\omgC\up \Expec{\upMemTerm_{k-1}}{\wkmhat}  + \ffrac{2\gamma^2 \sigma^2(1+4\omgC\up)}{Nb} \,.
\end{align*}

Next, we require as in \Cref{app:thm:contraction_mcm}:
\[
\left\{
    \begin{array}{ll}
        2\alpha\dwn^2 \omgC\dwn \leq \frac{1}{4} \alpha\dwn \Longleftrightarrow \alpha\dwn \leq \ffrac{1}{8 \omgC\dwn} \,, \\
    	2\gamma^2 L^2 \omgC\dwn \leq \frac{1}{4} \alpha\dwn = \ffrac{1}{32 \omgC\dwn} \,, \text{~by taking $\alpha\dwn=\ffrac{1}{8 \omgC\dwn}$ \quad} \Longleftrightarrow  \gamma \leq \ffrac{1}{8 \omgC\dwn L} \,, \\
    	1 + \frac{1}{\alpha\dwn} \leq \frac{2}{\alpha\dwn} \text{~which is not restrictive if $\omgC\dwn \geq 1$,}
    \end{array}
\right.
\]

and it leads to the final result taking unconditional expectation.
\end{proof}

\subsubsection{Convex case}

\fbox{
\begin{minipage}{\textwidth}
\begin{theorem}[Convergence of \MCM~in the heterogeneous and convex case]
\label{app:thm:cvgce_mcm_convex_heterog}
Under \Cref{asu:cvx_or_strongcvx,asu:expec_quantization_operator,asu:smooth,asu:noise_sto_grad} with $\mu=0$ (convex case), for learning rates $\alpha\dwn~\leq~\ffrac{1}{8 \omgC\dwn}$ and $\alpha\up(1 + \omgC\up) \leq 1$, 
taking a step size s.t. $\gamma \leq \gamma_{\max}^{\mathrm{Heterog}}$,
for any $k$ in $\N$, defining:
$$V_k := \FullExpec{\SqrdNrm{w_k - w_*}} + \gamma^2 \cst_1 \FullExpec{\upMemTerm_{k}} +\gamma L \cst_2 \FullExpec{\dwnMemTerm_{k}} \,,$$
with $\cst_1 = 2 \omgC\up ( 1 + 8 \gamma L \omgC\dwn / \alpha\dwn) / \TOne$, $\cst_2 = 4 \gamma L \omgC\dwn / \alpha\dwn$, we have: 
\begin{align*}
\begin{split}
    V_k \leq V_{k-1} - \gamma \FullExpec{F(\wkmhat) - F(w_*)} + \frac{\gamma^2 \sigma^2 \Phi^{\heterog}(\gamma)}{Nb}
    \,.
\end{split}
\end{align*}
\end{theorem}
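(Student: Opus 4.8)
The plan is to replay the homogeneous convex argument of \Cref{app:thm:cvgce_mcm_convex}, replacing every quantity that involves the uplink by its heterogeneous counterpart from \Cref{app:subsec:lemmas_for_up_mem}, and carrying an extra memory penalty $\upMemTerm_k$ in the Lyapunov function. First I would expand the non‑degraded iterate,
\[
\SqrdNrm{w_k - w_*} \le \SqrdNrm{\wkm - w_*} - 2\gamma \PdtScl{\gwkHAT}{\wkm - w_*} + \gamma^2 \SqrdNrm{\gwkHAT}\,,
\]
take the conditional expectation (using $\Expec{\gwkHAT}{\wkm} = \Expec{\nabla F(\wkmhat)}{\wkm}$ and $\Expec{\wkmhat}{\wkm}=\wkm$), and decompose the inner product as $-2\gamma\PdtScl{\nabla F(\wkmhat)}{\wkmhat-w_*} - 2\gamma\PdtScl{\nabla F(\wkmhat)-\nabla F(\wkm)}{\wkm-\wkmhat}$. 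The second term is the perturbation residual: by smoothness and Cauchy--Schwarz it is at most $2\gamma L\,\Expec{\SqrdNrm{\wkmhat-\wkm}}{\wkm}$, and since $\wkmhat-\wkm = \C\dwn(\Omega_{k-1})-\Omega_{k-1}$ with $\Omega_{k-1}=\wkm-H_{k-2}$, \Cref{asu:expec_quantization_operator} bounds it by $2\gamma L\omgC\dwn\,\dwnMemTerm_{k-1}$. The one genuinely new step is the last term: in the heterogeneous setting $\gamma^2\E\SqrdNrm{\gwkHAT}$ must be controlled through \Cref{app:lem:grad_sto_to_grad_heterog} rather than \Cref{app:lem:grad_sto_to_grad}, which produces a multiple of $\PdtScl{\nabla F(\wkmhat)}{\wkmhat-w_*}$ (with factor $(1+4\omgC\up/N)L$), a multiple $2\omgC\up\upMemTerm_{k-1}$, and the noise $\sigma^2(1+4\omgC\up)/(Nb)$. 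Merging this with the $-2\gamma$ inner‑product coefficient — which stays $\le -\gamma$ because $\gamma L(1+4\omgC\up/N)\le 1$ under $\gamma\le\gamma_{\max}^{\heterog}$ — yields the heterogeneous analogue of \Cref{app:eq:equation_common_to_all}: $\FullExpec{\SqrdNrm{w_k-w_*}}$ is bounded by $\FullExpec{\SqrdNrm{\wkm-w_*}}$, minus $c\gamma\FullExpec{\PdtScl{\nabla F(\wkmhat)}{\wkmhat-w_*}}$ for some $c\ge 1$, plus $2\gamma L\omgC\dwn\FullExpec{\dwnMemTerm_{k-1}}$, plus $2\gamma^2\omgC\up\FullExpec{\upMemTerm_{k-1}}$, plus $\gamma^2\sigma^2(1+4\omgC\up)/(Nb)$.

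Next I would close the recursion by adding $\gamma L\cst_2$ times the downlink memory contraction \Cref{app:thm:contraction_mcm_heterog} and $\gamma^2\cst_1$ times the uplink memory contraction \Cref{app:lem:recursive_inequalities_over_memory} (the latter licit under $\alpha\up(1+\omgC\up)\le 1$). Both reinject nonnegative multiples of $\PdtScl{\nabla F(\wkmhat)}{\wkmhat-w_*}$ and of $\upMemTerm_{k-1}$, while contracting their own memory ($1-\alpha\dwn/2$ for $\dwnMemTerm$, $1-\alpha\up$ for $\upMemTerm$); the downlink one additionally feeds $4\gamma^2\omgC\up\upMemTerm_{k-1}$ into the balance. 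Writing $V_k := \FullExpec{\SqrdNrm{w_k-w_*}} + \gamma^2\cst_1\FullExpec{\upMemTerm_k} + \gamma L\cst_2\FullExpec{\dwnMemTerm_k}$ and reordering, the statement reduces to three coefficient checks, each tight for the stated constants: (i) $2\gamma L\omgC\dwn + (1-\alpha\dwn/2)\gamma L\cst_2 \le \gamma L\cst_2$ forces $\cst_2 = 4\omgC\dwn/\alpha\dwn$; (ii) $2\gamma^2\omgC\up + \gamma L\cst_2\cdot 4\gamma^2\omgC\up + (1-\alpha\up)\gamma^2\cst_1 \le \gamma^2\cst_1$ forces $\cst_1 = 2\omgC\up(1+8\gamma L\omgC\dwn/\alpha\dwn)/\alpha\up$ — the overhead factor $1+8\gamma L\omgC\dwn/\alpha\dwn$ being exactly what is needed to reabsorb the downlink‑induced $\upMemTerm_{k-1}$ leakage; (iii) the total multiplier of $\PdtScl{\nabla F(\wkmhat)}{\wkmhat-w_*}$, i.e. $-c\gamma$ plus $2\gamma^3 L^2\cst_2(\tfrac{1}{\alpha\dwn}+\tfrac{4\omgC\up}{N})$ plus $2\gamma^2\cst_1\alpha\up L/N$, stays $\le -\gamma$, which is what the remaining entries of $\gamma_{\max}^{\heterog}$ (the $1/(16\omgC\up/N)$ and $1/(8\sqrt{2(\omgC\dwn/\alpha\dwn)(\omgC\up/N)})$ bounds) guarantee. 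Then convexity (\Cref{asu:cvx_or_strongcvx}, weak form $\PdtScl{\nabla F(\wkmhat)}{\wkmhat-w_*}\ge \Fdiff{\wkmhat}$) turns that term into $-\gamma\FullExpec{\Fdiff{\wkmhat}}$, and summing the three $\sigma^2$ contributions collapses to $\gamma^2\sigma^2\Phi^{\heterog}(\gamma)/(Nb)$ with $\Phi^{\heterog}(\gamma)=(1+8\omgC\up)(1+8\gamma L\omgC\dwn/\alpha\dwn)$, which is the claim.

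The main obstacle is the bookkeeping around the three coupled residuals. Unlike the homogeneous case, where the single memory recursion reinjected only a squared gradient norm that the $-\tfrac{\gamma}{2L}\SqrdNrm{\nabla F(\wkmhat)}$ slack could swallow, here the downlink recursion spills both the inner product \emph{and} the uplink memory, so the two penalties are not independent: $\cst_1$ must be inflated in terms of $\cst_2$, and one must verify that after this inflation all three inequalities close simultaneously under a single learning‑rate constraint $\gamma\le\gamma_{\max}^{\heterog}$ together with $\alpha\up(1+\omgC\up)\le 1$ and $\alpha\dwn\le 1/(8\omgC\dwn)$. Everything else — the nested conditioning handled as in the homogeneous proofs, and (for the averaged‑iterate corollary) the telescoping in $k$ plus a Jensen step — is routine.
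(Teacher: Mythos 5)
Your proposal is correct and follows essentially the same route as the paper's proof: the same perturbed-iterate expansion with the residual bounded by $2\gamma L\omgC\dwn\,\dwnMemTerm_{k-1}$, the same use of \Cref{app:lem:grad_sto_to_grad_heterog} for the gradient second moment, the same Lyapunov combination $(\text{descent}) + \gamma^2\cst_1(\text{uplink recursion}) + \gamma L\cst_2(\text{downlink contraction})$, and the same three coefficient checks yielding $\cst_2 = 4\omgC\dwn/\alpha\dwn$, the inflated $\cst_1$, and the constraints defining $\gamma_{\max}^{\heterog}$, with the noise terms collapsing to $\Phi^{\heterog}$. The only cosmetic difference is that the paper keeps the inner-product coefficient as $-\gamma\bigl(1-\gamma L(\cdots)\bigr)$ and enforces $1-\gamma L(\cdots)\ge 1/2$ before invoking convexity, whereas you fold this into a single ``$\le -\gamma$'' claim; the bookkeeping is otherwise identical.
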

\end{minipage}
}

\begin{proof}
We denote for $k$ in $\N^*$ $\gwktilde = \ffrac{1}{N} \sum_\iN \widehat{\Delta}_{k-1}^i + h_{k-1}^i$ with $\Delta_{k-1}^i = \gwkihat - h_{k-1}^i$, and $\upMemTerm_k = \frac{1}{N^2} \sum_\iN \Expec{\SqrdNrm{h_{k}^i - \nabla F_i(w_*)}}{\wkmhat}$.

Let $k$ in $\N^*$, by definition:
\begin{align*}
    \SqrdNrm{w_k - w_*} \leq \SqrdNrm{\wkm - w_*} - 2\gamma \PdtScl{\gwkHAT}{\wkm - w_*} + \gamma^2 \SqrdNrm{\gwkHAT} \,.
\end{align*}
Next, we expend the inner product as following:
\begin{align*}
    \SqrdNrm{w_k - w_*} \leq \SqrdNrm{\wkm - w_*} - 2\gamma \PdtScl{\gwkHAT}{\wkmhat - w_*} - 2\gamma \PdtScl{\gwkHAT}{\wkm - \wkmhat} + \gamma^2 \SqrdNrm{\gwkHAT} \,.
\end{align*}

Taking expectation conditionally to $\wkm$, and using $\Expec{ \gwkHAT}{\wkm}= \Expec{\Expec{\gwkHAT}{\wkmhat}}{\wkm} =\Expec{\nabla F(\wkmhat)}{\wkm}$, we obtain:
\begin{align*}
    \Expec{\SqrdNrm{w_k - w_*}}{\wkm} &\leq \SqrdNrm{\wkm - w_*} - \Expec{2\gamma \PdtScl{\nabla F (\wkmhat)}{\wkmhat - w_*}}{\wkm} \\
    &\qquad- 2\gamma \Expec{\PdtScl{\nabla F (\wkmhat)}{\wkm - \wkmhat}}{\wkm} \\
    &\qquad+ \gamma^2 \Expec{\SqrdNrm{\gwkHAT}}{\wkm} \,.
\end{align*}

Then invoking \Cref{app:lem:grad_sto_to_grad} to upper bound the squared norm of the stochastic gradients, and noticing that $\Expec{\PdtScl{\nabla F(\wkm)}{\wkmhat-\wkm}}{\wkm}=0$ leads to:
\begin{align}
    \Expec{\SqrdNrm{w_k - w_*}}{\wkm} &\leq \SqrdNrm{\wkm - w_*} - 2\gamma\Expec{ \PdtScl{\nabla F (\wkmhat)}{\wkmhat - w_*}}{\wkm} \nonumber \\
    &\quad - 2\gamma\Expec{ \PdtScl{\nabla F (\wkmhat) - \nabla F(\wkm)}{\wkm - \wkmhat}}{\wkm} \label{eq:decomp_heterog} \\
    &\quad + \gamma^2 \left( \bigpar{1 + \frac{4\omgC\up }{N}} L \Expec{\PdtScl{\nabla F(\wkmhat)}{\wkmhat - w_*}}{\wkmhat} \right.  \nonumber\\
    &\qquad \left.+ 2\omgC\up \Expec{\upMemTerm_{k-1}}{\wkmhat} + \frac{\sigma^2}{Nb} (1 + 4 \omgC\up) \right) \,. \nonumber
\end{align}

Now using Cauchy-Schwarz inequality (\cref{app:basic_ineq:cauchy_schwarz}) and smoothness: 
\begin{align*}
    &- \Expec{2\gamma \PdtScl{\nabla F (\wkmhat) - \nabla F(\wkm) }{\wkm - \wkmhat}}{\wkm} \\
    &\qqquad= 2\gamma \Expec{\PdtScl{\nabla F (\wkmhat) - \nabla F(\wkm)}{\wkmhat - \wkm}}{\wkm} \\
    &\qqquad\leq 2 \gamma L \Expec{\SqrdNrm{\wkmhat - \wkm}}{\wkm} \,,
\end{align*}

and thus:
\begin{align*}
\begin{split}
    \Expec{\SqrdNrm{w_k - w_*}}{\wkm} &\leq \SqrdNrm{\wkm - w_*} - 2\gamma\Expec{ \PdtScl{\nabla F (\wkmhat)}{\wkmhat - w_*}}{\wkm} \\
    &\qquad + 2\gamma L \Expec{\SqrdNrm{\wkmhat - \wkm}}{\wkm} \\
    &\qquad + \bigpar{1 + \frac{4\omgC\up }{N}} \gamma^2 L \Expec{\PdtScl{\nabla F(\wkmhat)}{\wkmhat - w_*}}{\wkmhat}  \\
    &\qquad + 2\omgC\up \gamma^2 \Expec{\upMemTerm_{k-1}}{\wkmhat} + \frac{\sigma^2 \gamma^2 }{Nb} (1 + 4 \omgC\up) \,.
\end{split}
\end{align*}

As $\gamma \leq \ffrac{1}{2L \bigpar{1 + \ffrac{\omgC\up}{N}}}$, and thus $\bigpar{1 - \frac{\gamma L}{2} \bigpar{1 + \ffrac{4 \omgC\up}{N}} } \geq 1/2$; this allows to simplify the coefficient of the scalar product:
\begin{align}
\label{app:eq:mcm_before_lyapunov_heterog}
\begin{split}
    \Expec{\SqrdNrm{w_k - w_*}}{\wkm} &\leq \SqrdNrm{\wkm - w_*} - \gamma\Expec{ \PdtScl{\nabla F (\wkmhat)}{\wkmhat - w_*}}{\wkm} \\
    &\qquad + 2\gamma L \Expec{\SqrdNrm{\wkmhat - \wkm}}{\wkm} \\
    &\qquad + 2\omgC\up \gamma^2 \Expec{\upMemTerm_{k-1}}{\wkmhat} + \frac{\sigma^2 \gamma^2 }{Nb} (1 + 4 \omgC\up) \,.
\end{split}
\end{align}

With \Cref{app:lem:recursive_inequalities_over_memory}, we have :
\begin{align}
\label{app:eq:recursive_inequalities_over_memory_mcm_heterog}
\begin{split}
    \Expec{\upMemTerm_{k}}{\wkm} &\leq (1 - \TOne) \Expec{\upMemTerm_{k-1}}{\wkm} \\
    &\qquad+ \ffrac{2\TTwo L}{N}\Expec{ \PdtScl{\nabla F (\wkmhat)}{\wkmhat - w_*}}{\wkm} \\
    &\qquad+  \frac{2}{N} \ffrac{\sigma^2}{b} \TTwo \,.
\end{split}
\end{align}

and \Cref{app:thm:contraction_mcm_heterog} gives:
\begin{align}
\label{app:eq:constraction_mcm_heterog}
\begin{split}
    \Expec{\dwnMemTerm_{k}}{\wkm} &\leq \bigpar{1 - \ffrac{\alpha\dwn}{2}} \Upsilon_{k-1} \\
    &\qquad+ 2 \gamma^2 L \bigpar{\frac{1}{\alpha\dwn} + \ffrac{4\omgC\up }{N}}  \Expec{\PdtScl{\nabla F(\wkmhat)}{\wkmhat - w_*}}{\wkmhat}\\
    &\qquad + 4\gamma^2\omgC\up \Expec{\upMemTerm_{k-1}}{\wkmhat} + \ffrac{2\gamma^2 \sigma^2(1+4\omgC\up)}{Nb} \,.
\end{split}
\end{align}

We take the full expectation (without conditioning) and we set:
$$V_k := \FullExpec{\SqrdNrm{w_k - w_*}} + \gamma^2 \cst_1 \FullExpec{\upMemTerm_{k}} +  \gamma L \cst_2 \FullExpec{\dwnMemTerm_{k}} \,,$$
with $\cst_1 = 2 \omgC\up ( 1 + 8 \gamma L \omgC\dwn / \alpha\dwn) / \TOne$ and $\cst_2 = 4 \omgC\dwn / \alpha\dwn$. 

We combine previous equations as follows
$(\ref{app:eq:mcm_before_lyapunov_heterog}) + \gamma^2 \cst_1 (\ref{app:eq:recursive_inequalities_over_memory_mcm_heterog}) +\cst_2 (\ref{app:eq:constraction_mcm_heterog})$:
\begin{align}
\label{app:eq:combination_mcm_all_eq}
\begin{split}
    &\FullExpec{\SqrdNrm{w_k - w_*}} +\gamma^2 \cst_1 \FullExpec{\upMemTerm_{k}} +  \gamma L \cst_2 \FullExpec{\dwnMemTerm_{k}} \leq \SqrdNrm{\wkm - w_*} \\
    &\qquad- \gamma \bigpar{1  - \gamma L \bigpar{ \bigpar{ \ffrac{1}{\alpha\dwn} + \frac{4 \omgC\up}{N}}  \gamma L \cst_2 + \ffrac{\TTwo \cst_1}{N}} }\FullExpec{ \PdtScl{\nabla F (\wkmhat)}{\wkmhat - w_*}} \\
    &\qquad+ \bigpar{2\omgC\up ( 1+ 2 \gamma L \cst_2 ) + (1 - \TOne) \cst_1} \gamma^2 \FullExpec{\upMemTerm_{k-1}} \\
    &\qquad+ \bigpar{2 \gamma L \omgC\dwn + \bigpar{1 - \frac{\alpha\dwn}{2}}  \gamma L \cst_2}\FullExpec{\dwnMemTerm_{k-1}}\\
    &\qquad+ \frac{\gamma^2 \sigma^2 }{Nb} \bigpar{ (1+4\omgC\up) (1 + 2  \gamma L \cst_2) + 2 \TTwo \cst_1}
    \,,
\end{split}
\end{align}

We first observe that:
\begin{align*}
    2 \gamma L \omgC\dwn + \bigpar{1 - \frac{\alpha\dwn}{2}}  \gamma L \cst_2 \leq  \gamma L \cst_2 \Longleftrightarrow \cst_2 \geq \ffrac{4 \omgC\dwn}{\alpha\dwn}\,,\quad \text{which is true by definition of $\cst_2$.}
\end{align*}

Secondly, ensuring that the factor multiplying $\FullExpec{\upMemTerm_{k-1}}$ on the right hand side is smaller than $\gamma^2 \cst_1$ requires:
\begin{align*}
    &2\omgC\up ( 1+ 2  \gamma L \cst_2 ) + (1 - \TOne) \cst_1 \leq \cst_1 \\
    \Longrightarrow \quad& \cst_1 \geq \ffrac{2 \omgC\up ( 1 + 8 \gamma L \omgC\dwn/\alpha\dwn)}{\TOne} \quad \text{because $\cst_2 = 4 \omgC\dwn/\alpha\dwn$.}
\end{align*}

Finally, we have that $1  - \gamma L \bigpar{ \bigpar{ \ffrac{1}{\alpha\dwn} + \ffrac{4 \omgC\up}{N}}  \gamma L \cst_2 + \ffrac{\TTwo \cst_1}{N}} \geq \frac{1}{2} $, if we take $\gamma $ such that:
\[
\left\{
    \begin{array}{ll}
        \bigpar{\ffrac{1}{\alpha\dwn} + \ffrac{4 \omgC\up}{N}} (\gamma L)^2 \cst_2 \leq 1/4  \Longrightarrow \gamma \leq \ffrac{1}{4L\sqrt{\ffrac{\omgC\dwn}{\alpha\dwn} \bigpar{\ffrac{1}{\alpha\dwn} + \frac{4 \omgC\up}{N}} }} \\
        \ffrac{\gamma L \TTwo \cst_1}{N} \leq 1/4 \Longleftrightarrow \ffrac{2 \gamma L \omgC\up}{N} \bigpar{1 + 8 \gamma L \omgC\dwn/\alpha\dwn } \leq 1 / 4
    \end{array}
\right.
\]

We rewrite the second condition as follows:
\[
\left\{
    \begin{array}{ll}
        16 (\gamma L)^2 \ffrac{\omgC\up \omgC\dwn}{\alpha\dwn N }\leq 1 / 8 \Longleftrightarrow \gamma \leq \ffrac{1}{8 L \sqrt{2 \ffrac{\omgC\dwn}{\alpha\dwn} \cdot \ffrac{\omgC\up}{N} }} \\
        \ffrac{2 \gamma L  \omgC\up}{N \TOne} \leq 1 /8\Longleftrightarrow \gamma \leq \ffrac{N}{16 L \omgC\up} \,.
    \end{array}
\right.
\]

Applying convexity, we derive:
\begin{align*}
\begin{split}
    V_k \leq V_{k-1} - \gamma \FullExpec{F(\wkmhat) - F(w_*)} + \frac{\gamma^2 \sigma^2 \Phi^{\heterog}(\gamma)}{Nb}
    \,,
\end{split}
\end{align*}

with $\Phi^{\heterog}(\gamma) = (1+8\omgC\up) \bigpar{1 + \ffrac{ 8 \gamma L \omgC\dwn}{\alpha\dwn} }$.
Invoking Jensen inequality (\ref{app:basic_ineq:jensen}) leads to $\FullExpec{F(\wkmhat)}\geq \FullExpec{F(\wkm)}$, and we finally obtain:
\begin{align*}
V_k \leq V_{k-1} - \gamma \FullExpec{F(w_{k-1}) - F(w_*)} + \frac{\gamma^2 \sigma^2 \Phi^{\heterog}(\gamma)}{Nb}\,.
\end{align*}

\end{proof}

\subsubsection{Strongly-convex case}
\label{app:subsec:stronglyMCM_heterog}

\fbox{
\begin{minipage}{\textwidth}
\begin{theorem}[Convergence of \MCM~in the heterogeneous and strongly-convex case]
\label{app:thm:cvgce_mcm_strongly_convex_heterog}
Under \Cref{asu:cvx_or_strongcvx,asu:expec_quantization_operator,asu:smooth,asu:noise_sto_grad} with $\mu=0$ (convex case), for learning rates $\alpha\dwn~\leq~\ffrac{1}{8 \omgC\dwn}$ and $\alpha\up (1 + \omgC\up) \leq 1$, for any sequence $(\gamma_k)_{k \in \N} \leq \gamma_{\max}^{\mathrm{Heterog}}$,
for any $k$ in $\N$, defining:
$$V_k := \FullExpec{\SqrdNrm{w_k - w_*}} + \gamma_k^2 \cst_1 \FullExpec{\upMemTerm_{k}} + \gamma_k L \cst_2 \FullExpec{\dwnMemTerm_{k}} \,,$$
with $\cst_1 = 2 \omgC\up ( 1 + 8 \gamma L \omgC\dwn / \alpha\dwn) / \TOne$, $\cst_2 = 4 \gamma L \omgC\dwn / \alpha\dwn$, 
we have:
\begin{align*}
\begin{split}
    V_k \leq (1 - \gamma_k \mu) V_{k-1} - \gamma \FullExpec{F(\wkmhat) - F(w_*)} + \frac{\gamma^2 \sigma^2 \Phi^{\heterog}(\gamma)}{Nb}
    \,.
\end{split}
\end{align*}

\end{theorem}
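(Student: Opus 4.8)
The plan is to mirror the heterogeneous convex argument (Theorem~\ref{app:thm:cvgce_mcm_convex_heterog}) but to close with strong convexity instead of convexity, and to run the whole recursion with the decaying sequence $\gamma_k = 2/(\mu(k+1)+\widetilde{L})$ exactly as in the homogeneous strongly-convex case (Theorem~\ref{app:thm:cvgce_mcm_strongly_convex}). So the present theorem is essentially the ``product'' of those two proofs and no genuinely new estimate is needed.

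First I would expand $\SqrdNrm{w_k - w_*}$ along the \MCM~update, take the conditional expectation given $\wkm$, and split the inner product into the strong-contraction term $-2\gamma_k\PdtScl{\nabla F(\wkmhat)}{\wkmhat - w_*}$ and the residual $-2\gamma_k\PdtScl{\nabla F(\wkmhat)-\nabla F(\wkm)}{\wkm-\wkmhat}$, bounding the latter by $2\gamma_k L\SqrdNrm{\wkmhat-\wkm}\le 2\gamma_k L\omgC\dwn\dwnMemTerm_{k-1}$ via Cauchy--Schwarz, smoothness and \Cref{asu:expec_quantization_operator}, and controlling $\Expec{\SqrdNrm{\gwkHAT}}{\wkmhat}$ with \Cref{app:lem:grad_sto_to_grad_heterog}. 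This reproduces an inequality of the form \eqref{app:eq:mcm_before_lyapunov_heterog} with $\gamma$ replaced by $\gamma_k$. Then, instead of convexity, I apply strong convexity \eqref{app:eq:strgly_convex}: this produces the extra term $-\gamma_k\mu\SqrdNrm{\wkmhat-w_*}$, and since $\Expec{\wkmhat}{\wkm}=\wkm$ gives $\Expec{\SqrdNrm{\wkmhat-w_*}}{\wkm}\ge\SqrdNrm{\wkm-w_*}$ by Jensen, this converts into a $(1-\gamma_k\mu)$ factor on $\SqrdNrm{\wkm-w_*}$. Next I plug in \Cref{app:lem:recursive_inequalities_over_memory} (for $\upMemTerm_k$) and \Cref{app:thm:contraction_mcm_heterog} (for $\dwnMemTerm_k$), both read with $\gamma_k$, and assemble $V_k := \FullExpec{\SqrdNrm{w_k-w_*}}+\gamma_k^2\cst_1\FullExpec{\upMemTerm_k}+\gamma_k L\cst_2\FullExpec{\dwnMemTerm_k}$ with $\cst_1,\cst_2$ as stated.

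Combining the three inequalities and reorganising, I need: (i) the coefficient of $\FullExpec{\dwnMemTerm_{k-1}}$ on the right to be at most $(1-\gamma_k\mu)\gamma_{k-1}L\cst_2$, which follows from $\cst_2\ge 4\omgC\dwn/\alpha\dwn$, from $\alpha\dwn/2\ge\gamma_k\mu$, and from the step-size monotonicity $\gamma_k/\gamma_{k-1}\ge 1-\gamma_k\mu$ valid for $\gamma_k=2/(\mu(k+1)+\widetilde{L})$ (this is precisely the computation $\gamma_k/\gamma_{k-1}=1-\mu/(\mu k+\widetilde{L})$ versus $1-\gamma_k\mu=1-2\mu/(\mu k+\widetilde{L})$ used in the homogeneous proof); (ii) the coefficient of $\FullExpec{\upMemTerm_{k-1}}$ to be at most $(1-\gamma_k\mu)\gamma_{k-1}^2\cst_1$, which follows from the definition of $\cst_1=2\omgC\up(1+8\gamma_k L\omgC\dwn/\alpha\dwn)/\alpha\up$ together with $\alpha\up$ dominating a $(\gamma_{k-1}\mu)^2$-sized quantity (using $(\gamma_k/\gamma_{k-1})^2(1-\gamma_k\mu)=1-\mu^2/(\mu k+\widetilde{L})^2$) — for the finitely many early $k$ where this fails one invokes the ``$K$ large enough'' proviso already present in Theorem~\ref{app:thm:cvgce_mcm_strongly_convex}; (iii) the coefficient of $\FullExpec{\PdtScl{\nabla F(\wkmhat)}{\wkmhat-w_*}}$ to be non-positive, which is exactly the set of constraints packaged into $\gamma_{\max}^{\heterog}$. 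Collecting all noise contributions into $\Phi^{\heterog}(\gamma_k)=(1+8\omgC\up)(1+8\gamma_k L\omgC\dwn/\alpha\dwn)$ and using Jensen ($\FullExpec{F(\wkmhat)}\ge\FullExpec{F(\wkm)}$) yields the stated recursion $V_k\le(1-\gamma_k\mu)V_{k-1}-\gamma_k\FullExpec{F(w_{k-1})-F(w_*)}+\gamma_k^2\sigma^2\Phi^{\heterog}(\gamma_k)/(Nb)$; the Polyak--Ruppert corollary then follows verbatim from the homogeneous case by weighting with $\lambda_k=\gamma_{k-1}^{-1}$, telescoping $V_k/(\gamma_k\gamma_{k-1})$, and estimating $\sum_k 1/(\mu k+\widetilde{L})\le\mu^{-1}\ln(\mu K+\widetilde{L})$.

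The main obstacle is the bookkeeping of the three-term Lyapunov function under a varying step size: one must simultaneously satisfy the two memory contractions and the $(1-\gamma_k\mu)$ geometric decay, verifying that the step-ratio identities for $\gamma_k=2/(\mu(k+1)+\widetilde{L})$ furnish enough slack, that the induced lower bounds on $\alpha\up$ (shrinking like $(\gamma_k\mu)^2$) are compatible with the standing hypothesis $\alpha\up(1+\omgC\up)\le 1$ for $k$ large, and that all of (i)--(iii) fit under a single $\gamma_{\max}^{\heterog}$. None of the individual estimates is hard; the care is entirely in matching constants across the combined recursion.
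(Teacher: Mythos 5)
Your proposal follows essentially the same route as the paper's proof: it starts from the heterogeneous intermediate bound \eqref{app:eq:mcm_before_lyapunov_heterog} written with the decaying step $\gamma_k$, closes with strong convexity and Jensen, and combines \Cref{app:lem:recursive_inequalities_over_memory} and \Cref{app:thm:contraction_mcm_heterog} into the same three-term Lyapunov function, matching coefficients via the ratio identities $\gamma_k/\gamma_{k-1}=1-\mu/(\mu k+\widetilde L)$ and $1-\gamma_k\mu=1-2\mu/(\mu k+\widetilde L)$. The only divergence is cosmetic: where you flag an $O((\gamma_k\mu)^2)$ deficit in the $\upMemTerm$-coefficient and appeal to ``$K$ large enough'', the paper simply reduces the variable-step conditions to their constant-step analogues via $(1-\gamma_k\mu)/\gamma_k\le 1/\gamma_{k-1}$ and does not track that slack.
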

\end{minipage}
}

\begin{proof}
Let $k$ in $\N^*$, the proof starts like the one for \MCM~in the convex case with heterogeneous worker, and we start from \cref{app:eq:mcm_before_lyapunov_heterog} but we consider a variable step size $\gamma_{k} = 2/(\mu (k+1) + \widetilde{L})$ that depends of the iteration $k$ in $\N$.

We consider this following Lyapunov function:
$$V_k = \FullExpec{\SqrdNrm{w_k - w_*}} + \gamma_k^2 \cst_1 \FullExpec{\upMemTerm_{k}}+ \gamma_k L  \cst_2 \FullExpec{\dwnMemTerm_{k}} \,,$$
with $\cst_1 = 2 \omgC\up ( 1 + 8 \gamma_k L \omgC\dwn / \alpha\dwn) / \TOne$ and $\cst_2 = 4  \omgC\dwn / \alpha\dwn $. 
\begin{align*}
\begin{split}
    &\FullExpec{\SqrdNrm{w_k - w_*}} +\gamma_k^2 \cst_1 \FullExpec{\upMemTerm_{k}} + \gamma_k L \cst_2 \FullExpec{\dwnMemTerm_{k}} \leq \SqrdNrm{\wkm - w_*} \\
    &\qquad- \gamma_k \bigpar{1  - \gamma_k L \bigpar{ \bigpar{ \ffrac{1}{\alpha\dwn} + \frac{4 \omgC\up}{N}} \gamma_k L  \cst_2 + \ffrac{\TTwo \cst_1}{N}} }\FullExpec{ \PdtScl{\nabla F (\wkmhat)}{\wkmhat - w_*}} \\
    &\qquad+ \bigpar{2\omgC\up ( 1+ 2 \gamma_k L \cst_2 ) + (1 - \TOne) \cst_1} \gamma_k^2 \FullExpec{\upMemTerm_{k-1}} \\
    &\qquad+ \bigpar{2 \gamma_k L \omgC\dwn + \bigpar{1 - \frac{\alpha\dwn}{2}} \gamma_k L  \cst_2}\FullExpec{\dwnMemTerm_{k-1}}\\
    &\qquad+ \frac{\gamma_k^2 \sigma^2 }{Nb} \bigpar{ (1+4\omgC\up) (1 + 2 \gamma_k L  \cst_2) + 2 \TTwo \cst_1}
    \,,
\end{split}
\end{align*}

To ensure a $(1-\gamma \mu)$-convergence we first choose
$\bigpar{1 - \ffrac{\alpha\dwn}{2} + \ffrac{2\omgC\dwn}{\cst_2}} \gamma_k L \cst_2 \leq (1-\gamma_{k} \mu) \gamma_{k-1} L \cst_2  $ i.e $1 - \ffrac{\alpha\dwn}{2} + \ffrac{2\omgC\dwn}{\cst_2} \leq \ffrac{(1 - \gamma_k \mu) \gamma_{k-1}}{\gamma_k}$.

We need that for all $k \in \N$, $\ffrac{1 - \gamma_k \mu}{\gamma_k} \leq \ffrac{1}{\gamma_{k-1}}$ i.e., $1 - \gamma_k \mu \leq \ffrac{\gamma_k}{\gamma_{k-1}}$, but:
\begin{align*}
    \ffrac{\gamma_k}{\gamma_{k-1}} = \ffrac{\mu k - \mu + \widetilde{L}}{\mu k +\widetilde{L}} = 1 - \ffrac{\mu}{\mu k + \widetilde{L}} \text{\quad and\quad} 1 - \gamma_k \mu = 1- \ffrac{2\mu}{\mu k + \widetilde{L}} \,,
\end{align*}
and so, the inequality is always true.

Thus we must have $2 \omgC\dwn / \cst_2 \leq \alpha\dwn / 2$ which is true by definition of $\cst_2$.

Secondly, we need:
\begin{align*}
    &\bigpar{2\omgC\up ( 1+ 2 \gamma_k L \cst_2 ) + (1 - \TOne)  \cst_1} \gamma_k^2 \leq (1 - \gamma_k \mu) \gamma_{k-1}^2 \cst_1 \\
    \Longleftrightarrow \quad& 2\omgC\up ( 1+ 2 \gamma_k L \cst_2 ) + (1 - \TOne)  \cst_1 \leq \ffrac{\gamma_{k-1}}{\gamma_k} \cst_1 \quad\text{ because $\ffrac{1 - \gamma_k \mu}{\gamma_k} \leq \ffrac{1}{\gamma_{k-1}}$,}\\
\end{align*}

because $\gamma_k/\gamma_k \leq \gamma_{k-1}/\gamma_k$, it is true if we verify the following stronger condition:
\begin{align*}
    &2\omgC\up ( 1+ 2 \gamma_k L \cst_2 ) + (1 - \TOne)  \cst_1 \leq \ffrac{\gamma_{k}}{\gamma_k} \cst_1 \\
   & \cst_1 \geq \ffrac{2 \omgC\up \bigpar{ 1 + 8 \gamma_k L \omgC\dwn) / \alpha\dwn}}{\TOne} \quad \text{because $\cst_2 = 4 \omgC\dwn/\alpha\dwn$} \,. \\
\end{align*}

Finally, in order to apply convexity we must verify: $1  - \gamma L \bigpar{ \bigpar{ \ffrac{1}{\alpha\dwn} + \frac{4 \omgC\up}{N}} \cst_2 + \ffrac{\TTwo \cst_1}{N}} \geq \frac{1}{2} $.

We take $\gamma_k $ such that:
\[
\left\{
    \begin{array}{ll}
        \bigpar{\ffrac{1}{\alpha\dwn} + \ffrac{4 \omgC\up}{N}} \gamma_k L \cst_2 \leq 1/4  \Longrightarrow \gamma_k \leq \ffrac{1}{4L\sqrt{\ffrac{\omgC\dwn}{\alpha\dwn} \bigpar{\ffrac{1}{\alpha\dwn} + \frac{4 \omgC\up}{N}} }} \\
        \ffrac{\gamma L \TTwo \cst_1}{N} \leq 1/4 \Longleftrightarrow \ffrac{2 \gamma_k L \omgC\up}{N} \bigpar{1 + 8 \gamma_k L \omgC\dwn/\alpha\dwn } \leq 1 / 4
    \end{array}
\right.
\]

We rewrite the second condition as following:
\[
\left\{
    \begin{array}{ll}
        16 (\gamma_k L)^2 \ffrac{\omgC\dwn}{\alpha\dwn N}\leq 1 / 8 \Longleftrightarrow \gamma_k \leq \ffrac{1}{8 L \sqrt{2 \ffrac{\omgC\dwn}{\alpha\dwn} \cdot \ffrac{\omgC\up}{N} }} \\
        \ffrac{2 \gamma_k L \omgC\up}{N} \leq 1 /8\Longleftrightarrow \gamma_k \leq \ffrac{1}{16 L \ffrac{\omgC\up}{N}} \,.
    \end{array}
\right.
\]

Now, we can apply strong-convexity:
\begin{align*}
\begin{split}
    V_k \leq (1 - \gamma_k \mu) V_{k-1} - \gamma_k \FullExpec{F(\wkmhat) - F(w_*)} + \frac{\gamma_k^2 \sigma^2 \Phi^{\heterog}(\gamma_k)}{Nb}
    \,,
\end{split}
\end{align*}

with $\Phi^{\heterog}(\gamma) = (1+8\omgC\up) \bigpar{1 + \ffrac{ 8 \gamma_k L \omgC\dwn}{\alpha\dwn} }$. 

Invoking Jensen inequality (\ref{app:basic_ineq:jensen}) leads to $\FullExpec{F(\wkmhat)}\geq \FullExpec{F(\wkm)}$, we finally obtain:
\begin{align*}
V_k \leq (1 - \gamma_k \mu) V_{k-1} - \gamma_k \FullExpec{F(w_{k-1}) - F(w_*)} + \frac{\gamma_k^2 \sigma^2 \Phi^{\heterog}(\gamma_k)}{Nb}\,.
\end{align*}

\end{proof}

\section{Neurips Checklist}
\label{app:sec:checklist}

\begin{enumerate}

\item For all authors...
\begin{enumerate}
  \item Do the main claims made in the abstract and introduction accurately reflect the paper's contributions and scope?
    \answerYes{See \Cref{sec:theory_randomization,sec:theory}.}
  \item Did you describe the limitations of your work?
    \answerYes{For \RMCM, see \Cref{sec:communication_trade_offs}.}
  \item Did you discuss any potential negative societal impacts of your work?
    \answerYes{See \Cref{sec:intro}.}
  \item Have you read the ethics review guidelines and ensured that your paper conforms to them?
    \answerYes{}
\end{enumerate}

\item If you are including theoretical results...
\begin{enumerate}
  \item Did you state the full set of assumptions of all theoretical results?
    \answerYes{See all assumptions in \Cref{sec:theory}.}
	\item Did you include complete proofs of all theoretical results?
    \answerYes{See all demonstrations in \Cref{app:sec:adaptation_to_heterogeneous_case,app:sec:proofs_quad,app:sec:proof_for_ghost,app:sec:proofs_mcm}}
\end{enumerate}

\item If you ran experiments...
\begin{enumerate}
  \item Did you include the code, data, and instructions needed to reproduce the main experimental results (either in the supplemental material or as a URL)?
    \answerYes{All the code is provided on our \href{https://github.com/philipco/mcm-bidirectional-compression/}{github repository}}
  \item Did you specify all the training details (e.g., data splits, hyperparameters, how they were chosen)?
    \answerYes{See \Cref{app:sec:experiments}.}
	\item Did you report error bars (e.g., with respect to the random seed after running experiments multiple times)?
    \answerYes{}
	\item Did you include the total amount of compute and the type of resources used (e.g., type of GPUs, internal cluster, or cloud provider)?
    \answerYes{See \Cref{app:subsec:carbon_footprint}}
\end{enumerate}

\item If you are using existing assets (e.g., code, data, models) or curating/releasing new assets...
\begin{enumerate}
  \item If your work uses existing assets, did you cite the creators?
    \answerYes{We used four dataset : cifar10, mnist, quantum and superconduct.}
  \item Did you mention the license of the assets?
    \answerNo{The dataset are under the MIT licence which is a short and simple permissive license with conditions only requiring preservation of copyright and license notices. As our work is under the same licence, there is no need to remind the licence of the four used dataset.}
  \item Did you include any new assets either in the supplemental material or as a URL?
    \answerNo{}
  \item Did you discuss whether and how consent was obtained from people whose data you're using/curating?
    \answerNA{}
  \item Did you discuss whether the data you are using/curating contains personally identifiable information or offensive content?
    \answerNA{}
\end{enumerate}

\item If you used crowdsourcing or conducted research with human subjects...
\begin{enumerate}
  \item Did you include the full text of instructions given to participants and screenshots, if applicable?
    \answerNA{}{}
  \item Did you describe any potential participant risks, with links to Institutional Review Board (IRB) approvals, if applicable?
    \answerNA{}
  \item Did you include the estimated hourly wage paid to participants and the total amount spent on participant compensation?
    \answerNA{}{}
\end{enumerate}

\end{enumerate}

\end{document}